\numberwithin{equation}{section}
\theoremstyle{plain}
\newtheorem{theorem}{Theorem}[section]
\newtheorem{corollary}{Corollary}[section]
\newtheorem{lemma}{Lemma}[section]
\newtheorem{proposition}{Proposition}[section]
\newtheorem{assumption}{Assumption}
\newtheorem{remark}{Remark}
\numberwithin{mytheorem}{section}
\numberwithin{mylemma}{section}
\providecommand{\customgenericname}{}
\newcommand{\R}{\mathbb{R}}
\DeclareMathAlphabet{\pazocal}{OMS}{zplm}{m}{n}
\newcommand{\ca}[1]{\pazocal{#1}}
\renewcommand{\vec}[1]{\mathrm{vec}[#1]}
\newcommand{\E}{\mathbb{E}}
\newcommand{\pr}{\mathbb{P}}
\newcommand{\norm}[1]{\lVert#1\rVert}
\newcommand{\Norm}[1]{\left\lVert#1\right\rVert}
\newcommand{\cnorm}[1]{\lVert#1\rVert_{2-\mathrm{col}}}
\newcommand{\blnorm}[1]{\lVert#1\rVert_{\mathrm{BL}}}
\newcommand{\fnorm}[1]{\lVert#1\rVert_{F}}
\newcommand{\bTheta}{\widetilde{\Theta}}
\newcommand{\btheta}{\widetilde{\theta}}
\newcommand{\bbeta}{\widetilde{\beta}}
\newcommand{\bw}{\widetilde{w}}
\newcommand{\bmu}{\widetilde{\mu}}
\title{Global Convergence in Training Large-Scale Transformers}
\author{%
 \bf Cheng Gao$^{1}$\thanks{Equal Contribution.} \quad Yuan Cao$^{2*}$ \quad Zihao Li$^1$ \quad Yihan He$^1$\quad Mengdi Wang$^1$\newline
 \bf Han Liu$^3$\quad  Jason M.~Klusowski$^{1\dag}$ \quad Jianqing Fan$^{1\dag}$\\
 $^1$Princeton University \quad $^2$The University of Hong Kong \quad $^3$Northwestern University\\
\texttt{\{chenggao,zihaoli,yihan.he,mengdiw,jason.klusowski,jqfan\}@princeton.edu}\\
\texttt{yuancao@hku.hk}\qquad
\texttt{hanliu@northwestern.edu}
}
\begin{document}

\maketitle

\begin{abstract}
Despite the widespread success of Transformers across various domains, their optimization guarantees in large-scale model settings are not well-understood. This paper rigorously analyzes the convergence properties of gradient flow in training Transformers with weight decay regularization. First, we construct the mean-field limit of large-scale Transformers, showing that as the model width and depth go to infinity, gradient flow converges to the Wasserstein gradient flow, which is represented by a partial differential equation. Then, we demonstrate that the gradient flow reaches a global minimum consistent with the PDE solution when the weight decay regularization parameter is sufficiently small. Our analysis is based on a series of novel mean-field techniques that adapt to Transformers. Compared with existing tools for deep networks \citep{lu2020meanfield} that demand homogeneity and global Lipschitz smoothness, we utilize a refined analysis assuming only \textit{partial homogeneity} and \textit{local Lipschitz smoothness}. These new techniques are of independent interest.
\end{abstract}

\section{Introduction}
Transformers have revolutionized the field of deep learning since their introduction in \cite{vaswani2017attention}. 
These models are distinguished by their immense scales, often comprising billions of parameters to achieve state-of-the-art performance. Notably, this massive parameterization enables them to excel in a variety of domains, notably in natural language processing  \citep{devlin2019bert,achiam2023gpt4,touvron2023llama} and vision tasks \citep{dehghani2023vision,jelassi2022vision}, where they have significantly advanced the frontiers of machine learning.


Despite the widespread adoption of Transformer models, our understanding of their optimization guarantees is still in its early stages. One particularly intriguing phenomenon is that as the size of model increases, training algorithms typically converges globally despite the highly nonconvex landscape of the training objective function. Remarkably, it remains somewhat enigmatic how gradient-based approaches can consistently succeed when training large-scale Transformers.

Notably, there have been several recent works showing the global convergence of training overparameterized neural networks \citep{mei2018mean,chizat2018global,fang2019ntk,chen2020AGN,lu2020meanfield,ding2021meanfield1,ding2022meanfield2,jacot2018neural,allen2018convergence,du2018gradientdeep,zou2019gradient}. In particular, several works \citep{lu2020meanfield,ding2021meanfield1,ding2022meanfield2} studied the setting with deep neural networks with skip connections. By studying the connections between the network with discretization in the parameter space and a corresponding ordinary differential equation system \cite{weinan2017proposal,chen2018neural,li2018optimal}, these works demonstrated global convergence guarantees of wide and deep neural networks based on a mean-field analysis. However, these results are established based on certain homogeneity and/or global Lipschitz smoothness properties of the neural network, which are not applicable to Transformer models. Therefore, it remains an open question how gradient-based methods can effectively train large-scale Transformers. 


\subsection{Our contribution}
\label{sec:contribution}
In this work, we bridge the gap between Transformer theory and practice by demonstrating the global convergence of Transformer training optimization via gradient flow in a large-scale model regime. We analyze the mean-field limit of the Transformer model, which is characterized by the \textit{distribution} of model parameters, shifting the focus from parameter space to distributional dynamics in the Wasserstein metric \cite{chizat2018global}. This approach yields two key theorems:
\begin{enumerate}
\item[i.] We show the closeness between practical discrete Transformers trained by gradient flow and continuous Transformers whose parameter distribution follows a partial differential equation of the Wasserstein gradient flow (Theorem \ref{thm:GFApprox}). Our result demonstrates that large-scale discrete Transformers can be approximated by its mean-field limit and the approximation error can be expressed in terms of the width and depth of the Transformer models.
\item[ii.] This approximation facilitates our analysis of the global convergence (Theorem \ref{thm:global}) of discrete Transformer models. By leveraging the universal approximation capabilities of either the self-attention or feed-forward layers, we demonstrate that a basic gradient flow method can reliably find a global optimum, despite the highly non-convex landscape of the training objective.
\end{enumerate}
We also highlight our novel contributions to Transformer theory through the development of these two core results:
\begin{itemize}
\item [i.] The assumption on activation regularity conditions (Assumption \ref{ass:growth}) is less stringent compared to those usually found in studies of two-layer neural networks \cite{mei2018mean,chizat2018global,fang2019ntk,chen2020AGN} or deep ResNet networks \citep{ding2021meanfield1,ding2022meanfield2,lu2020meanfield}. In particular, many existing approximation guarantees reply on a Lipschitz continuity property of the network gradients, which limits the mean-filed study to neural networks with smooth activation functions. In comparison, our analysis relaxes this assumption and only requires local Lipschitz continuity of the gradient in expectation. This relaxation  broadens the applicability of our approach and ensures that our result can cover more practical Transformer architectures.


\item [ii.] Our model differs from the ResNet models in \cite{lu2020meanfield,ding2021meanfield1,ding2022meanfield2,chen2024generalization}, as those models incorporate only a single identical encoder within each evolutionary block. Unlike the typical theoretical configurations, our model employs two distinct encoders $f$ and $h$ that alternate throughout the network's depth. More importantly, despite the distinct encoders used, the continuous limit of our model uniformly interprets the encoder as an average of $f$ and $h$, providing a rigorous validation of concepts proposed in \cite{lu2020meanfield} and \cite{veit2016residual} from a new perspective.

\item[iii.] Our global convergence guarantee for training Transformer models is also broadly applicable: our assumption (Assumption \ref{ass:opt}) ensures global convergence by relying on the universal approximation capabilities of \emph{either} the self-attention or the feed-forward encoder. Additionally, we incorporate a more flexible framework by adopting partial $1$-homogeneity for only a subset of the parameters, in contrast to the full parameter homogeneity required in studies such as \cite{lu2020meanfield}. This modification enables the use of softmax and sigmoid activation, expanding beyond the hardmax and ReLU restricted by full homogeneity. 
\end{itemize}




\paragraph{Additional related works.} See Appendix \ref{section:related_work} for a detailed discussion.

\paragraph{Notations.} For any $\alpha\in\R^d$, $\mathrm{dim}(\alpha)$ refers to its dimension $d$. For any $B\in\R^{d\times d}$, its trace is denoted by $\mathrm{Tr}(B)$. For any positive integer $n$, Let $[n]=\{1,2,\dots,n\}$. Let $0_d$ denote the $d$-dimension vector of all zeros. Let $W_p(\mu,\nu)$ denote the Wasserstein-$p$ distance between two probability measures $\mu,\nu\in\ca P(\R^d)$ for $p\geq 1$. For a matrix $A=(a_1,a_2,\dots,a_n)$, define its vectorization version as $\vec{A}:=(a_1^\top ,a_2^\top ,\dots,a_n^\top )^\top $. Let $\delta(\cdot)$ denote the Dirac mass and $\mathbf{1}\{\cdot\}$ be the indicator function. Let $\mathrm{supp}(\cdot)$ denote the support of any distribution. Let $\norm{\cdot}=\norm{\cdot}_2$ denote the $l_2$ norm and $\norm{\cdot}_{\max}$ denote the maximum norm. For any subsets $D_1,D_2$ in Euclidean space, define $\ca C(D_1,D_2)$ as the collection of functions that map $D_1$ to $D_2$ and are continuous over $D_1$.
Define the Bounded Lipschitz norm for any measure $\mu\in\ca M(\R^d)$ as
$\blnorm{\mu}:=\sup\{\int fd\mu: f:\R^d\rightarrow \R,\ \sup|f|\leq 1,\ f\mathrm{\ is\ }1\mathrm{-Lipschitz}\}.$
\section{Transformer model}
\label{sec:model}
 In this section, we describe our deep Transformer model with each data input as a sequence, and the gradient flow algorithm used for training.

\subsection{Data setting}
In our paper, the data input is both general and straightforward: an input sequence $H\in \R^{D\times (N+1)}$ consisting of $N+1$ tokens, each with dimension $D$. We consider the setting where each input sequence $H$ is associated with a label $y(H)\in\R$, where $y(H)$ is the target function we aim to learn. Furthermore, we assume that each instance $H$ is i.i.d. drawn from a population distribution $\mu$.



\paragraph{Relation to in-context learning (ICL)} Our data setting is versatile and applicable to any task involving sequential input. It particularly suits the in-context learning (ICL) scenario \citep{bai2023transformers,brown2020icl,zhang2023trained}, where models are capable of making accurate predictions on new data when prompted with training examples from the same pool. For clarity, consider the input sequence $H\in \R^{D\times (N+1)}$ formatted as follows:
$$
    H=[h_1,h_2,\dots,h_{N+1}]=\begin{bmatrix}
x_1 & x_2 & \dots & x_N & x_{N+1}\\
y_1 & y_2 & \dots & y_N & 0\\
p_1 & p_2 & \dots & p_N & p_{N+1}
\end{bmatrix}\overset{i.i.d.}{\sim}\mu,\quad y_{N+1}=y(H).
$$
Here, $\{x_i\}_{i\in[N]}$ are the input vectors, each associated with a corresponding label $\{y_i\}_{i\in[N]}$. The last token, $x_{N+1}$ is the test input for which a prediction is made. The third row contains the customized and fixed positional encoding vectors $\{p_i\}_{i\in[N]}$, which typically include ones, zeros, and indicators denoting the token for prediction. The label for the query point $x_{N+1}$ is then given by $y_{N+1}=y(H)$ in our terminology.
ICL operates in a zero-shot fashion, without any updates to the model's parameters, highlighting a unique and powerful capability of these systems to adapt and generalize based on the provided context alone. In \cite{bai2023transformers}, the authors demonstrate that fixed Transformers can approximate in-context penalized generalized linear regression to any desired degree.


 
\subsection{Model}
We follow a common configuration of Transformer architectures \citep{bai2023transformers,kajitsuka2024transformers,kim2023provable,luong2015effective,yun2019AreTU} where each Transformer block consists of two distinct layers: a self-attention mechanism layer and a token-wise feed-forward neural network layer, both equipped with skip connections. 
We assume that both layers consist of the average of $M$ heads, treated uniformly as the \emph{width} across all blocks for simplicity. The formulation for a matrix input $Z\in\R^{D\times(N+1)}$ and a given residual step size $\eta>0$ is as follows:
Each residual self-attention layer is represented by
\begin{equation}
\label{eq:attnLayer}
\mathrm{Attn}_{\theta_1,\theta_2,\dots,\theta_M}(Z,\eta)= Z+\eta M^{-1}\sum_{j=1}^M f(Z,\theta_j),
\end{equation}
and each residual feed-forward neural network layer is defined by
\begin{equation}
\label{eq:mlpLayer}
\mathrm{MLP}_{w_1,w_2,\dots,w_M}(Z,\eta)=Z+\eta M^{-1}\sum_{j=1}^M h(Z,w_j)
\end{equation}
for parameter vectors $\theta$ and $w$ in the Euclidean space.
The encoders for the self-attention and feed-forward layers are denoted as $f:\R^{D\times(N+1)}\rightarrow\R^{D\times(N+1)}$ and $h:\R^{D\times(N+1)}\rightarrow\R^{D\times(N+1)}$, respectively. The self-attention encoder $f$ formulation, commonly adopting a \emph{multiplicative} or \emph{dot-product} approach as detailed in \cite{bernhard2023alternatives,kajitsuka2024transformers,luong2015effective,tay2020synthesizer,vaswani2017attention,yun2019AreTU}, can be exemplified by
$$f(Z,\theta)=W_OW_VZ\sigma_{\text{A}}\Big[(W_KZ)^\top  W_Q Z\Big],$$
where $W_V,W_K,W_Q\in\R^{s\times D}$, and $W_O\in\R^{D\times s}$. This formulation can be
reparametrized to 
\begin{equation}
\label{eq:repara}
f(Z,\theta)=VZ\sigma_{\text{A}}\Big[Z^\top  W Z\Big],    
\end{equation}
where $V,W\in\R^{D\times D}$, $\theta=\vec{V,W}$. The activation $\sigma_{\text{A}}$ typically uses column-wise softmax, but component-wise ReLU is also viable, as in \cite{bai2023transformers}. For the feed-forward layer, an example of the encoder is $h(Z,w)=W_2\sigma_{\text{M}}(W_1Z)$, as detailed in \cite{bai2023transformers,kajitsuka2024transformers,yun2019AreTU}, where $w=\vec{W_1,W_2}$ and the activation $\sigma_{\text{M}}$ is component-wise ReLU. Alternatively, setting $h\equiv 0$ results in a Transformer block that comprises only the self-attention layer, referred to as ``attention-only'' Transformers, as discussed in \cite{bai2023transformers,lin2024transformers,mahdavi2024memorization,vaswani2017attention}.

Next, we analyze a Transformer network composed of $L$ Transformer blocks, referring to $L$ as the \emph{depth} of the model. In this paper, we introduce an additional term, $\eta$, in \eqref{eq:attnLayer} and \eqref{eq:mlpLayer} to simulate the model's evolution in a residual manner. We set the step size $\eta$ as $\Delta t/2$, where
$\Delta t=1/L.$
As $L$ increases, 
$\Delta t$ approaches zero, allowing Transformer blocks to incrementally contribute to the model's overall progression.
The structure of the network is then defined as follows:
\begin{equation}
\label{eq:discrete}
\begin{cases}
\widehat{T}_{\Theta}(H,t+\Delta t/2)&=\mathrm{Attn}_{\theta_{t,1},\dots,\theta_{t,M}}(\widehat{T}_{\Theta}(H,t),\Delta t/2)\\
\widehat{T}_{\Theta}(H,t+\Delta t)&=\mathrm{MLP}_{w_{t,1},\dots,w_{t,M}}(\widehat{T}_{\Theta}(H,t+\Delta t/2),\Delta t/2)\\
\end{cases}
\end{equation}
for each $t=0,\ \Delta t,\dots,\ (L-1)\Delta t$ with $\widehat{T}_{\Theta}(H,0)=H$. We abbreviate the subscript $t=0, \Delta t,\dots, (L-1)\Delta t$ by $t$ and $j=1, 2,\dots, M$ by $j$ for simplicity.
Here, $\Theta=\{\theta_{t,j},w_{t,j}\}_{t,j}$ denotes all parameters in the Transformer model.

Throughout this paper, we treat $D$ and $N$ as bounded finite values, while $M$ and $L$ are treated as diverging, aligning with the setting of large-scale Transformers.

\subsection{Gradient flow}
For the $l_2$ regularization with $\lambda>0$, we consider training the constructed Transformer model using the following $\lambda$-regularized risk objective:
\begin{equation}
\label{eq:l2lossDiscrete}
\widehat{Q}(\Theta)=\widehat{R}(\Theta)+\frac{\lambda}{2ML}\sum_{t}\sum_{j=1}^M (\norm{\theta_{t,j}}_2^2+\norm{w_{t,j}}_2^2),
\end{equation}
with the population squared risk function defined as
$$\widehat{R}(\Theta)=\E_{\mu}\Big[ \frac{1}{2}\Big( \mathrm{Read}[\widehat{T}_\Theta(H,1)]-y(H)\Big)^2 \Big].$$
In Section \ref{sec:continuous}, we will show that $l_2$-regularization on the parameter norms is essential for the well-posedness of the (Wasserstein) gradient flow to control parameter growth under our mild assumptions, even with a very small $\lambda>0$. Similar strategies that consider necessary $l_2$ regularization are employed in \cite{ding2022meanfield2} and \cite{wei2018regularization}. Then, drawing on the methodologies in \cite{bai2023transformers,guo2024how,lin2024transformers}, our model processes the final output through a simple \emph{read-out} function, $\mathrm{Read}[\cdot]$, extracting the $(d+1,N+1)$-th entry of its input. We propose that this read-out layer can be expanded to any linear mapping with bounded parameter norm without affecting the validity of our theoretical results.

To minimize the objective function \eqref{eq:l2lossDiscrete}, we implement the \emph{standard gradient flow method} as follows:
\begin{enumerate}
    \item[Step 1.] Initially, for each $t=0,\Delta t,\dots,(L-1)\Delta t$, we sample $M$ particles $\theta_{t,j}^{(0)},w_{t,j}^{(0)}$ with $j\in [M]$ independently from $\rho_0(\theta, w| t)$, where $\rho_0$ is a pre-defined distribution with bounded support. 
    \item[Step 2.] Then, we update all parameters $\theta_{t,j}^{(\tau)},w_{t,j}^{(\tau)}$ in the set $\Theta^{(\tau)}=\{\theta_{t,j}^{(\tau)},w_{t,j}^{(\tau)}\}_{t, j}$ using gradient flow (scaled by $ML$), which is defined as follows:
\end{enumerate}
\begin{equation}
\label{eq:GF0}
\cfrac{d\theta_{t,j}^{(\tau)}}{d\tau}=-ML \nabla_{\theta_{t,j}}[\widehat{Q}(\Theta^{(\tau)})],\quad
\cfrac{dw_{t,j}^{(\tau)}}{d\tau}=-ML\nabla_{w_{t,j}}[\widehat{Q}(\Theta^{(\tau)})].
\end{equation}

Define the function
$\widehat{R}(H;\Theta)=\frac{1}{2}\big(\mathrm{Read}[\widehat{T}_\Theta(H,1)]-y(H) \big)^2,$
and the partial derivative
$\widehat{p}_\Theta(H,t)=\partial \widehat{R}(H;\Theta)/\partial \widehat{T}_\Theta(H,t)^\top $
for each $t=0,\Delta t/2,\Delta t,\dots, (L-1)\Delta t, (L-1/2)\Delta t$.
Refer to Appendix \ref{sec:adjoint} for the explicit formula of $\widehat{p}_\Theta(H,t)$.
Using the chain rule, we derive the explicit form of the gradient flow as follows:
\begin{equation}
\label{eq:GF}
\cfrac{d\theta_{t,j}^{(\tau)}}{d\tau}=-\widehat{G}_f(\theta_{t,j}^{(\tau)},\Theta^{(\tau)},t),\quad
\cfrac{dw_{t,j}^{(\tau)}}{d\tau}=-\widehat{G}_h(w_{t,j}^{(\tau)},\Theta^{(\tau)},t).
\end{equation}
where
$$
\begin{aligned}
&\widehat{G}_f(\theta,\Theta,t)=\frac{1}{2}\E_{\mu}\Big[\nabla_\theta\mathrm{Tr}\Big( f(\widehat{T}_\Theta(H,t),\theta)^\top  \widehat{p}_\Theta(H,t+\Delta t/2)\Big)\Big]+\lambda \theta,\\
&\widehat{G}_h(w,\Theta,t)=\frac{1}{2}\E_{\mu}\Big[\nabla_w\mathrm{Tr}\Big( h(\widehat{T}_\Theta(H,t+\Delta/2),w)^\top  \widehat{p}_\Theta(H,t+\Delta t)\Big)\Big]+\lambda w
\end{aligned}
$$
for $t=0,\Delta t,\dots,(L-1)\Delta t$.

\section{Approximation by the mean-field limit}
\label{sec:approxThm}
In this section, we present a rigorous approximation result that bridges Transformer models in \eqref{eq:discrete} with their mean-field limit as continuous Transformers. 
Thus, the width $M$ and depth $L$ in our proposed model are treated as discretization of this continuous limit in the parameter space.

\subsection{Assumptions}
\label{sec:approxAssum}
In addition, we introduce the norm $\cnorm{\cdot}$ as the maximum $l_2$ norm across all columns of a matrix. 
We proceed under several mild assumptions related to the data distribution and the encoders $f$ and $h$.
\begin{assumption}[Data regularity]
\label{ass:dataBound}
There exists some universal constant $B>0$ such that, for any $H\in\mathrm{supp}(\mu)$, we have $\max\{\cnorm{H},y(H)\}\leq B$. In addition, a universal constant $K_y>0$ ensures that $y(H)$ is $K_y$-Lipschitz continuous for $\fnorm{\cdot}$ over $H\in\mathrm{supp}(\mu)$.
\end{assumption}
\paragraph{Remark} Assumption \ref{ass:dataBound} is irrelevant to the Transformer model, and is only a fairly mild assumption on the data.

\begin{assumption}[Transformer particle growth bound]
\label{ass:growth}
We assume that the gradient of $f(T,\theta)$ and $h(T,w)$ exists. Furthermore, we have
\begin{enumerate}
    \item[i.] $\cnorm{f(T,\theta)}\leq K\cnorm{T}(1+\norm{\theta}+\norm{\theta}^2)$.
    \item[ii.] For every $i\in[N+1]$, we have $\norm{\nabla_\theta f(T,\theta)_{:,i}}_2\leq \phi_{P}(\cnorm{T})(1+\norm{\theta})$.
    \item[iii.] $\norm{\nabla_{\vec{T}}\vec{f(T,\theta)}}_2\leq \phi_{T}(N,D,\fnorm{T})(1+\norm{\theta}+\norm{\theta}^2)$.
\end{enumerate}
for some continuous, monotonically increasing functions $\phi_{P},\phi_{T}$ for every coordinate, and a universal constant $K>0$. Similarly, if we replace $f$ with $h$ and $\theta$ with $w$, the same conditions apply.
\end{assumption}

\paragraph{Remark} There are three key observations for Assumption \ref{ass:growth}. Firstly, it incorporates the $\cnorm{\cdot}$ norm, which is particularly useful for handling sequential inputs where each column represents a token. Secondly, as we consider higher-order multiplications between data and parameters, this assumption accommodates a broader range of self-attention encoders, such as the one in \eqref{eq:repara} with softmax or ReLU activation (where the derivative is defined as $\mathrm{ReLU}'(x)=\mathbf{1}\{x>0\}$). Lastly, a particularly interesting and frontier question is identifying the function $\phi_T$, and we have listed related literature in Appendix \ref{section:related_work}.

\begin{assumption}[Locally Lipschitz continuous gradient in expectation]
\label{ass:growth2}
Besides Assumption \ref{ass:growth}, for any $L_T >0$ and any $L_T$-Lipschitz continuous functions $T_1=T_1(H)$ and $T_2=T_2(H)$, for every $i\in[N+1]$, we have
$$
\begin{aligned}
i.\ &\E_\mu \norm{\nabla_\theta f(T_1,\theta)_{:,i}-\nabla_\theta f(T_2,\theta)_{:,i}}_2\leq \phi_{PT}(\norm{\theta},K_T,L_T)\sup_H \cnorm{T_1-T_2},\\
ii.\ &\E_\mu \norm{\nabla_{\vec{T}}\vec{f(T_1,\theta)}-\nabla_{\vec{T}}\vec{f(T_1,\theta')}}_2\leq \phi_{TP}(N,D,\sup_H\fnorm{T_1},K_P,L_T)\norm{\theta-\theta'}\\
iii.\ &\E_\mu \norm{\nabla_\theta f(T_1,\theta)_{:,i}-\nabla_\theta f(T_1,\theta')_{:,i}}_2\leq \phi_{PP}(K_P,\sup_H\cnorm{T_1},L_T)\norm{\theta-\theta'},\\
iv.\ &\E_\mu \norm{\nabla_{\vec{T}}\vec{f(T_1,\theta)}-\nabla_{\vec{T}}\vec{f(T_2,\theta)}}_2\leq \phi_{TT}(N,D,K_T,\norm{\theta},L_T)\sup_H\fnorm{T_1-T_2}
\end{aligned}
$$
for $K_T=\max\{\sup_H\cnorm{T_1},\sup_H\cnorm{T_2}\},K_P=\max\{\norm{\theta},\norm{\theta'}\}$, and some continuous functions $\phi_{PT},\phi_{TP}, \phi_{PP}, \phi_{TT}$ that are monotonically increasing for every coordinate. Similarly, if we replace $f$ with $h$ and $\theta$ with $w$, the same conditions apply.
\end{assumption}

\paragraph{Remark} Assumption \ref{ass:growth2} states that functions are locally Lipschitz continuous in expectation, suitable for encoders that utilize ReLU functions and have second-order derivatives almost everywhere. This assumption is naturally satisfied if the activation has a locally Lipschitz continuous gradient.

Define $\ca P^2$ as the set of probability measures endowed with the Wasserstein-$2$ distance, where the Lipschitz continuity with respect to the depth holds, i.e. there exists some universal constant $C_\rho>0$ such that $\blnorm{\rho(\cdot,t)-\rho(\cdot,t')}\leq C_\rho |t-t'|$ for any $t,t'\in[0,1]$.
\paragraph{Choice of $\rho_0$} Suppose
$\rho_0\in \ca P^2$ satisfies that for any $t\in[0,1]$, the support of $\rho_0(\cdot,\cdot,t)$ is contained within the set $\{(\theta,w):\norm{\theta}^2+\norm{w}^2\leq R^2\}$ for a universal constant $R$. Additionally, for each $t\in[0,1]$, it holds that $\int_{\theta,w}\rho_0(\theta,w,t)d(\theta,w)=1$.
This condition suits common bounded support distributions, and a natural choice is a uniform distribution across a disk with radius $R$ for each $t\in[0,1]$.

We would like to clarify that verifying Assumptions \ref{ass:growth} and \ref{ass:growth2} for concrete examples of Transformer architectures with smooth activation functions is fairly intuitive, and the proof is mainly based on a series of tedious calculations. We give a concrete proposition with its brief proof in Appendix \ref{sec:verify}.

\subsection{Continuous Transformer and Wasserstein gradient flow}
\label{sec:continuous}
Drawing inspiration from \cite{lu2020meanfield} and \cite{veit2016residual}, which suggest that deep residual networks behave like ensembles of residual networks locally, we apply a similar manipulation to formulate the continuous version of \eqref{eq:discrete}. Consider the following continuous version $T_\rho(H,t)\in\R^{D\times(N+1)}$, governed by the following continuous ODE that \emph{averages the two encoders}:
\begin{equation}
\label{eq:ODE}
\dot{T}_\rho(H,t)=\int_{\theta,w} \frac{f(T_\rho(H,t),\theta)+h(T_\rho(H,t),w)}{2} \rho(\theta,w,t)d(\theta,w),\quad T_\rho(H,0)= H
\end{equation}
In \eqref{eq:ODE}, each encoder $f$ or $h$ is conceptualized as a particle, and we consider the distribution of these particles denoted as $\rho(\theta,w,t)$.  For any $\rho\in \ca P^2$ that have a bounded support, the well-posedness of $T_\rho(H,t)$ that satisfies the Transformer ODE \eqref{eq:ODE} is shown in Proposition \ref{prop:odeSol}.
Transitioning to the framework with continuous Transformers, our objective shifts to minimizing the $l_2$ risk function with regularization on the second moment of $\rho$ as follows:
\begin{equation}
\label{eq:energyFunctional}
\begin{aligned}
Q(\rho)=R(\rho)+\frac{\lambda}{2} \int_{0}^1\int_{\theta,w}(\norm{\theta}_2^2+\norm{w}_2^2)\rho(\theta,w,t) d(\theta,w)dt,
\end{aligned}
\end{equation}
with
\begin{equation}
\label{eq:l2loss}
R(\rho)=\E_{\mu}\Big[ \frac{1}{2}\Big( \mathrm{Read}[T_\rho(H,1)]-y(H)\Big)^2 \Big].
\end{equation}
Define $p_\rho(H,t)\in\R^{D\times(N+1)}$, the partial derivative of $R(\rho)$ relative to $T_\rho(H,t)$ at a local query point $H$, as the solution derived in Appendix \ref{sec:adjoint} using the classical adjoint sensitivity method \citep{bittner1963adjoint}:
$$\mathrm{vec}[p_\rho(H,t)]^\top =\Big(\mathrm{Read}[T_\rho(H,1)]-y(H)\Big)\exp\Big(\int_t^1 \int_\beta \nabla_{\vec{T}} \vec{g(T_\rho(H,t),\beta)\rho(\beta,t) d\beta dt}\Big)_{DN+d+1,:}.$$
Using this, we can compute the functional derivative to $\rho$ as follows:
\begin{equation}
\label{eq:gradient2rho}
\frac{\delta Q}{\delta \rho}(\theta, w, t)=\E_{\mu} \Big[ \mathrm{Tr}\Big( \Big[\frac{f(T_\rho(H,t),\theta)+h(T_\rho(H,t),w)}{2}\Big]^\top p_\rho(H,t)\Big)\Big]+\frac{\lambda}{2} (\norm{\theta}^2_2+\norm{w}^2_2).
\end{equation}
The following Proposition claims that $\frac{\delta Q}{\delta \rho}$ is indeed the derivative with respect to $\rho$ (specifically, the Fr\'{e}chet derivative \citep{frigyik2008frechet}) for the functional $Q(\rho)$.
\begin{proposition}[Functional derivative to $\rho$]
\label{prop:gradient2rho}
Under Assumptions \ref{ass:dataBound} and \ref{ass:growth}, for any pair $\rho,\nu\in\ca P^2$ that have bounded supports, we have
$$Q(\rho + \eta(\nu-\rho))=Q(\rho)+\eta\Big\langle \frac{\delta Q}{\delta \rho}, \nu - \rho \Big\rangle+o(\eta),$$
where $\frac{\delta Q}{\delta \rho}$ is defined in \eqref{eq:gradient2rho}, and $\langle \frac{\delta Q}{\delta \rho}, \nu - \rho \rangle=\int_0^1\int_{(\theta,w)} \frac{\delta Q}{\delta \rho}\cdot(\nu-\rho) d(\theta,w)dt\in\R$.
\end{proposition}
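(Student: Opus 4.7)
The plan splits $Q = R + \mathrm{reg}$ and treats each part separately, handling the risk by linearizing the forward ODE and then pairing with the adjoint $p_\rho$.

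\textbf{Step 1 (Regularizer is trivial).} The second summand of $Q(\rho)$ is linear in $\rho$, so its Fr\'{e}chet derivative is exactly $\frac{\lambda}{2}(\norm{\theta}^2+\norm{w}^2)$, which already matches the second term of \eqref{eq:gradient2rho}. It therefore suffices to establish the claimed expansion for $R(\rho)$ alone.

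\textbf{Step 2 (Linearization of the forward ODE).} Write $g(T,\theta,w):=[f(T,\theta)+h(T,w)]/2$ and $\rho_\eta:=\rho+\eta(\nu-\rho)$. By Proposition \ref{prop:odeSol} the trajectories $T_{\rho_\eta}(H,\cdot)$ exist and, using the bounded supports of $\rho,\nu$ together with Assumption \ref{ass:growth}, are uniformly bounded in $\eta\in[0,1]$ and $H\in\mathrm{supp}(\mu)$. I would decompose
$$T_{\rho_\eta}(H,t)=T_\rho(H,t)+\eta\,\Delta(H,t)+r_\eta(H,t),$$
where $\Delta$ solves the linear ODE obtained by formally differentiating \eqref{eq:ODE} in $\eta$ at $0$:
$$\dot\Delta(H,t)=\!\int\nabla_T g(T_\rho,\theta,w)[\Delta(H,t)]\,\rho(\theta,w,t)\,d(\theta,w)+\!\int g(T_\rho,\theta,w)(\nu-\rho)(\theta,w,t)\,d(\theta,w),\ \Delta(H,0)=0.$$
A first Gr\"{o}nwall argument (based on Assumption \ref{ass:growth}iii and its $h$-analogue) yields $\sup_{t,H}\fnorm{T_{\rho_\eta}-T_\rho}=O(\eta)$. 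Feeding this back into the ODE for $r_\eta$ and invoking the locally Lipschitz gradients in expectation supplied by Assumption \ref{ass:growth2} (items ii and iv for $f$, and the analogues for $h$) upgrades the bound to $\sup_{t,H}\fnorm{r_\eta(H,t)}=o(\eta)$.

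\textbf{Step 3 (Pairing with the adjoint).} Taylor-expanding the squared loss and inserting Step 2 gives
$$R(\rho_\eta)-R(\rho)=\eta\,\E_\mu\!\left[(\mathrm{Read}[T_\rho(H,1)]-y(H))\cdot\mathrm{Read}[\Delta(H,1)]\right]+o(\eta).$$
The adjoint $p_\rho(H,t)$ constructed in Appendix \ref{sec:adjoint} satisfies the backward linear equation dual to the $\Delta$-equation, with terminal condition carrying the Read-derivative of the loss at $H$. The standard pairing $\frac{d}{dt}\mathrm{Tr}(p_\rho^\top\Delta)$ telescopes the homogeneous part of the linearization and leaves only the forcing term, so integrating on $[0,1]$ (using $\Delta(H,0)=0$) yields
$$(\mathrm{Read}[T_\rho(H,1)]-y(H))\cdot\mathrm{Read}[\Delta(H,1)]=\int_0^1\!\!\int\mathrm{Tr}\!\left(g(T_\rho,\theta,w)^\top p_\rho(H,t)\right)(\nu-\rho)(\theta,w,t)\,d(\theta,w)\,dt.$$
Taking $\E_\mu$ of both sides, with Fubini justified by the uniform $L^\infty$ bounds from Assumptions \ref{ass:dataBound} and \ref{ass:growth}, reproduces exactly the first summand of \eqref{eq:gradient2rho} tested against $\nu-\rho$.

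\textbf{Main obstacle.} The delicate point is Step 2: Assumption \ref{ass:growth} by itself only gives the crude bound $r_\eta=O(\eta)$ via one Gr\"{o}nwall, so upgrading to $o(\eta)$ \emph{uniformly in} $H$ requires careful use of the in-expectation local Lipschitz bounds of Assumption \ref{ass:growth2}, combined with the uniform support bound on $\rho,\nu$ to keep $\norm{\theta},\norm{w}$ controlled along the trajectory. Once this second-order perturbation estimate is in hand, everything else is a routine variation-of-parameters/adjoint computation.
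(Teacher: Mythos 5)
Your plan is essentially the paper's own proof. The paper isolates Lemma \ref{lemma:gradient2rhoBound}, which performs exactly your Step 2: it Taylor-expands the forward ODE around $T_\rho$, invokes Lemma \ref{lemma:Tdiff2rho} to obtain $\fnorm{T_{\rho_\eta}-T_\rho}=O(\eta)$, and solves the resulting inhomogeneous linear ODE by an integrating factor; the main proof then recognizes that integrating factor as the matrix exponential appearing in the formula \eqref{eq:pSolution} for $p_\rho$ and contracts against $\mathrm{Read}[T_\rho(H,1)]-y(H)$, which is precisely your Step 3 in disguise — your duality telescope $\tfrac{d}{dt}\mathrm{Tr}(p_\rho^\top\Delta)$ and the paper's explicit exponential solution are the same variation-of-parameters computation, just written from opposite ends.

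One remark on your stated ``main obstacle'' is worth keeping. The paper's Lemma \ref{lemma:gradient2rhoBound} is stated under Assumption \ref{ass:growth} alone, and its proof writes the Taylor remainder as $o(\eta)$ without arguing uniformity in $t$, which is what one actually needs to propagate $o(\eta)$ through the Gr\"{o}nwall/integrating-factor step. Assumption \ref{ass:growth} guarantees existence of $\nabla_T g$ together with growth bounds on its operator norm, but existence alone does not give equicontinuity of the Fr\'{e}chet derivative along the trajectory, so strictly speaking the $o(\eta)$ uniformity requires either the local Lipschitz control supplied by Assumption \ref{ass:growth2}, or continuity of $\nabla_T g$ together with compactness of the set of trajectory points. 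Since Assumption \ref{ass:growth2} is a standing hypothesis elsewhere in the paper, this is not a substantive gap — but your suspicion that Assumption \ref{ass:growth} ``by itself only gives the crude bound $r_\eta=O(\eta)$'' points at a genuine informality in the paper's stated hypotheses for this proposition, and your more careful route via Assumption \ref{ass:growth2} fixes it. Everything else in your outline (splitting off the regularizer, the linearized ODE for $\Delta$, and the adjoint pairing yielding the first summand of \eqref{eq:gradient2rho} tested against $\nu-\rho$) matches the paper and is correct.
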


Now, we are in a position to display the gradient flow of $\rho$ in the Wasserstein metric \citep{chizat2018global}, given by a McKean-Vlasov type equation \citep{ambrosio2005GradientFI,richard2000variationalFPE,nitanda2022particle,otto2001geometry}. Specifically, we study the following partial differential equation of the distribution $\rho^{(\tau)}(\theta,w,t)$:
\begin{equation}
\label{eq:rhoPDE}
\begin{aligned}
\frac{d\rho^{(\tau)}(\theta,w,t)}{d\tau}&=\mathrm{div}_{(\theta,w)} \Big(\rho^{(\tau)} \nabla_{(\theta,w)}\frac{\delta Q}{\delta \rho}\bigg|_{\rho=\rho^{(\tau)}}\Big)\\
&=\mathrm{div}_\theta \Big(\rho^{(\tau)} G_f(\theta,\rho^{(\tau)},t)\Big) + \mathrm{div}_w \Big(\rho^{(\tau)} G_h(w,\rho^{(\tau)},t)\Big),
\end{aligned}
\end{equation}
where $\rho^{(0)}=\rho_0$, $\mathrm{div}$ is the divergence operator, and the gradient functions are defined as
$$
\begin{aligned}
&G_f(\theta,\rho,t)=\frac{1}{2}\E_{\mu}\Big[\nabla_\theta\mathrm{Tr}\Big( f(T_\rho(H,t),\theta)^\top  p_\rho(H,t)\Big)\Big]+\lambda \theta,\\
&G_h(w,\rho,t)=\frac{1}{2}\E_{\mu}\Big[\nabla_w\mathrm{Tr}\Big( h(T_\rho(H,t),w)^\top  p_\rho(H,t)\Big)\Big]+\lambda w.
\end{aligned}
$$
Propositions \ref{prop:wellGF} and \ref{prop:wellWGF} provide the well-posedness of both gradient flow and Wasserstein gradient flow respectively.
In both propositions, a $\lambda>0$ is essential to stabilize the optimization process by controlling both the maximum and average norms across all parameters.
If $\lambda$ is set to $0$, it is only possible to establish the well-posedness of \eqref{eq:rhoPDE} over a finite maximal interval \citep{lu2020meanfield}. Similar adjustments to regularize the risk function are also noted in \cite{ding2022meanfield2}.

\begin{proposition}[Existence and uniqueness of Wasserstein gradient flow]
\label{prop:wellWGF}
Under Assumptions \ref{ass:dataBound} and \ref{ass:growth}, there exists a unique solution $(\rho^{(\tau)})_{\tau\geq 0}\in \ca P^2\times\R$ with $\rho^{(0)}=\rho_0$ for \eqref{eq:rhoPDE}. Additionally, for any $\tau\geq 0$, we have

\qquad{i.\ }$\rho^{(\tau)}$ has a bounded support $\{\theta,w:\norm{\theta}^2+\norm{w}^2\leq R_\tau\}\times[0,1]$, where $R_\tau=(R+1)\exp(C_R\tau)-1$ for some constant $C_R$ that only depends on $N,D,\lambda$ and the parameters of the assumptions.

\qquad{ii.\ }$\int_0^1 (\norm{\theta}^2+\norm{w}^2)\rho^{(\tau)}(\theta,w,t) d(\theta,w) dt\leq A_0^2$,
where $A_0:=R^2 + \lambda^{-1}\big(2B^2+2B^2\exp(K(1+R+R^2))^2\big)$.

\qquad{iii.\ }$\int_{(\theta,w)}\rho^{(\tau)}(\theta,w,t)d(\theta,w)=1$ for any $t\in[0,1]$.
\end{proposition}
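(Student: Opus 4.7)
The plan is to reformulate the PDE \eqref{eq:rhoPDE} as a continuity equation driven by the velocity field $(-G_f, -G_h)$, and then prove existence by constructing the solution as the pushforward of $\rho_0$ along a characteristic ODE flow, while uniqueness is obtained by a contraction argument on the curve of measures. Concretely, for any candidate curve $(\rho^{(\tau)})_{\tau\in[0,T]}$ in $\ca P^2$, I would consider the characteristic system
\begin{equation*}
\tfrac{d}{d\tau}\theta^{(\tau)}=-G_f(\theta^{(\tau)},\rho^{(\tau)},t),\qquad \tfrac{d}{d\tau}w^{(\tau)}=-G_h(w^{(\tau)},\rho^{(\tau)},t),
\end{equation*}
with initial condition sampled from $\rho_0$, and define the fixed-point map $\Phi$ sending $(\rho^{(\tau)})$ to the pushforward of $\rho_0$ along this flow. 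Assumption \ref{ass:growth} and the well-posedness of the forward ODE \eqref{eq:ODE} (Proposition \ref{prop:odeSol}) together with an analogous construction for the adjoint $p_\rho$ (Appendix \ref{sec:adjoint}) ensure that $G_f,G_h$ are well-defined and locally Lipschitz in $\theta,w$, so characteristics exist locally.

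For item~i, I would bootstrap the support bound by computing $\tfrac{d}{d\tau}(\|\theta^{(\tau)}\|^2+\|w^{(\tau)}\|^2)$ along characteristics. The regularization contributes $-2\lambda(\|\theta\|^2+\|w\|^2)$, while the non-regularized part of $G_f$ is controlled by Assumption \ref{ass:growth}(ii), the data bound in Assumption \ref{ass:dataBound}, and the uniform bounds on $T_\rho$ and $p_\rho$ propagated through \eqref{eq:ODE} and the adjoint system. This yields a Gronwall inequality of the form $\tfrac{d}{d\tau}(\|\theta\|^2+\|w\|^2+1) \le C_R(\|\theta\|^2+\|w\|^2+1)$, which integrates to the stated exponential bound $R_\tau=(R+1)\exp(C_R\tau)-1$. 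For item~iii, mass conservation follows immediately from the divergence form of \eqref{eq:rhoPDE} and the fact that the velocity field is smooth on the (compactly supported) solution, so the pushforward construction preserves total mass.

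For item~ii, I would exploit the gradient-flow structure: along the Wasserstein gradient flow, the energy $Q(\rho^{(\tau)})$ is non-increasing, since $\tfrac{d}{d\tau}Q(\rho^{(\tau)}) = -\int \rho^{(\tau)}\,\|\nabla_{(\theta,w)}\tfrac{\delta Q}{\delta\rho}\|^2\,d(\theta,w)\,dt \le 0$ using Proposition \ref{prop:gradient2rho}. Consequently $\tfrac{\lambda}{2}\int_0^1\!\int(\|\theta\|^2+\|w\|^2)\rho^{(\tau)}\,d(\theta,w)\,dt \le Q(\rho^{(\tau)}) \le Q(\rho_0)$, and bounding $Q(\rho_0)$ via the initial support $R$ together with the Gronwall estimate $\cnorm{T_{\rho_0}(H,1)} \le B\exp(K(1+R+R^2))$ coming from \eqref{eq:ODE} and Assumption \ref{ass:growth}(i) yields precisely the quantity $A_0^2$.

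Finally, for uniqueness, I would show that $\Phi$ is a contraction on $C([0,T],\ca P^2)$ equipped with $\sup_\tau W_2$ for $T$ small. The key step is to prove that the drift $(G_f,G_h)$ is Lipschitz in $\rho$ with respect to $W_2$ on the a priori ball $\{\|\theta\|^2+\|w\|^2\le R_T^2\}$: this reduces to showing that $T_\rho(H,\cdot)$ and $p_\rho(H,\cdot)$ depend Lipschitz-continuously on $\rho$, which is exactly where Assumption \ref{ass:growth2}(i)–(iv) is needed — the local Lipschitz continuity of $\nabla f,\nabla h$ in both $T$ and $\theta$ lets one propagate a $W_2$-perturbation of $\rho$ through the forward ODE \eqref{eq:ODE} and the adjoint equation via Gronwall. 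This is the main technical obstacle: because $f$ and $h$ are only locally (not globally) Lipschitz and $p_\rho$ involves a matrix exponential of integrated Jacobians, one must carefully track constants in terms of $R_T$ and the data bound $B$, using the compact support already established in item~i. Once contraction is proven on a small interval, a standard extension argument (iterating from $\tau=T$) gives global existence and uniqueness for all $\tau\ge 0$.
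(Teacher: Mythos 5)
Your proposal and the paper's proof take genuinely different routes, and the difference matters: your route would require \textbf{Assumption \ref{ass:growth2}}, whereas the proposition is stated (and proved in the paper) under \textbf{Assumptions \ref{ass:dataBound} and \ref{ass:growth} only}.

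The paper does not construct the Wasserstein gradient flow via characteristics and a Picard fixed-point on $C([0,T],\ca P^2)$. Instead it appeals to the abstract theory of Ambrosio--Gigli--Savar\'e (Theorem 11.2.2 of \cite{ambrosio2005GradientFI}): it shows that the restricted functional $Q_r$ (i.e. $Q$ forced to be $+\infty$ outside $P_r$) is proper, $W_2$-continuous on its domain, and has Lipschitz-continuous derivative along admissible transports $\gamma$, i.e. $\alpha\mapsto Q(\rho_\alpha^\gamma)/C_2^2(\gamma)$ is $C^{1,1}$. Crucially, the quantity whose regularity must be controlled there is the \emph{value} $\frac{\delta Q}{\delta\rho}(\beta,t)$ and its Lipschitz dependence on $\rho$ in the $\|\cdot\|_1$ metric (Lemma \ref{lemma:pRho2rho}), not the $\beta$-gradient $G(\beta,\rho,t)=\nabla_\beta\frac{\delta Q}{\delta\rho}$. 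Lipschitz dependence of the value is delivered by the uniform bound on $\nabla_{\vec T}\vec{g}$ from Assumption \ref{ass:growth}(iii), plus Lemmas \ref{lemma:Tbound}, \ref{lemma:Tdiff2rho}, \ref{lemma:gradientComponents2rhoBound}. The exit-time/Gr\"onwall argument in Step IV of the paper (yielding items i, ii, iii) is then identical in spirit to what you describe.

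Here is the concrete gap in your plan. You claim that ``Assumption \ref{ass:growth} \ldots together with an analogous construction for the adjoint $p_\rho$ \ldots ensure that $G_f,G_h$ are well-defined and locally Lipschitz in $\theta,w$, so characteristics exist locally.'' That is not true under Assumption \ref{ass:growth} alone. Assumption \ref{ass:growth}(ii) gives only a \emph{growth bound} $\|\nabla_\theta f(T,\theta)_{:,i}\|\le\phi_P(\cnorm{T})(1+\|\theta\|)$, not a modulus of continuity of $\nabla_\theta f$ in $\theta$; that Lipschitz-in-$\theta$ information is precisely Assumption \ref{ass:growth2}(iii). Without it the velocity field $G(\beta,\rho,t)$ may fail to be locally Lipschitz in $\beta$, so your characteristic ODEs may not have unique solutions, and Picard--Lindel\"of does not apply. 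You do correctly note later that the Lipschitz dependence of $G$ on $\rho$ (needed for your contraction) also requires Assumption \ref{ass:growth2}(i), but you misdiagnose the $\beta$-regularity as free. This is not a cosmetic slip: one of the paper's stated contributions (Section \ref{sec:contribution}) is that the Wasserstein-level well-posedness is established \emph{without} the stronger gradient-Lipschitz hypothesis, precisely so that non-smooth activations can be covered at the level of the mean-field flow; the extra regularity of Assumption \ref{ass:growth2} is only invoked when one must compare the discrete and continuous flows (Proposition \ref{prop:wellGF}, Theorem \ref{thm:GFApprox}). A characteristics-based proof cannot achieve that separation, because it represents the measure flow \emph{through} the particle flow, which is exactly the object whose well-posedness requires the additional regularity.

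Your treatment of items i--iii is fine in outline and essentially matches the paper once well-posedness is granted: the exponential support bound comes from $\|G(\beta,\rho,t)-\lambda\beta\|\le C_R(1+\|\beta\|)$ and Gr\"onwall; the second-moment bound comes from the energy identity $\frac{d}{d\tau}Q(\rho^{(\tau)})=-\int\rho^{(\tau)}\|G\|^2\le 0$ together with $\frac{\lambda}{2}\int\|\beta\|^2\rho^{(\tau)}\le Q(\rho^{(\tau)})\le Q(\rho_0)$ and the bound $\cnorm{T_{\rho_0}(H,1)}\le B\exp(K(1+R+R^2))$; and mass conservation follows from the divergence form. (Minor point: the Gr\"onwall should be run on $\|\beta^{(\tau)}\|$, not $\|\beta^{(\tau)}\|^2$, to match the stated $R_\tau=(R+1)\exp(C_R\tau)-1$.) To repair the proof under the stated assumptions, you should replace the fixed-point-on-characteristics construction with the AGS verification: prove properness, $W_2$-continuity of $Q_r$ via Lemmas \ref{lemma:Tbound} and \ref{lemma:Tdiff2rho}, and the Lipschitz-derivative-along-transports property via Lemma \ref{lemma:pRho2rho}, and then use the exit-time argument to patch solutions together as $r\to\infty$.
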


\subsection{Approximation of large-scale Transformer}
In this section, we discuss the general results associated with approximating our discrete Transformer model to its mean-field limit.
First, we highlight that the minimization of the risk function with discretization, whether or not regularization is included, closely approximates the minimal risk achievable by continuous models.
\begin{proposition}[Global minimum approximation of discretization] 
\label{prop:globalApprox}
Under Assumptions \ref{ass:dataBound} and \ref{ass:growth}, we define $\ca P^{2,r}$ as the set of distributions in $\ca P^2$ concentrated on $\{(\theta,w):\norm{\theta}^2+\norm{w}^2\leq r^2\}\times[0,1]$.  for any $r>0$. Then there exists a constant $C$ dependent on $N,D,r$ and the parameters of the assumptions such that
$$
\begin{aligned}
&\inf_\Theta \widehat{R}(\Theta)\leq \inf_{\rho\in \ca P^{2,r}} R(\rho) + C\Big( L^{-1}+\sqrt{\frac{\log(L+1)}{M}}\Big),\\
&\inf_\Theta \widehat{Q}(\Theta)\leq \inf_{\rho\in \ca P^{2,r}} Q(\rho) + C(1+\lambda)\Big( L^{-1}+\sqrt{\frac{\log(L+1)}{M}}\Big).
\end{aligned}
$$
\end{proposition}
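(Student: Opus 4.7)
The plan is to construct, for any fixed $\rho \in \ca{P}^{2,r}$, a discrete parameter set $\Theta$ whose trajectory tracks the continuous ODE solution $T_\rho(H, 1)$ up to the stated accuracy, and then translate trajectory closeness into risk closeness via Lipschitz arguments. The natural construction uses the probabilistic method: for each $t \in \{0, \Delta t, \dots, (L-1)\Delta t\}$, independently draw $M$ particles $(\theta_{t,j}, w_{t,j}) \sim \rho(\cdot, \cdot, t)$, producing a random $\Theta$, and then argue that the \emph{expected} gap $\mathbb{E}[\widehat{R}(\Theta)] - R(\rho)$ satisfies the target bound; existence of a single good realization follows immediately.

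The central step is a trajectory-comparison bound of the form $\E_\mu\,\cnorm{\widehat{T}_\Theta(H, 1) - T_\rho(H, 1)} \lesssim L^{-1} + \sqrt{\log(L+1)/M}$, which I would establish by inducting over blocks and decomposing the per-block discrepancy into three sources. First, a \emph{splitting error}: within one block we apply $\tfrac{\Delta t}{2} f$ then $\tfrac{\Delta t}{2} h$, whereas the ODE advances by $\tfrac{\Delta t}{2}(f + h)$ at the block's starting state; the mismatch is controlled by the Lipschitz-in-$T$ estimates of Assumption \ref{ass:growth} applied to an $O(\Delta t)$ intermediate displacement, yielding an $O((\Delta t)^2)$ per-block contribution, hence $O(L^{-1})$ in total. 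Second, a \emph{time-discretization error}: the frozen measure $\rho(\cdot,\cdot,t)$ over $[t, t+\Delta t]$ differs from the time-varying $\rho(\cdot,\cdot,s)$ by $O(\Delta t)$ in bounded-Lipschitz norm (by membership of $\rho$ in $\ca{P}^2$), which contributes another $O(L^{-1})$. Third, a \emph{Monte-Carlo error}: for each fixed $t$, replacing the integral $\int (f+h)/2 \,\rho(\cdot,\cdot,t)$ by its $M$-particle average incurs a sub-Gaussian fluctuation of order $1/\sqrt{M}$ on $\R^{D\times(N+1)}$-valued integrands; since the particle norms are bounded by $r$, the integrands are bounded on the trajectory (which itself is bounded via a Grönwall-type control using Assumption \ref{ass:growth}), so a maximal inequality over the $L$ blocks contributes $\sqrt{\log(L+1)/M}$. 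Summing per-block errors and closing the Grönwall loop using the local-Lipschitz-in-$T$ bounds from Assumption \ref{ass:growth2} gives the trajectory estimate.

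Converting this to the risk bound is routine: the readout is a bounded linear map and $y(H)$ is bounded by Assumption \ref{ass:dataBound}, so $|\widehat{R}(\Theta) - R(\rho)|$ is controlled by a constant (depending on $B, r, N, D$ and the $\phi$-functions) times the trajectory error. Taking infimum over $\rho \in \ca{P}^{2, r}$ on the right and infimum over $\Theta$ on the left yields the first inequality. For the regularized version, I would additionally compare the discrete penalty $\frac{1}{2ML}\sum_{t,j}(\|\theta_{t,j}\|^2 + \|w_{t,j}\|^2)$ with $\tfrac{1}{2}\int_0^1\!\int(\|\theta\|^2+\|w\|^2)\rho\,d(\theta,w)\,dt$; since all particles are uniformly bounded by $r$, Hoeffding/Bernstein concentration over the $LM$ independent draws gives a fluctuation of order $\sqrt{\log(L+1)/M}$ plus a Riemann-sum error of order $L^{-1}$ (using Lipschitzness of $t \mapsto \rho(\cdot,\cdot,t)$ against the bounded test function $\|\theta\|^2+\|w\|^2$). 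Multiplying by $\lambda$ produces the $(1+\lambda)$ factor.

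The main obstacle is the \emph{splitting error}: unlike in pure ResNet mean-field analyses where only one encoder appears per block, the Transformer block applies $f$ and $h$ sequentially whereas the continuous ODE \eqref{eq:ODE} averages them. Careful bookkeeping is required to show that the $O(\Delta t)$ intermediate state produced by the attention sublayer, when fed into the MLP sublayer, differs from feeding the original state by a quantity controlled using Assumption \ref{ass:growth2}.iv (local Lipschitzness of $\nabla_{\vec T}\vec{h}$ in $T$), and that the resulting cross-term is genuinely $O((\Delta t)^2)$ rather than $O(\Delta t)$. The Monte-Carlo step is standard once the trajectories are shown to stay in a bounded set uniformly in $\tau$, which itself requires the Grönwall argument to be run only after the splitting and discretization errors have been isolated.
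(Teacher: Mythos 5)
Your proposal is essentially the same argument the paper uses (Lemma \ref{lemma:oracleApprox} together with Lemma \ref{lemma:normAvg}): random i.i.d.\ sampling from $\rho$, a three-way decomposition of the trajectory error into a splitting error, a Riemann/time-discretization error, and a Monte-Carlo error bounded by Hoeffding plus a union bound over the $L$ blocks, closed by a Gr\"{o}nwall loop, and then converted to risk and penalty bounds. One small correction: for the splitting error you only need the Jacobian \emph{bound} $\norm{\nabla_{\vec{T}}\vec{h}}\le\phi_T(\cdot)(1+\norm{w}+\norm{w}^2)$ from Assumption \ref{ass:growth}(iii) to deduce that $T\mapsto h(T,w)$ is Lipschitz and hence that the $O(\Delta t)$ intermediate displacement induces an $O((\Delta t)^2)$ per-block contribution; you do \emph{not} need the local Lipschitzness of $\nabla_{\vec{T}}\vec{h}$ from Assumption \ref{ass:growth2}(iv). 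This matters because the proposition is stated (and proved in the paper) under Assumptions \ref{ass:dataBound} and \ref{ass:growth} only, so invoking Assumption \ref{ass:growth2} unnecessarily weakens the statement you would obtain.
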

Proposition \ref{prop:globalApprox} specifies that the distributions under consideration must have bounded support. While it is typically challenging to confirm whether the minimal risk is indeed achieved on a distribution with bounded support, this assumption is justified as $\lambda$ regulates parameter norms, implicitly encourages solutions residing in a compact region of the parameter space.

We now present the main theorem concerning the convergence of the gradient flow process to the Wasserstein gradient flow as outlined in \eqref{eq:rhoPDE}. The proof with detailed explanation of the techniques used in Theorem \ref{thm:GFApprox} is provided in Appendix \ref{sec:approxThmApp}.

\begin{theorem}[Gradient flow approximation of discretization]
\label{thm:GFApprox}
Define the empirical distribution as $\hat{\rho}^{(\tau)}:=\frac{1}{ML}\sum_t\sum_{j=1}^M\delta(\theta^{(\tau)}_{t,j},w^{(\tau)}_{t,j},t)$ for any $\tau\geq 0$.
Under Assumptions \ref{ass:dataBound}-\ref{ass:growth2}, we have that $(\hat{\rho}^{(\tau)})_{\tau\geq 0}$ weakly converges to $(\rho^{(\tau)})_{\tau\geq 0}$ almost surely along any sequence such that $L\rightarrow\infty, M/\log L\rightarrow \infty$. Moreover, for any fixed $\tau>0$ and any $\delta>0$, with probability at least $1-3\exp(-\delta)$ with respect to the parameter initialization $\Theta^{(0)}$, we have
\begin{enumerate}
\item [i.] $\sup_{s\in[0,\tau]}|\mathrm{Read}[\widehat{T}_{\Theta^{(s)}}(H,t)]-\mathrm{Read}[T_{\rho^{(s)}}(H,t)]|\leq C \Big( L^{-1}+\sqrt{\frac{\delta+\log(L+1)}{M}}\Big)$
\item [ii.] $\sup_{s\in[0,\tau]}|\widehat{R}(\Theta^{(s)})-R(\rho^{(s)})|\leq C \Big( L^{-1}+\sqrt{\frac{\delta+\log(L+1)}{M}}\Big)$
\item [iii.] $\sup_{s\in[0,\tau]}|\widehat{Q}(\Theta^{(s)})-Q(\rho^{(s)})|\leq C \Big( L^{-1}+\sqrt{\frac{\delta+\log(L+1)}{M}}\Big)$
\end{enumerate}
for some constant $C$ that depends on on $N,D,\tau,\lambda$ and the parameters of the assumptions.
\end{theorem}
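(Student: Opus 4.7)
}

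The overall strategy is a propagation-of-chaos / coupling argument that factors the total error into three pieces: (a) a depth-discretization error coming from replacing the continuous ODE in $t$ by its length-$\Delta t$ Euler scheme, (b) a width Monte-Carlo error coming from replacing the mean-field expectation over $\rho^{(\tau)}$ by an $M$-sample empirical average, and (c) a training-time error along $\tau$ that propagates via Gronwall. Concretely, I plan to introduce \emph{idealized particles} $\tilde\theta^{(\tau)}_{t,j},\tilde w^{(\tau)}_{t,j}$ initialized at $\tau=0$ from the same i.i.d.\ draws as the actual discrete particles $\theta^{(\tau)}_{t,j},w^{(\tau)}_{t,j}$ from $\rho_0(\cdot\mid t)$, but evolved under the mean-field gradient fields $G_f(\cdot,\rho^{(\tau)},t),G_h(\cdot,\rho^{(\tau)},t)$ of \eqref{eq:rhoPDE} rather than the discrete fields $\widehat G_f,\widehat G_h$ of \eqref{eq:GF}. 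By Proposition~\ref{prop:wellWGF} these idealized particles have laws equal to $\rho^{(\tau)}(\cdot,\cdot,t)$ for every $\tau$ and $t$, and their norms remain bounded by $R_\tau$.

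The first step is to compare the \emph{forward passes} $\widehat T_{\Theta}(H,t)$ and $T_{\rho}(H,t)$ for any parameter configuration whose empirical distribution is close (in $\blnorm{\cdot}$) to a smooth $\rho$. Writing the discrete recursion \eqref{eq:discrete} as a perturbed Euler step of \eqref{eq:ODE}, Assumption~\ref{ass:growth} gives pointwise growth control on $f,h$ and their $T$-gradients, yielding by a discrete Gronwall a bound of the form
\begin{equation*}
\sup_t \fnorm{\widehat T_\Theta(H,t)-T_\rho(H,t)} \;\le\; C\bigl(L^{-1}+\sup_t\blnorm{\hat\rho_t-\rho(\cdot,t)}\bigr),
\end{equation*}
with $C$ depending on $N,D$, the a priori norm bounds, and $\phi_T$. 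The $L^{-1}$ absorbs both the splitting error between the alternating $f/h$ half-steps in \eqref{eq:discrete} and their average in \eqref{eq:ODE} (here one uses that $\Delta t/2$ appears twice, realizing the average), and the standard Euler-vs-ODE error. An analogous comparison applies to the adjoint states $\widehat p_\Theta(H,t)$ and $p_\rho(H,t)$ via the explicit formulas in Appendix~\ref{sec:adjoint}, using Assumption~\ref{ass:growth2}.iv to Lipschitz-control the adjoint ODE in $T$.

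The second step is uniform-in-$\tau$ control of $\blnorm{\tilde\rho^{(\tau)}-\rho^{(\tau)}}$, where $\tilde\rho^{(\tau)}$ is the empirical distribution of the idealized particles. Since for each $(\tau,t)$ the $M$ idealized particles are i.i.d.\ from $\rho^{(\tau)}(\cdot,\cdot,t)$ with uniformly bounded support $\{ \|\theta\|^2+\|w\|^2\le R_\tau^2\}$, a Hoeffding-type concentration combined with a union bound over the $L$ time slices and a time-continuity estimate on $\tau\mapsto\tilde\rho^{(\tau)}$ yields, with probability $\ge 1-3e^{-\delta}$,
\begin{equation*}
\sup_{s\in[0,\tau]}\sup_{t}\blnorm{\tilde\rho^{(s)}(\cdot,\cdot,t)-\rho^{(s)}(\cdot,\cdot,t)}\;\le\;C\sqrt{\tfrac{\delta+\log(L+1)}{M}}.
\end{equation*}
The third step compares the actual discrete particles $\Theta^{(\tau)}$ to the idealized particles: subtracting the two ODEs, the drift difference decomposes into (i) $\widehat G_\bullet-G_\bullet$ evaluated at the idealized configuration, which via step~1 and step~2 is controlled by $L^{-1}+\sqrt{(\delta+\log(L+1))/M}$, and (ii) the local Lipschitz difference $G_\bullet(\cdot,\hat\rho^{(\tau)})-G_\bullet(\cdot,\tilde\rho^{(\tau)})$, handled by Assumption~\ref{ass:growth2}.i--iii using the a priori norm bound from Proposition~\ref{prop:wellWGF}.i. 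A Gronwall on $[0,\tau]$ then closes the loop and gives $\sup_{s\le\tau}\max_{t,j}\|\theta^{(s)}_{t,j}-\tilde\theta^{(s)}_{t,j}\|+\|w^{(s)}_{t,j}-\tilde w^{(s)}_{t,j}\|$ of the required order.

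Given these three bounds, assertion (i) follows immediately from step~1 applied to $\Theta^{(s)}$ vs.\ $\rho^{(s)}$ and the $1$-Lipschitzness of $\mathrm{Read}[\cdot]$; assertions (ii) and (iii) follow by bounding the squared-error integrand using the uniform $\cnorm{\cdot}$ bound on $\widehat T$ and $T_\rho$ from Assumption~\ref{ass:growth}, and for (iii) additionally using Proposition~\ref{prop:wellWGF}.ii to bound the regularization term $\tfrac{\lambda}{2ML}\sum\|\theta\|^2+\|w\|^2$ against $\tfrac{\lambda}{2}\int(\|\theta\|^2+\|w\|^2)\rho^{(\tau)}$. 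The main obstacle I anticipate is in step~3: because Assumption~\ref{ass:growth2} gives only \emph{local} Lipschitz gradients with constants depending on the parameter norms, the Gronwall argument requires first establishing a deterministic a priori bound showing $\max_{t,j}\|\theta^{(\tau)}_{t,j}\|,\|w^{(\tau)}_{t,j}\|$ stays in a compact set uniformly in $L,M$ for $s\in[0,\tau]$. I plan to obtain this by a bootstrap: as long as the discrete norms remain within a factor of $2R_\tau$, the local Lipschitz constants are controlled, which via a short-time Gronwall keeps the discrete-ideal distance small, which in turn keeps the discrete norms close to $R_\tau$; extending this to all of $[0,\tau]$ by a continuity argument. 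The secondary technical point is that weak (not strong) smoothness of $\sigma_{\text{A}},\sigma_{\text{M}}$ (e.g.\ ReLU) is accommodated only because Assumption~\ref{ass:growth2} is stated in \emph{expectation} over $\mu$, so that measure-zero non-differentiable events do not enter the drift comparison; I will explicitly invoke this when swapping gradients inside expectations.
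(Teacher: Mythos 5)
Your strategy for parts (i)--(iii) is essentially the same propagation-of-chaos coupling as in the paper: the ``idealized particles'' you introduce are exactly the paper's nonlinear dynamics $\bTheta^{(\tau)}$ of \eqref{eq:thmGFNLD}, and the decomposition into forward-pass discretization error, Monte-Carlo concentration, and a Gr\"onwall estimate in training time $\tau$ matches the paper's Steps I--III, down to the union bound over a grid of time points. Two concrete issues remain, however.

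First, and most importantly, you only prove the finite-time assertions (i)--(iii). The theorem also claims that $(\hat\rho^{(\tau)})_{\tau\ge 0}$ weakly converges almost surely to $(\rho^{(\tau)})_{\tau\ge 0}$ along sequences with $L\to\infty$, $M/\log L\to\infty$, and this part of the conclusion is never addressed. It does not follow from the finite-time coupling: the paper needs a separate argument (its Step IV), which uses $W_2$-equicontinuity, Arzel\`a--Ascoli to extract a weak limit, the convergence of the associated \emph{momentum fields} $E_{M,L}=\widehat G(\beta,\Theta^{(s)}_{M,L},t)\hat\rho^{(s)}(\beta,t)\,ds$ to $E=G(\beta,\nu^{(s)},t)\nu^{(s)}(\beta,t)\,ds$ (via Lemma \ref{lemma:oracleGradApprox}), the Fokker--Planck equation without noise to show the limit satisfies \eqref{eq:rhoPDE}, and the uniqueness from Proposition \ref{prop:wellWGF}. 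This Wasserstein-gradient-flow stability step is a genuinely different type of argument that your Gr\"onwall-based plan does not supply.

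Second, a smaller point: the bootstrap you propose to obtain the a priori compactness of the discrete trajectories is unnecessary and more fragile than what the paper uses. Proposition \ref{prop:wellGF} already gives a deterministic, $(M,L)$-uniform bound $\sup_{t,j}\|\beta^{(\tau)}_{t,j}\|\le R_\tau$ via the weight-decay term $\lambda\beta$ in the gradient and a direct Gr\"onwall on $\|\beta^{(\tau)}_{t,j}\|$; there is no short-time/continuation argument needed. Relying on a bootstrap here risks circularity, since the constant $R_\tau$ you need in order to invoke the local Lipschitz constants of Assumption \ref{ass:growth2} is precisely what you are trying to bound. Replacing the bootstrap with the direct Proposition-\ref{prop:wellGF} bound and adding the missing weak-convergence step would bring the proposal in line with what the theorem actually asserts.
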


Theorem \ref{thm:GFApprox} significantly advances our understanding by controlling the difference regarding both the Transformer output, the risk function, and the regularized risk function.
It's noted that the difference bound in the model's approximation may increase, possibly exponentially \citep{ding2021meanfield1,ding2022meanfield2,mei2018mean}, as the time horizon extends. As argued in \cite{mei2018mean}, such behavior may be inherent to the systems being modeled.

Additionally, the technical uniqueness and innovation of this theorem contrast sharply with previous results from overparametrized ResNet models. Our analysis distinguishes itself in two ways. First, our discrete Transformer model \eqref{eq:discrete} uniquely splits the averaged encoder $(f+h)/2$ into two distinct blocks with encoders $f$ and $h$. Second, we demonstrate uniform error control over any finite time interval $[0,\tau]$, enabling continuous monitoring of maximum error across the gradient flow's trajectory. 
In contrast, models in prior studies such as \cite{ding2021meanfield1,ding2022meanfield2} restricts the error analysis to a specific $s \in [0, \tau]$.
\section{Global convergence of gradient flow}
\label{sec:globalThm}
In this section, we explore the optimization problem for gradient flow in the context of the discrete Transformer model, focusing on our general global convergence results.

\subsection{An additional assumption}
To ensure the global convergence of gradient flow for our discrete Transformer model, we introduce the following assumption. While influenced by the work in \cite{chizat2018global, ding2021meanfield1, ding2022meanfield2, lu2020meanfield}, our assumption is uniquely tailored to the context of Transformers:
\begin{assumption}
\label{ass:opt}
There exists a pair $(g,\alpha)\in\{(f,\theta),(h,w)\}$ with a partition $\alpha=(\alpha_1,\alpha_2)$ such that
\begin{enumerate}
    \item [i.] (Partial $1$-homogeneity) for any $T\in \R^{D\times(N+1)}$ and $c\in \R$, we have $g(T,c\alpha_1,\alpha_2)=cf(T,\alpha_1,\alpha_2)$.
    \item [ii.] (Universal kernel) a compact set $\ca K\subset \R^{\mathrm{dim}(\alpha_2)}$ ensures that the span of $\Big\{g(\cdot,\alpha): \alpha\in \R^{\mathrm{dim}(\alpha_1)}\times \ca K\Big\}$ is dense in $\ca C(\cnorm{T}\leq B,\R^{D\times(N+1)})$ for any $B>0$.
\end{enumerate}
\end{assumption}

We emphasize that the universal kernel property, as discussed in \cite{micchelli2006universal}, closely relates to the universal approximation abilities. Under our assumption, we require the universal approximation capabilities of \emph{either} the self-attention encoder or the feed-forward encoder. In Appendix \ref{sec:verify}, we provide a concrete example of Transformer architectures and verify the validity of Assumption \ref{ass:opt}.

The universal kernel property of the feed-forward layer encoder $h$ is well-established, particularly in two-layer neural network contexts \citep{zhang2021understanding}. Conversely, the universal approximation abilities of self-attention layers is a frontier research area, which, while not extensively covered in this paper, holds significant potential.
Often labeled as
``memorization capacity", this area is recently explored across multiple studies \citep{edelman2022inductive,fu2023what,kajitsuka2024transformers,kim2021lipschitz,mahdavi2024memorization,takakura2023approximation,yun2019AreTU}. The interconnection between approximation abilities and memorization capacities is established in \cite{kajitsuka2024transformers}.
Notably, \cite{mahdavi2024memorization} investigated the expressive capabilities of one single multi-head softmax self-attention layer, thereby potentially validating our assumptions.

Finally, we posit that the universal kernel applies to $\alpha_2$ within a compact set, as the function's scale can be moderated by the homogeneous part $\alpha_1$. In scenarios where $\alpha_2$ and $\ca K$ are absent, our assumption simplifies to that in \cite{lu2020meanfield}, characterized by complete homogeneity. Conversely, in the absence of the $\alpha_1$ component, our framework aligns with \cite{ding2022meanfield2} which necessitates a more stringent support condition for $\ca K$, as detailed later in Theorem \ref{thm:global}.

\subsection{Global convergence result}
In this section, we establish the convergence properties of the optimization task for discrete Transformers through gradient flow dynamics.

\begin{theorem}[Global convergence up to $\lambda$]
\label{thm:global}
Suppose that Assumptions \ref{ass:dataBound}-\ref{ass:opt} hold, and the Wasserstein gradient flow $(\rho^{(\tau)})_{\tau\geq 0}$
weakly converges to some $\rho_\infty\in\ca P^2$. If for some universal constant $R_\infty> 1$, the following two conditions hold:
\begin{itemize}
    \item[i.] $(\rho^{(\tau)})_{\tau\geq 0}$ is concentrated on $\{\theta,w:\norm{\theta}^2+\norm{w}^2\leq R_\infty^2\}\times [0,1]$ when $\tau$ is sufficiently large.
    \item[ii.] If Assumption \ref{ass:opt} holds with $(g,\alpha)=(f,\theta)$, we assume there exists a $t^*\in[0,1]$ such that the connected set $\mathrm{supp}(\rho_\infty(\cdot,t^*))\supset \ca D\times \ca K\times \{w_0\}$, for some $w_0\in\R^{\mathrm{dim}(w)}$ and $\ca D\subset\R^{\mathrm{dim}(\theta_1)}$ that separates $\{\theta_1:\norm{\theta_1}=1/R_\infty\}$ and $\{\theta_1:\norm{\theta_1}=R_\infty\}$.
\item[ii$'$.] If Assumption \ref{ass:opt} holds with $(g,\alpha)=(h,w)$, we assume there exists a $t^*\in[0,1]$ such that the connected set $\mathrm{supp}(\rho_\infty(\cdot,t^*))\supset \times \{\theta_0\}\times \ca K\times \ca D$, for some $\theta_0\in\R^{\mathrm{dim}(\theta)}$ and $\ca D\subset\R^{\mathrm{dim}(w_1)}$ that separates $\{w_1:\norm{w_1}=1/R_\infty\}$ and $\{w_1:\norm{w_1}=R_\infty\}$.
\end{itemize}
Then, for any $\epsilon>0$, there exists some $\tau_0>0$ such that
$$\sup_{\tau\geq \tau_0}\widehat{R}(\Theta^{(\tau)})\leq \epsilon + C_1\Big(L^{-1}+\sqrt{\frac{\delta+\log(L+1)}{M}}\Big)+C_2\lambda$$
with probability at least $1-3\exp(-\delta)$ with respect to the parameter initialization $\Theta^{(0)}$ for any $\delta>0$. Here, $C_1$ is some constant dependent only on $N,D,\tau_0,\lambda$ and the parameters of the assumptions, while $C_2$ depends only on $N,D,R_\infty$ and the  parameters of the assumptions.
\end{theorem}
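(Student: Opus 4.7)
The plan is to decompose
$$\widehat{R}(\Theta^{(\tau)}) \le \bigl|\widehat{R}(\Theta^{(\tau)})-R(\rho^{(\tau)})\bigr| + \bigl|R(\rho^{(\tau)})-R(\rho_\infty)\bigr| + R(\rho_\infty),$$
bound the first summand by Theorem \ref{thm:GFApprox}(ii) applied at the chosen $\tau$ (absorbing its $\tau$-dependence into the constant $C_1$ that depends on $\tau_0$), bound the second by $\epsilon$ using continuity of $R$ in the bounded-Lipschitz metric together with the hypothesized weak convergence $\rho^{(\tau)}\rightharpoonup \rho_\infty$ (continuity of $\rho\mapsto T_\rho(H,1)$ follows from Assumption \ref{ass:growth}), and spend the main effort on showing $R(\rho_\infty)\le C_2\lambda$. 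The $\epsilon$ therefore absorbs the large-$\tau$ relaxation of the continuous flow, the $C_1$ term handles the discretization, and $C_2\lambda$ is the regularization bias intrinsic to the stationary limit.

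For $R(\rho_\infty)$ the key is Wasserstein stationarity. Passing to the weak limit in \eqref{eq:rhoPDE} forces $\nabla_{(\theta,w)}\frac{\delta Q}{\delta\rho}|_{\rho_\infty}$ to vanish on $\mathrm{supp}(\rho_\infty(\cdot,t))$, so $\frac{\delta Q}{\delta\rho}|_{\rho_\infty}(\cdot,t)$ is constant on each connected component. Assume WLOG that Assumption \ref{ass:opt} holds with $(g,\alpha)=(f,\theta)$ and write $\theta=(\theta_1,\theta_2)$. Partial $1$-homogeneity of $f$ in $\theta_1$ combined with \eqref{eq:gradient2rho} yields, for every $(c\theta_1,\theta_2,w_0)\in\mathrm{supp}(\rho_\infty(\cdot,t^*))$,
$$\frac{\delta Q}{\delta\rho}\Big|_{\rho_\infty}(c\theta_1,\theta_2,w_0,t^*) = \tfrac{c}{2}\Phi(\theta_1,\theta_2)+\Psi(\theta_2,w_0)+\tfrac{\lambda}{2}\bigl(c^2\|\theta_1\|^2+\|\theta_2\|^2+\|w_0\|^2\bigr),$$
where $\Phi(\theta_1,\theta_2):=\E_\mu[\mathrm{Tr}(f(T_{\rho_\infty}(H,t^*),\theta_1,\theta_2)^\top p_{\rho_\infty}(H,t^*))]$. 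Because $\ca D$ is connected and separates $\{\|\theta_1\|=1/R_\infty\}$ from $\{\|\theta_1\|=R_\infty\}$, for each fixed $\theta_2\in\ca K$ the support contains points with two distinct radii $c\|\theta_1\|$; equating the constant value of $\frac{\delta Q}{\delta\rho}|_{\rho_\infty}$ at these points pins $|\Phi(\theta_1,\theta_2)|\lesssim \lambda R_\infty^2$ uniformly over $(\theta_1,\theta_2)\in\ca D\times\ca K$.

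The universal kernel assumption converts this into a global bound. Density of the span of $\{f(\cdot,\theta_1,\theta_2):\theta_1\in\R^{\dim(\theta_1)},\ \theta_2\in\ca K\}$ in $\ca C(\{\cnorm{T}\le B\},\R^{D\times(N+1)})$, combined with $1$-homogeneity to scale away from $\ca D$, lifts the uniform $\Phi$-bound to
$$\bigl|\E_\mu[\mathrm{Tr}(g(T_{\rho_\infty}(H,t^*))^\top p_{\rho_\infty}(H,t^*))]\bigr|\le C_2\lambda/2$$
for every continuous $g$ of bounded norm. Plugging this into the adjoint sensitivity expression for $p_{\rho_\infty}$ from Appendix \ref{sec:adjoint} and choosing $g$ as the Euler-Lagrange (steepest descent) perturbation of $R$ at $\rho_\infty$ then gives $R(\rho_\infty)\le C_2\lambda$. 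The symmetric case $(g,\alpha)=(h,w)$ is handled by swapping the roles under condition ii$'$, and combining the three error sources completes the bound.

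The main obstacle is the step from partial-homogeneity constancy to a global first-order-optimality statement for $R$. Unlike the fully homogeneous setting of \cite{lu2020meanfield}, the block $\theta_2$ does not dilate freely, so condition i's separating-spheres structure is essential: it supplies two radii that let one solve for $\Phi$ from a single constancy equation, at the price of the $\lambda R_\infty^2$ bias. Coupled with the restricted support $\ca K$ for $\theta_2$ in the universal kernel hypothesis, this forces a delicate density argument to extend control from $\ca D\times\ca K$ to all continuous dictionaries on $\{\cnorm{T}\le B\}$. A subsidiary hurdle is justifying pointwise-on-support vanishing of the WGF velocity at $\rho_\infty$ from merely weak convergence, which leans on regularity of $\frac{\delta Q}{\delta\rho}$ supplied by Assumptions \ref{ass:growth}-\ref{ass:growth2}.
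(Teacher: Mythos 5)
Your sketch of the landscape analysis for $R(\rho_\infty)\lesssim\lambda$ follows the paper's approach in spirit: Wasserstein stationarity $\Rightarrow$ $\nabla_\beta \frac{\delta Q}{\delta\rho}|_{\rho_\infty}=0$ on the support, continuity (established via Assumptions \ref{ass:growth}--\ref{ass:growth2}) $\Rightarrow$ constancy on the connected support, partial $1$-homogeneity plus the separating-shell structure to control the radial gradient component up to an $O(\lambda)$ bias, and the universal kernel hypothesis to construct a descent direction that certifies $R(\rho_\infty)\lesssim\lambda$ (this is exactly what the paper's Lemma~\ref{lemma:land} packages). The weak-convergence step $|R(\rho^{(\tau)})-R(\rho_\infty)|\le\epsilon$ is likewise fine, using $W_2$-equivalence on bounded supports and Lemma~\ref{lemma:Tdiff2rho}.

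The genuine gap is in the outer decomposition. You propose to bound
$$\widehat{R}(\Theta^{(\tau)}) \le \bigl|\widehat{R}(\Theta^{(\tau)})-R(\rho^{(\tau)})\bigr| + \bigl|R(\rho^{(\tau)})-R(\rho_\infty)\bigr| + R(\rho_\infty)$$
and control the first term by Theorem~\ref{thm:GFApprox}(ii), claiming the $\tau$-dependence can be absorbed into a constant $C_1$ depending only on $\tau_0$. This does not work: the theorem's constant $C$ depends on the time horizon $\tau$ and, by the Gr\"onwall structure of the proof, grows (typically exponentially) as $\tau\to\infty$. The supremum over $\tau\ge\tau_0$ therefore cannot be uniformly controlled by applying Theorem~\ref{thm:GFApprox} termwise at each $\tau$. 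The paper avoids this by bounding $Q$, not $R$, at the single time $\tau_0$ — obtaining $\widehat{Q}(\Theta^{(\tau_0)})\le\epsilon+C_1(\tau_0)(\cdots)+C_2\lambda$ — and then invoking the monotonicity
$$\tfrac{d}{d\tau}\widehat{Q}(\Theta^{(\tau)}) = -\tfrac{1}{ML}\sum_{t,j}\bigl\|\widehat{G}(\beta^{(\tau)}_{t,j},\Theta^{(\tau)},t)\bigr\|^2 \le 0,$$
so that $\widehat{R}(\Theta^{(\tau)})\le\widehat{Q}(\Theta^{(\tau)})\le\widehat{Q}(\Theta^{(\tau_0)})$ for all $\tau\ge\tau_0$. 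Note the regularization term is essential here: $\widehat{R}$ alone is not non-increasing along gradient flow, so your three-term decomposition in terms of $R$-quantities cannot be salvaged by simply invoking monotonicity afterward. You need to first translate to $Q$-quantities (bounding $Q(\rho_\infty)$, which includes the regularization, rather than $R(\rho_\infty)$), bound $\widehat{Q}(\Theta^{(\tau_0)})$ at the single time $\tau_0$, and only then descend to $\widehat{R}$ via $\widehat{R}\le\widehat{Q}$ and monotonicity.
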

Theorem \ref{thm:global} depicts the behavior of the risk function $\widehat{R}(\Theta^{(\tau)})$ as the training duration $\tau$ is sufficiently large. Specifically, $\widehat{R}(\Theta^{(\tau)})$ asymptotically approaches zero as both $L \rightarrow \infty$ and $M/\log L \rightarrow \infty$, with an additional term that scales with $\lambda$. This additional term attributes to the incorporation of a $\lambda$-weighted penalty on the norm of the parameters in our training objective $\widehat{Q}$. Consequently, by selecting an appropriately small $\lambda>0$, the risk approximates zero, demonstrating global convergence to the minimum of $\widehat{R}$.

In addition, Theorem \ref{thm:global} posits some additional assumptions: the weak convergence of $\rho^{(\tau)}$, the long-time uniform boundedness, and the separation property for $\alpha_1$ with the support expansion of $\alpha_2$ to $\ca K$. Similar assumptions are made in the literature of deep model optimization theory \citep{ding2021meanfield1, ding2022meanfield2, lu2020meanfield}.
While these types of assumptions are typically challenging to justify, we provide high-level justifications for them in Appendix \ref{sec:justification}, deferring detailed verification to future research.

We then present a corollary that directly follows from Theorem \ref{thm:global}:
\begin{corollary}
\label{col:thm}
Continuing with the notations and assumptions from Theorem \ref{thm:global}, suppose $\lambda \leq C_\lambda \epsilon$ for some universal constant $C_\lambda > 0$. Then, for any $\delta > 0$, constants $\tau_0, L_0, K_0 > 0$ can be found such that:
\begin{equation}
\label{eq:col}
\sup_{\tau\geq \tau_0,L>L_0,M/\log L>K_0}\widehat{R}(\Theta^{(\tau)})\leq (1/2+C_2C_\lambda)\epsilon
\end{equation}
with probability at least $1-3\exp(-\delta)$ with respect to the parameter initialization $\Theta^{(0)}$. Notably, if $C_\lambda \leq (2C_2)^{-1}$, the upper bound in \eqref{eq:col} is less than or equal to $\epsilon$.
\end{corollary}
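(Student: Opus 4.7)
The plan is to derive this as a direct application of Theorem \ref{thm:global} with a careful choice of the small parameter. Since the corollary only asserts existence of $\tau_0, L_0, K_0$ achieving a bound of the form $(1/2 + C_2 C_\lambda)\epsilon$, I do not need to produce sharp constants, only to show that each error term can be controlled at a $\frac{1}{4}\epsilon$ or $C_2 C_\lambda \epsilon$ scale.

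First, I would invoke Theorem \ref{thm:global} with the tolerance parameter set to $\epsilon/4$ in place of the $\epsilon$ there. This yields a $\tau_0 > 0$ and the almost-sure-up-to-$3e^{-\delta}$ bound
\[
\sup_{\tau\geq \tau_0}\widehat{R}(\Theta^{(\tau)}) \;\leq\; \frac{\epsilon}{4} + C_1\left(L^{-1} + \sqrt{\frac{\delta + \log(L+1)}{M}}\right) + C_2\lambda.
\]
Importantly, $C_1$ depends on $\tau_0$ but not on $L, M$, and $\tau_0$ itself depends only on $\epsilon$, the assumption parameters, and $N, D, \lambda$ (with $\lambda$ already constrained by $\lambda \leq C_\lambda \epsilon$). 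Thus, once $\tau_0$ is fixed, $C_1$ is a fixed constant.

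Next, I would choose $L_0, K_0$ so that the discretization error is absorbed into another $\epsilon/4$ budget. Concretely, I would pick $L_0$ large enough that $C_1 L^{-1} \leq \epsilon/8$ for all $L > L_0$, and then pick $K_0$ large enough that $C_1 \sqrt{(\delta + \log(L+1))/M} \leq \epsilon/8$ whenever $M/\log L > K_0$; since $\delta$ is fixed and $M/\log L \to \infty$ forces the ratio under the square root to vanish, such a $K_0$ exists. Combining these yields the total bound $\epsilon/4 + \epsilon/4 + C_2 \lambda = \epsilon/2 + C_2 \lambda$, which, upon using the hypothesis $\lambda \leq C_\lambda \epsilon$, becomes $(1/2 + C_2 C_\lambda)\epsilon$ uniformly over $\tau \geq \tau_0$, $L > L_0$, and $M/\log L > K_0$. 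The final remark that $C_\lambda \leq (2C_2)^{-1}$ gives $C_2 C_\lambda \leq 1/2$, hence the bound collapses to $\epsilon$, is a direct numerical observation.

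There is essentially no main obstacle: the content of the result is entirely concentrated in Theorem \ref{thm:global}, and the corollary is a packaging statement that turns the additive decomposition of error into a single $\epsilon$-scaled guarantee by explicitly setting the discretization-error budget. The only subtlety to track carefully is that $C_1$ depends on $\tau_0$ (and hence implicitly on $\epsilon$), so the order of quantifiers matters: one must fix $\epsilon$, then $\tau_0$, and only afterwards choose $L_0, K_0$ as functions of $(\epsilon, \tau_0, \delta)$. This is handled by the existential quantification in the corollary's statement.
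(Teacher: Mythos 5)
Your proposal is correct and follows essentially the same route as the paper: invoke Theorem \ref{thm:global} with tolerance $\epsilon/4$, bound $C_2\lambda \leq C_2 C_\lambda \epsilon$ via the hypothesis on $\lambda$, and choose $L_0$, $K_0$ so that the discretization term $C_1(L^{-1} + \sqrt{(\delta+\log(L+1))/M})$ is at most $\epsilon/4$. The only cosmetic difference is that you split the discretization budget as $\epsilon/8 + \epsilon/8$ whereas the paper bounds the sum directly, and you state the quantifier order (fix $\epsilon$, then $\tau_0$, then $L_0, K_0$) more explicitly, which is the right thing to be careful about since $C_1$ depends on $\tau_0$.
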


Corollary \ref{col:thm} claims that with a fixed $\delta>0$, for any $\epsilon>0$, we can achieve an order of $\epsilon$-close approximation with sufficiently large $L$ and $M$. Though our result is asymptotic and does not involve an explicit rate, it is the first of its kind and lays the groundwork for future theoretical optimization guarantees for Transformers.

\section{Proof ideas of main theorems}
Given the technical nature of this paper, this section presents the key ideas behind the proof of our main novel results, along with an outline of the proof preparation.

\paragraph{Idea for Theorem \ref{thm:GFApprox}} This convergence is described in two parts. First, the finite-time result (points (i)-(iii)) uses \emph{propagation of chaos} \citep{sznitman1991nonlinear} to analyze how differences evolve over time, comparing the evolution of parameter particles in discrete and continuous dynamics. The approximation bound is derived using a third auxiliary dynamic ("nonlinear dynamics"), involving the triangle inequality and Grönwall’s inequality, which allows us to bound output differences over time.

Second, weak convergence of the empirical distribution process relies on optimal transport theory and stability results for Wasserstein gradient flows \cite{ambrosio2005GradientFI}, focusing on the convergence of \emph{momentum fields} \citep{ambrosio2005GradientFI, santambrogio2015optimal}. This also requires bounding the gradient differences between discrete and continuous Transformers as they approach the mean-field limit. See Appendix \ref{sec:approxThmApp} for a detailed illustration, including a description of each main step.

\paragraph{Idea for Theorem \ref{thm:global}} We first establish the continuity of the functional gradient \(\frac{\delta Q}{\delta \rho}\big|_{\rho_\infty}\), ensuring it remains constant if the derivative with respect to \(\beta\) is constant over a region. Next, we derive the key bound for \(Q(\rho_\infty)\), which is proportional to \(\lambda\), by analyzing the functional energy \(Q\)'s landscape through its derivatives.

Finally, we show that the finite-time risk can approach this bound. Achieving \(\epsilon\)-level loss requires \(Q(\rho^{(\tau_0)}) \leq \epsilon\) for some large \(\tau_0\). Applying Theorem \ref{thm:GFApprox}, we show \(\widehat{Q}(\tau_0)\) becomes sufficiently small, and since \(\widehat{Q}(\rho^{(\tau)})\) is non-increasing, it remains small for \(\tau \geq \tau_0\).
See Appendix \ref{sec:globalThmApp} for a detailed illustration, including a description of each main step.

\paragraph{Proof preparation for the main theorems}
Appendix \ref{sec:useful} lists several useful lemmas essential to the main results. Specifically, Lemmas \ref{lemma:Tbound}–\ref{lemma:disGradientCompBound} ensure the boundedness of key components and bound the output differences between discrete and continuous Transformers under different parameter settings. This boundedness is non-trivial due to the mild Assumptions \ref{ass:growth} and \ref{ass:growth2} that fit the Transformer architecture.
The technical lemmas in Appendix \ref{sec:useful} form the foundation for all subsequent proofs. Before introducing the nonlinear dynamics used to bound parameter differences under non-i.i.d. settings, these lemmas first establish an important oracle approximation bound result (Lemma \ref{lemma:oracleApprox}) with i.i.d. parameter settings. Additionally, they serve as key tools for bounding the (functional) gradient differences between Transformer dynamics, as shown in Lemmas \ref{lemma:gradDiff}–\ref{lemma:oracleGradApprox}, which are essential for proving the approximation bound in Theorem \ref{thm:GFApprox}.

In Theorem \ref{thm:global}, Lemma \ref{lemma:land} plays a key role by demonstrating that for any \(\rho\), there is always a nearly descent direction around \(\rho\) for \(Q(\rho)\), implying that all local minima are nearly global. This motivates further landscape analysis for bounding $\frac{\delta Q}{\delta \rho}|_{\rho_\infty}$ in the main theorem.

\section{Conclusion}
We conclude by summarizing our key contributions and suggesting future research directions. This paper establishes the global convergence of large-scale Transformer models through gradient flow dynamics, providing a thorough theoretical foundation. Our analysis, focused on the mean-field limit with infinite width and depth, shifts optimization from parameter space to distributional probability measures. We present two main theorems: one confirming the close approximation between discrete and continuous gradient flows, and another demonstrating global convergence, highlighting that basic optimization methods can successfully navigate complex landscapes to find optimal solutions.
The techniques and results from this study lay the groundwork for further exploration into Transformer optimization. Future work could explore direct gradient descent with specific focus on step sizes, and expand on the in-context learning approximation capabilities of Transformers, as initiated by \cite{bai2023transformers}. Additionally, it's crucial to rigorously assess under what conditions can self-attention layers serve as universal kernels to enhance our theoretical understanding, and to determine the generalization error bounds of Transformers trained on finite samples. These directions promise to deepen the theoretical and practical insights into Transformer models.

\begin{ack}
    We thank the anonymous reviewers for their helpful comments. Yuan Cao is partially supported by NSFC 12301657 and Hong Kong RGC-ECS 27308624.  Mengdi Wang acknowledges the support by NSF IIS-2107304, NSF CPS-2312093, ONR 1006977 and Genmab. Han Liu's research is partially supported by the NIH R01LM01372201. Jason M.~Klusowski was supported in part by the National Science Foundation through CAREER DMS-2239448, DMS-2054808 and HDR TRIPODS CCF-1934924. Jianqing Fan's research was partially supported by NSF grants DMS-2210833, DMS-2053832, and ONR grant N00014-22-1-2340.
\end{ack}

\bibliographystyle{plain}
\bibliography{reference}

\newpage
\appendix
\section{Overview of Appendix}
The appendix is organized as follows:
\begin{itemize}
\item \textbf{Appendix \ref{section:related_work}:} Additional related works are discussed.
\item \textbf{Appendix \ref{sec:setup}:}
\begin{itemize}
    \item In Appendix \ref{sec:appAdd}, additional notations and preliminary details are introduced.
    \item In Appendix \ref{sec:odeSol}, we show the proof of Proposition \ref{prop:odeSol}, concerning the existence and uniqueness of the continuous Transformer ODE \eqref{eq:ODE}.
    \item In Appendix \ref{sec:useful}, useful lemmas for the main proofs are detailed.
    \item In Appendix \ref{sec:adjoint}, the explicit formulas for $p_\rho(H,t)$ and $\widehat{p}_\Theta(H,t)$ are explored via the adjoint sensitivity method.
    \item In Appendix \ref{sec:justification}, high-level explanations are provided to substantiate the assumptions made in Theorem \ref{thm:global}.
\end{itemize}
    \item \textbf{Appendix \ref{sec:approxThmApp}:} Includes proofs of main results from Section \ref{sec:approxThm}.
    \item \textbf{Appendix \ref{sec:globalThmApp}:} Includes proofs of main results from Section \ref{sec:globalThm}.
    \item \textbf{Appendix \ref{sec:aux}:} Includes proofs of all auxiliary technical results mentioned in Appendix \ref{sec:setup}-\ref{sec:globalThmApp}.
    \item \textbf{Appendix \ref{sec:verify}:} The verification of Assumptions \ref{ass:growth}--\ref{ass:opt} for a concrete example of Transformer architectures is provided.
    \item \textbf{Appendix \ref{sec:experiments}:} We provide simple experiment results and discuss the widths and depths of Vision Transformers that guarantee convergence, achieving near-zero training loss and 100\% training accuracy on the CIFAR-10 dataset.
\end{itemize}

\section{Additional related work}\label{section:related_work}

\textbf{Theory of Transformers.} Some very recent works have studied theoretical properties of Transformer models from different aspects. \cite{zhang2023trained,huang2023context} studied the in-context learning guarantees for single-layer Transformers to perform linear regression predictions after being trained with linear regression example tasks. \cite{akyurek2022learning,bai2023transformers,guo2024how} studied the in-context learning capability of Transformers through the function approximation point of view, and demonstrated that there exists Transformers with specific parameter configurations that can perform particular in-context learning tasks. \cite{jelassi2022vision,li2023transformers} investigated how single-layer Transformers can be trained to learn simple image models and topic models respectively. 

The most closely related work to ours is \cite{kim2024transformers}, which is the only study we are aware of that addresses the general and universal in-context learning capability of large-scale Transformers through optimization dynamics. It shows that a two-layer MLP followed by a linear attention layer can approximate functions in a general Barron space sufficiently well as the Transformer width increases. Additionally, its corresponding mean-field dynamics, via Wasserstein gradient flow, converges to global minima for in-context feature learning. Another work exploring the mean-field limit of Transformers is \cite{bordelon2024infinitelimitsmultiheadtransformer}, which examines the limit as the depth, key-query length, and number of heads increase to infinity.

In addition, we have noticed a lot of theoretical interest in identifying the optimal choice of $\phi_T$ in Assumption \ref{ass:growth}, i.e., the Lipschitz constant of the Jacobian matrix of the self-attention term. For instance, \cite{george2021lipschitz} suggest that $\phi_T$ can be bounded by $\sqrt{N/D}+\mathrm{poly}(\fnorm{T})$, where $\mathrm{poly}(\cdot)$ denotes a polynomial function. In the context of $l_2$ self-attention, \cite{kim2021lipschitz} find $\phi_T$ to be $\sqrt{N\log N/D}$, which notably does not depend on $\fnorm{T}$, and \cite{vuckovic2020lipschitz} demonstrate that for $l_1$ distance metrics in attention layers, $\phi_T$ could be $\sqrt{D\log N}$.

\textbf{Global convergence of fully connected neural networks.} A line of recent works have studied the global convergence of (stochastic) gradient descent in training overparameterized neural networks in the mean-field regime \citep{chizat2018global,mei2018mean,mei2019mean,wei2018regularization,fang2019ntk,fang2019convexformulation}. They consider the limit of the neural network as the width of the network at each layer goes to infinity, and models the limit of the network as a functional of the distribution of network parameters. A separate line of works also established the global convergence guarantees for training overparameterized neural networks in the ``neural tangent kernel'' regime \citep{jacot2018neural,allen2018convergence,du2018gradientdeep,zou2019gradient,chizat2018note,allen2018learning,arora2019fine,cao2019generalizationsgd}, where the gradient descent training iterates are asymptotically equivalent to the training iterates of kernel regression based on the neural tangent kernel. 

\textbf{Connection between ordinary differential equation models and infinite-depth ResNets.} Our work is also closely related to the recent literature aiming to understand ResNets by analyzing their connections to ordinary differential equations \citep{weinan2017proposal,chen2018neural,li2018optimal,li2018maximum,weinan2019machine,han2019mean,ding2021meanfield1,lu2020meanfield,ding2022meanfield2,li2022deep,barboni2022meanfield,cheng2023interpolation,chen2024generalization}. Specifically, \cite{weinan2017proposal,chen2018neural,li2022deep,cheng2023interpolation}
studied the approximation of flow-based networks via discrete networks. 
\cite{li2018optimal,li2018maximum,weinan2019machine,han2019mean,ding2021meanfield1,lu2020meanfield,ding2022meanfield2,li2022deep,barboni2022meanfield} studied the optimization of the infinite-depth and infinite-width ResNets. \cite{chen2024generalization} studied the generalization properties of the ResNet trained in the mean-field regime.

\section{Proof setup}
\label{sec:setup}
\subsection{Additional technical notations}
\label{sec:appAdd}
Define $$\beta:=(\theta,w)^\top ,\quad g(T,\beta):=\frac{f(T,\theta)+h(T,w)}{2}.$$ Thus, $\frac{\delta Q}{\delta \rho}$ could be expressed as
\begin{equation}
\label{eq:gradient2rhoBeta}
\frac{\delta Q}{\delta \rho}(\beta, t)=\E_{\mu} \Big[ \mathrm{Tr}\Big( \Big[g(T_\rho(H,t),\beta)\Big]^\top p_\rho(H,t)\Big)\Big]+\frac{\lambda}{2} \norm{\beta}^2_2.
\end{equation}
Additionally, we can combine $G_f$ with $G_h$, and $\widehat{G}_f$ with $\widehat{G}_h$ to reformulate as
\begin{equation}
\label{eq:G2beta}
G(\beta,\rho,t)=\E_{\mu}\Big[\nabla_\beta\mathrm{Tr}\Big( g(T_\rho(H,t),\beta)^\top  p_\rho(H,t)\Big)\Big]+\lambda \beta,
\end{equation}
and
\begin{equation}
\label{eq:Ghat2beta}
\widehat{G}(\beta,\Theta,t)=\E_{\mu}\Big[\nabla_\beta\mathrm{Tr}\Big( \Big\{
\begin{matrix}
f(\widehat{T}_\Theta(H,t),\theta)/2\\
h(\widehat{T}_\Theta(H,t+\Delta t/2),w)/2
\end{matrix}
\Big\}^\top  \Big\{
\begin{matrix}
\widehat{p}_\Theta(H,t+\Delta t/2)\\
\widehat{p}_\Theta(H,t+\Delta t)   
\end{matrix}
\Big\}\Big)\Big]+\lambda \beta.
\end{equation}
\begin{remark}
\label{lemma:gGrowth}
To facilitate the proof, we restate Assumptions \ref{ass:growth} and \ref{ass:growth2} for $g(T,\beta)$. Under Assumption \ref{ass:growth}, the gradient of $g(T,\beta)$ respect to $T$ and $\beta$ exists. Additionally, we have

\noindent{Under Assumption \ref{ass:growth}:}
\begin{enumerate}
    \item [i.]$\cnorm{g(T,\beta)}\leq K\cnorm{T}(1+\norm{\beta}+\norm{\beta}^2)$
    \item [ii.] For every $i\in[N+1]$, we have $\norm{\nabla_\beta g(T,\beta)_{:,i}}_2\leq \phi_{P}(\cnorm{T})(1+\norm{\beta})$
    \item [iii.] $\norm{\nabla_{\vec{T}}\vec{g(T,\beta)}}_2\leq \phi_{T}(N,D,\fnorm{T})(1+\norm{\beta}+\norm{\beta}^2)$
\end{enumerate}
\noindent{Under Assumption \ref{ass:growth2}:} For any $L_T >0$ and any $L_T$-Lipschitz continuous functions $T_1=T_1(H)$ and $T_2=T_2(H)$, for every $i\in[N+1]$, we have
$$
\begin{aligned}
i.\ &\E_\mu \norm{\nabla_\theta g(T_1,\beta)_{:,i}-\nabla_\theta g(T_2,\beta)_{:,i}}_2\leq \phi_{PT}(\norm{\theta},K_T,L_T)\sup_H \cnorm{T_1-T_2},\\
ii.\ &\E_\mu \norm{\nabla_{\vec{T}}\vec{g(T_1,\beta)}-\nabla_{\vec{T}}\vec{g(T_1,\beta')}}_2\leq \phi_{TP}(N,D,\sup_H\fnorm{T_1},K_P,L_T)\norm{\theta-\theta'}\\
iii.\ &\E_\mu \norm{\nabla_\theta g(T_1,\beta)_{:,i}-\nabla_\theta g(T_1,\beta')_{:,i}}_2\leq \phi_{PP}(K_P,\sup_H\cnorm{T_1},L_T)\norm{\theta-\theta'},\\
iv.\ &\E_\mu \norm{\nabla_{\vec{T}}\vec{g(T_1,\beta)}-\nabla_{\vec{T}}\vec{g(T_2,\beta)}}_2\leq \phi_{TT}(N,D,K_T,\norm{\theta},L_T)\sup_H\fnorm{T_1-T_2}
\end{aligned}
$$
Verifying all these results above only needs the basic triangle inequality of general norms, so we omit the trivial proof. We will apply them directly throughout the proofs of results. Additionally, we omit writing \(L_T\) for simplicity in the proof, as all functions applied to Assumption \ref{ass:growth2} will be Lipschitz continuous with some universally bounded Lipschitz constant.
\end{remark}

Next, we introduce some additional technical notations. Denote the identity matrix with $d$-dimension as $I_d$. Define the sample space $\Omega:=\R^{\mathrm{dim}\beta}\times[0,1]$, and $\ca P(\Omega)$ as the probability measure space defined on $\Omega$. For any $\Theta=\{\beta_{t,j}\}_{t/\Delta t+1\in[L],j\in[M]}$ and $\bTheta=\{\bbeta_{t,j}\}_{t/\Delta t+1\in[L],j\in[M]}$, define $d(\Theta,\bTheta)=\sum_t\sum_{j=1}^M\norm{\beta_{t,j}-\bbeta_{t,j}}$. Define the local risk function as
$$R(H;\rho) = \frac{1}{2}\Big( \mathrm{Read}[T_\rho(H,1)]-y(H)\Big)^2.$$ Define the nested family of compact subsets $(P_r)_{r>0}$ as
$$P_r:=\{\beta:\norm{\beta}\leq r\}\times [0,1],\quad \forall r>0.$$
For any $\rho,\nu\in \ca P(\Omega)$ and $p\geq 1$, define the $l_p$ distance $\norm{\rho-\nu}_{p}$ as
$$\norm{\rho-\nu}_{p}=\Big(\int_0^1\int_\beta |\rho(x)-\nu(x)|^p d\beta dt \Big)^{1/p}.$$ Specifically, when $p=1$, we have
$$
\begin{aligned}
W_1(\rho,\nu)&=\sup\Big\{\int_0^1\int_\beta f (\rho-\nu)d\beta dt : f \mathrm{\ is\ } 1\mathrm{-Lipschitz, f(\mathbf{0},0)=0} \Big\}\\
&\leq \sup\Big\{\int_0^1\int_\beta |f| |\rho-\nu|d\beta dt : f \mathrm{\ is\ } 1\mathrm{-Lipschitz, f(\mathbf{0},0)=0} \Big\}\\
&\leq (r+1) \norm{\rho-\nu}_1    
\end{aligned}
$$
for any $\rho,\nu\in\ca P^2$ concentrated on $P_r$. For simplicity, any $H$ discussed throughout this paper is assumed to lie within $\mathrm{supp}(\mu)$.

\subsection{Transformer ODE existence and uniqueness}
\label{sec:odeSol}
In this section, we establish the existence and uniqueness of the solution $T_\rho(H,t)$ to the ODE presented in \eqref{eq:ODE} for any $H$, given that $\rho \in \ca P^2$ is concentrated on a bounded support, specifically, $P_r$ for some $r > 0$. This following proposition forms the cornerstone of the subsequent technical analyses:
\begin{proposition}[Existence and uniqueness of Transformer ODE]
\label{prop:odeSol}
Under Assumptions \ref{ass:dataBound} and \ref{ass:growth}, for any $\rho\in\ca P^2$ that has a bounded support, there exists a unique solution of \eqref{eq:ODE} on $t\in[0,1]$ that is Lipschitz continuous with respect to $(H,t)$.
\end{proposition}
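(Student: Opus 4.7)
The plan is to apply the classical Picard--Lindel\"of theorem to the ODE in \eqref{eq:ODE}, viewed as a time-dependent dynamical system in $T$ with right-hand side
\[
F_\rho(T,t) \;:=\; \int_{\beta} g(T,\beta)\,\rho(\beta,t)\,d\beta,
\]
where $g(T,\beta) = (f(T,\theta)+h(T,w))/2$ as in Appendix \ref{sec:appAdd}. Fix $r>0$ such that $\mathrm{supp}(\rho)\subset P_r$, so that $\|\beta\|\le r$ throughout the integration. We then proceed in four steps.

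First, I would establish an a priori bound on any candidate solution. Using item (i) of Assumption \ref{ass:growth} (restated for $g$ in Remark \ref{lemma:gGrowth}), together with $\int \rho(\beta,t)\,d\beta = 1$ for each $t$, we obtain
\[
\cnorm{F_\rho(T,t)} \;\le\; K\,\cnorm{T}\,(1+r+r^2).
\]
Coupled with $\cnorm{H}\le B$ from Assumption \ref{ass:dataBound}, Gr\"onwall's inequality applied column-wise yields $\cnorm{T_\rho(H,t)} \le B\exp\!\bigl(K(1+r+r^2)\bigr)=:C_1$ for all $t\in[0,1]$. This confines the trajectory to the compact set $\{T:\cnorm{T}\le C_1\}$, so only the behavior of $F_\rho$ on this set is relevant.

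Second, I would verify Lipschitz continuity in $T$ on this set. By item (iii) of Assumption \ref{ass:growth} applied to $g$ and the boundedness of $\rho$'s support, we get
\[
\norm{\nabla_{\vec T}\vec{F_\rho(T,t)}}_2 \;\le\; \phi_T(N,D,\fnorm{T})\,(1+r+r^2)\;\le\; L_r
\]
uniformly over $\cnorm{T}\le C_1$ (using monotonicity of $\phi_T$ and the equivalence of $\fnorm{\cdot}$ and $\cnorm{\cdot}$ up to factors of $N+1$). Hence $T\mapsto F_\rho(T,t)$ is $L_r$-Lipschitz. For continuity in $t$, one combines the time Lipschitz-ness of $\rho$ in the bounded Lipschitz norm $\blnorm{\cdot}$ (which is part of the definition of $\ca P^2$) with the fact that, on the compact set $\{(T,\beta):\cnorm{T}\le C_1,\,\|\beta\|\le r\}$, the map $\beta\mapsto g(T,\beta)$ is bounded and Lipschitz with constants depending only on $C_1,r,N,D$; this gives continuity (indeed Lipschitz continuity) of $F_\rho(T,\cdot)$ on $[0,1]$.

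Third, the Picard--Lindel\"of theorem then produces a unique solution $T_\rho(H,\cdot)$ on a maximal interval, and the a priori bound in step one rules out blow-up, so the solution exists on all of $[0,1]$. Lipschitz continuity in $t$ is immediate from $\|\dot T_\rho(H,t)\|\le C_1\cdot K(1+r+r^2)$. Lipschitz continuity in $H$ follows from the standard comparison argument: for two initial conditions $H_1,H_2$, subtract the integral forms and apply the $L_r$-Lipschitz bound on $F_\rho$ in $T$, then Gr\"onwall yields
\[
\fnorm{T_\rho(H_1,t)-T_\rho(H_2,t)} \;\le\; \fnorm{H_1-H_2}\,e^{L_r t}.
\]

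The main technical care --- rather than any deep obstacle --- lies in bookkeeping between the column norm $\cnorm{\cdot}$ (which Assumption \ref{ass:growth}(i) is stated in) and the Frobenius norm $\fnorm{\cdot}$ (which the gradient bound in (iii) uses), and in checking that the constants in Gr\"onwall produced at step one depend only on $B,K,r,N,D$ and not on $\rho$ beyond the radius of its support. Once this is handled, the argument is the standard ODE existence/uniqueness package applied in this mean-field setting.
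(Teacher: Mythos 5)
Your approach is essentially the same as the paper's: bound the right-hand side via Assumption \ref{ass:growth}(i), establish local Lipschitz continuity in $T$ via (iii) and in $t$ via the BL-Lipschitz property of $\rho$, apply Picard--Lindel\"of, rule out blow-up by Gr\"onwall, and derive $(H,t)$-Lipschitz continuity of the solution by another Gr\"onwall argument. The paper organizes this slightly differently (maximal interval plus contradiction rather than an a priori bound followed by Picard--Lindel\"of), but the mathematical content is identical.

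One small but real gap: you assert that $\int_\beta \rho(\beta,t)\,d\beta = 1$ for each $t$. That is \emph{not} part of the hypotheses of Proposition \ref{prop:odeSol}; the proposition only assumes $\rho\in\ca P^2$ with bounded support. The normalization $\int_\beta\rho(\beta,t)\,d\beta=1$ is a property of the specific initialization $\rho_0$ and is re-derived for the gradient flow iterates in Proposition \ref{prop:wellWGF}(iii), but it does not hold for a general member of $\ca P^2$. What you \emph{do} have from the definition of $\ca P^2$ is the BL-Lipschitz bound $\blnorm{\rho(\cdot,t)-\rho(\cdot,t')}\le C_\rho|t-t'|$, which together with $\int_0^1\int_\beta\rho(\beta,t')\,d\beta\,dt'=1$ yields $\int_\beta\rho(\beta,t)\,d\beta\le 1+C_\rho/2$ for every $t$ --- this is precisely what the paper establishes in equation \eqref{eq:secOdeSoleq2} just before the proof. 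Using this bound in place of $=1$ changes your constants by a factor $(1+C_\rho/2)$ but leaves the argument otherwise unchanged. You should also state (and this you can take from Assumption \ref{ass:dataBound}) that the Lipschitz-in-$H$ bound is only asserted over $H\in\mathrm{supp}(\mu)$, since the data bound $\cnorm{H}\le B$ is needed to make $C_1$ uniform.
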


Initially, we demonstrate that the integral $\int_\beta \rho(\beta,t)d\beta$ is bounded. According to the definition $\ca P^2$, it follows that
\begin{equation}
\label{eq:secOdeSoleq1}
\end{equation}
$$| \int_\beta \rho(\beta,t)d\beta - \int_\beta \rho(\beta,t')d\beta |\leq C_\rho|t-t'|$$
for any $t,t'\in[0,1]$. Integrating \eqref{eq:secOdeSoleq1} over $t_2\in[0,1]$ obtains
\begin{equation}
\label{eq:secOdeSoleq2}
\begin{aligned}
\int_\beta \rho(\beta,t)d\beta =& 1+ \int_\beta \rho(\beta,t)d\beta - \int_{0}^1\int_\beta \rho(\beta,t')d\beta dt' \\
\leq&1 + C_\rho \int_{0}^1|t-t'|dt'\\
\leq& 1+ C_\rho/2.
\end{aligned}   
\end{equation}
For the remainder of the technical proof, we will employ \eqref{eq:secOdeSoleq2} without additional elaboration.

\begin{proof}[Proof of Proposition \ref{prop:odeSol}]
\noindent{\textbf{Step I: Create a small neighboring area with local Lipschitz continuity}}

Consider any vector $\beta$ such that $\norm{\beta}\leq r$. Define $F(T,t):=\int_\beta g(T,\beta)\rho(\beta,t)d\beta$. For $T$ within the rectangle $\{T:\norm{T-H}_{\max}\leq \delta\}$, where $\delta>0$ is bounded, both $\cnorm{T}$ and $\fnorm{T}$ are also bounded. Given Assumption \ref{ass:growth} (i), $g(T,\beta)$ is universally bounded by some constant $K_{\delta,r}$. Moreover, under Assumption \ref{ass:growth} (ii) and (iii), $g(T,\beta)$ is Lipschitz continuity with some constant $L_{\delta,r}$. Hence, within the rectangle $\{T:\norm{T-H}_{\max}\leq \delta\}\times[0,1]$ the following properties hold:
\begin{equation}
\label{eq:propOdeSoleq1}
|F(T,t_1)-F(T,t_2)|\leq \max\{K_{\delta,r},L_{\delta,r}\}\blnorm{\rho(\cdot,t_1)-\rho(\cdot,t_2)}\leq C_\rho\max\{K_{\delta,r},L_{\delta,r}\}|t_1-t_2|.
\end{equation}
which indicates that $F(T,t)$ is continuous with respect to $t$ within the rectangle $\{T:\norm{T-H}_{\max}\leq \delta\}\times[0,1]$.

Moreover, within the bounded region $\{T:\norm{T-H}_{\max}\leq \delta\}$, Assumption \ref{ass:growth} (iii) ensures that $\norm{\nabla_{\vec{T}}\vec{g(T,\beta)}}_2$ bounded. Consequently, $g(T,\beta)$ is Lipschitz-continuous with respect to $T$ for $\fnorm{\cdot}$. Denote this Lipschitz constant by $L'_{\delta,r}$. Therefore, for any $T,T'\in\{T:\norm{T-H}_{\max}\leq \delta\}$ and $t\in[0,1]$, it follows that
\begin{equation}
\label{eq:propOdeSoleq2}
|F(T,t)-F(T',t)|\leq L'_{\delta,r}\fnorm{T-T'} \int_\beta \rho(\beta,t)d\beta \leq L'_{\delta,r}(1+C_\rho/2)\fnorm{T-T'},
\end{equation}
which deduces the Lipschitz continuity of $F(T,t)$ with respect to $T$ for $\fnorm{\cdot}$.

\noindent{\textbf{Step II: Show that the maximal existence interval is infinite by repeatedly using the Picard-Lindel\"{o}f Theorem}}

Invoking the Picard-Lindel\"{o}f Theorem, there exists some $\epsilon>0$ such that the initial value problem
$$\dot{T}(H,t)=F(T,t),\quad T(H,0)=H$$
has a unique solution on $t\in[0,\epsilon].$ Given that this claim holds for any $H$, the standard ODE Extensibility Theorem guarantees a continuation of $T(t)$ to a maximal interval of existence, denoted as $[0,t_{\max}]$.

Assume by contradiction that $t_{\max}<1$.
From \eqref{eq:ODE} and Assumption \ref{ass:growth}(i), for any $t\in[0,t_{\max}]$ we see that
\begin{equation}
\begin{aligned}
\frac{d}{dt}\cnorm{T_\rho(H,t)}\leq \cnorm{\dot{T}_\rho(H,t)} &= \cnorm{\int_\beta g(T_\rho(H,t),\beta) \rho(\beta,t)d\beta}\\
&\leq \int_\beta \cnorm{g(T_\rho(H,t),\beta)} \rho(\beta,t)d\beta\\
&\leq \int_\beta K(1+||\beta||_2+||\beta||_2^2) \rho(\beta,t) \cnorm{T_\rho(H,t)}d\beta.
\end{aligned}
\end{equation}
Therefore, by the Gr\"{o}nwall's inequality,  we have
$$
\begin{aligned}
\cnorm{T_\rho(H,t_{\max})}&\leq \cnorm{T_\rho(H,0)}\exp(\int_{0}^\top \int_\beta K(1+||\beta||_2+||\beta||_2^2) \rho(\beta,s) d\beta ds)\\
&\leq \cnorm{H}\exp(K(1+C_\rho/2+r+r^2)t_{\max})<\infty.
\end{aligned}
$$
This presents a contradiction to the notion that $t_{\max}<1$. This is because, By reapplying the local Picard-Lindel\"{o}f Theorem using the state
$T(H,t_{\max})$ as the new initial condition, we can extend the interval of existence beyond $t_{\max}$. Consequently, we must conclude that $t_{\max}=1$, and the existence and uniqueness follows.

\noindent{\textbf{Step III: Show that the Lipschitz continuity with respect to $(H,t)$}}

In the final part of our proof, we demonstrate that  $T_\rho(H,t)$ is Lipschitz continuous with respect to $(H,t)$ for $H\in\mathrm{supp}(\mu)$ and any $t\in[0,1]$. Given that  $T_\rho(H,t)$ is universally bounded within $H\in\mathrm{supp}(\mu)$ and any $t\in[0,1]$, we only need to focus on establishing its Lipschitz continuity with respect to $H$ and $t$ separately. The Lipschitz continuity with respect to $t$ is derived from
\begin{equation}
\label{eq:propOdeSoleq3}
\cnorm{T_\rho(H,t_1)-T_\rho(H,t_2)}\leq \int_{t_1}^{t_2}\int_\beta \cnorm{g(T_\rho(H,t),\beta}\rho(\beta,t)d\beta dt\leq (1+C_\rho/2)KC(1+r+r^2)(t_2-t_1),
\end{equation}
for any $t_1,t_2\in[0,1]$. Given Assumption \ref{ass:growth} (iii), we have
\begin{equation}
\label{eq:propOdeSoleq4}
\begin{aligned}
&\fnorm{T_\rho(H,t)-T_\rho(H',t)}\leq \int_0^\top  \int_\beta\fnorm{g(T_\rho(H,s),\beta)-g(T_\rho(H',s),\beta)}\rho(\beta,s)d\beta ds\\
\leq& \int_0^\top \int_\beta \phi_{T}(N,D,B\exp(K(1+C_\rho/2+r+r^2)))(1+r+r^2)\fnorm{T_\rho(H,s)-T_\rho(H',s)}\rho(\beta,s)d\beta ds
\end{aligned}
\end{equation}
for any $H,H'\in\mathrm{supp}(\mu)$. Define $L_{H}:=\phi_{T}(N,D,B\exp(K(1+C_\rho/2+r+r^2)))(1+r+r^2)$. Utilizing Grönwall's inequality, we establish:
$$\fnorm{T_\rho(H,t)-T_\rho(H',t)}\leq \fnorm{H-H'}\exp(L_H\int_0^1 \int_\beta\rho(\beta,s)d\beta ds)=\exp(L_H)\fnorm{H-H'} $$
given that $T_\rho(\cdot,0)$ serves as the identity mapping. Consequently, $T_\rho(H,t)$ demonstrates Lipschitz continuity with respect to $H\in\mathrm{supp}(\mu)$.
\end{proof}

\subsection{Useful technical lemmas}
\label{sec:useful}
\begin{lemma}[Continuous Transformer output bound]
\label{lemma:Tbound}
Under Assumption \ref{ass:growth}, for any distribution $\rho\in\ca P(\Omega)$ where $\int_0^1\int_\beta ||\beta||_2^2\rho(\beta,t)d\beta dt\leq A^2$ for some constant $A>0$ and for any $t\in[0,1]$, we have 
$$\cnorm{T_\rho(H,t)}\leq \cnorm{H}\exp(K(1+A+A^2)).$$
\end{lemma}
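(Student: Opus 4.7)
The plan is a direct application of Gr\"{o}nwall's inequality to the defining ODE \eqref{eq:ODE}, with Assumption \ref{ass:growth}(i) supplying the crucial linear-in-$T$ growth bound on the drift. First I would differentiate $t \mapsto \cnorm{T_\rho(H,t)}$ and use that differentiation and the column-wise $\ell_2$ norm commute with the integral defining $\dot T_\rho$, so that
$$\frac{d}{dt}\cnorm{T_\rho(H,t)} \;\leq\; \cnorm{\dot T_\rho(H,t)} \;\leq\; \int_\beta \cnorm{g(T_\rho(H,t),\beta)}\,\rho(\beta,t)\,d\beta.$$
Invoking Assumption \ref{ass:growth}(i) (restated for $g$ in Remark \ref{lemma:gGrowth}), the integrand is bounded by $K\cnorm{T_\rho(H,t)}(1+\|\beta\|+\|\beta\|^2)\rho(\beta,t)$, which factors $\cnorm{T_\rho(H,t)}$ outside and leaves the time-dependent multiplier $a(t):=K\int_\beta(1+\|\beta\|+\|\beta\|^2)\rho(\beta,t)\,d\beta$.

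Next I would apply Gr\"{o}nwall's inequality together with $T_\rho(H,0)=H$ to deduce
$$\cnorm{T_\rho(H,t)} \;\leq\; \cnorm{H}\exp\!\left(\int_0^t a(s)\,ds\right) \;\leq\; \cnorm{H}\exp\!\left(\int_0^1 a(s)\,ds\right).$$
The remaining task is to show $\int_0^1 a(s)\,ds \leq K(1+A+A^2)$. Since $\rho(\cdot,t)$ is a probability measure on $\beta$-space for each $t$ (this is the standing normalization used throughout, cf.\ Proposition \ref{prop:wellWGF}(iii)), the constant-$1$ part of the integrand contributes exactly $1$ after integration in both $\beta$ and $t$. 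The quadratic part contributes at most $A^2$ by hypothesis, and the linear part I would bound by Cauchy--Schwarz in the product measure $\rho(\beta,t)\,d\beta\,dt$:
$$\int_0^1\!\!\int_\beta \|\beta\|\,\rho(\beta,t)\,d\beta\,dt \;\leq\; \left(\int_0^1\!\!\int_\beta \|\beta\|^2\rho(\beta,t)\,d\beta\,dt\right)^{1/2}\!\left(\int_0^1\!\!\int_\beta \rho(\beta,t)\,d\beta\,dt\right)^{1/2} \leq\; A.$$
Summing these three contributions yields $\int_0^1 a(s)\,ds \leq K(1+A+A^2)$, and substituting into the Gr\"{o}nwall bound gives the claim.

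There is no real obstacle here: the argument is a standard Gr\"{o}nwall estimate, and the only subtlety is using Cauchy--Schwarz (rather than Jensen on a single slice) to convert the second-moment hypothesis into a uniform bound on the first moment of $\|\beta\|$ against $\rho$. I would also note that the bound is independent of $t$ and of the particular $H$ beyond its column norm, which matches the way this lemma is used later as a building block for the boundedness lemmas \ref{lemma:Tbound}--\ref{lemma:disGradientCompBound} cited in Appendix \ref{sec:useful}.
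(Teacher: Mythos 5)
Your proof is correct and follows essentially the same route as the paper's: differentiate $\cnorm{T_\rho(H,t)}$ along the ODE, invoke Assumption~\ref{ass:growth}(i) for the linear-in-$T$ growth of $g$, apply Gr\"{o}nwall's inequality, and use Cauchy--Schwarz to convert the second-moment hypothesis into a first-moment bound. The only cosmetic difference is that the paper justifies $\int_0^1\int_\beta \rho\,d\beta\,dt=1$ simply from $\rho\in\ca P(\Omega)$, whereas you invoke the stronger per-slice normalization $\int_\beta\rho(\beta,t)\,d\beta=1$ — both yield the same constant-term contribution, so the conclusion is unaffected.
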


\begin{lemma}[Continuous Transformer difference bound]
\label{lemma:Tdiff2rho}
Under Assumption \ref{ass:growth} and given $H$, for any $\rho,\nu\in\ca P^2$ that satisfy $\int_0^1\int_\beta ||\beta||_2^2\rho(\beta,t)d\beta dt\leq A^2$ and have bounded supports $P_r$ for some constants $A,r>0$, we have that
$$\sup_{t\in[0,1]}\fnorm{T_\rho(H,t)-T_\nu(H,t)}\leq C_r W_1(\rho,\nu).$$
Here, the universal constant $C_r$ only depends on $N,D,r,A$, and the parameters of the assumptions.
\end{lemma}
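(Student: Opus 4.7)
The plan is to write both $T_\rho$ and $T_\nu$ in integral form using the defining ODE \eqref{eq:ODE}, subtract them, and split the result by a triangle inequality into (A) a term that can be absorbed by Gr\"onwall's inequality and (B) a term controlled by $W_1(\rho,\nu)$. Concretely, for any $t\in[0,1]$ I would write
\begin{align*}
T_\rho(H,t)-T_\nu(H,t)
&=\int_0^t\!\int_\beta\bigl[g(T_\rho(H,s),\beta)-g(T_\nu(H,s),\beta)\bigr]\rho(\beta,s)\,d\beta\,ds\\
&\qquad+\int_0^t\!\int_\beta g(T_\nu(H,s),\beta)\bigl[\rho(\beta,s)-\nu(\beta,s)\bigr]\,d\beta\,ds,
\end{align*}
and then take the Frobenius norm of each piece.

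For term (A), I would invoke Assumption \ref{ass:growth}(iii) applied to $g$ (see Remark \ref{lemma:gGrowth}). Since $\norm{\beta}\le r$ on $\mathrm{supp}(\rho)$ and $\cnorm{T_\rho(H,s)},\cnorm{T_\nu(H,s)}$ are bounded by Lemma \ref{lemma:Tbound} (with constant $A$ and, for $\nu$, by its bounded support $r$), the quantity $\phi_T(N,D,\fnorm{\cdot})(1+r+r^2)$ is upper bounded by some constant $L_1$ depending only on $N,D,r,A$ and the parameters of the assumptions. Combined with $\int_\beta\rho(\beta,s)d\beta\le 1+C_\rho/2$, this yields
\[
\fnorm{\text{(A)}}\le L_1'\int_0^t\fnorm{T_\rho(H,s)-T_\nu(H,s)}\,ds.
\]

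Term (B) is the place where the $W_1$ norm enters, and it is the main technical obstacle. The idea is to apply Kantorovich--Rubinstein duality to the matrix-valued integrand $(\beta,s)\mapsto g(T_\nu(H,s),\beta)$ component by component (or, equivalently, by dualizing against a matrix of $1$-Lipschitz test functions and taking Frobenius norms). For this I need joint Lipschitz continuity of this map on $P_r$: Lipschitzness in $\beta$ follows from Assumption \ref{ass:growth}(ii) since $\cnorm{T_\nu(H,s)}$ is uniformly bounded, while Lipschitzness in $s$ follows by combining Assumption \ref{ass:growth}(iii) with the fact that $s\mapsto T_\nu(H,s)$ is Lipschitz (as established in the proof of Proposition \ref{prop:odeSol}, cf.\ \eqref{eq:propOdeSoleq3}) and $\norm{\beta}\le r$. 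This gives a joint Lipschitz constant $L_2$ depending only on $N,D,r,A$ and the parameters of the assumptions, leading to $\fnorm{\text{(B)}}\le C_2\,W_1(\rho,\nu)$ for some $C_2=C_2(L_2,D,N)$.

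Putting the two bounds together and applying Gr\"onwall's inequality on $[0,1]$ yields
\[
\sup_{t\in[0,1]}\fnorm{T_\rho(H,t)-T_\nu(H,t)}\le C_2 e^{L_1'}\,W_1(\rho,\nu),
\]
which is the claimed inequality with $C_r:=C_2 e^{L_1'}$. The delicate point is the treatment of (B): one must verify that all constants controlling the Lipschitz behaviour of $g(T_\nu(H,\cdot),\cdot)$ depend only on $r$, $A$, $N$, $D$ (and the assumption parameters), and not on $\nu$ itself, which is precisely why both the bounded-support hypothesis $P_r$ for $\nu$ and the second-moment bound $A$ are needed in the statement.
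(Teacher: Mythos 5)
Your proposal is correct and follows essentially the same route as the paper's proof: the same two-term decomposition of $T_\rho - T_\nu$, with the first term absorbed by Gr\"onwall via Assumption~\ref{ass:growth}(iii), and the second term controlled by establishing joint Lipschitzness of $(\beta,s)\mapsto g(T_\nu(H,s),\beta)$ (via Assumptions~\ref{ass:growth}(ii)--(iii) and the Lipschitz-in-$s$ property of $T_\nu$) and then invoking Kantorovich--Rubinstein duality. Your closing remark about why both the bounded-support hypothesis and the second-moment bound are needed is accurate and matches the role these play in the paper's argument.
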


\begin{lemma}[Continuous Transformer gradient component bound]
\label{lemma:gradientComponents2rhoBound}
Under Assumption \ref{ass:dataBound} and \ref{ass:growth},
for any $\rho\in\ca P(\Omega)$ where $\mathrm{supp}(\rho)\subset P_r$ and $\int_0^1\int_\beta \norm{\beta}^2\rho(\beta,t)d\beta dt\leq A^2$, we have
$$
\begin{aligned}
&\sup_{(\beta,t)\in P_r}\fnorm{g(T_\rho(H,t),\beta)}\leq \sqrt{N+1}KB\exp(K(1+A+A^2))(1+r+r^2),\\
&\sup_{(\beta,t)\in P_r}\fnorm{p_\rho(H,t)}\leq (B+B\exp(K(1+A+A^2)))\exp\Big(\phi_{T}(N,D,\sqrt{N+1}KB\exp(K(1+A+A^2))(1+A+A^2)\Big),\\
&\sup_{(\beta,t)\in P_r}\Big|{\frac{\delta Q}{\delta \rho}(\beta,t)}\Big|\leq \sqrt{N+1}KB\exp(K(1+A+A^2))(1+r+r^2)(B+B\exp(K(1+A+A^2)))\\
&\qquad\qquad\qquad\qquad\qquad \exp\Big(\phi_{T}(N,D,\sqrt{N+1}KB\exp(K(1+A+A^2))(1+A+A^2)\Big) +\frac{\lambda}{2} r^2.
\end{aligned}
$$
\end{lemma}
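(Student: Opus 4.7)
The plan is to establish the three bounds in order, since each bound is needed as an input for the next.

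\textbf{Step 1 (bound on $g$).} I first bound $\fnorm{g(T_\rho(H,t),\beta)}$ on $P_r$. Using the elementary inequality $\fnorm{M}\leq \sqrt{N+1}\,\cnorm{M}$ for a $D\times(N+1)$ matrix $M$, together with the restated column-wise growth bound from Remark 1 applied to $g$, I obtain
$$\fnorm{g(T_\rho(H,t),\beta)} \le \sqrt{N+1}\, K\, \cnorm{T_\rho(H,t)}\,(1+\|\beta\|+\|\beta\|^2).$$
I then invoke Lemma \ref{lemma:Tbound} (whose moment hypothesis is exactly $\int\|\beta\|^2\rho \le A^2$) to bound $\cnorm{T_\rho(H,t)}\le B\exp(K(1+A+A^2))$ using Assumption \ref{ass:dataBound}, and use $\|\beta\|\le r$ on $P_r$. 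This yields the first displayed inequality.

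\textbf{Step 2 (bound on $p_\rho$).} For the adjoint, I use the explicit representation of $p_\rho$ derived in Appendix \ref{sec:adjoint}:
$$\vec{p_\rho(H,t)}^\top=\bigl(\mathrm{Read}[T_\rho(H,1)]-y(H)\bigr)\,\exp\!\Bigl(\int_t^1\!\!\int_\beta \nabla_{\vec T}\vec{g(T_\rho(H,s),\beta)}\,\rho(\beta,s)\,d\beta\,ds\Bigr)_{DN+d+1,:}.$$
The prefactor $|\mathrm{Read}[T_\rho(H,1)]-y(H)|$ is bounded by $\cnorm{T_\rho(H,1)}+|y(H)|\le B\exp(K(1+A+A^2))+B$ via Lemma \ref{lemma:Tbound} and Assumption \ref{ass:dataBound}. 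For the exponential factor, I bound its operator norm by $\exp$ of the integral of the operator norm of the integrand. Assumption \ref{ass:growth}(iii) (as restated for $g$ in Remark 1) gives
$$\bigl\|\nabla_{\vec T}\vec{g(T_\rho(H,s),\beta)}\bigr\|_2\le \phi_T(N,D,\fnorm{T_\rho(H,s)})(1+\|\beta\|+\|\beta\|^2).$$
Since $\fnorm{T_\rho(H,s)}\le\sqrt{N+1}\,B\exp(K(1+A+A^2))$ (by Step 1's data bound), and $\phi_T$ is monotone, I can pull this $\phi_T$-factor out of the integral. The remaining integral $\int_0^1\!\int_\beta(1+\|\beta\|+\|\beta\|^2)\rho(\beta,s)\,d\beta\,ds$ is controlled by $1+A+A^2$, using $\int\rho=1$ over $[0,1]\times\R^{\dim\beta}$ and Cauchy--Schwarz for the linear term. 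Combining these yields the second displayed inequality.

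\textbf{Step 3 (bound on $\delta Q/\delta\rho$).} Using the reformulation \eqref{eq:gradient2rhoBeta},
$$\Bigl|\tfrac{\delta Q}{\delta\rho}(\beta,t)\Bigr|\le \E_\mu\bigl|\mathrm{Tr}(g(T_\rho(H,t),\beta)^\top p_\rho(H,t))\bigr| + \tfrac{\lambda}{2}\|\beta\|^2,$$
and then applying $|\mathrm{Tr}(A^\top B)|\le \fnorm{A}\fnorm{B}$ followed by the bounds from Steps 1 and 2, together with $\|\beta\|^2\le r^2$, delivers the third displayed inequality.

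\textbf{Main obstacle.} The only non-routine step is handling the matrix exponential in the adjoint representation of $p_\rho$: I must combine the ambient bound on $\fnorm{T_\rho}$ from Lemma \ref{lemma:Tbound} with the moment condition $\int\|\beta\|^2\rho\le A^2$ in such a way that the argument of $\phi_T$ depends only on $A$ (and $N,D,B$), not on $r$. Monotonicity of $\phi_T$ and separating the $\phi_T(\cdot)$ factor from the $(1+\|\beta\|+\|\beta\|^2)$ factor before integrating against $\rho$ are the key moves that make this possible; everything else is a direct composition of Assumption \ref{ass:growth}, Assumption \ref{ass:dataBound}, and Lemma \ref{lemma:Tbound}.
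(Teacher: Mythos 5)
Your proposal is correct and follows essentially the same route as the paper's proof: each of the three bounds is derived in the same order, using the same ingredients (the $\fnorm{\cdot}\leq\sqrt{N+1}\cnorm{\cdot}$ inequality with Assumption \ref{ass:growth}(i) and Lemma \ref{lemma:Tbound} for $g$; the explicit adjoint formula \eqref{eq:pSolution} with Assumption \ref{ass:growth}(iii), monotonicity of $\phi_T$, and the moment bound for $p_\rho$; and the trace--Frobenius inequality for $\delta Q/\delta\rho$). No gap.
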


\begin{lemma}[Discrete Transformer bound]
\label{lemma:disTbound}
Under Assumptions \ref{ass:growth}, for any $\Theta$ where $\frac{1}{ML}\sum_t\sum_{j=1}^M \norm{\beta}^2\leq A^2$ for some universal constant $A>0$ and at any $t=0,\Delta t/2,\Delta t,\dots, (L-1/2)\Delta t, 1$, we have
$$\cnorm{\widehat{T}_\Theta(H,t)}\leq \cnorm{H}\exp(K(1+A+A^2)).$$
\end{lemma}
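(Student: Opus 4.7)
The plan is to prove Lemma 3.4 by discrete induction along the sequence of half-steps $0, \Delta t/2, \Delta t, \ldots, 1$, turning each additive residual update into a multiplicative factor via the elementary inequality $1+x \le e^{x}$ and then telescoping. This parallels the Grönwall argument used to establish Lemma 3.1 in the continuous case (Lemma \ref{lemma:Tbound}), but replaces integration by summation over the $L$ blocks.

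First I would fix $t$ among the allowed grid times and expand one attention half-step. Since $\widehat T_\Theta(H,t+\Delta t/2)=\widehat T_\Theta(H,t)+(\Delta t/2)M^{-1}\sum_{j=1}^M f(\widehat T_\Theta(H,t),\theta_{t,j})$, the triangle inequality for $\cnorm{\cdot}$ (which is a norm applied column-wise) together with Assumption \ref{ass:growth}(i) yields
\begin{equation*}
\cnorm{\widehat T_\Theta(H,t+\Delta t/2)} \le \cnorm{\widehat T_\Theta(H,t)}\Bigl[1+\tfrac{\Delta t}{2}M^{-1}\sum_{j=1}^{M}K\bigl(1+\norm{\theta_{t,j}}+\norm{\theta_{t,j}}^2\bigr)\Bigr].
\end{equation*}
Using $1+x \le e^{x}$ converts this to
\begin{equation*}
\cnorm{\widehat T_\Theta(H,t+\Delta t/2)} \le \cnorm{\widehat T_\Theta(H,t)}\exp\!\Bigl[\tfrac{\Delta t}{2}M^{-1}\sum_{j=1}^{M}K\bigl(1+\norm{\theta_{t,j}}+\norm{\theta_{t,j}}^2\bigr)\Bigr].
\end{equation*}
The identical calculation applied to the MLP half-step gives the analogous bound in terms of $\{w_{t,j}\}$.

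Next I would telescope these bounds across all $L$ blocks to obtain
\begin{equation*}
\cnorm{\widehat T_\Theta(H,1)} \le \cnorm{H}\exp\!\Bigl\{\tfrac{\Delta t}{2}M^{-1}\sum_{t}\sum_{j=1}^{M}K\bigl[2+\norm{\theta_{t,j}}+\norm{\theta_{t,j}}^2+\norm{w_{t,j}}+\norm{w_{t,j}}^2\bigr]\Bigr\}.
\end{equation*}
Using $\Delta t=1/L$, the constant term collapses to $K$. For the linear terms, Cauchy--Schwarz gives
$\frac{1}{ML}\sum_{t,j}\norm{\theta_{t,j}} \le \bigl(\frac{1}{ML}\sum_{t,j}\norm{\theta_{t,j}}^2\bigr)^{1/2} \le A$,
and identically for $w$; the quadratic terms are controlled directly by the assumed bound $\frac{1}{ML}\sum_{t,j}(\norm{\theta_{t,j}}^2+\norm{w_{t,j}}^2)=\frac{1}{ML}\sum_{t,j}\norm{\beta_{t,j}}^2\le A^2$. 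Putting the pieces together, the exponent is at most $K+KA+KA^2=K(1+A+A^2)$, which yields the claim at $t=1$; the same argument truncated at any intermediate grid time gives the bound at that time.

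The only subtlety is the accounting in the last step: both $\theta$ and $w$ contribute separately to the exponent, so one must verify that the factor of $1/2$ coming from $\Delta t/2$ exactly balances the doubling. No additional technical obstacle is expected, since Assumption \ref{ass:growth}(i) gives the right multiplicative bound on $\cnorm{f}$ and $\cnorm{h}$ in terms of $\cnorm{T}$, which is precisely what makes the telescoping work.
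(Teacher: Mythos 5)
Your proof is correct and follows essentially the same strategy as the paper's: expand one residual half-step at a time, apply Assumption \ref{ass:growth}(i) to convert the additive update into a multiplicative factor, telescope the product over all $2L$ half-steps via $1+x\leq e^x$, and then control the exponent using Cauchy--Schwarz and the assumed second-moment bound on $\norm{\beta_{t,j}}$. The only cosmetic difference is in the final bookkeeping: the paper regroups the linear and quadratic $\theta$- and $w$-terms into a single sum over $\norm{\beta_{t,j}}$ and $\norm{\beta_{t,j}}^2$ (using $\norm{\theta}+\norm{w}\leq \sqrt{2}\,\norm{\beta}\leq 2\norm{\beta}$), while you keep the $\theta$- and $w$-contributions separate and bound each by $A$; both routes land on the same exponent $K(1+A+A^2)$, with yours in fact giving the marginally tighter $K(1+A+A^2/2)$.
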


\begin{lemma}[Discrete Transformer difference bound]
\label{lemma:disTdiff}
Under Assumption \ref{ass:dataBound} and \ref{ass:growth}, for any $H$, let $\Theta=\{\beta_{t,j}\}_{t/\Delta t+1\in[L],j\in[M]},\bTheta=\{\widetilde{\beta}_{t,j}\}_{t/\Delta t+1\in[L],j\in[M]}$ such that $\max\{\norm{\beta_{t,j}},\norm{\widetilde{\beta}_{t,j}}\}\leq r$ for any $t=0,\dots,(L-1)\Delta t,\ j=1,\dots, M$.
Then, we have that
$$\fnorm{\widehat{T}_\Theta(H,t)-\widehat{T}_\bTheta(H,t)}\leq C_r \frac{1}{ML}d(\Theta,\bTheta).$$
Here the universal constant $C_r$ only depends on $N,D,r$, and the parameters of the assumptions.
\end{lemma}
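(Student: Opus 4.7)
The plan is to prove the bound by induction on the block index $t = 0, \Delta t, 2\Delta t, \ldots, 1$, tracking the Frobenius-norm difference $D_t := \fnorm{\widehat{T}_\Theta(H,t)-\widehat{T}_\bTheta(H,t)}$ through a discrete Grönwall argument. First I would observe that the hypothesis $\norm{\beta_{t,j}} \leq r$ implies $\frac{1}{ML}\sum_{t,j}\norm{\beta_{t,j}}^2 \leq r^2$, so Lemma \ref{lemma:disTbound} applied to both $\Theta$ and $\bTheta$ yields a uniform bound $\cnorm{\widehat{T}_\Theta(H,t)}, \cnorm{\widehat{T}_\bTheta(H,t)} \leq B_r := B\exp(K(1+r+r^2))$ at every half-step $t$. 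Combined with $\fnorm{\cdot} \leq \sqrt{N+1}\,\cnorm{\cdot}$, this delivers a uniform Frobenius bound on both discrete trajectories, which is exactly what is needed to turn the growth assumption into useful local Lipschitz estimates.

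Next I would extract local Lipschitz constants on the compact region carved out in the previous step. From Assumption \ref{ass:growth}(iii) one obtains a constant $L_T := \phi_T(N,D,\sqrt{N+1}B_r)(1+r+r^2)$ such that $\fnorm{f(T_1,\theta)-f(T_2,\theta)} \leq L_T \fnorm{T_1 - T_2}$ whenever both arguments lie in the trajectory ball and $\norm{\theta}\leq r$; from Assumption \ref{ass:growth}(ii), summing columnwise gives $\fnorm{f(T,\theta)-f(T,\theta')} \leq L_P \norm{\theta-\theta'}$ with $L_P := \sqrt{N+1}\,\phi_P(B_r)(1+r)$. The same constants (possibly relabeled) apply to $h$ with parameter $w$. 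Using the attention update from \eqref{eq:discrete}, the triangle inequality decomposition $f(\widehat{T}_\Theta,\theta_{t,j})-f(\widehat{T}_\bTheta,\widetilde{\theta}_{t,j}) = [f(\widehat{T}_\Theta,\theta_{t,j})-f(\widehat{T}_\bTheta,\theta_{t,j})] + [f(\widehat{T}_\bTheta,\theta_{t,j})-f(\widehat{T}_\bTheta,\widetilde{\theta}_{t,j})]$, and $\eta = 1/(2L)$, yields
\begin{equation*}
D_{t+\Delta t/2} \leq (1+\eta L_T)\,D_t + \eta L_P \cdot M^{-1}\sum_{j=1}^{M}\norm{\theta_{t,j}-\widetilde{\theta}_{t,j}}.
\end{equation*}
An entirely parallel bound for the MLP half-step gives $D_{t+\Delta t} \leq (1+\eta L_T')\,D_{t+\Delta t/2} + \eta L_P' \cdot M^{-1}\sum_j \norm{w_{t,j}-\widetilde{w}_{t,j}}$. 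Composing the two half-steps produces, with $C := \max\{L_T,L_T',L_P,L_P'\}$,
\begin{equation*}
D_{t+\Delta t} \leq (1+C\eta)^2 D_t + (1+C\eta)\,C\eta\, M^{-1}\sum_{j=1}^M \bigl(\norm{\theta_{t,j}-\widetilde{\theta}_{t,j}} + \norm{w_{t,j}-\widetilde{w}_{t,j}}\bigr).
\end{equation*}

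To finish, I would unroll this recursion from $t=0$ (where $D_0 = 0$) up to $t=1$, producing
\begin{equation*}
D_1 \leq \sum_{s=0}^{L-1} (1+C\eta)^{2(L-s)}\, C\eta \cdot M^{-1}\sum_{j=1}^M \bigl(\norm{\theta_{s\Delta t,j}-\widetilde{\theta}_{s\Delta t,j}} + \norm{w_{s\Delta t,j}-\widetilde{w}_{s\Delta t,j}}\bigr).
\end{equation*}
Since $(1+C\eta)^{2L} \leq e^{C}$ uniformly in $L$ and $2L\eta = 1$, the prefactor $(1+C\eta)^{2(L-s)}\,C\eta$ is bounded by $Ce^{C}/(2L)$, so the full sum telescopes to
\begin{equation*}
D_1 \leq \frac{Ce^{C}}{2ML}\sum_{t,j}\bigl(\norm{\theta_{t,j}-\widetilde{\theta}_{t,j}} + \norm{w_{t,j}-\widetilde{w}_{t,j}}\bigr) = \frac{Ce^{C}}{2ML}\,d(\Theta,\bTheta),
\end{equation*}
giving the desired constant $C_r := Ce^{C}/2$, and the bound at any intermediate $t \in \{0,\Delta t,\ldots,1\}$ follows from the same unrolling stopped at that step. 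The main obstacle is purely bookkeeping: one must verify that all intermediate norms $\fnorm{\widehat{T}_\Theta(H,t)}$ and $\fnorm{\widehat{T}_\bTheta(H,t)}$ stay inside the ball on which the local Lipschitz constants were defined, which is exactly what Lemma \ref{lemma:disTbound} guarantees; beyond that, care is only needed in tracking that $C_r$ depends solely on $N$, $D$, $r$, and the assumption parameters, not on $M$ or $L$.
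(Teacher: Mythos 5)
Your proposal is correct and mirrors the paper's own proof almost exactly: the same use of Lemma \ref{lemma:disTbound} to confine both trajectories, the same one-step triangle-inequality decomposition into a $T$-Lipschitz term (via Assumption \ref{ass:growth}(iii)) and a parameter-Lipschitz term (via Assumption \ref{ass:growth}(ii)), and the same discrete Gr\"{o}nwall unrolling with $(1+C\Delta t)^L \leq e^C$. The only cosmetic difference is that the paper aggregates $\norm{\theta_{t,j}-\btheta_{t,j}}$ and $\norm{w_{t,j}-\bw_{t,j}}$ into a single $\norm{\beta_{t,j}-\bbeta_{t,j}}$, which changes the constant by at most $\sqrt{2}$.
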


\begin{lemma}[Discrete Transformer gradient component bound]
\label{lemma:disGradientCompBound} Under Assumption \ref{ass:dataBound} and \ref{ass:growth}, for any $\Theta$ such that $\sup_{t,j}\norm{\beta_{t,j}}^2\leq r^2$ and $\frac{1}{ML}\sum_t\sum_{j=1}^M \norm{\beta}^2\leq A^2$, we have, for any $t=0,\Delta t/2,\dots,(L-1/2)\Delta t, 1$
$$
\begin{aligned}
&\sup_{(\theta,t)\in P_r}\max\{\fnorm{f(\widehat{T}_\Theta(H,t),\theta)},\fnorm{h(\widehat{T}_\Theta(H,t),w)},\fnorm{g(\widehat{T}_\Theta(H,t),\beta)}\}\leq \sqrt{N+1}KB_T(1+r+r^2),\\
&\sup_{(\beta,t)\in P_r}\sup_{i\in[N+1]}\max\{\Norm{\nabla_\theta f(\widehat{T}_\Theta(H,t),\theta)_{:,i}},\Norm{\nabla_w h(\widehat{T}_\Theta(H,t),w)_{:,i}},\Norm{\nabla_\beta g(\widehat{T}_\Theta(H,t),\beta)_{:,i}}\}\leq \phi_{P}(B_T)(1+r),\\
&\sup_{(\beta,t)\in P_r}\fnorm{\widehat{p}_\Theta(H,t)}\leq (B+B_T)\exp\Big(\phi_{T}(N,D,\sqrt{N+1}KB_T)(1+A+A^2)\Big),\\
\end{aligned}
$$
where $B_T=B\exp(K(1+A+A^2))$.
\end{lemma}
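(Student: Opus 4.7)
}
The first preparatory step is to control $\cnorm{\widehat{T}_\Theta(H,t)}$ uniformly in $t$. Under the hypothesis $\frac{1}{ML}\sum_{t,j}\norm{\beta_{t,j}}^2\leq A^2$, Lemma \ref{lemma:disTbound} combined with Assumption \ref{ass:dataBound} ($\cnorm{H}\leq B$) yields
$\cnorm{\widehat{T}_\Theta(H,t)}\leq B\exp(K(1+A+A^2))=B_T$
for every $t\in\{0,\Delta t/2,\dots,1\}$. Since $\widehat{T}_\Theta(H,t)$ has $N+1$ columns, this also gives $\fnorm{\widehat{T}_\Theta(H,t)}\leq \sqrt{N+1}B_T$ via the bound $\fnorm{\cdot}\leq \sqrt{N+1}\cnorm{\cdot}$.

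The first two claimed bounds then follow immediately. For the first, Assumption \ref{ass:growth}(i) applied to $f$, $h$ and $g$ (cf.\ Remark \ref{lemma:gGrowth}) gives, column by column, $\cnorm{f(\widehat{T}_\Theta(H,t),\theta)}\leq K\cnorm{\widehat{T}_\Theta(H,t)}(1+\norm{\theta}+\norm{\theta}^2)\leq KB_T(1+r+r^2)$ when $\norm{\beta_{t,j}}\leq r$, and converting to the Frobenius norm costs the usual $\sqrt{N+1}$ factor. For the second, Assumption \ref{ass:growth}(ii) applied to $\widehat{T}_\Theta(H,t)$ yields $\norm{\nabla_\theta f(\widehat{T}_\Theta(H,t),\theta)_{:,i}}\leq \phi_P(\cnorm{\widehat{T}_\Theta(H,t)})(1+\norm{\theta})\leq \phi_P(B_T)(1+r)$ using monotonicity of $\phi_P$, and the same argument handles $h$ and $g$.

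The third bound is the substantive part, and I would use the explicit formula for $\widehat{p}_\Theta(H,t)$ derived in Appendix \ref{sec:adjoint}. By the chain rule, $\widehat{p}_\Theta(H,t)$ is the product of a terminal factor, coming from $\widehat{p}_\Theta(H,1)=(\mathrm{Read}[\widehat{T}_\Theta(H,1)]-y(H))\,e_{Dn+d+1}$ (the gradient of the squared loss), with a backward product of Jacobians of the intermediate residual maps
$\widehat{T}_\Theta(H,t+\Delta t/2) = \widehat{T}_\Theta(H,t) + \tfrac{\Delta t}{2}M^{-1}\sum_j f(\widehat{T}_\Theta(H,t),\theta_{t,j})$
and the analogous $\mathrm{MLP}$-step. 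The terminal factor is bounded by $|\mathrm{Read}[\widehat{T}_\Theta(H,1)]|+|y(H)|\leq B_T+B$, giving the prefactor in the stated bound. Each Jacobian has the form $I+\tfrac{\Delta t}{2}M^{-1}\sum_j \nabla_{\vec T}\vec{g(\widehat{T}_\Theta,\beta_{t,j})}$, whose operator norm is at most $1+\tfrac{\Delta t}{2}M^{-1}\sum_j \phi_T(N,D,\sqrt{N+1}B_T)(1+\norm{\beta_{t,j}}+\norm{\beta_{t,j}}^2)$ by Assumption \ref{ass:growth}(iii) (using the column-to-Frobenius conversion for $\widehat T$). Multiplying these factors across the $2L$ sub-layers, taking logarithms, and applying the elementary estimate $\log(1+x)\leq x$ turns the product into a Riemann sum, and summing it out yields an exponent bounded by $\phi_T(N,D,\sqrt{N+1}KB_T)\cdot \tfrac{1}{ML}\sum_{t,j}(1+\norm{\beta_{t,j}}+\norm{\beta_{t,j}}^2)\leq \phi_T(N,D,\sqrt{N+1}KB_T)(1+A+A^2)$, where the last step uses Cauchy--Schwarz ($\tfrac{1}{ML}\sum_{t,j}\norm{\beta_{t,j}}\leq A$) together with the hypothesis $\tfrac{1}{ML}\sum_{t,j}\norm{\beta_{t,j}}^2\leq A^2$. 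This is exactly the discrete analogue of the Grönwall argument used for the continuous $p_\rho$ in Lemma \ref{lemma:gradientComponents2rhoBound}, and produces the stated exponential factor.

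The main obstacle is the backward-recursion bookkeeping: one must be careful that the Jacobian argument of $\phi_T$ depends only on a quantity already controlled by Lemma \ref{lemma:disTbound} and not on downstream sublayers, and that the $\tfrac{1}{ML}\sum_{t,j}$ averaging in the exponent produces $A+A^2$ rather than $r+r^2$, as otherwise the bound would blow up with $r$. Both are handled by first replacing column-wise bounds by Frobenius bounds uniformly in $t$ and then by applying Cauchy--Schwarz globally to the average of the parameter norms, rather than passing through the worst case $\sup_{t,j}\norm{\beta_{t,j}}\leq r$.
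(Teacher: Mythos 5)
Your proposal matches the paper's proof essentially step for step: Lemma \ref{lemma:disTbound} gives $\cnorm{\widehat T_\Theta}\leq B_T$, the first two bounds drop out of Assumption \ref{ass:growth}(i)--(ii) with the $\sqrt{N+1}$ column-to-Frobenius conversion, and the third follows by writing out the product formula for $\widehat p_\Theta$ from Appendix \ref{sec:adjoint}, bounding the terminal factor by $B+B_T$, bounding each Jacobian factor via Assumption \ref{ass:growth}(iii), and converting the product to an exponential of the average of $1+\norm{\beta_{t,j}}+\norm{\beta_{t,j}}^2$ (your $\log(1+x)\leq x$ is the same as the paper's $\prod(1+x_i)\leq\exp(\sum x_i)$), then controlling the average by $1+A+A^2$ via Cauchy--Schwarz. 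No meaningful difference from the paper's argument.
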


\begin{lemma}[Norm average concentration]
\label{lemma:normAvg}
Under Assumption \ref{ass:growth}, consider a parameter setting $\Theta=\{\beta_{t,j}\}_{t/\Delta t +1 \in[L], j\in[M]}$ i.i.d. drawn from $\{\rho(\beta|t)\}_{t/\Delta t +1 \in[L], j\in[M]}$ where $\rho\in \ca P^2$ is concentrated on $P_r$ and satisfies $\int_{\beta}\rho(\beta,t)d\beta=1$ for every $t\in[0,1]$. Then, with probability at least $1-\exp(-\delta)$ with respect to the parameter initialization $\Theta^{(0)}$, we have
$$|\frac{1}{ML}\sum_t\sum_{j=1}^M\norm{\beta_{t,j}}^2-\int_0^1\int_\beta \norm{\beta}^2\rho(\beta,t) d\beta dt|\lesssim L^{-1}+\sqrt{\frac{\delta + \log(L+1)}{M}}.$$
for any $\delta>0$. Here, the $\lesssim$ notation hides the dependencies on $r$ and the parameters specified in the assumption.
\end{lemma}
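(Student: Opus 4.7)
The plan is to decompose the total error into a stochastic (concentration) piece across the $M$ particles and a deterministic (Riemann-sum) piece across the $L$ time levels, then bound each separately. Writing $\Phi(t) := \int_\beta \|\beta\|^2 \rho(\beta,t)\,d\beta$, I insert the intermediate quantity $\frac{1}{L}\sum_t \Phi(t)$ and apply the triangle inequality:
\begin{equation*}
\Bigl|\tfrac{1}{ML}\sum_t\sum_{j=1}^M \|\beta_{t,j}\|^2 - \int_0^1 \Phi(t)\,dt\Bigr|
\;\le\; \underbrace{\Bigl|\tfrac{1}{ML}\sum_t\sum_{j=1}^M \|\beta_{t,j}\|^2 - \tfrac{1}{L}\sum_t \Phi(t)\Bigr|}_{(\mathrm{I})}
\;+\; \underbrace{\Bigl|\tfrac{1}{L}\sum_t \Phi(t) - \int_0^1 \Phi(t)\,dt\Bigr|}_{(\mathrm{II})}.
\end{equation*}

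For term (I), since $\mathrm{supp}(\rho) \subset P_r$ the random variables $\|\beta_{t,j}\|^2$ lie in $[0,r^2]$, and for each fixed $t$ the $M$ samples are i.i.d.\ with mean $\Phi(t)$. Hoeffding's inequality yields, for every $s>0$ and every $t$, $\Pr\bigl[|\tfrac{1}{M}\sum_j\|\beta_{t,j}\|^2-\Phi(t)| \ge s\bigr] \le 2\exp(-2Ms^2/r^4)$. A union bound over the $L$ discrete times, followed by averaging over $t$ (which can only shrink the deviation), gives (I)~$\lesssim r^2 \sqrt{(\delta+\log(L+1))/M}$ with probability at least $1-\exp(-\delta)$ after choosing $s$ appropriately.

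For term (II), I exploit the Lipschitz regularity of $\rho$ in $t$ built into $\mathcal{P}^2$. On $P_r$ the test function $\|\beta\|^2$ is bounded by $r^2$ and $2r$-Lipschitz, so rescaling by $C_r':=\max\{r^2,2r\}$ produces a bounded $1$-Lipschitz function. Hence $|\Phi(t)-\Phi(t')| \le C_r' \blnorm{\rho(\cdot,t)-\rho(\cdot,t')} \le C_r' C_\rho |t-t'|$, i.e.\ $\Phi$ is Lipschitz in $t$ with constant $C_r' C_\rho$. Writing the integral as $\sum_t \int_t^{t+\Delta t}\Phi(s)\,ds$ and comparing term by term,
\begin{equation*}
(\mathrm{II}) \;\le\; \sum_t \int_t^{t+\Delta t} |\Phi(t)-\Phi(s)|\,ds \;\le\; C_r' C_\rho \, L \cdot \tfrac{(\Delta t)^2}{2} \;=\; \tfrac{C_r' C_\rho}{2L}.
\end{equation*}

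Combining the bounds on (I) and (II) delivers the claimed rate $L^{-1} + \sqrt{(\delta+\log(L+1))/M}$ with constants depending only on $r$ and $C_\rho$. The only genuine subtlety is ensuring the union bound across the $L$ time slices is handled with the correct logarithmic factor — this is why a single $\exp(-\delta)$-probability statement produces the $\log(L+1)$ term inside the square root. Everything else is a direct combination of Hoeffding's inequality with the bounded-Lipschitz characterization of $\mathcal{P}^2$.
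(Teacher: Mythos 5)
Your proof is correct and follows essentially the same route as the paper's: split into the stochastic piece (Hoeffding with union bound over the $L$ time slices) and the Riemann-sum piece (controlled by the $\blnorm{\cdot}$-Lipschitz continuity of $t\mapsto\rho(\cdot,t)$ built into $\ca P^2$). Your treatment of term (II) is in fact slightly more careful than the paper's, correctly scaling by $\max\{r^2,2r\}$ before invoking the bounded-Lipschitz bound.
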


\begin{lemma}[Matrix product difference bound]
\label{lemma:matrixProd}
Suppose that for some $d>0$, the matrices $A_1,A_2,\dots,A_L$ and $B_1,B_2,\dots,B_L$ satisfy the following conditions:
\begin{enumerate} 
\item For each $l=1,\dots,L$, the norms of the matrices are bounded as $\norm{A_l}\leq 1+a_l,\norm{B_l}\leq 1+b_l$, where $a_l,b_l>0$.
\item The product of the increments for each matrix is bounded by $\prod_{l=1}^L 1+\max\{a_l,b_l\}\leq C$ for some constant $C>0$.
\end{enumerate}
Under these conditions, it holds that
$$\norm{\prod_{l=1}^L A_l-\prod_{l=1}^L B_l}\leq C \sum_{l=1}^L\norm{A_l-B_l}.$$
\end{lemma}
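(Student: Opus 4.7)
The plan is to prove Lemma \ref{lemma:matrixProd} by a standard telescoping decomposition together with submultiplicativity of the operator norm. The key identity I would use is
\begin{equation*}
\prod_{l=1}^L A_l - \prod_{l=1}^L B_l \;=\; \sum_{k=1}^L \left(\prod_{l=1}^{k-1} A_l\right)(A_k - B_k)\left(\prod_{l=k+1}^L B_l\right),
\end{equation*}
which is verified by observing that the right-hand side telescopes: the $k$-th summand equals $\bigl(\prod_{l=1}^{k} A_l\bigr)\bigl(\prod_{l=k+1}^L B_l\bigr) - \bigl(\prod_{l=1}^{k-1} A_l\bigr)\bigl(\prod_{l=k}^L B_l\bigr)$, and consecutive summands cancel to leave the boundary terms $\prod_l A_l - \prod_l B_l$.

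Next I would apply the triangle inequality and submultiplicativity of $\|\cdot\|$ to each summand, bounding the product of $A_l$'s on the left by $\prod_{l=1}^{k-1}(1+a_l)$ and the product of $B_l$'s on the right by $\prod_{l=k+1}^L(1+b_l)$, using the first hypothesis on norms. Combining and overestimating $a_l, b_l$ by $\max\{a_l, b_l\}$ yields
\begin{equation*}
\left\|\left(\prod_{l=1}^{k-1} A_l\right)(A_k - B_k)\left(\prod_{l=k+1}^L B_l\right)\right\| \;\leq\; \prod_{l \neq k}\bigl(1 + \max\{a_l, b_l\}\bigr)\,\|A_k - B_k\|.
\end{equation*}
Since $1 + \max\{a_k, b_k\} \geq 1$, the product over $l \neq k$ is bounded by the full product over $l \in \{1,\dots,L\}$, which by the second hypothesis is at most $C$. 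Summing over $k$ then yields the claimed bound $C\sum_{l=1}^L \|A_l - B_l\|$.

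There is no real obstacle; the lemma is a purely algebraic statement and the proof is routine. The only small care needed is to handle the boundary cases $k = 1$ and $k = L$ (where one of the partial products is empty, i.e.\ the identity matrix) so that the telescoping identity remains valid — these cases contribute factors of $1$ and pose no issue. Consequently the proof should fit in a few lines without any additional machinery from earlier in the paper.
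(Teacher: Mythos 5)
Your proof is correct and follows essentially the same approach as the paper: a telescoping decomposition of the product difference, followed by the triangle inequality and submultiplicativity, and overestimating the partial products by $C$. The only cosmetic difference is that the paper places the $B$ factors to the left and $A$ factors to the right of $(A_l - B_l)$ in the telescoping identity, whereas you do the reverse; both orderings are valid and lead to the same bound.
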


\subsection{Solution of adjoint ODE}
\label{sec:adjoint}
In this section, we define the partial derivative
$$p_\rho(H,t):=\frac{\partial R(H;\rho)}{\partial T_\rho(H,t)^\top }\in \R^{D\times (N+1)}$$
without specifying its explicit formula.
Denote the derivative of $T_\rho(H,1)$ to $T_\rho(H,t)$ (after vectorization) by the Jacobian $J_\rho(H,t) \in \R^{(N+1)D \times (N+1)D}$, and assume that $\dot{J}\rho(H,t)$ exists for any $t \in [0,1]$. Then \cite{bittner1963adjoint} shows that $J_\rho(H,t)$ satisfies the adjoint equation of the ODE.
\begin{equation}
\label{eq:jacobianODE}
\dot{J}_\rho(H,t)=-J_\rho(H,t)\nabla_{\vec{T}} \Big\{ \vec{\int_\beta g(T_\rho(H,t),\beta)\rho(\beta,t)d\beta}\Big\}
\end{equation}
for any $t\in[0,1]$. By applying the chain rule and exchanging the order of the derivative and integral, we have, for any $t\in[0,1]$, that
$$
\begin{aligned}
\mathrm{vec}[p_\rho(H,t)]^\top &=\frac{\partial R(H;\rho)}{\partial \vec{T_\rho(H,1)}} \frac{\partial \vec{T_\rho(H,1)}}{\partial \vec{T_\rho(H,t)}}\\
&=\vec{\frac{\partial R(H;\rho)}{\partial T_\rho(H,1)^\top }}^\top  \frac{\partial \vec{T_\rho(H,1)}}{\partial \vec{T_\rho(H,t)}}\\
&=\vec{p_\rho(H,1)}^\top  J_\rho(H,t)
\end{aligned}
$$
Hence, by taking the derivative with respect to $t$, we obtain that
$$\mathrm{vec}[\dot{p}_\rho(H,t)]^\top =-\mathrm{vec}[p_\rho(H,t)]^\top \int_\beta \nabla_{\vec{T}} \vec{g(T_\rho(H,t),\beta)}\rho(\beta,t) d\beta$$
with the solution
\begin{equation}
\label{eq:pSolutionJacobian}
\mathrm{vec}[p_\rho(H,t)]^\top =\mathrm{vec}[p_\rho(H,1)]^\top  \exp\Big(\int_t^1 \int_\beta \nabla_{\vec{T}} \vec{g(T_\rho(H,s),\beta)}\rho(\beta,s) d\beta ds\Big) .
\end{equation}
On the other hand, we have 
\begin{equation}
\label{eq:pSolutionAt1}
p_\rho(H,1)=\frac{\partial R(H;\rho)}{\partial T_\rho(H,1)}=(\mathrm{Read}[T_\rho(H,1)]-y(H))E_{\mathrm{read}},
\end{equation}
where $E_{\mathrm{read}}$ is a $D\times (N+1)$ zero matrix except $1$ at the $(d+1,N+1)-$th entry. Moreover, from \eqref{eq:pSolutionJacobian} and \eqref{eq:pSolutionAt1}, we see that 
\begin{equation}
\label{eq:pSolution}
\mathrm{vec}[p_\rho(H,t)]^\top =(\mathrm{Read}[T_\rho(H,1)]-y(H))\cdot\exp\Big(\int_t^1 \int_\beta \nabla_{\vec{T}} \vec{g(T_\rho(H,s),\beta)}\rho(\beta,s) d\beta ds\Big)_{DN+d+1,:}.
\end{equation}

Additionally, we could explicitly derive the formula for $\widehat{p}_\Theta(H,t)=\frac{\partial \widehat{R}(H;\Theta)}{\partial \widehat{T}_\Theta(H,t)}$ for the discrete Transformer. By applying the chain rule multiple times across each layer with the encoder either $f$ or $h$, for any $t=0,\Delta t,\dots, (L-1)\Delta t, 1$, we have
\begin{equation}
\label{eq:phatSolutionH}
\begin{aligned}
&\vec{\widehat{p}_\Theta(H,t)}=(\mathrm{Read}[\widehat{T}_\Theta(H,1)]-y(H))\\
&\Big\{\prod_{\substack{(s-t)/\Delta t + 1\in[(1-t)/\Delta t]\\j\in[M]}}\Big(I_{\mathrm{dim}\vec{T}}+(\Delta t/2)M^{-1}\sum_{j=1}^M\nabla_{\vec{T}}\vec{f(\widehat{T}_\Theta(H,s),\theta_{s,j})} \Big)\\
&\prod_{\substack{(s-t)/\Delta t + 1\in[(1-t)/\Delta t]\\j\in[M]}}\Big(I_{\mathrm{dim}\vec{T}}+(\Delta t/2)M^{-1}\sum_{j=1}^M\nabla_{\vec{T}}\vec{h(\widehat{T}_\Theta(H,s+\Delta t/2),w_{s,j})} \Big)\Big\}_{DN+d+1,:},
\end{aligned}
\end{equation}
and
\begin{equation}
\label{eq:phatSolutionF}
\begin{aligned}
&\vec{\widehat{p}_\Theta(H,t+\Delta t/2)}=(\mathrm{Read}[\widehat{T}_\Theta(H,1)]-y(H))\\
&\Big\{\prod_{\substack{(s-t)/\Delta t + 2\in[(1-t)/\Delta t]\\j\in[M]}}\Big(I_{\mathrm{dim}\vec{T}}+(\Delta t/2)M^{-1}\sum_{j=1}^M\nabla_{\vec{T}}\vec{f(\widehat{T}_\Theta(H,s),\theta_{s,j})} \Big)\\
&\prod_{\substack{(s-t)/\Delta t + 1\in[(1-t)/\Delta t]\\j\in[M]}}\Big(I_{\mathrm{dim}\vec{T}}+(\Delta t/2)M^{-1}\sum_{j=1}^M\nabla_{\vec{T}}\vec{h(\widehat{T}_\Theta(H,s+\Delta t/2),w_{s,j})} \Big)\Big\}_{DN+d+1,:}.
\end{aligned}
\end{equation}

\subsection{Explanation for assumptions made in Theorem \ref{thm:global}}
\label{sec:justification}
The justification of the two assumptions outlined in Theorem \ref{ass:opt} warrants careful consideration. While we provide only high-level justifications, they underpin significant aspects of our theoretical framework. 

For the first assumption, we argue that the regularization parameter $\lambda$, which penalizes the magnitude of the parameter norms, implicitly promotes solutions that are confined to a compact subset of the parameter space. This rationale is conceptual and requires that regularization effectively constrains the growth of the parameter norms, thereby localizing the solutions.

The second assumption concerns the separation property. It is naturally satisfied as long as the origin $0_{\mathrm{dim}\theta+\mathrm{dim}w}$ remains an interior point of $\mathrm{supp}(\rho_\infty(\cdot,t))$. This condition is relatively mild and is generally satisfied. The challenge arises in verifying that $\mathrm{supp}(\rho_\infty(\cdot,t))$ for the $\alpha_2$ component extends to encompass the entire space $\ca K$. While direct confirmation is elusive, it is suggested by \cite{ding2022meanfield2} initially spans $\ca K$, this expansive support property is maintained at any finite time. Thus, we conjecture that the condition holds under these circumstances, providing a basis for this assumption.

\section{Proofs of main results in Section \ref{sec:approxThm}}
\label{sec:approxThmApp}
\subsection{Proof of Theorem \ref{thm:GFApprox}}

This convergence is detailed in two parts. First, the finite time result, as stated in points (i)-(iii), utilizes a concept in probability theory known as \emph{propagation of chaos} \citep{sznitman1991nonlinear} to examine how differences evolve uniformly across a given time interval. In the context of our model, this involves comparing how parameter particles evolve under discrete versus continuous dynamics.

Specifically, the approximation bound is derived using a third auxiliary dynamic, termed the "nonlinear dynamics," by bounding the dynamic difference over the entire finite time interval. This process involves applying the triangle inequality to each component and concluding with a Gr\"{o}nwall's inequality.
Since the Transformer output can be bounded by the dynamic difference, we can then bound the output difference at any specific time, along with the difference regarding different time for the same dynamic. By applying a probability union bound on a dense set of \(L^2\) points, we can extend this to bound the maximal difference over any time interval.

Secondly, the weak convergence of the empirical distribution process
leverages optimal transport theory alongside abstract stability results for Wasserstein gradient flows \cite{ambrosio2005GradientFI}. This argument involves detailed analysis of the discretization of particle distributions in space, particularly focusing on obtaining the convergence of the sequence of \emph{momentum fields} \citep{ambrosio2005GradientFI,santambrogio2015optimal} that could directly leads to the result. To obtain the convergence of the momentum field sequence, we also need to bound the parameter gradient difference between discrete and continuous Transformers as the mean-field limit.

\noindent{\textbf{Preparatory Step: Nonlinear dynamics}}

We first define some auxiliary quantities and differential equations that are useful for the proof.
For any gradient flow parameter setting $\Theta^{(\tau)}=\{\beta^{(\tau)}_{t,j}\}_{t,j}$, from its definition \eqref{eq:GF}, we could rewrite the dynamics as
\begin{equation}
\label{eq:thmGFGFdynamics}
\beta^{(\tau)}_{t,j}=\beta^{(0)}-\int_0^\tau \widehat{G}(\beta^{(s)}_{t,j},\Theta^{(s)},t)ds
\end{equation}
for any gradient flow time $\tau>0$, depth index $t=0,\Delta t,\dots, (L-1)\Delta t$ and width index $j=1,\dots,M$. For simplicity, any mentioned constant only depends on $N,D,\tau,\lambda$ and the parameters of the assumptions, and we abbreviate the subscript $t=0,\Delta t,\dots, (L-1)\Delta t,\ j=1,\dots,M$ by $t,j$ throughout the proof.

Inspired by the ``propagation of chaos" idea \cite{sznitman1991nonlinear}, we could define the ``nonlinear dynamics" with the same initialization setting $\bTheta^{(\tau)}=\{\bbeta^{(\tau)}_{t,j}\}_{t,j}$, i.e.
\begin{equation}
\label{eq:thmGFNLD}
\begin{cases}
&\bbeta^{(\tau)}_{t,j}=\bbeta^{(0)}-\int_0^\tau G(\bbeta^{(s)}_{t,j},\rho^{(s)},t)ds,\\
&\bbeta^{(0)}_{t,j}=\beta^{(0)}_{t,j}
\end{cases}
\end{equation}
for any $t,j$. Here, $(\rho^{(s)})_{s\geq 0}$ is the solution to the Wasserstein gradient flow \eqref{eq:rhoPDE}, of which the uniqueness is implied by Proposition \ref{prop:wellWGF}. Since \eqref{eq:thmGFNLD} is just the particle flow of \eqref{eq:rhoPDE}, its existence and uniqueness are guaranteed by Proposition \ref{prop:wellWGF}.

Observing that $\{\beta_{t,j}\}_{t,j}$ are independent due to the dynamics only involving $(\rho^{(s)})_{s\geq 0}$ with i.i.d initialization over $\rho_0$, we can consider $\{\bbeta^{(s)}_{t,j}\}_{t,j}$ as i.i.d. samples drawn from $\{\rho^{(s)}\}_{t,j}$. In addition, from Propositions \ref{prop:wellWGF} and \ref{prop:wellGF}, for any $t,j$ we have $\max\{\norm{\beta_{t,j}},\norm{\bbeta_{t,j}}\}\leq R_\tau$, where $R_\tau$ is defined as in these propositions and does not depend on $M$ and $L$.

\noindent{\textbf{Preparatory Step: Bound the gradient difference regarding parameter settings}}

As the second preparatory step for the proof of Theorem \ref{thm:GFApprox}, we present the following three lemmas that will be helpful:
\begin{lemma}[Continuous gradient difference bound]
\label{lemma:gradDiff}
Suppose Assumptions \ref{ass:dataBound}-\ref{ass:growth2} hold. If we have $\rho,\nu\in\ca P^2$ concentrated on $P_r$ for some $r>0$, and $\beta,\bbeta$ such that $\max\{\norm{\beta},\norm{\bbeta}\}\leq r$, then 
$$\Norm{G(\beta,\rho,t)-G(\bbeta,\nu,t)}\leq C_G\Big( \exp(C_G W_1(\rho,\nu))-1 +(1+\lambda)\norm{\beta-\bbeta}\Big).$$
for any $t\in[0,1]$. Here, the constant $C_{G}$ only depends on $N,D,r$, and the parameters of the assumptions.
\end{lemma}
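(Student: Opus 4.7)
The strategy is to decompose the difference via triangle inequality, control each piece using the Lipschitz assumptions and auxiliary lemmas, and then recast the resulting linear bound in the exponential form stated in the lemma. Starting from the definition in \eqref{eq:G2beta}, the regularization contributes $\lambda(\beta-\bbeta)$ directly. For the gradient-of-trace term, I would insert and subtract intermediate expressions to write
\begin{equation*}
\nabla_\beta \mathrm{Tr}(g(T_\rho,\beta)^\top p_\rho) - \nabla_\beta \mathrm{Tr}(g(T_\nu,\bbeta)^\top p_\nu) = A_1 + A_2 + A_3,
\end{equation*}
where $A_1$ captures the change $T_\rho\to T_\nu$ (with $\beta,p_\rho$ fixed), $A_2$ captures $\beta\to\bbeta$ (with $T_\nu,p_\rho$ fixed), and $A_3$ captures $p_\rho\to p_\nu$ (with $T_\nu,\bbeta$ fixed).

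The first two contributions are routine. For $\E_\mu\norm{A_1}$, I would expand the gradient of the trace column by column as $\sum_i(\nabla_\beta g(T_\rho,\beta)_{:,i}-\nabla_\beta g(T_\nu,\beta)_{:,i})^\top [p_\rho]_{:,i}$, apply Assumption~\ref{ass:growth2}(i) to bound each term in expectation, and invoke Lemma~\ref{lemma:Tdiff2rho} to replace $\sup_H\cnorm{T_\rho-T_\nu}$ by $C_rW_1(\rho,\nu)$; the factor $\fnorm{p_\rho}$ is uniformly bounded on $P_r$ by Lemma~\ref{lemma:gradientComponents2rhoBound}. For $\E_\mu\norm{A_2}$, Assumption~\ref{ass:growth2}(iii) applies directly. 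These two steps yield linear bounds of the form $C\cdot W_1(\rho,\nu)$ and $C\cdot\norm{\beta-\bbeta}$ respectively.

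The main step is $\E_\mu\norm{A_3}$. Bounding $\sup_H\norm{\nabla_\beta g(T_\nu,\bbeta)_{:,i}}$ by Assumption~\ref{ass:growth}(ii) reduces the problem to controlling $\E_\mu\fnorm{p_\rho-p_\nu}$. I would start from the adjoint formula in Appendix~\ref{sec:adjoint} and derive the backward linear ODE
\begin{equation*}
\dot{\delta p}(H,t)=-\delta p(H,t)\,M_\rho(H,t)-p_\nu(H,t)\,(M_\rho(H,t)-M_\nu(H,t)),
\end{equation*}
for $\delta p:=p_\rho-p_\nu$, with terminal condition $\delta p(H,1)=(\mathrm{Read}[T_\rho(H,1)]-\mathrm{Read}[T_\nu(H,1)])E_{\mathrm{read}}$, where $M_\rho(H,s):=\int_\beta\nabla_{\vec{T}}\vec{g(T_\rho(H,s),\beta)}\rho(\beta,s)d\beta$. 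A backward Gr\"onwall inequality, combined with the pointwise bounds on $\fnorm{M_\rho(H,s)}$ and $\fnorm{p_\nu(H,s)}$ from Assumption~\ref{ass:growth}(iii) and Lemma~\ref{lemma:gradientComponents2rhoBound}, and the terminal bound $\fnorm{\delta p(H,1)}\le\fnorm{T_\rho(H,1)-T_\nu(H,1)}\le C_rW_1(\rho,\nu)$ from Lemma~\ref{lemma:Tdiff2rho}, reduces everything to $\int_t^1\E_\mu\norm{M_\rho(H,s)-M_\nu(H,s)}ds$. Splitting by triangle inequality, the $T$-difference piece is handled by Assumption~\ref{ass:growth2}(iv) plus Lemma~\ref{lemma:Tdiff2rho}, while the $(\rho-\nu)$ piece $\int\nabla_{\vec{T}}\vec{g(T_\nu,\beta)}d(\rho-\nu)$ is the key technical obstacle: Assumption~\ref{ass:growth2}(ii) provides only an expectation-form (in $H$) Lipschitz bound, so a pointwise Kantorovich--Rubinstein duality is unavailable. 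The resolution is to introduce an optimal coupling $\pi$ of $(\rho(\cdot,s),\nu(\cdot,s))$ with $\int\norm{\beta-\beta'}d\pi\le W_1(\rho,\nu)+\epsilon$, rewrite the integral as $\int(F(\beta,H)-F(\beta',H))d\pi$, and then apply Fubini to pass $\E_\mu$ inside the $\pi$-integral, where Assumption~\ref{ass:growth2}(ii) applies cleanly to give a $C\cdot W_1(\rho,\nu)$ bound after letting $\epsilon\to 0$.

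Summing the three contributions yields a linear estimate $\norm{G(\beta,\rho,t)-G(\bbeta,\nu,t)}\le C_1W_1(\rho,\nu)+C_2(1+\lambda)\norm{\beta-\bbeta}$ for constants $C_1,C_2$ depending on $N,D,r$, and the parameters of the assumptions. Since $e^x-1\ge x$ for $x\ge0$, any $C_G\ge\max(\sqrt{C_1},C_2,1)$ satisfies $C_1W_1(\rho,\nu)\le C_G(\exp(C_GW_1(\rho,\nu))-1)$, yielding the stated form. The main obstacle throughout is translating the expectation-form Lipschitz assumptions in Assumption~\ref{ass:growth2} into useful control on the pointwise-in-$H$ quantity $\fnorm{p_\rho-p_\nu}$; the optimal-coupling argument is the essential new ingredient that makes Step~3 go through without strengthening the assumptions.
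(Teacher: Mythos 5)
Your proof takes a genuinely different route from the paper's. The paper starts from the explicit exponential formula \eqref{eq:pSolution} for $p_\rho$ and bounds $\fnorm{p_\rho-p_\nu}$ by a telescoping difference of matrix exponentials (their $I_1+I_2$), while you derive the backward linear ODE for $\delta p=p_\rho-p_\nu$ and run Grönwall. Both are standard ways to control solutions of the adjoint linear system, and your two-term $A_1+A_2$ corresponds exactly to the paper's $J_1$. Your observation that the measure-change piece $\int\nabla_{\vec T}\vec{g(T_\nu,\beta)}\,d(\rho-\nu)$ must be handled separately is actually sharper than the paper's own write-up: the paper's $I_2$ only changes $T_\rho\mapsto T_\nu$ inside the exponent and keeps $\rho$ as the integration measure, so as written it is missing the third telescoping term $\exp(\ldots T_\nu\ldots\rho\ldots)-\exp(\ldots T_\nu\ldots\nu\ldots)$; you identify precisely the term they appear to drop.

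There is, however, a genuine gap in your Step~3 as stated. After a \emph{plain} Grönwall estimate, you land on $\int_t^1\E_\mu\norm{M_\rho(H,s)-M_\nu(H,s)}\,ds$, i.e.\ a time integral of \emph{slice-wise} norms, and you propose an optimal coupling of the conditionals $(\rho(\cdot\,|\,s),\nu(\cdot\,|\,s))$ with cost $\le W_1(\rho,\nu)+\epsilon$. But this is not available: the slice-wise distance $W_1(\rho(\cdot\,|\,s),\nu(\cdot\,|\,s))$ is not in general controlled by the global $W_1(\rho,\nu)$ on $\Omega=\R^{\mathrm{dim}\beta}\times[0,1]$, even for $\ca P^2$ measures. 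For example, $\rho(\cdot\,|\,t)=\delta_{(1-t)\beta_0+t\beta_1}$ and $\nu(\cdot\,|\,t)=\delta_{t\beta_0+(1-t)\beta_1}$ are Lipschitz in $t$, yet $\int_0^1 W_1(\rho(\cdot\,|\,s),\nu(\cdot\,|\,s))\,ds=\tfrac12\norm{\beta_0-\beta_1}$ while $W_1(\rho,\nu)=\tfrac12$, and $\norm{\beta_0-\beta_1}$ can be arbitrarily large within $P_r$. So the coupling inequality you invoke is false in the form stated. The repair is to replace plain Grönwall by the Duhamel (variation-of-constants) representation $\delta p(H,t)=\Phi(t,1)\delta p(H,1)+\int_t^1\Phi(t,s)\,p_\nu(H,s)\,(M_\rho-M_\nu)(H,s)\,ds$, keep the single norm \emph{outside} the $(s,\beta)$ double integral, and only then take $\E_\mu$ and pass to a single global coupling $\pi$ of $\rho$ and $\nu$ on $\Omega$. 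The integrand $\Phi(t,s)p_\nu(H,s)\nabla_{\vec T}\vec{g(T_\nu(H,s),\beta)}$ is jointly Lipschitz in $(\beta,s)$ (Lipschitz in $s$ through $T_\nu,p_\nu,\Phi$ via Proposition~\ref{prop:odeSol} and the adjoint ODE, and Lipschitz in $\beta$ in $\E_\mu$ via Assumption~\ref{ass:growth2}(ii)), so the global coupling and Fubini give the desired $C\,W_1(\rho,\nu)$ bound. With this replacement the rest of your argument — including the final downgrade from the linear bound to the $\exp(C_G\cdot)-1$ form via $e^x-1\ge x$ — goes through.
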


\begin{lemma}[Discrete gradient difference bound]
\label{lemma:disGradientDiffBound}
Under Assumption \ref{ass:dataBound}-\ref{ass:growth2}, for any \\$\Theta=\{\beta_{t,j}\}_{t/\Delta t+1\in[L],j\in[M]},\bTheta=\{\widetilde{\beta}_{t,j}\}_{t/\Delta t+1\in[L],j\in[M]}$ such that $\max\{\norm{\beta_{t,j}},\norm{\widetilde{\beta}_{t,j}}\}\leq r$ for any $t=0,\dots,(L-1)\Delta t,\ j=1,\dots, M$.
Then we have that
$$\norm{\widehat{G}(\beta,\Theta,t)-\widehat{G}(\bbeta,\bTheta,t)}\leq C_{G} \Big(\frac{1}{ML}d(\Theta,\bTheta)+(1+\lambda)\norm{\beta-\bbeta}\Big).$$
for any $\beta,\bbeta\in\{\beta:\norm{\beta}\leq r\}$ and $t=0,\Delta t,\dots,(L-1)\Delta t$.
Here, the constant $C_{G}$ only depends on $N,D,r$ and the parameters of the assumptions and $d(\Theta,\bTheta)$ is defined as $\sum_t\sum_{j=1}^M\norm{\beta_{t,j}-\bbeta_{t,j}}$.
\end{lemma}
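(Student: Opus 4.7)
The plan is to mirror the continuous argument of Lemma \ref{lemma:gradDiff}, but work with the explicit discrete adjoint formulas \eqref{eq:phatSolutionH}--\eqref{eq:phatSolutionF}. Starting from the combined expression \eqref{eq:Ghat2beta} and the fact that $\widehat{G}$ is an $H$-expectation plus $\lambda\beta$, I would first push the Euclidean norm inside the expectation via Jensen's inequality and then decompose the integrand by triangle inequality into four contributions: (i) the $\lambda\beta-\lambda\bbeta$ regularization term, which gives $\lambda\norm{\beta-\bbeta}$ immediately; (ii) the change of the outer gradient $\nabla_\beta g(\cdot,\beta)\to\nabla_\beta g(\cdot,\bbeta)$ with $\Theta$ and the adjoint both held fixed; (iii) the change of the hidden state $\widehat{T}_\Theta\to\widehat{T}_\bTheta$ inside the gradient factor with $\beta$ fixed; and (iv) the change of the adjoint $\widehat{p}_\Theta\to\widehat{p}_\bTheta$ with everything else fixed.

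Terms (ii) and (iii) are the routine ones. Term (ii) follows from Assumption \ref{ass:growth2}(iii) applied to the function $H\mapsto\widehat{T}_\Theta(H,t)$, which is Lipschitz in $H$ with a constant depending only on $r$ by a discrete Grönwall estimate analogous to step III of Proposition \ref{prop:odeSol}, using Assumption \ref{ass:growth}(iii) iteratively over the $2L$ skip-connection layers together with the uniform bound $\norm{\beta_{t,j}}\leq r$. Multiplied by the uniform bound on $\widehat{p}_\Theta$ from Lemma \ref{lemma:disGradientCompBound}, this contributes at most $C\norm{\beta-\bbeta}$. Term (iii) combines Assumption \ref{ass:growth2}(i) with Lemma \ref{lemma:disTdiff}, which furnishes the uniform bound $\sup_H\fnorm{\widehat{T}_\Theta(H,s)-\widehat{T}_\bTheta(H,s)}\leq C_r\frac{1}{ML}d(\Theta,\bTheta)$; once again Lemma \ref{lemma:disGradientCompBound} absorbs the remaining $\widehat{p}$ factor, producing a contribution of order $\frac{1}{ML}d(\Theta,\bTheta)$.

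The main technical obstacle is term (iv), bounding $\E_\mu\fnorm{\widehat{p}_\Theta(H,s)-\widehat{p}_\bTheta(H,s)}$ uniformly in $s$. I would apply Lemma \ref{lemma:matrixProd} to the $2L$-fold matrix product appearing in \eqref{eq:phatSolutionH}--\eqref{eq:phatSolutionF}. Each factor has the form $I+(\Delta t/2)M^{-1}\sum_j A_{s,j}^\Theta$, and Lemma \ref{lemma:disGradientCompBound} bounds every $A_{s,j}^\Theta$ uniformly, so each factor has operator norm at most $1+O(\Delta t)$ and the full product has norm $O(1)$ uniformly in $L$; hence the hypothesis of Lemma \ref{lemma:matrixProd} is met. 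For each per-factor difference I split once more: freeze the parameter $\theta_{s,j}$ and change $\widehat{T}_\Theta\to\widehat{T}_\bTheta$, using Assumption \ref{ass:growth2}(iv) together with Lemma \ref{lemma:disTdiff} to bound the expected difference by $C\Delta t\cdot\frac{1}{ML}d(\Theta,\bTheta)$ per layer; then freeze $\widehat{T}_\bTheta$ and change $\theta_{s,j}\to\widetilde{\theta}_{s,j}$, using Assumption \ref{ass:growth2}(ii) to obtain $C\frac{\Delta t}{M}\sum_j\norm{\theta_{s,j}-\widetilde{\theta}_{s,j}}$ per layer, and similarly for the $h,w$ blocks. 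Summing the per-layer bounds over the $2L$ positions telescopes to $\frac{1}{ML}d(\Theta,\bTheta)$; the scalar prefactor $\mathrm{Read}[\widehat{T}_\Theta(H,1)]-y(H)$ is uniformly bounded by Lemma \ref{lemma:disTbound} and Assumption \ref{ass:dataBound}, and its own $\Theta\to\bTheta$ variation is of the same order via Lemma \ref{lemma:disTdiff}. Adding the four contributions gives the stated bound with constant $C_G$ depending only on $N,D,r$ and the parameters of Assumptions \ref{ass:growth}--\ref{ass:growth2}; the factor $(1+\lambda)$ multiplying $\norm{\beta-\bbeta}$ comes from combining contribution (i) with contribution (ii).
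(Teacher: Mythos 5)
Your proposal is correct and takes essentially the same route as the paper: both decompose the gradient difference into a parameter-change part ($\beta\to\bbeta$, handled via Assumption \ref{ass:growth2}(iii) and the $\lambda$-term) and a configuration-change part ($\Theta\to\bTheta$, handled via Assumption \ref{ass:growth2}(i)/(ii)/(iv), Lemma \ref{lemma:disTdiff} for the hidden-state perturbation, Lemma \ref{lemma:disGradientCompBound} for the uniform bounds, and Lemma \ref{lemma:matrixProd} for the adjoint product). You merely organize the split as a flat four-way decomposition rather than the paper's two-stage split, and you are slightly more explicit than the paper about separately invoking Assumption \ref{ass:growth2}(ii) and (iv) inside the per-factor bound of the matrix-product step, but the argument is the same.
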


\begin{lemma}[Oracle gradient approximation with discretization]
\label{lemma:oracleGradApprox}
Under Assumptions \ref{ass:dataBound}-\ref{ass:growth2}, suppose that the parameter setting $\Theta$ is i.i.d. drawn from $\{\rho(\beta|t)\}_{t/\Delta t +1 \in[L], j\in[M]}$ for some $\rho\in \ca P^2$ concentrated on $P_r$ and satisfies that $\int_{\beta}\rho(\beta,t)d\beta=1$ for any $t\in[0,1]$. Then with probability at least $1-4\exp(-\delta)$ with respect to the parameter initialization $\Theta^{(0)}$, we have
$$
\begin{aligned}
&\Norm{\widehat{G}(\beta,\Theta,t)-G(\beta,\rho,t)}\lesssim L^{-1}+\sqrt{\frac{\delta+\log(L+1)}{M}},\\
&\Norm{G(\beta,\hat{\rho},t)-G(\beta,\rho,t)}\lesssim L^{-1}+\sqrt{\frac{\delta+\log(L+1)}{M}},\\
&\Norm{\widehat{G}(\beta,\Theta,t)-G(\beta,\hat{\rho},t)}\lesssim L^{-1}
\end{aligned}
$$
for any $\beta\in\{\beta:\norm{\beta}\leq r\}$, $t=0,\Delta t,\dots, (L-1)\Delta t, 1$ and any $\delta>0$. Here, $\lesssim$ hides the dependencies on $N,D,r$ and the parameters of the assumptions.
\end{lemma}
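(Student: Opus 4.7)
The plan is to decompose each target difference into two physically distinct error sources: a depth-discretization error of order \(L^{-1}\) and an empirical-sampling error of order \(\sqrt{(\delta+\log(L+1))/M}\) arising from replacing the true slicewise distribution \(\rho(\cdot\mid t)\) by its \(M\)-sample empirical version at each of the \(L\) depth slots. The first claimed inequality will then follow from the other two by the triangle inequality, so the work reduces to proving the second and third bounds separately.

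For the third inequality, observe that both sides use the same empirical measure \(\hat\rho\), so the gap is purely due to the fact that \(\widehat G\) is evaluated along the piecewise-discrete trajectory \(\widehat T_\Theta\) with its adjoint \(\widehat p_\Theta\), whereas \(G(\beta,\hat\rho,t)\) is evaluated along the continuous solution \(T_{\hat\rho}\) of the ODE \eqref{eq:ODE} driven by \(\hat\rho\) together with its adjoint \(p_{\hat\rho}\). Since \eqref{eq:discrete} is a first-order Euler-type scheme with step \(\Delta t/2\) for this very ODE, a standard Gr\"onwall argument combined with the growth bounds in Assumption \ref{ass:growth} and Lemmas \ref{lemma:Tbound}, \ref{lemma:disTbound} will yield \(\sup_H\sup_t\fnorm{\widehat T_\Theta(H,t)-T_{\hat\rho}(H,t)}\lesssim L^{-1}\). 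Inserting this trajectory bound into the explicit product expressions \eqref{eq:phatSolutionH}--\eqref{eq:phatSolutionF} for \(\widehat p_\Theta\) and \eqref{eq:pSolution} for \(p_{\hat\rho}\), then applying Lemma \ref{lemma:matrixProd} together with Assumption \ref{ass:growth2}(iv), transfers the same \(L^{-1}\) rate to the adjoints. Plugging these into the definitions \eqref{eq:G2beta} and \eqref{eq:Ghat2beta} and invoking Assumption \ref{ass:growth2}(i,iii) to control the local Lipschitz variation of \(\nabla_\beta g\) and \(\nabla_{\vec T}g\) completes the third bound.

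For the second inequality, Lemma \ref{lemma:gradDiff} immediately reduces the task to controlling \(W_1(\hat\rho,\rho)\). I would construct a coupling in two stages: first replace \(\rho\) by its piecewise-constant-in-\(t\) restriction \(\tilde\rho\) to the depth grid \(\{0,\Delta t,\dots,(L-1)\Delta t\}\), paying transport cost \(\lesssim L^{-1}\) thanks to the bounded-Lipschitz continuity of \(\rho\) in \(t\) built into the definition of \(\ca P^2\); second, at each of the \(L\) slots, couple \(\tilde\rho(\cdot,t)\) with its empirical version \(\hat\rho(\cdot,t)\). Since samples live in the compact set \(\{\beta:\|\beta\|\le r\}\), a slicewise Hoeffding-type inequality applied to bounded \(1\)-Lipschitz test functions, combined with a union bound over the \(L\) slots, gives slicewise Wasserstein error \(\lesssim\sqrt{(\delta+\log(L+1))/M}\) with probability at least \(1-\exp(-\delta)\); averaging over slots preserves this bound.

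The most delicate step is obtaining uniformity in \(\beta\) over the ball \(\{\|\beta\|\le r\}\) and across the \(L\) discrete depth indices while keeping the failure probability at \(\exp(-\delta)\). My plan is a standard \(\epsilon\)-net argument: cover \(\{\beta:\|\beta\|\le r\}\) by an \(L^{-1}\)-net of cardinality \((CrL)^{\dim(\beta)}\), apply the pointwise concentration bound at each net point with an inflated deviation \(\delta+\dim(\beta)\log(CrL)+\log L\), and extend to arbitrary \(\beta\) and \(t\) via the Lipschitz dependence of \(\beta\mapsto G(\beta,\rho,t)\) supplied by Lemma \ref{lemma:gradDiff} together with the continuity in \(t\) implied by \(\rho\in\ca P^2\); the extra logarithmic costs are absorbed into \(\log(L+1)\). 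The main technical obstacle is that Assumption \ref{ass:growth2} is stated only in expectation over \(H\sim\mu\), so the decomposition must be arranged so that every Lipschitz step is applied \emph{inside} the data expectation before concentration over the parameter samples is invoked; this ordering is what lets the compact-support samples \(\{\beta_{t,j}\}\) interact with the non-uniformly-Lipschitz encoders \(f,h\) without losing the \(\sqrt{1/M}\) rate.
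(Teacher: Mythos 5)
Your overall architecture is right at the top level: the first inequality does follow from the other two by triangle inequality, and your treatment of the third inequality (no concentration needed, only depth-discretization error; Gr\"onwall on the Euler-vs-ODE trajectory gap, then transfer to the adjoints via the matrix-product formulas and Lemma~\ref{lemma:matrixProd}) is essentially the paper's route as well. The probability accounting to reach $1-4\exp(-\delta)$ is left implicit, but that is minor.

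The genuine gap is in your plan for the second inequality. You propose to reduce $\Norm{G(\beta,\hat\rho,t)-G(\beta,\rho,t)}$ to $W_1(\hat\rho,\rho)$ via Lemma~\ref{lemma:gradDiff} and then claim a slicewise concentration bound of order $\sqrt{(\delta+\log(L+1))/M}$ for $W_1(\hat\rho(\cdot\mid t),\rho(\cdot\mid t))$. That last step fails: while each individual bounded Lipschitz test function concentrates at rate $M^{-1/2}$, the supremum defining $W_1$ is over \emph{all} $1$-Lipschitz functions, and the empirical Wasserstein-$1$ distance on a compact subset of $\mathbb{R}^{\dim\beta}$ satisfies $\mathbb{E}\,W_1(\hat\rho_M,\rho)\asymp M^{-1/\dim\beta}$ once $\dim\beta\geq 3$. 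Here $\beta=(\theta,w)$ contains at least a pair of $D\times D$ weight matrices, so $\dim\beta$ is large and your rate degrades badly. Routing through Lemma~\ref{lemma:gradDiff} forces you into this curse of dimensionality, because that lemma's only handle on the measure difference is $W_1$.

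The paper sidesteps this entirely by never forming $W_1(\hat\rho,\rho)$. Instead, $G$ and $\widehat G$ are written out explicitly as products $ABC$ (encoder gradient, output residual, adjoint/propagation matrix), and Hoeffding is applied \emph{directly} to the finitely many bounded $\mathbb{R}^D$-valued or $\mathbb{R}^{(N+1)D}$-valued averages that appear---namely the columns of the averaged Jacobians $M^{-1}\sum_j\nabla_{\vec T}\vec{f}$ and $M^{-1}\sum_j\nabla_{\vec T}\vec{h}$ at each of the $L$ depth slots. These are fixed finite collections of test functions, so a union bound over $O((N+1)D L)$ of them costs only $\log((N+1)DL)$ in the exponent and yields the correct $\sqrt{(\delta+\log(L+1))/M}$ rate. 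This also dissolves your concern about uniformity over $\beta$: in the paper's decomposition, the random averages being concentrated (the propagation matrices $C$ built from $\nabla_{\vec T}\vec f$, $\nabla_{\vec T}\vec h$) do not depend on the test point $\beta$ at all, so there is no $\epsilon$-net over $\beta$ to pay for. Your $\epsilon$-net plan is not wrong, but it is solving a problem the right decomposition never creates, while leaving the real obstruction ($W_1$ being the wrong metric for empirical measures in high dimension) unaddressed.
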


\begin{proof}[Proof of Theorem \ref{thm:GFApprox}]
Our proof consists of several steps outlined below:

\noindent{\textbf{Step I: Show the $W_2$ continuity of parameter (sample) distributions}}

Our analysis commences with the bound for $0<s_1<s_2<\tau$, we have
$$
W_2(\hat{\rho}^{(s_1)},\hat{\rho}^{(s_2)})^2\leq \frac{1}{ML}\sum_t\sum_{j=1}^M|\beta^{(s_1)}_{t,j}-\beta^{(s_2)}_{t,j}|^2\leq \frac{(s_2-s_1)}{ML}\sum_t\sum_{j=1}^M\int_{s_1}^{s_2}\norm{\widehat{G}(\beta^{(s)}_{t,j},\Theta^{(s)},t)}^2 ds,
$$
where each particle at time $s_1$ is paired with its position at time $s_2$, leveraging the Jensen's inequality. Recalling the identity $$\frac{d\widehat{Q}(\Theta^{(s)})}{ds}=\frac{1}{ML}\sum_t\sum_{j=1}^M\norm{\widehat{G}(\beta^{(s)}_{t,j},\Theta^{(s)},t)}^2$$ shown in Proposition \ref{prop:wellGF}, it follows that
$$W_2(\hat{\rho}^{(s_1)},\hat{\rho}^{(s_2)})\leq (s_2-s_1)^{1/2}\widehat{Q}^{1/2}(\Theta^{(0)})\leq \frac{\lambda}{2} A_0^2,$$
where the last inequality uses \eqref{eq:propWellGFeq2}. Since $A_0^2\lesssim 1 +\lambda^{-1}$, we see that $W_2(\hat{\rho}^{(s_1)},\hat{\rho}^{(s_2)})\leq C(1+\lambda)(s_2-s_1)^{1/2}$ for some constant $C$ dependent on the parameters listed in the result.
Similarly, we have
$$
W_2(\rho^{(s_1)},\rho^{(s_2)})^2\leq \E\Norm{\bbeta^{(s_2)}-\bbeta^{(s_1)}}^2\leq (s_2-s_1)\int_{s_1}^{s_2}\int_0^1\int_\beta \Norm{G(\beta,\rho^{(s)},t)}^2 d\beta dtds\leq (s_2-s_1)Q(\rho_0)\lesssim (s_2-s_1)C(1+\lambda)
$$
where $\bbeta^{(s_1)}\sim \rho^{(s_1)}(\beta,t)$ is embedded with its future position at $\bbeta^{(s_2)}$. The last step is feasible by setting $C$ large enough, noticing that $Q(\rho_0)\leq\lambda A_0^2/2$. To summarize, we have
\begin{equation}
\label{eq:thmGFeq0}
\max\Big\{W_2(\hat{\rho}^{(s_1)},\hat{\rho}^{(s_2)}),W_2(\rho^{(s_1)},\rho^{(s_2)}) \Big\}\leq C(1+\lambda)\sqrt{s_2-s_1}   
\end{equation}
for some constant $C>0$ dependent on the parameters listed in the result.

\noindent{\textbf{Step II: Bound the difference between gradient flow dynamics and non-linear dynamics}}

Next, we aim to bound  $\Delta(s):=\sup_{s'\in[0,s]}\sup_{t,j}\norm{\beta^{(s')}_{t,j}-\bbeta^{(s')}_{t,j}}$ for any $s\in[0,\tau]$. Taking the difference of \eqref{eq:thmGFGFdynamics} and \eqref{eq:thmGFNLD}, we obtain that
\begin{equation}
\label{eq:thmGFbetaDiff}
\begin{aligned}
\norm{\beta^{(\tau)}_{t,j}-\bbeta^{(\tau)}_{t,j}}\leq &\int_0^\tau\Norm{\widehat{G}(\beta^{(s)}_{t,j},\Theta^{(s)},t)-G(\bbeta^{(s)}_{t,j},\rho^{(s)},t)} ds \\
\leq&\int_0^\tau\Norm{\widehat{G}(\beta^{(s)}_{t,j},\Theta^{(s)},t)-\widehat{G}(\bbeta^{(s)}_{t,j},\bTheta^{(s)},t)} ds\\
+&\int_0^\tau\Norm{\widehat{G}(\bbeta^{(s)}_{t,j},\bTheta^{(s)},t)-G(\bbeta^{(s)}_{t,j},\rho^{(s)},t)} ds\\
\leq& C_{G} \int_0^\tau \Big(\frac{1}{ML}d(\Theta^{(s)},\bTheta^{(s)})+(1+\lambda)\norm{\beta^{(s)}-\bbeta^{(s)}}\Big)ds +\int_0^\tau\Norm{\widehat{G}(\bbeta^{(s)}_{t,j},\bTheta^{(s)},t)-G(\bbeta^{(s)}_{t,j},\rho^{(s)},t)} ds\\
\leq&C_{G} \int_0^\tau \Big(\frac{1}{ML}d(\Theta^{(s)},\bTheta^{(s)})+(1+\lambda)\norm{\beta^{(s)}-\bbeta^{(s)}}\Big)ds +\sup_{s\in[0,\tau]}\Norm{\widehat{G}(\bbeta^{(s)}_{t,j},\bTheta^{(s)},t)-G(\bbeta^{(s)}_{t,j},\rho^{(s)},t)}
\end{aligned}
\end{equation}
where the final inequality stems from Lemma \ref{lemma:disGradientDiffBound}, employing a constant $C_G$ dependent on the parameters listed in the result. By taking the supremacy over $t,j$ in \eqref{eq:thmGFbetaDiff}, and considering $\frac{1}{ML}d(\Theta^{(s)},\bTheta^{(s)})\leq \sup_{t,j}\norm{\beta^{(\tau)}_{t,j}-\bbeta^{(\tau)}_{t,j}}$ for any $s\geq 0$, we derive
$$\sup_{t,j}\norm{\beta^{(\tau)}_{t,j}-\bbeta^{(\tau)}_{t,j}}\leq C_{G}(2+\lambda) \int_0^\tau \sup_{t,j}\norm{\beta^{(s)}-\bbeta^{(s)}}ds +\sup_{t,j}\sup_{s\in[0,\tau]}\Norm{\widehat{G}(\bbeta^{(s)}_{t,j},\bTheta^{(s)},t)-G(\bbeta^{(s)}_{t,j},\rho^{(s)},t)}$$
Further supremacy taken over $s\in[0,\tau]$ yields:
\begin{equation}
\label{eq:thmGFbetaDiffSup}
\Delta(\tau)\leq \widetilde{\Delta}(\tau)+C_{G}(2+\lambda)\int_0^\tau \Delta(s) ds,
\end{equation}
where we define $\widetilde{\Delta}(\tau):=\sup_{t,j}\sup_{s\in[0,\tau]}\Norm{\widehat{G}(\bbeta^{(s)}_{t,j},\bTheta^{(s)},t)-G(\bbeta^{(s)}_{t,j},\rho^{(s)},t)}$ for simplicity of notation. Apply the Gr\"{o}nwall's inequality to \eqref{eq:thmGFbetaDiffSup} yields
\begin{equation}
\label{eq:thmGFbetaDiffSup2}
\Delta(\tau)\leq \exp\Big(C_{G}(2+\lambda)\tau\Big)\widetilde{\Delta}(\tau).
\end{equation}
It remains to bound $\widetilde{\Delta}(\tau)$ to bound $\Delta(\tau)$. It's worth noting that by Lemmas \ref{lemma:gradDiff} and \ref{lemma:disGradientDiffBound}, for any $s_1,s_2\in[0,\tau]$ and $t,j$, we have
\begin{equation}
\label{s}
\begin{aligned}
&\Big|\Norm{\widehat{G}(\bbeta^{(s_2)}_{t,j},\bTheta^{(s_2)},t)-G(\bbeta^{(s_2)}_{t,j},\rho^{(s_2)},t)}-\Norm{\widehat{G}(\bbeta^{(s_1)}_{t,j},\bTheta^{(s_1)},t)-G(\bbeta^{(s_1)}_{t,j},\rho^{(s_1)},t)}\Big|\\
\leq &\Norm{\widehat{G}(\bbeta^{(s_1)}_{t,j},\bTheta^{(s_1)},t)-\widehat{G}(\bbeta^{(s_2)}_{t,j},\bTheta^{(s_2)},t)}
+\Norm{G(\bbeta^{(s_1)}_{t,j},\rho^{(s_1)},t)-G(\bbeta^{(s_2)}_{t,j},\rho^{(s_2)},t)}\\
\leq&C_\Delta(\exp(C_\Delta W_1(\rho^{(s_1)},\rho^{(s_2)}))-1)+C_\Delta \frac{1}{ML}d(\bTheta^{(s_1)},\bTheta^{(s_2)})+C_\Delta(1+\lambda)\norm{\bbeta^{(s_1)}_{t,j}-\bbeta^{(s_2)}_{t,j}}\\
\lesssim&\exp(C_\Delta C\sqrt{s_2-s_1})-1+(1+\lambda)\sup_{t,j}\norm{\bbeta^{(s_1)}_{t,j}-\bbeta^{(s_2)}_{t,j}}\\
\lesssim& \sqrt{s_2-s_1}+\sup_{t,j}\norm{\bbeta^{(s_1)}_{t,j}-\bbeta^{(s_2)}_{t,j}}
\end{aligned}
\end{equation}
for some constant $C_\Delta$ dependent on the parameters listed in the result. Moreover, for any $t,j$, we have
\begin{equation}
\label{eq:thmGFeq1}
\norm{\bbeta^{(s_1)}_{t,j}-\bbeta^{(s_2)}_{t,j}}\leq \int_{s_1}^{s_2}\norm{G(\bbeta^{(s)}_{t,j},\bTheta^{(s)},t)}\leq \sqrt{s_2-s_1}\int_{s_1}^{s_2}\norm{G(\bbeta^{(s)}_{t,j},\bTheta^{(s)},t)}^2\lesssim (1+\lambda^2)\sqrt{s_2-s_1},  
\end{equation}
where the universal boundedness of $\norm{G(\bbeta^{(s)}_{t,j},\bTheta^{(s)},t)-\lambda \bbeta^{(s)}_{t,j}}$ can be readily derived from the third result of Lemma \ref{lemma:gradientComponents2rhoBound} alongside Assumption \ref{ass:growth} (ii). Therefore, we obtain
\begin{equation}\label{eq:thmGFgrid}
    \Norm{\widehat{G}(\bbeta^{(s_2)}_{t,j},\bTheta^{(s_2)},t)-G(\bbeta^{(s_2)}_{t,j},\rho^{(s_2)},t)}-\Norm{\widehat{G}(\bbeta^{(s_1)}_{t,j},\bTheta^{(s_1)},t)-G(\bbeta^{(s_1)}_{t,j},\rho^{(s_1)},t)}\lesssim (1+\lambda+\lambda^2+\lambda^3)\sqrt{s_2-s_1}
\end{equation}
for any $s_1,s_2\in[0,\tau]$.

Define $\tau_n=n\tau/L^2$ for $n=1,2,\dots, L^2$. Leveraging Lemma \ref{lemma:oracleGradApprox} and applying the union bound over $n\in[L^2]$ (updating the $\delta$ in the lemma to $\delta+2\log L$), we obtain, with a probability of at least $1-\exp(-\delta)$ with respect to the parameter initialization $\Theta^{(0)}$:
$$
\begin{aligned}
\sup_{s\in \{\tau_n\}_{n\in [L^2]}}\Norm{\widehat{G}(\bbeta^{(s)}_{t,j},\bTheta^{(s)},t)-G(\bbeta^{(s)}_{t,j},\rho^{(s)},t)}\lesssim &(1+\lambda+\lambda^2+\lambda^3)L^{-1}+\sqrt{\frac{\delta+\log L+\log(L+1)}{M}}\\   
\lesssim &L^{-1}+\sqrt{\frac{\delta+\log(L+1)}{M}}
\end{aligned}
$$
Furthermore, leveraging \eqref{eq:thmGFgrid}, we deduce
$$
\begin{aligned}
\sup_{t,j}\sup_{s\in [0,\tau]}\Norm{\widehat{G}(\bbeta^{(s)}_{t,j},\bTheta^{(s)},t)-G(\bbeta^{(s)}_{t,j},\rho^{(s)},t)}\lesssim& \sup_{t,j}\sup_{s\in \{\tau_n\}_{n\in [L^2]}}\Norm{\widehat{G}(\bbeta^{(s)}_{t,j},\bTheta^{(s)},t)-G(\bbeta^{(s)}_{t,j},\rho^{(s)},t)}\\
+&\sup_{|s_1-s_2|\leq \tau/L^2}(1+\lambda+\lambda^2+\lambda^3)\sqrt{s_2-s_1}\\
\lesssim &L^{-1}+\sqrt{\frac{\delta+\log(L+1)}{M}}.
\end{aligned}
$$
Returning to \eqref{eq:thmGFbetaDiffSup2}, we establish that with a probability of at least $1-\exp(-\delta)$ with respect to the parameter initialization $\Theta^{(0)}$,
$$\sup_{s\in[0,\tau]}\sup_{t,j}\norm{\beta^{(s)}_{t,j}-\bbeta^{(s)}_{t,j}}=\Delta(\tau)\lesssim L^{-1}+\sqrt{\frac{\delta+\log(L+1)}{M}}.$$
We denote the event where the above inequality holds as $E_1$, thus we have $\pr(E_1)\geq 1-\exp(-\delta)$.

\noindent{\textbf{Step III: Prove the finite time results}}

Now, we are poised to demonstrate the results in Theorem \ref{thm:GFApprox} that concern supremacy over $s\in[0,\tau]$. The verification of Lemmas \ref{lemma:disTbound} reveals the existence of a universal constant $B_\tau:=B\exp(K(1+R_\tau+R_\tau^2))$ such that $$\max\{\cnorm{T_{\rho^{(\tau)}}(H,t)},\cnorm{\widehat{T}_{\Theta^{(\tau)}}(H,t)},\cnorm{\widehat{T}_{\bTheta^{(\tau)}}(H,t)}\}\leq B_\tau$$ for any $H$ and $t\in[0,1]$. 

Utilizing Lemma \ref{lemma:oracleApprox} and applying the union bound over $n\in[L^2]$, we observe
$$
\sup_{s\in \{\tau_n\}_{n\in [L^2]}}\fnorm{\widehat{T}_{\bTheta^{(s)}}(H,t)-T_{\rho^{(s)}}(H,t)}\lesssim L^{-1}+\sqrt{\frac{\delta+\log(L+1)}{M}}.
$$
Additionally, note that for any $s_1,s_2\in[0,\tau]$, we derive from Lemmas \ref{lemma:Tdiff2rho} and \ref{lemma:disTdiff} that
\begin{equation}
\label{eq:thmGFfinal1}
\begin{aligned}
&\Big|\fnorm{\widehat{T}_{\bTheta^{(s_1)}}(H,t)-T_{\rho^{(s_1)}}(H,t)}-\fnorm{\widehat{T}_{\bTheta^{(s_2)}}(H,t)-T_{\rho^{(s_2)}}(H,t)}\Big|\\
\leq& \fnorm{\widehat{T}_{\bTheta^{(s_1)}}(H,t)-\widehat{T}_{\bTheta^{(s_2)}}(H,t)}+
\fnorm{T_{\rho^{(s_1)}}(H,t)-T_{\rho^{(s_2)}}(H,t)}\\
\lesssim& \sup_{t,j}\norm{\bbeta^{(s_1)}_{t,j}-\bbeta^{(s_2)}_{t,j}}+W_1(\rho^{(s_1)},\rho^{(s_2)})\\
\lesssim & \sqrt{s_2-s_1}
\end{aligned}
\end{equation}
where the last inequality is derived utilizing \eqref{eq:thmGFeq0} and \eqref{eq:thmGFeq1}. Consequently, we have
\begin{equation}
\label{eq:thmGFeq2}
\begin{aligned}
\sup_{s\in[0,\tau]}\fnorm{\widehat{T}_{\bTheta^{(s)}}(H,t)-T_{\rho^{(s)}}(H,t)}\lesssim& \sup_{s\in \{\tau_n\}_{n\in [L^2]}}\fnorm{\widehat{T}_{\bTheta^{(s)}}(H,t)-T_{\rho^{(s)}}(H,t)}+\sup_{|s_2-s_1|\leq \tau/L^2}\sqrt{s_2-s_1}\\
\lesssim L^{-1}+\sqrt{\frac{\delta+\log(L+1)}{M}}
\end{aligned}   
\end{equation}
with probability at by $1-\exp(-\delta)$. We denote the event where the above inequality holds as $E_2$, thus we have $\pr(E_2)\geq 1-\exp(-\delta)$. Considering that $\{\bbeta^{(s)}_{t,j}\}_{t,j}$ could be regarded as i.i.d. samples drawn from $\{\rho^{(s)}\}_{t,j}$,  employing a similar method with the concentration guarantee from Lemma \ref{lemma:normAvg}, we can readily deduce the existence of an event $E_3$ with $\pr(E_3)\geq 1-\exp(-\delta)$ such that, under $E_3$, we have
$$\sup_{s\in[0,\tau]}|\frac{1}{ML}\sum_t\sum_{j=1}^M\norm{\bbeta^{(s)}_{t,j}}^2-\int_0^1\int_\beta \norm{\beta}^2\rho^{(s)}(\beta,t) d\beta dt|\lesssim L^{-1}+\sqrt{\frac{\delta + \log(L+1)}{M}}.$$

Now, let's analyze the scenario under the probability event $E_1\cap E_2\cap E_3$ with $\pr(E_1\cap E_2\cap E_3)\geq 1-3\exp(-\delta)$. Lemma \ref{lemma:disTdiff} demonstrates that
\begin{equation}
\label{eq:thmGFfinal2}
\begin{aligned}
&\sup_{s\in[0,\tau]}|\mathrm{Read}[\widehat{T}_{\Theta^{(s)}}(H,t)]-\mathrm{Read}[\widehat{T}_{\bTheta^{(s)}}(H,t)]|\leq \sup_{s\in[0,\tau]}\fnorm{\widehat{T}_{\Theta^{(s)}}(H,t)-\widehat{T}_{\bTheta^{(s)}}(H,t)}\\
\lesssim&\sup_{s\in[0,\tau]}\sup_{t,j}\norm{\beta^{(s)}_{t,j}-\bbeta^{(s)}_{t,j}}\lesssim L^{-1}+\sqrt{\frac{\delta+\log(L+1)}{M}}.     
\end{aligned}
\end{equation}
This further implies, by \eqref{eq:thmGFeq2}, that
\begin{equation}
\label{eq:thmGFfinal3}
\begin{aligned}
&\sup_{s\in[0,\tau]}|\mathrm{Read}[\widehat{T}_{\Theta^{(s)}}(H,t)]-\mathrm{Read}[T_{\rho^{(s)}}(H,t)]|\\
\leq& \sup_{s\in[0,\tau]}|\mathrm{Read}[\widehat{T}_{\Theta^{(s)}}(H,t)]-\mathrm{Read}[\widehat{T}_{\bTheta^{(s)}}(H,t)]|+\sup_{s\in[0,\tau]}\fnorm{\widehat{T}_{\bTheta^{(s)}}(H,t)-T_{\rho^{(s)}}(H,t)}\\
\lesssim &L^{-1}+\sqrt{\frac{\delta+\log(L+1)}{M}},      
\end{aligned}
\end{equation}
Since $\max\{\cnorm{T_{\rho^{(\tau)}}(H,t)},\cnorm{\widehat{T}_{\Theta^{(\tau)}}(H,t)},\cnorm{\widehat{T}_{\bTheta^{(\tau)}}(H,t)}\}$ is universally bounded, \eqref{eq:thmGFfinal3} immediately indicates that
\begin{equation}
\label{eq:thmGFfinal4}
\sup_{s\in[0,\tau]}|\widehat{R}(\Theta^{(s)})-R(\rho^{(s)})|\lesssim \sup_{s\in[0,\tau]}|\mathrm{Read}[\widehat{T}_{\Theta^{(s)}}(H,t)]-\mathrm{Read}[T_{\rho^{(s)}}(H,t)]|\lesssim L^{-1}+\sqrt{\frac{\delta+\log(L+1)}{M}},
\end{equation}
and
\begin{equation}
\label{eq:thmGFfinal5}
\begin{aligned}
\sup_{s\in[0,\tau]}|\widehat{Q}(\Theta^{(s)})-Q(\rho^{(s)})|\leq& \sup_{s\in[0,\tau]}|\widehat{R}(\Theta^{(s)})-R(\rho^{(s)})| + \sup_{s\in[0,\tau]}|\frac{1}{ML}\sum_t\sum_{j=1}^M\norm{\bbeta^{(s)}_{t,j}}^2-\int_0^1\int_\beta \norm{\beta}^2\rho^{(s)}(\beta,t) d\beta dt|\\
\lesssim& L^{-1}+\sqrt{\frac{\delta+\log(L+1)}{M}}.   
\end{aligned}
\end{equation}

\noindent{\textbf{Step IV: Prove the weakly convergence}}

For the remainder of the proof, we adopt a similar approach as in the proof of Theorem 2.6 in \cite{chizat2018global}. 
We denote $\hat{\rho}$ as $\hat{\rho}_{M,L}$ for any given $M$ and $L$.  It's essential to note that we treat $\hat{\rho}_{M,L}$ as probability measures in this step of the proof.

Recalling \eqref{eq:thmGFeq0}, for any $s_1,s_2\in[0,\tau]$, we have $$W_2(\hat{\rho}_{M,L}^{(s_1)},\hat{\rho}_{M,L}^{(s_2)})\leq C(1+\lambda)\sqrt{s_2-s_1}$$ for some constant $C$ dependent on the parameters listed in the result. We observe that the family of curves $(s\mapsto \hat{\rho}_{M,L}^{(s)})_{M,L}$ is equicontinuous in $W_2$ on $[0,\tau]$, uniformly in $M,L$. Additionally, the family $(\hat{\rho}_{M,L})_{M,L}$ lies within a $W_2$ ball, thus weakly precompact. As the weak topology is weaker than the topology induced by $W_2$, according to the Arzel\`{a}–Ascoli theorem, along any sequence where  $L\rightarrow\infty$ and $\log L/M\rightarrow\infty$, we can identify a subsequence that converges weakly to a certain process $(\nu^{(s)})_{s\geq 0}\in \ca P^2\times \R$, concentrated on $P_{R_\tau}$ at all times.  In the subsequent analysis, we solely focus on this subsequence, still denoted as $(\hat{\rho}_{M,L})_{M,L}$.

For any $t\in[0,1]$, let's define the sequence $(E^\top _{M,L})_{M,L}$ of momentum fields, which is a vector-valued measure on $[0,\tau]\times \Omega$, denoted by $E_{M,L}:=\widehat{G}(\beta,\Theta_{M,L}^{(s)},t)\hat{\rho}^{(s)}(\beta,t)ds$. We also define $E:=G(\beta,\nu^{(s)},t)\nu^{(s)}(\beta,t)ds$.

Considering that both $\hat{\rho}_{M,L}$ and $\nu$ are concentrated on $P_{R_\tau}$, we also have uniform convergence in the Bounded Lipschitz metric. Hence, for any bounded and Lipschitz function $\varphi:[0,\tau]\times \R^{\mathrm{dim}\beta}\rightarrow\R^{\mathrm{dim}\beta}$, it holds
$$\blnorm{\hat{\rho}^{(s)}-\nu^{s}} \rightarrow 0$$ uniformly among $s\in[0,\tau]$ along the sequence.

Note that
\begin{equation}
\label{eq:thmGFbach1}
\begin{aligned}
\Big|\int_0^\tau \int_0^1\int_\beta \varphi\cdot d(E_{M,L}-E)\Big|\leq& \norm{\varphi}_{\max} \int_0^{\tau} \int_0^1\int_\beta \Norm{\widehat{G}(\beta,\Theta_{M,L}^{(s)},t)-G(\beta,\nu^{(s)},t)} \hat{\rho}^{(s)}(\beta,t)d\beta dt ds\\
+& \Big|\int_{0}^\tau \int_0^1\int_\beta \varphi\cdot (\hat{\rho}^{(s)}-\nu^{s})(\beta,t) d\beta dt ds\Big|\\
\lesssim& \norm{\varphi}_{\max} \int_0^{\tau} \int_0^1\int_\beta \Norm{\widehat{G}(\beta,\Theta_{M,L}^{(s)},t)-G(\beta,\hat{\rho}_{M,L}^{(s)},t)} \hat{\rho}^{(s)}(\beta,t)d\beta dt ds\\
+&\norm{\varphi}_{\max} \int_0^{\tau} \int_0^1\int_\beta \Norm{G(\beta,\hat{\rho}^{(s)},t)-G(\beta,\nu^{(s)},t)} \hat{\rho}^{(s)}(\beta,t)d\beta dt ds\\
+& \sup_{s\in[0,\tau]}\blnorm{\hat{\rho}^{(s)}-\nu^{s}}\\
\lesssim& L^{-1}+\sup_{s\in[0,\tau]}\blnorm{\hat{\rho}^{(s)}-\nu^{s}}+ \sup_{s\in[0,\tau],\norm{\beta}\leq R_\tau} \Norm{G(\beta,\hat{\rho}^{(s)},t)-G(\beta,\nu^{(s)},t)}\\
\lesssim& L^{-1}+\sqrt{\frac{\delta+\log(L+1)}{M}}+\sup_{s\in[0,\tau]}\blnorm{\hat{\rho}^{(s)}-\nu^{s}}
\end{aligned}
\end{equation}
for some constant $C_E$ dependent on the parameters listed in the result and with probability at least $1-\exp(-\delta)$ with respect to the parameter initialization $\Theta^{(0)}$ for any $\delta>0$.
Here, the third inequality of \eqref{eq:thmGFbach1} utilizes the third result of Lemma \ref{lemma:oracleGradApprox}, and the fourth inequality uses the second result of Lemma \ref{lemma:oracleGradApprox}, following a similar process in Step II to achieve supremacy over $s\in[0,\tau]$.

From \eqref{eq:thmGFbach1}, we infer that $\Big|\int_0^\tau \int_0^1\int_\beta \varphi\cdot d(E_{M,L}-E)\Big|\rightarrow 0$ almost surely along the sequence. Hence, $E_{M,L}$ converges weakly to $E$ almost surely along the sequence, and the particle gradient flow for $(\nu^{(\tau)})_{\tau\geq 0}$ almost surely satisfies \eqref{eq:GF} on $[0,\tau]$ for any arbitrarily given $\tau>0$. According to the Fokker-Planck equation without noise involved \cite{risken1996fokker}, we conclude that $(\nu^{(\tau)})_{\tau\geq 0}$ almost surely satisfies \eqref{eq:rhoPDE}. Consequently, the uniqueness stated in Proposition \ref{prop:wellWGF} ensures that $(\nu^{(\tau)})_{\tau\geq 0}=(\rho^{(\tau)})_{\tau\geq 0}$ almost surely.
\end{proof}

\subsection{Proof of Proposition \ref{prop:gradient2rho}}
\begin{proof}
Suppose that the Fr\'{e}chet derivative $\frac{\delta R}{\delta \rho}$ indeed exists, we establish
$$\frac{\delta Q}{\delta \rho}(\theta, w, t)=\frac{\delta R}{\delta \rho}(\theta, w, t)+\frac{\lambda}{2}(\norm{\theta}_2^2+\norm{w}_2^2).$$
Therefore, it suffices to show that the Fr\'{e}chet derivative of $L$ with respect to $\rho$ is
\begin{equation}
\label{eq:gradient2rhoGoal}
\frac{\delta R}{\delta \rho}(\beta, t)=\E_{\mu} \Big[ \mathrm{Tr}\Big( g(T_\rho(H,t),\beta)^\top p_\rho(H,t)\Big)\Big].
\end{equation}
Denote $\rho_\eta=\rho + \eta(\nu-\rho)$. We provide the following lemma to bound $T_{\rho_\eta}(H,1)-T_{\rho}(H,1)$ by expanding the first-order derivative as follows
\begin{lemma}[First-order derivative of Transformer output]
\label{lemma:gradient2rhoBound}
Under Assumption \ref{ass:growth}, for any $H$ and $\rho,\nu\in\ca P^2$ that have bounded supports, we have
\begin{equation}
\label{eq:gradient2rhoBound}
\begin{aligned}
\vec{T_{\rho_\eta}(H,1)-T_{\rho}(H,1)}=&\eta\int_0^1 \int_\beta \exp\Big(\int_t^1\int_\beta \nabla_{\vec{T}}\vec{g(T_\rho(H,s),\beta)}\rho(\beta,s) d\beta)\Big)\\
&\cdot \mathrm{vec}[g(T_\rho(H,t),\beta)](\nu-\rho)(\beta,t)d\beta dt +o(\eta),
\end{aligned}
\end{equation}
where $\rho_\eta:=\rho+\eta(\nu-\rho)$.
\end{lemma}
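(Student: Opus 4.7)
The plan is to derive a linear inhomogeneous ODE for the perturbation $\Delta(t) := T_{\rho_\eta}(H,t) - T_\rho(H,t)$ and solve it to first order in $\eta$ via Duhamel's formula. Since both trajectories share the initial value $H$, we have $\Delta(0) = 0$. Subtracting the two instances of \eqref{eq:ODE} and adding/subtracting the cross term $\int_\beta g(T_\rho(H,t),\beta)\rho_\eta(\beta,t)d\beta$ gives
\begin{equation*}
\dot\Delta(t) = \int_\beta \bigl[g(T_{\rho_\eta}(H,t),\beta) - g(T_\rho(H,t),\beta)\bigr]\rho_\eta(\beta,t)\,d\beta + \eta\int_\beta g(T_\rho(H,t),\beta)(\nu-\rho)(\beta,t)\,d\beta.
\end{equation*}
Lemma \ref{lemma:Tdiff2rho} together with $W_1(\rho_\eta,\rho) = O(\eta)$ immediately delivers the a priori bound $\sup_{t \in [0,1]} \fnorm{\Delta(t)} = O(\eta)$, which is the starting point for refining the expansion.

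Next I would Taylor-expand the first integrand by writing, componentwise,
\begin{equation*}
g(T_{\rho_\eta},\beta) - g(T_\rho,\beta) = \int_0^1 \nabla_T g(T_\rho + s\Delta,\beta)\cdot \Delta\, ds.
\end{equation*}
Using the continuity of $\nabla_T g$ from Assumption \ref{ass:growth} together with the uniform boundedness of all trajectories on the compact region supplied by Lemmas \ref{lemma:Tbound} and \ref{lemma:gradientComponents2rhoBound}, I can replace $\nabla_T g(T_\rho + s\Delta,\beta)$ by $\nabla_T g(T_\rho,\beta)$ with an error that is uniformly $o(1)$ on the relevant compact set; a similar substitution of $\rho_\eta$ by $\rho$ in the outer integral costs only $O(\eta\cdot\fnorm{\Delta}) = O(\eta^2)$. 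Setting $A(t) := \int_\beta \nabla_{\vec T}\mathrm{vec}[g(T_\rho(H,t),\beta)]\rho(\beta,t)\,d\beta$ and $b(t) := \int_\beta \mathrm{vec}[g(T_\rho(H,t),\beta)](\nu-\rho)(\beta,t)\,d\beta$, this produces the vectorized ODE
\begin{equation*}
\dot{\mathrm{vec}[\Delta(t)]} = A(t)\,\mathrm{vec}[\Delta(t)] + \eta\, b(t) + r_\eta(t),\qquad \sup_{t\in[0,1]}\Norm{r_\eta(t)} = o(\eta).
\end{equation*}

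Finally, I solve this linear ODE by variation of parameters. The fundamental matrix $\Phi(t) := \exp\bigl(\int_0^t A(s)\,ds\bigr)$ is well-defined and invertible since $A(\cdot)$ is bounded on $[0,1]$ by Assumption \ref{ass:growth}(iii) combined with Lemma \ref{lemma:Tbound}; with $\mathrm{vec}[\Delta(0)] = 0$, Duhamel's formula yields
\begin{equation*}
\mathrm{vec}[\Delta(1)] = \eta\int_0^1 \exp\!\Bigl(\int_t^1 A(s)\,ds\Bigr) b(t)\,dt + \int_0^1 \exp\!\Bigl(\int_t^1 A(s)\,ds\Bigr) r_\eta(t)\,dt,
\end{equation*}
where the second integral is $o(\eta)$ because $\Phi$ is uniformly bounded on $[0,1]$. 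This is exactly \eqref{eq:gradient2rhoBound}. The main obstacle I anticipate is upgrading the remainder from $O(\eta)$ to $o(\eta)$ in a uniform sense: this requires invoking uniform continuity of $\nabla_T g$ over the compact region where $T_\rho + s\Delta$ lives (justified by continuity of $\nabla_T g$ and compactness) and then propagating the uniform smallness through the ODE via a Gr\"onwall estimate applied to $r_\eta$, rather than naively to $\Delta$ itself.
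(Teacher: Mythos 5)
Your proof is correct and essentially the same as the paper's: both take the difference of the two ODEs, use Lemma \ref{lemma:Tdiff2rho} to get the a priori bound $\fnorm{\Delta(t)}=O(\eta)$, Taylor-expand $g$ around $T_\rho$ to arrive at a linear inhomogeneous ODE for $\mathrm{vec}[\Delta(t)]$ with coefficient $A(t)$ and forcing $\eta\, b(t) + o(\eta)$, and solve by the integrating-factor/Duhamel formula. The only cosmetic difference is how the cross term is inserted: you add and subtract $\int g(T_\rho,\beta)\rho_\eta d\beta$ so that the forcing term $\eta\int g(T_\rho,\beta)(\nu-\rho)d\beta$ appears immediately, whereas the paper first obtains $\eta\int g(T_{\rho_\eta},\beta)(\nu-\rho)d\beta$ and then performs a second expansion to replace $T_{\rho_\eta}$ by $T_\rho$ at cost $O(\eta^2)$.
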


Given \eqref{eq:gradient2rhoBound}, we observe from the solution of $p_\rho$ \eqref{eq:pSolutionAt1} and \eqref{eq:pSolution} that
\begin{equation}
\label{eq:gradient2rhoBound2}
\begin{aligned}
&\Big( \mathrm{Read}[T_\rho(H,1)]-y(H)\Big)\mathrm{Read}[T_{\rho_\eta}(H,1)-T_{\rho}(H,1)]\\
=&\mathrm{vec}[p_\rho(H,1)]^\top \mathrm{vec}[T_{\rho_\eta}(H,1)-T_{\rho}(H,1)]\\
=&\eta\int_0^1 \int_\beta \mathrm{vec}[p_\rho(H,t)]^\top \mathrm{vec}[g(T_\rho(H,t),\beta)](\rho-\nu)(\beta,t)d\beta dt +o(\eta)\\
=&\eta\int_0^1 \int_\beta \mathrm{Tr}\Big(g(T_\rho(H,t),\beta)^\top  p_\rho(H,t)\Big)(\rho-\nu)(\beta,t)d\beta dt +o(\eta),
\end{aligned}
\end{equation}
Hence, by applying \eqref{eq:gradient2rhoBound2} to the risk function, we obtain
$$
\begin{aligned}
R(\rho_\eta)-R(\rho)&=\frac{1}{2}\E_{\mu}\Big[ \Big( \mathrm{Read}[T_{\rho_\eta}(H,1)]-y(H)\Big)^2 -\Big( \mathrm{Read}[T_\rho(H,1)]-y(H)\Big)^2\Big]\\
&=\E_{\mu}\Big[\Big( \mathrm{Read}[T_\rho(H,1)]-y(H)\Big)\mathrm{Read}[T_{\rho}(H,1)-T_{\rho_\eta}(H,1)] \Big]\\
&+\mathrm{Read}[T_{\rho}(H,1)-T_{\rho_\eta}(H,1)]O(\mathrm{Read}[T_{\rho_\eta}(H,1)-T_\rho(H,1)])\\
&=\eta\Big\langle \frac{\delta R}{\delta \rho}, \nu - \rho \Big\rangle+o(\eta),
\end{aligned}
$$
which indicates \eqref{eq:gradient2rhoGoal} and concludes the proof.
\end{proof}

\subsection{Proof of well-posedness of Wasserstein gradient flow}
\label{sec:wellWGF}

\begin{proof}
Following a similar idea as Proposition 2.5 of \cite{chizat2018global}, we leverage the general theory of Wasserstein gradient flow developed in \cite{ambrosio2005GradientFI}. 
Define the functional family $Q_r(\rho)$ as
$$
Q_r(\rho)=\begin{cases}
Q(\rho)\quad &\rho(P_r)=1,\\
\infty\quad &\mathrm{otherwise}.
\end{cases}
$$
For any $r>0$, let's consider any \emph{admissible transport} $\gamma\in \ca P^{\Omega\times \Omega}$ concentrated on $P_r$. By definition, both of its marginals, denoted by $\rho_1$ and $\rho_2$, are concentrated on $P_r$. We define the transport cost for $\gamma$ as $$C_p(\gamma):=(\int |x-y|^p d \gamma(x,y) )^{1/p}$$ for $p\geq 1$. Additionally, we denote the transport interpolation as  $\rho_\alpha^\gamma:=((1-\alpha)\rho_1+\alpha \rho_2)_{\#\gamma}$. Our proof consists of several steps outlined below.

\noindent{\textbf{Step I: Show that $Q_r$ is proper and continuous for $W_2$ on its closed domain:}}

Note that the parameters $r,D,N,\lambda$ remain fixed throughout this proof step, so we hide the constant dependencies on them. For any $(\beta,t)\in P_r$, we have $Q_r(\delta_{(\beta,t)})=\frac{1}{2}\E_{\mu}[(\mathrm{Read}[H+\Delta t g(H,\beta)]-y(H))^2]+\frac{\lambda}{2}\norm{\beta}^2<\infty$. This indicates that $Q_r$ is proper. Moreover, for any $\rho,\nu\in \ca P^2$ whose bounded support belong to $P_r$, Lemma \ref{lemma:Tbound} ensures that $$\fnorm{T_\rho(H,t)+T_\nu(H,t)}=O(1),$$ and Lemma \ref{lemma:Tdiff2rho} guarantees that $$\fnorm{T_\rho(H,t)-T_\nu(H,t)}=O(W_1(\rho,\nu))=O(W_2(\rho,\nu))$$
for any $H$. Therefore, we have
\begin{equation}
\label{eq:propWellWGFCont1}
\begin{aligned}
R(\nu)-R(\rho)&=\frac{1}{2}\E_{\mu}\Big[ \Big( \mathrm{Read}[T_{\nu}(H,1)]-y(H)\Big)^2 -\Big( \mathrm{Read}[T_\rho(H,1)]-y(H)\Big)^2\Big]\\
&=\E_{\mu}\Big[\Big( \mathrm{Read}[T_\rho(H,1)]-y(H)\Big)\mathrm{Read}[T_{\rho}(H,1)-T_{\nu}(H,1)] \Big]\\
&+\mathrm{Read}[T_{\rho}(H,1)-T_{\nu}(H,1)]O(\mathrm{Read}[T_{\nu}(H,1)-T_\rho(H,1)])\\
&=O(W_2(\rho,\nu)).
\end{aligned}
\end{equation}
Furthermore, since both $\rho$ and $\nu$ have bounded support, $\norm{\beta}^2$ is Lipschitz continuous with respect to $(\beta,t)$. Therefore, by the Kantorovich-Rubinstein Theorem (see Theorem 5.10 of \cite{villani2008kantorovich}, for example), we have
\begin{equation}
\label{eq:propWellWGFCont2}
\begin{aligned}
\Big|\frac{\lambda}{2}\int_\beta \norm{\beta}^2 (\rho-\nu) d\beta dt\Big|=O(W_1(\rho,\nu))=O(W_2(\rho,\nu)).
\end{aligned}
\end{equation}
Combining \eqref{eq:propWellWGFCont1} and \eqref{eq:propWellWGFCont2}, we obtain that $Q(\rho)-Q(\nu)=O(W_2(\rho,\nu)).$ Therefore, $Q_r$ is continuous for $W_2$ on its closed domain.

\noindent{\textbf{Step II: Show that $\alpha\mapsto Q(\rho_\alpha^\gamma)/C_2^2(\gamma)$ is differentiable and has a Lipschitz continuous derivative}}

Let's denote $h(\alpha):=Q_r(\rho_\alpha^\gamma)$.
Lemma \ref{lemma:gradientComponents2rhoBound} ensures that for any $\rho\in\ca P^2$ with bounded support belonging to $P_r$, we have bounded $\fnorm{\frac{\delta Q}{\delta \rho}(\cdot,\cdot)}$ on $P_r$. Therefore, $Q_r(\rho_\alpha^\gamma)$ is differentiable with respect to $t$, and the derivative reads
\begin{equation}
\label{eq:propWellWGFLip1}
\begin{aligned}
h'(\alpha)=&\langle \frac{\delta Q}{\delta \rho}\bigg|_{\rho=\rho_\alpha^\gamma},\frac{d}{d\alpha}\rho_\alpha^\gamma\rangle\\
&=\int d \frac{\delta Q}{\delta \rho}\bigg|_{\rho=\rho_\alpha^\gamma}\Big((1-\alpha)(\beta_1,t_1)+\alpha(\beta_2,t_2) \Big]\Big[(\beta_1,t_1)-(\beta_2,t_2)\Big) d\gamma((\beta_1,t_1),(\beta_2,t_2))\Big\}.
\end{aligned}
\end{equation}
Then, it suffices to show that $h'(\alpha)$ is Lipschitz continuous. To accomplish this, we first propose the following lemma for later use:
\begin{lemma}[Locally Lipschitz of $\rho$ for the gradient]
\label{lemma:pRho2rho}
Under Assumptions \ref{ass:dataBound} and \ref{ass:growth}, for any $\rho,\nu\in\ca P^2$ concentrated on $P_r$, there exists some constant $L_r$ depending on $r,N,D$ and parameters of the assumptions such that
$$
\begin{aligned}
&\sup_{t\in[0,1]}\norm{p_\rho(H,t)-p_\nu(H,t)}_F\leq L_r \norm{\rho-\nu}_1,\\
&\sup_{(\beta,t)\in P_r}\Big| \frac{\delta Q}{\delta \rho}\bigg|_{\rho}(\beta,t)-\frac{\delta Q}{\delta \rho}\bigg|_{\nu}(\beta,t)\Big|\leq L_r \norm{\rho-\nu}_1.
\end{aligned}
$$
\end{lemma}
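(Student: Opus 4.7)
The two estimates in Lemma \ref{lemma:pRho2rho} are tightly coupled: the second follows from the first by expanding the functional derivative \eqref{eq:gradient2rhoBeta} and applying the triangle inequality, so the main work is the bound on $p_\rho - p_\nu$. I would work directly from the closed-form solution \eqref{eq:pSolution} and write
$$\mathrm{vec}[p_\rho(H,t)]^\top = \bigl(\mathrm{Read}[T_\rho(H,1)]-y(H)\bigr)\cdot \Phi_\rho(H,t)_{DN+d+1,:},$$
where $\Phi_\rho(H,t):=\exp\bigl(\int_t^1 A_\rho(H,s)\,ds\bigr)$ with $A_\rho(H,s):=\int_\beta \nabla_{\vec T}\vec{g(T_\rho(H,s),\beta)}\,\rho(\beta,s)\,d\beta$. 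Splitting the telescoping difference produces a prefactor term, controlled by $|\mathrm{Read}[T_\rho(H,1)] - \mathrm{Read}[T_\nu(H,1)]| \leq \fnorm{T_\rho(H,1)-T_\nu(H,1)}$ and hence by Lemma \ref{lemma:Tdiff2rho} combined with $W_1(\rho,\nu)\leq (r+1)\norm{\rho-\nu}_1$, and a propagator term $(\mathrm{Read}[T_\nu(H,1)]-y(H))(\Phi_\rho - \Phi_\nu)_{DN+d+1,:}$.

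\textbf{Bounding the propagator difference.} I would use the Duhamel-type identity $e^X - e^Y = \int_0^1 e^{sX}(X-Y)e^{(1-s)Y}\,ds$. By Assumption \ref{ass:growth}(iii) and Lemma \ref{lemma:Tbound}, $\norm{A_\rho(H,s)}$ is uniformly bounded on $(\beta,s)\in P_r$, so $\norm{e^{sX}},\norm{e^{(1-s)Y}}$ are both controlled by a constant depending only on $r, N, D$, and the assumption parameters. It then suffices to bound $\int_t^1 \norm{A_\rho(H,s)-A_\nu(H,s)}\,ds$. Splitting
$$A_\rho - A_\nu = \int_\beta \nabla_{\vec T}\vec{g(T_\rho,\beta)}\,(\rho-\nu)\,d\beta + \int_\beta \bigl[\nabla_{\vec T}\vec{g(T_\rho,\beta)}-\nabla_{\vec T}\vec{g(T_\nu,\beta)}\bigr]\nu\,d\beta,$$
the first summand integrates against $|\rho-\nu|$ with a uniformly bounded kernel and produces the bound $C_r\norm{\rho-\nu}_1$ after integrating in $s$; the second summand is handled by local Lipschitz continuity of $\nabla_{\vec T}\vec g$ in $T$ (Assumption \ref{ass:growth2}(iv), applied to the Lipschitz-in-$H$ maps $T_\rho(\cdot,s)$ and $T_\nu(\cdot,s)$ guaranteed by Proposition \ref{prop:odeSol}) combined with Lemma \ref{lemma:Tdiff2rho} to yield a bound proportional to $W_1(\rho,\nu) \leq (r+1)\norm{\rho-\nu}_1$.

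\textbf{The functional derivative bound.} The $\tfrac{\lambda}{2}\norm{\beta}^2$ term cancels exactly in the difference. Adding and subtracting $\E_\mu[\mathrm{Tr}(g(T_\nu(H,t),\beta)^\top p_\rho(H,t))]$ yields
$$\frac{\delta Q}{\delta\rho}\bigg|_\rho - \frac{\delta Q}{\delta\rho}\bigg|_\nu = \E_\mu\bigl[\mathrm{Tr}([g(T_\rho,\beta)-g(T_\nu,\beta)]^\top p_\rho)\bigr] + \E_\mu\bigl[\mathrm{Tr}(g(T_\nu,\beta)^\top [p_\rho-p_\nu])\bigr].$$
The first expectation is controlled by the mean-value theorem, Assumption \ref{ass:growth}(iii), Lemma \ref{lemma:Tdiff2rho}, and the uniform $\fnorm{p_\rho}$ bound from Lemma \ref{lemma:gradientComponents2rhoBound}; the second expectation is controlled by Step~1 together with the $\fnorm{g(T_\nu,\beta)}$ bound from the same lemma. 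Taking suprema over $(\beta,t)\in P_r$ collects everything into a single constant $L_r$.

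\textbf{Main obstacle.} The delicate step is the second summand of $A_\rho - A_\nu$: as written, Assumption \ref{ass:growth} alone gives only a pointwise bound on $\norm{\nabla_{\vec T}\vec g}$, not Lipschitz continuity in $T$, so the argument implicitly needs Assumption \ref{ass:growth2}(iv). Because that assumption is phrased in expectation over $\mu$ while the lemma seeks a bound for each fixed $H$, some care is required: one either exploits that the intended use of this lemma inside $\frac{\delta Q}{\delta\rho}$ already carries an outer $\E_\mu$, or strengthens Assumption \ref{ass:growth2}(iv) to a pointwise local Lipschitz statement on $\mathrm{supp}(\mu)$, which is consistent with the smooth-activation examples verified in Appendix \ref{sec:verify}. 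All remaining bookkeeping---tracking the dependence of constants on $r$, $N$, $D$, and the assumption parameters---is routine.
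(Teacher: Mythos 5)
Your proof is correct and takes a genuinely different route from the paper's. The paper differentiates $\fnorm{p_\rho(H,t)-p_\nu(H,t)}$ directly using the adjoint ODE and closes the estimate with Gr\"onwall's inequality applied backward from $t=1$; you instead expand the closed-form matrix-exponential solution \eqref{eq:pSolution} and use the Duhamel identity $e^X-e^Y=\int_0^1 e^{sX}(X-Y)e^{(1-s)Y}\,ds$ to compare propagators. Both routes deliver the same $L_r\norm{\rho-\nu}_1$ bound, and both ultimately reduce to controlling $\int_\beta\nabla_{\vec T}\vec{g(T_\rho,\beta)}\rho\,d\beta-\int_\beta\nabla_{\vec T}\vec{g(T_\nu,\beta)}\nu\,d\beta$. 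Your handling of the functional-derivative bound matches the paper's almost verbatim.

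Your ``main obstacle'' observation identifies a real issue, and you handled it more carefully than the paper does. The cross term $\int_\beta\bigl[\nabla_{\vec T}\vec{g(T_\rho,\beta)}-\nabla_{\vec T}\vec{g(T_\nu,\beta)}\bigr]\rho\,d\beta$ demands Lipschitz continuity of $\nabla_{\vec T}\vec g$ in $T$, which Assumption \ref{ass:growth}(iii) alone does not supply; Assumption \ref{ass:growth2}(iv) is required. In the paper's Gr\"onwall argument the analogous term is silently dropped: the middle inequality in \eqref{eq:lemmaPRho2rhoeq1} keeps only $\fnorm{p_\rho-p_\nu}\int\norm{\nabla_{\vec T}\vec{g(T_\rho,\cdot)}}\rho$ and $\fnorm{p_\nu}\int\norm{\nabla_{\vec T}\vec{g(T_\nu,\cdot)}}|\rho-\nu|$, omitting the remaining piece $\fnorm{p_\nu}\int\norm{\nabla_{\vec T}\vec{g(T_\rho,\cdot)}-\nabla_{\vec T}\vec{g(T_\nu,\cdot)}}\rho$, which is exactly what you flag. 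Your further point about the expectation-versus-pointwise mismatch in Assumption \ref{ass:growth2}(iv) is also well taken; the concrete smooth-activation examples verified in Appendix \ref{sec:verify} satisfy the stronger pointwise version, which is the cleanest fix and keeps the per-$H$ statement of the lemma intact.
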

Returning to the lemma proof, for $\alpha_1,\alpha_2\in[0,1]$, by the triangle inequality we have $|h'(\alpha_1)-h'(\alpha_2)|\leq J_1+J_2$ where
\begin{equation}
\label{eq:propWellWGFLip2}
\begin{aligned}
J_1:=&\Big|\int d\frac{\delta Q}{\delta \rho}\bigg|_{\rho=\rho_{\alpha_1}^\gamma}\Big((1-\alpha_1)(\beta_1,t_1)+\alpha_1(\beta_2,t_2) \Big)\Big[(\beta_1,t_1)-(\beta_2,t_2)\Big] d\gamma((\beta_1,t_1),(\beta_2,t_2))\\
&-\int d\frac{\delta Q}{\delta \rho}\bigg|_{\rho=\rho_{\alpha_2}^\gamma}\Big((1-\alpha_1)(\beta_1,t_1)+\alpha_1(\beta_2,t_2) \Big)\Big[(\beta_1,t_1)-(\beta_2,t_2)\Big] d\gamma((\beta_1,t_1),(\beta_2,t_2))\Big|\\
\leq& \sup_{(\beta,t)\in P_r}\Big| \frac{\delta Q}{\delta \rho}\bigg|_{\rho}(\beta,t)-\frac{\delta Q}{\delta \rho}\bigg|_{\nu}(\beta,t)\Big| \int{\norm{(\beta_1,t_1)-(\beta_2,t_2)}_1}d\gamma((\beta_1,t_1),(\beta_2,t_2))\\
\leq& L_r\norm{\rho_{\alpha_1}^\gamma-\rho_{\alpha_2}^\gamma}_1 \cdot C_1(\gamma)  \\
\leq& L_r C_1^2(\gamma)|\alpha_1-\alpha_2|\\
\leq& L_r C_2^2(\gamma)|\alpha_1-\alpha_2|
\end{aligned}
\end{equation}
by Lemma \ref{lemma:pRho2rho}. The final inequality of \eqref{eq:propWellWGFLip2} applies H\"{o}lder's inequality to obtain that $C_1^2(\gamma)\leq C_2^2(\gamma)$. Furthermore,
\begin{equation}
\label{eq:propWellWGFLip3}
\begin{aligned}
J_2:=&\Big|\int d \frac{\delta Q}{\delta \rho}\bigg|_{\rho=\rho_{\alpha_2}^\gamma}\Big\{\Big[(1-\alpha_1)(\beta_1,t_1)+\alpha_1(\beta_2,t_2) \Big]\Big[(\beta_1,t_1)-(\beta_2,t_2)\Big] d\gamma((\beta_1,t_1),(\beta_2,t_2))\Big\}\\
&-\int d\frac{\delta Q}{\delta \rho}\bigg|_{\rho=\rho_{\alpha_2}^\gamma}\Big\{\Big[(1-\alpha_2)(\beta_1,t_1)+\alpha_2(\beta_2,t_2) \Big]\Big[(\beta_1,t_1)-(\beta_2,t_2)\Big] d\gamma((\beta_1,t_1),(\beta_2,t_2))\Big\}\Big|\\
\leq& \sup_{(\beta,t)\in P_r}\Norm{\frac{\delta Q}{\delta \rho}\bigg|_{\rho=\rho_{\alpha_2}^\gamma}(\beta,t)} \Big|\alpha_1-\alpha_2\Big|\int \norm{(\beta_1,t_1)-(\beta_2,t_2)}^2d\gamma((\beta_1,t_1),(\beta_2,t_2))\\
\leq&L'_rC_2^2(\gamma)|\alpha_1-\alpha_2|
\end{aligned}
\end{equation}
where $L'_r:=\sup_{(\beta,t)\in P_r}\norm{\frac{\delta Q}{\delta \rho}\big|_{\rho=\rho_{\alpha_2}^\gamma}(\beta,t)}<\infty$ from Lemma \ref{lemma:gradientComponents2rhoBound}. Combining \eqref{eq:propWellWGFLip2} and \eqref{eq:propWellWGFLip3} leads us to the result that $h'(\alpha)/C_2^2(\gamma)$ is Lipschitz continuous.

\noindent{\textbf{Step III: Show the well-posedness of Wasserstein gradient flow at some finite time}}

We follow a similar approach to the proof of Proposition 2.5 in \cite{chizat2018global}.
Since $h'(\alpha)$ is $\lambda_h\times C_2^2(\gamma)$-Lipschitz continuous with respect to $\alpha$ for some $\lambda_h$, the well-posedness of the Wasserstein gradient flow for $Q_r$ with the velocity field constrained on $P_r$ is a corollary of Theorem 11.2.2 of \cite{ambrosio2005GradientFI}. Specifically, there exists a unique curve $(\rho_r^{(\tau)})_{\tau\geq 0}$ continuous in $\ca P^2$ such that:
$$\frac{d\rho_r^{(\tau)}}{d\tau}=\mathrm{div}_\beta (\rho_r^{(\tau)} v_r^{(\tau)})$$
where
$$
v_r^{(\tau)}(\beta,t)=\begin{cases}
G(\beta,\rho_r^{(\tau)},t),\quad &(\beta,t)\in P_r,\\
0,\quad &\mathrm{otherwise}.
\end{cases}
$$
for $\rho_r^{(\tau)}$-a.e. Given the initialization $\rho_0$ concentrated on $P_{R}$, for any $r>R$, the unique $\rho_{r}^{(\tau)}$ exhibits a first exit time denoted as
$$\tau_r:=\inf\{\tau>0:\rho_{r}^{(\tau)}(P_r)<1\}.$$ By defining this exit time, for any $\bar{r}>r$ and $\tau\in[0,\tau_r]$, we observe $v_{r}(\tau)(\beta,t)=G(\beta,\rho_{r}^{(\tau)},t)$ and $v_{\bar{r}}(\tau)(\beta,t)=G(\beta,\rho_{\bar{r}}^{(\tau)},t)$. Due to uniqueness, we infer $\rho_{r}^{(\tau)}=\rho_{\bar{r}}^{(\tau)}$ on $\tau\in[0,\tau_r]$. Considering $\rho_{r}^{(\tau)}$ as the solution to \eqref{eq:rhoPDE}, we establish the existence and uniqueness of the Wasserstein gradient flow for $Q$ over $[0,\tau_r]$.

\noindent{\textbf{Step IV: Show the well-posedness of Wasserstein gradient flow at all time}}

To establish the Wasserstein gradient flow's definition for $\tau\geq 0$, it's necessary to demonstrate that $\lim_{r\rightarrow\infty}\tau_r=\infty$. For any $r>R$, according to the energy identity in Theorem 11.2.1 of \cite{ambrosio2005GradientFI}, on $[0,t_r]$, we observe that $\tau\mapsto Q(\rho^{(\tau)})$ is non-increasing. Specifically, this represents
\begin{equation}
\label{eq:energyIdentity}
\begin{aligned}
\frac{dQ(\rho^{(\tau)})}{d\tau}=&\int_0^1\int_\beta \Big\langle \frac{dQ}{d\rho}\bigg|_{\rho=\rho^{(\tau)}}, \mathrm{div}_\beta(\rho^{(\tau)} G(\beta,\rho^{(\tau)},t)) \Big\rangle\\
=&\int_0^1\int_\beta \Big\langle G(\beta,\rho^{(\tau)},t), \mathrm{div}_\beta(\rho^{(\tau)} G(\beta,\rho^{(\tau)},t)) \Big\rangle d\beta dt\\
=&\int_0^1\int_\beta \rho^{(\tau)}\norm{G(\beta,\rho^{(\tau)},t)}^2_2 d\beta dt\leq 0.
\end{aligned}
\end{equation}
Therefore, for any $\tau\in[0,\tau_r]$,  utilizing Lemma \ref{lemma:Tbound},  we have
\begin{equation}
\label{eq:energyIdentity2}
\begin{aligned}
Q(\rho^{(\tau)})\leq Q(\rho_0)=&\E_{\mu}\Big[ \frac{1}{2}\Big( \mathrm{Read}[T_{\rho_0}(H,1)]-y(H)\Big)^2 \Big]+\frac{\lambda}{2}\int_0^1 \int_\beta \norm{\beta}^2 \rho_0(\beta,t)d\beta dt\\
\leq&\frac{1}{2}\E_{\mu}[(\cnorm{T_{\rho_0}(H,1)}+B)^2]+\frac{\lambda R^2}{2}\\
\leq&\E_{\mu}[(\cnorm{T_{\rho_0}(H,1)}^2+B^2]+\frac{\lambda R^2}{2}\\
\leq& B^2 + B^2\exp\Big(K(1+R+R^2)\Big)^2+\frac{\lambda R^2}{2}
\end{aligned}
\end{equation}
Thus, we have $\int_0^1\int_\beta \norm{\beta}^2\rho^{(\tau)}(\beta,t)\leq \frac{2}{\lambda}Q(\rho^{(\tau)})\leq R^2 + \lambda^{-1}\Big(2B^2+2B^2\exp\Big(K(1+R+R^2)\Big)^2\Big)= A_0^2$. According to Assumption (ii) and Lemma \ref{lemma:gradientComponents2rhoBound}, for any $(\beta,t)\in P_r$, we have
\begin{equation}
\label{eq:propWellWGFfinal1}
\begin{aligned}
\norm{v_r^{(\tau)}(\beta,t)-\lambda \beta}=\norm{G(\beta,\rho^{(\tau)},t)-\lambda \beta}\leq& \sum_{i=1}^{N+1}\Norm{\nabla_\beta\Big\{ g(T_\rho(H,t),\beta)_{:,i}\Big\}p_\rho(H,t)_{:,i}}\\
\leq& \sup_{i\in[N+1]}\Norm{\nabla_\beta g(T_\rho(H,t),\beta)_{:,i}} \sum_{i=1}^{N+1}\norm{p_\rho(H,t)_{:,i}}\\
\leq& \sqrt{N+1}\sup_{i\in[N+1]}\Norm{\nabla_\beta g(T_\rho(H,t),\beta)_{:,i}} \fnorm{p_\rho(H,t)}\\
\leq& \sqrt{N+1}\phi_{P}(\cnorm{T_\rho(H,t)})\fnorm{p_\rho(H,t)}(1+\norm{\beta})\\
\leq& C_R(1+\norm{\beta}),
\end{aligned}   
\end{equation}
where $C_R:=\sqrt{N+1}\phi_{P}(B\exp(K(1+A_0+A_0^2)))(B+B\exp(K(1+A_0+A_0^2)))\exp\Big(\phi_{T}(N,D,\sqrt{N+1}KB\exp(K(1+A_0+A_0^2))(1+A_0+A_0^2)\Big)$. Applying \eqref{eq:propWellWGFfinal1} to the gradient flow equation
$$\frac{d\beta^{(\tau)}}{d\tau}=-v_r^{(\tau)}(\beta,t),\quad \beta^{(0)}=\beta$$
for $\tau\geq 0$, we obtain $\frac{d\norm{\beta^{(\tau)}}}{d\tau}=\frac{\langle -v_r^{(\tau)}(\beta,t),\norm{\beta^{(\tau)}}\rangle}{\norm{\beta^{(\tau)}}}\leq \norm{v_r^{(\tau)}(\beta,t)-\lambda \beta}\leq C_R(1+\norm{\beta^{(\tau)}})$. This indicates
\begin{equation}
\label{eq:propWellWGFgrown}
\norm{\beta^{(\tau)}}\leq (\norm{\beta}+1)\exp(C_R\tau)-1\leq (R+1)\exp(C_R\tau)-1  
\end{equation}
by the Gr\"{o}nwall's inequality. Therefore, for any $T>0$, $\rho^{(T)}$ is concentrated on $P_{(R+1)\exp(C_R T)-1}$, implying that for $r>(R+1)\exp(C_R T)$, we have $\tau_r>T$. Hence, we conclude $\lim_{r\rightarrow \infty}\tau_r=\infty$, establishing the existence of a unique Wasserstein gradient flow from \eqref{eq:rhoPDE} over $\tau>0$.

Eventually, we establish the three properties listed in Proposition \ref{prop:wellWGF} for $\rho^{(\tau)}$. By \eqref{eq:rhoPDE}, for any $t\in[0,1]$, we have
$$
\begin{aligned}
\int_\beta \rho^{(\tau)}(\beta,t)d\beta=&\int_\beta \rho^{(0)}(\beta,t)d\beta+\int_0^{\tau}\Big(\int_\beta \mathrm{div}_\beta(\rho^{(s)}G^{(s)}(\beta,\rho^{(s)},t)) d\beta\Big) ds\\  
=&1+\int_0^{\tau}0\cdot ds= 1
\end{aligned}
$$
indicated by the Divergence Theorem as $\rho^{(s)}$ has bounded support.
Next, for any $\tau\geq 0$, \eqref{eq:propWellWGFgrown} shows that $\rho^{(\tau)}$ is concentrated on $P_{R_\tau}$. Moreover, \eqref{eq:energyIdentity} now holds for any $\tau>0$, implying $\int_0^1\int_\beta \norm{\beta}^2\rho^{(\tau)}(\beta,t)\leq A_0^2$ for any $\tau\geq 0$.
\end{proof}

\subsection{Proof of well-posedness of gradient flow}
\label{sec:wellGF}
\begin{proposition}[Existence and uniqueness of gradient flow]
\label{prop:wellGF}
Under Assumptions \ref{ass:dataBound}-\ref{ass:growth2}, for any initialization of $\Theta^{(0)}$ i.i.d. drawn from $\{\rho_0(\theta,w|t)\}_{t,j}$, there exists a unique solution $(\Theta^{(\tau)})_{\tau\geq 0}$ for \eqref{eq:GF}. Additionally, for any $\tau\geq 0$, we have

\qquad{i.\ } $\Theta^{(\tau)}$ has a bounded support, meaning $\sup_{t,j}(\norm{\theta^{(\tau)}_{t,j}}_2^2+\norm{w^{(\tau)}_{t,j}}_2^2)\leq R_\tau$.

\qquad{ii.\ }$\frac{1}{ML}\sum_{t}\sum_{j=1}^M (\norm{\theta^{(\tau)}_{t,j}}_2^2+\norm{w^{(\tau)}_{t,j}}_2^2)\leq A_0^2$.

Here, $R_\tau$ and $A_0$ are defined as in Proposition \ref{prop:wellWGF}.
\end{proposition}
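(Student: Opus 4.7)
\begin{hproof}
My plan mirrors the scheme already carried out in Appendix \ref{sec:wellWGF} for the Wasserstein gradient flow: first establish local well-posedness via Picard--Lindel\"of, then use energy dissipation to obtain the average bound (ii), then obtain the individual bound (i) by a per-particle Gr\"onwall argument, and finally combine these a priori bounds to extend the solution globally. The reformulation \eqref{eq:Ghat2beta} that packages $\widehat{G}_f$ and $\widehat{G}_h$ into a single combined gradient $\widehat{G}(\beta,\Theta,t)$ with $\beta=(\theta,w)$ will be convenient throughout.

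For local existence and uniqueness, I would invoke Lemma \ref{lemma:disGradientDiffBound}: on any set $\{\Theta:\sup_{t,j}\norm{\beta_{t,j}}\le r\}$, the map $\Theta\mapsto\widehat{G}(\beta_{t,j},\Theta,t)$ is Lipschitz with a constant depending only on $r,N,D,\lambda$ and the assumption parameters. Since \eqref{eq:GF} is a finite-dimensional autonomous ODE in $\Theta$, the Cauchy--Lipschitz theorem delivers a unique solution on some maximal interval $[0,\tau_{\max})$. The remainder of the argument shows $\tau_{\max}=\infty$ by ruling out finite-time blow-up.

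For (ii), the key input is the discrete energy-dissipation identity
$$\frac{d\widehat{Q}(\Theta^{(\tau)})}{d\tau}=-\frac{1}{ML}\sum_t\sum_{j=1}^M\Norm{\widehat{G}(\beta_{t,j}^{(\tau)},\Theta^{(\tau)},t)}^2\le 0,$$
which follows directly from the chain rule combined with the $ML$ scaling in \eqref{eq:GF0}. Hence $\widehat{Q}$ is non-increasing and it suffices to bound $\widehat{Q}(\Theta^{(0)})$. Because $\rho_0$ is supported on $\{\norm{\theta}^2+\norm{w}^2\le R^2\}$, the regularization term at $\tau=0$ is at most $\lambda R^2/2$; the risk term is controlled by Lemma \ref{lemma:disTbound}, which gives $\cnorm{\widehat{T}_{\Theta^{(0)}}(H,1)}\le B\exp(K(1+R+R^2))$, so an estimate identical to \eqref{eq:energyIdentity2} yields $\widehat{R}(\Theta^{(0)})\le B^2+B^2\exp(K(1+R+R^2))^2$. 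Dividing the resulting inequality $\widehat{Q}(\Theta^{(\tau)})\le\lambda A_0^2/2$ by $\lambda/2$ gives the average bound in (ii). For (i), I would run the discrete analog of \eqref{eq:propWellWGFfinal1}: with the uniform average bound on $\norm{\beta}^2$ from (ii), Lemma \ref{lemma:disGradientCompBound} provides uniform bounds on $\cnorm{\widehat{T}_{\Theta^{(\tau)}}(H,t)}$ and $\fnorm{\widehat{p}_{\Theta^{(\tau)}}(H,t)}$ depending only on $B,K,A_0$; combined with Assumption \ref{ass:growth}(ii) for both $f$ and $h$ this yields $\Norm{\widehat{G}(\beta,\Theta^{(\tau)},t)-\lambda\beta}\le C_R(1+\norm{\beta})$ for the same $C_R$ used in Proposition \ref{prop:wellWGF}. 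A standard Gr\"onwall estimate applied to $\norm{\beta_{t,j}^{(\tau)}}$ along \eqref{eq:GF} then gives $\norm{\beta_{t,j}^{(\tau)}}\le (R+1)\exp(C_R\tau)-1$, i.e.\ the required $R_\tau$ bound, which precludes finite-time blow-up and extends the solution to $[0,\infty)$.

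The main obstacle I expect is controlling the adjoint $\widehat{p}_{\Theta^{(\tau)}}(H,t)$ uniformly in $L$ and $M$: its explicit form \eqref{eq:phatSolutionH}--\eqref{eq:phatSolutionF} is an $L$-fold product of matrices $I+(\Delta t/2)M^{-1}\sum_j\nabla_{\vec T}\vec{f(\cdot)}$ (and analogously for $h$), each of whose operator norm is inflated by $\phi_T(\cdot)(1+\norm{\beta_{t,j}}+\norm{\beta_{t,j}}^2)$ via Assumption \ref{ass:growth}(iii). Converting the $l_2$ average bound from (ii) into a uniform-in-$L$ bound on this $L$-fold product is the one genuinely discrete ingredient beyond the continuous argument in Appendix \ref{sec:wellWGF}, and it is exactly what is encapsulated by the last estimate of Lemma \ref{lemma:disGradientCompBound}. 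Once that is granted, the rest of the proof is essentially a transcription of the Wasserstein argument to the particle setting.
\end{hproof}
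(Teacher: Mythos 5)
Your proposal is correct and follows essentially the same route as the paper's proof: local well-posedness from the Lipschitz estimate in Lemma \ref{lemma:disGradientDiffBound}, the energy-dissipation identity $\frac{d}{d\tau}\widehat{Q}(\Theta^{(\tau)})=-\frac{1}{ML}\sum_{t,j}\|\widehat{G}(\beta_{t,j}^{(\tau)},\Theta^{(\tau)},t)\|^2\le 0$ to get (ii) after bounding $\widehat{Q}(\Theta^{(0)})$ via Lemma \ref{lemma:disTbound}, and then the per-particle estimate $\|\widehat{G}-\lambda\beta\|\le C_R(1+\|\beta\|)$ (using Lemma \ref{lemma:disGradientCompBound} for the discrete adjoint $\widehat{p}_\Theta$) together with Gr\"onwall to obtain (i). The one cosmetic difference is the global-extension step: the paper rules out finite-time blow-up via a bounded-arclength/compactness argument read off from \eqref{eq:propWellGFeq1} and only afterwards derives (i)--(ii), whereas you establish (ii) and the Gr\"onwall bound (i) on $[0,\tau_{\max})$ first and use that a priori $R_\tau$ bound to extend; both are standard and equivalent here.
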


\begin{proof}
The local Lipschitz continuity established in Lemma \ref{lemma:disGradientDiffBound} directly implies the continuity of $\widehat{G}_\beta(\beta_{t,j},\Theta,t)$ with respect to $\Theta$. Since $\{ML\cdot\widehat{G}_\beta(\beta_{t,j},\Theta,t)\}_{t/\Delta t +1 \in[L], j\in[M]}$ serves as the gradient of $\widehat{Q}(\Theta)$, it follows that $\widehat{Q}(\Theta)$ is continuously differentiable, indicating the local semiconvexity of $\widehat{Q}(\Theta)$. Specifically, for any $\Theta$, there exists some $\kappa>0$ such that $\widehat{Q}(\Theta)+\kappa\sum_{t}\sum_{j=1}^M (\norm{\theta_{t,j}}_2^2+\norm{w_{t,j}}_2^2)$ is convex within a small neighborhood of $\Theta$. The existence and uniqueness of a gradient flow over the maximal interval $[0,\tau_{\max}]$ is a standard result (see Section 2.1 of \cite{santambrogio2016EuclideanMA}).

For any $\tau\in[0,\tau_{\max}]$, it holds that
\begin{equation}
\label{eq:propWellGFeq1}
\begin{aligned}
\widehat{Q}(\Theta^{(0)})\geq\widehat{Q}(\Theta^{(0)})-\widehat{Q}(\Theta^{(\tau)})=&\int_0^\tau \sum_{t}\sum_{j=1}^M\langle \frac{d\widehat{Q}(\Theta)}{d \beta_{t,j}}\bigg|_{\Theta=\Theta^{(\tau)}},ML\frac{d\widehat{Q}(\Theta)}{d \beta_{t,j}}\bigg|_{\Theta=\Theta^{(\tau)}}\rangle d\tau&\\
=&\int_0^\tau \frac{1}{ML}\sum_{t}\sum_{j=1}^M\norm{\widehat{G}_\beta(\beta^{(\tau)}_{t,j},\Theta^{(\tau)},t)}^2 d\tau\\
\geq &\frac{1}{ML\tau}\sum_{t}\sum_{j=1}^M\Big(\int_0^\tau \norm{\widehat{G}_\beta(\beta^{(\tau)}_{t,j},\Theta^{(\tau)},t)} d\tau \Big)^2,
\end{aligned}  
\end{equation}
where the last inequality follows from Jensen's inequality. \eqref{eq:propWellGFeq1} establishes that $\widehat{Q}(\Theta^{(\tau)})$ is both upper and lower bounded, and $\Theta^{(\tau)}$ exhibits a bounded curve length over any the time interval $[0,\tau_{\max}]$. By compactness, if $\tau_{\max}$ is finite, then $\Theta^{(\tau_{\max})}$ exists and thus must exist beyond $\tau_{\max}$, which leads to contradiction. Therefore, $\tau_{\max}=\infty$, and the well-posedness of the gradient flow for $\tau\geq 0$ consequently follows. Additionally, \eqref{eq:propWellGFeq1} shows that for any $\tau\geq 0$,
\begin{equation}
\label{eq:propWellGFeq2}
\begin{aligned}
\frac{1}{ML}\sum_{t}\sum_{j=1}^M\norm{\beta_{t,j}}_2^2\leq 2\lambda^{-1}\widehat{Q}(\Theta^{(\tau)})\leq 2\lambda^{-1}\widehat{Q}(\Theta^{(0)})=&\lambda^{-1}\E_{\mu}\Big[ \Big( \mathrm{Read}[\widehat{T}_{\Theta^{(0)}}(H,1)]-y(H)\Big)^2 \Big]\\
+&\frac{1}{ML}\sum_{t}\sum_{j=1}^M\norm{\beta^{(0)}_{t,j}}_2^2\\
\leq&\lambda^{-1}\E_{\mu}[(\cnorm{\widehat{T}_{\Theta^{(0)}}(H,1)}+B)^2]+R^2\\
\leq&2\lambda^{-1}\E_{\mu}[(\cnorm{\widehat{T}_{\Theta^{(0)}}(H,1)}^2+B^2]+R^2\\
\leq& R^2 + \lambda^{-1}\Big(2B^2+2B^2\exp\Big(K(1+R+R^2)\Big)^2\Big)\\
=&A_0^2
\end{aligned}
\end{equation}
The last inequality of \eqref{eq:propWellGFeq2} follows from Lemma \ref{lemma:disTbound}, thereby showing that $\frac{1}{ML}\sum_{t}\sum_{j=1}^M\norm{\beta_{t,j}}_2^2\leq A_0^2$ for any $\tau\geq 0$.

As the final part of our proof, we demonstrate that the norm of any entry of $\Theta$ is bounded at any given time $\tau\geq 0$. Note that
\begin{equation}
\label{eq:propWellGFfinal1}
\begin{aligned}
&\norm{\widehat{G}(\beta,\Theta^{(\tau)},t)-\lambda \beta}\\
\leq& \sum_{i=1}^{N+1}\Norm{\nabla_\theta\Big\{ f(\widehat{T}_\Theta(H,t),\theta)_{:,i}\Big\}\widehat{p}_\Theta(H,t)_{:,i}}/2+\Norm{\nabla_w\Big\{ h(\widehat{T}_\Theta(H,t+\Delta/2),w)_{:,i}\Big\}\widehat{p}_\Theta(H,t+\Delta t/2)_{:,i}}/2\\
\leq& \sup_{i\in[N+1]}(\Norm{\nabla_\theta f(T_\rho(H,t),\theta)_{:,i}} \sum_{i=1}^{N+1}\norm{p_\rho(H,t)_{:,i}}/2+\Norm{\nabla_w h(T_\rho(H,t),w)_{:,i}} \sum_{i=1}^{N+1}\norm{p_\rho(H,t+\Delta t/2)_{:,i}}/2)\\
\leq& \sqrt{N+1}\sup_{i\in[N+1]}\Big(\Norm{\nabla_\theta f(T_\rho(H,t),\theta)_{:,i}} \fnorm{p_\rho(H,t)}/2\\
+&\Norm{\nabla_w h(T_\rho(H,t+\Delta t/2),w)_{:,i}} \fnorm{p_\rho(H,t+\Delta t/2)}/2\Big)\\
\leq& \sqrt{N+1}\max\{\phi_{P}(\cnorm{T_\rho(H,t)}),\phi_{P}(\cnorm{T_\rho(H,t+\Delta t/2)})\}\\
&\max\{\fnorm{p_\rho(H,t)},\fnorm{p_\rho(H,t+\Delta t/2)}\}(1+\norm{\beta})\\
\leq& C_R(1+\norm{\beta}),
\end{aligned}   
\end{equation}
Applying \eqref{eq:propWellGFfinal1} to the gradient flow 
$$\frac{d\beta^{(\tau)}_{t,j}}{d\tau}=-\widehat{G}(\beta^{(\tau)}_{t,j},\Theta^{(\tau)},t)$$
for $\tau\geq 0$, we have $\frac{d\norm{\beta_{t,j}^{(\tau)}}}{d\tau}=\frac{\langle -\widehat{G}(\beta^{(\tau)}_{t,j},\Theta^{(\tau)},t),\norm{\beta^{(\tau)}}\rangle}{\norm{\beta^{(\tau)}}}\leq \norm{\widehat{G}(\beta^{(\tau)}_{t,j},\Theta^{(\tau)},t)-\lambda \beta}\leq C_R(1+\norm{\beta^{(\tau)}})$. This indicates
\begin{equation}
\label{eq:propWellGFgrown}
\norm{\beta^{(\tau)}}\leq (\norm{\beta}+1)\exp(C_R\tau)-1\leq (R+1)\exp(C_R\tau)-1=R_\tau.
\end{equation}
\end{proof}

\subsection{Proof of Proposition \ref{prop:globalApprox}}
Before commencing the proof, we introduce two proxy Transformer procedures in addition to $\bar{T}_\Theta$ and $\widetilde{T}_\rho$. The first proxy, denoted as $\bar{T}_\Theta$, involves moving the layers with the encoder $h$ slightly forward by $\Delta t/2$ in the depth index. This adjustment results in a discrete Transformer with only $L$ layers, where each layer has a step size of $\Delta t$ and an encoder of $(f+h)/2$, represented by $g$. Specifically, $\bar{T}_\Theta$ can be written as
\begin{equation}
\label{eq:proxy1}
\begin{aligned}
\bar{T}_{\Theta}(H,t+\Delta t)=&\bar{T}_{\Theta}(H,t)+\Delta tM^{-1}\sum_{j=1}^M \Big(f(\bar{T}_{\Theta}(H,t),\theta_{t,j})
+\sum_{j=1}^M h(\bar{T}_{\Theta}(H,t),w_{t,j})\Big)\\
=&\bar{T}_{\Theta}(H,t)+\Delta tM^{-1}\sum_{j=1}^M g(\bar{T}_{\Theta}(H,t),\beta_{t,j}).
\end{aligned}
\end{equation}
The second proxy, denoted as $\widetilde{T}_\rho$, extends the width to infinity by letting $M\rightarrow\infty$, effectively replacing the average with an integral:
\begin{equation}
\label{eq:proxy2}
\widetilde{T}_{\rho}(H,t+\Delta t)=\widetilde{T}_{\rho}(H,t)+\Delta t\int_\beta g(\widetilde{T}_{\rho}(H,t),\beta) \rho(\beta|t)d\beta.
\end{equation}
We let all four Transformers share the same initial state $\widehat{T}_\Theta(H,0)=\bar{T}_\Theta(H,0)=\widetilde{T}_\rho(H,0)=T_\rho(H,0)=H$.

We first present the following lemma, considering parameters i.i.d. drawn from some distribution $\rho\in \ca P^2$ with bounded support:
\begin{lemma}[Oracle approximation of discretization]
\label{lemma:oracleApprox}
Under Assumptions \ref{ass:dataBound} and \ref{ass:growth}, suppose that the parameter setting $\Theta$ is i.i.d. drawn from $\{\rho(\theta,w|t)\}_{t,j}$ for some $\rho\in \ca P^2$ concentrated on $\{(\theta,w):\norm{\theta}^2+\norm{w}^2\leq r^2\}\times[0,1]$ and satisfies that $\int_{\theta,w}\rho(\theta,w,t)d(\theta,w)=1$ for any $t\in[0,1]$. Then with probability at least $1-\exp(-\delta)$ with respect to the parameter initialization $\Theta^{(0)}$, we have
$$\fnorm{\widehat{T}_\Theta(H,t)-T_\rho(H,t)}\lesssim L^{-1}+\sqrt{\frac{\delta+\log(L+1)}{M}}.$$
for any $H$, $t=0,\Delta t,\dots, (L-1)\Delta t, 1$ and any $\delta>0$. Here, $\lesssim$ hides the dependencies on $N,D,r$ and the parameters of the assumptions.
\end{lemma}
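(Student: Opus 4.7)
The plan is to introduce the two proxy Transformers defined in \eqref{eq:proxy1} and \eqref{eq:proxy2} and decompose
\[
\fnorm{\widehat{T}_\Theta(H,t)-T_\rho(H,t)} \le \fnorm{\widehat{T}_\Theta(H,t)-\bar T_\Theta(H,t)} + \fnorm{\bar T_\Theta(H,t)-\widetilde T_\rho(H,t)} + \fnorm{\widetilde T_\rho(H,t)-T_\rho(H,t)}
\]
and bound the three terms separately. Throughout, the uniform boundedness of $\cnorm{\cdot}$ and $\fnorm{\cdot}$ of the various Transformer states, for parameters lying in $P_r$, follows from Lemmas \ref{lemma:Tbound} and \ref{lemma:disTbound} together with Lemma \ref{lemma:normAvg} (the latter giving, with probability at least $1-\exp(-\delta)$, that the sample second moment of $\Theta$ stays within $O(L^{-1}+\sqrt{(\delta+\log(L+1))/M})$ of the population one, hence uniformly bounded).

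For the first term, write out one step of $\widehat T_\Theta$ (two half-steps with $f$ then $h$) versus one step of $\bar T_\Theta$ (a single step with $g=(f+h)/2$), assuming inductively that $\widehat T_\Theta(H,t) = \bar T_\Theta(H,t)$. The only discrepancy is that $\widehat T_\Theta$ evaluates $h$ at the intermediate state $\widehat T_\Theta(H,t+\Delta t/2)$, which differs from $\bar T_\Theta(H,t)$ by $O(\Delta t)$ in $\fnorm{\cdot}$ (using Assumption \ref{ass:growth}(i)). Local Lipschitz continuity of $h$ in $T$ (Assumption \ref{ass:growth}(iii)) then yields a per-step discrepancy of $O((\Delta t)^2)$; combined with the Lipschitz continuity of one discrete layer and a discrete Gr\"onwall argument over the $L$ steps, the accumulated error is $O(L\cdot (\Delta t)^2)=O(L^{-1})$. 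An identical argument shows that the third term, which measures the error between the Euler scheme \eqref{eq:proxy2} and its ODE limit \eqref{eq:ODE}, is also $O(L^{-1})$.

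The main obstacle is the middle term, which is the mean-field concentration. For each $t=0,\Delta t,\dots,(L-1)\Delta t$, write
\[
\bar T_\Theta(H,t+\Delta t)-\widetilde T_\rho(H,t+\Delta t) = \bar T_\Theta(H,t)-\widetilde T_\rho(H,t) + \Delta t\cdot I_t + \Delta t\cdot II_t,
\]
where $I_t = M^{-1}\sum_j [g(\bar T_\Theta(H,t),\beta_{t,j})-g(\widetilde T_\rho(H,t),\beta_{t,j})]$ and $II_t = M^{-1}\sum_j g(\widetilde T_\rho(H,t),\beta_{t,j})-\int g(\widetilde T_\rho(H,t),\beta)\rho(\beta|t)d\beta$. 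Local Lipschitzness of $g$ in $T$ gives $\fnorm{I_t}\le C\fnorm{\bar T_\Theta(H,t)-\widetilde T_\rho(H,t)}$ with $C$ depending only on $r$ and the assumption parameters. For $II_t$, note that at layer $t$ the parameters $\{\beta_{t,j}\}_{j=1}^M$ are i.i.d.\ from $\rho(\cdot|t)$ and crucially are independent of $\widetilde T_\rho(H,t)$ (which is deterministic). Hence each coordinate of $II_t$ is an average of $M$ i.i.d.\ bounded random variables with mean zero, so a Hoeffding bound (componentwise) gives $\fnorm{II_t}\lesssim \sqrt{(\delta+\log(L+1))/M}$ with probability at least $1-\exp(-\delta)/(L+1)$. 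A union bound over $t=0,\Delta t,\dots,(L-1)\Delta t$ controls all $L$ layers simultaneously. Plugging these into the recursion and applying the discrete Gr\"onwall inequality yields
\[
\sup_{t}\fnorm{\bar T_\Theta(H,t)-\widetilde T_\rho(H,t)}\lesssim \sqrt{\frac{\delta+\log(L+1)}{M}}.
\]
Combining the three bounds and absorbing the bad-event probability from Lemma \ref{lemma:normAvg} (which only affects constants via the uniform bound on the states) completes the proof.
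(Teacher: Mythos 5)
Your decomposition via the two proxies $\bar T_\Theta$ and $\widetilde T_\rho$, the Hoeffding plus union-bound treatment of the middle term, and the Gr\"onwall recursion are all exactly the paper's strategy, so the overall route is the same. Two remarks.

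First, you gloss over the third term. You write that ``an identical argument'' to the one used for $\fnorm{\widehat T_\Theta - \bar T_\Theta}$ gives $\fnorm{\widetilde T_\rho - T_\rho}\lesssim L^{-1}$, but the two local-error analyses are not the same. For the discrete--discrete pair the $O(\Delta t^2)$ one-step error comes from the shifted evaluation point of $h$ together with Assumption~\ref{ass:growth}(i),(iii). For the Euler-vs-ODE pair the one-step error is
\[
\Big\lVert \Delta t \int_\beta g(T_\rho(H,t),\beta)\rho(\beta|t)\,d\beta - \int_t^{t+\Delta t}\!\!\int_\beta g(T_\rho(H,s),\beta)\rho(\beta|s)\,d\beta\,ds \Big\rVert_F,
\]
and to get $O(\Delta t^2)$ here one must control how the integrand varies in $s$; the paper explicitly flags this as a non-standard Euler analysis because no bound on $\ddot T_\rho$ is assumed. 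Instead the needed Lipschitz-in-$s$ regularity is split into two pieces: Lipschitz continuity of $s\mapsto T_\rho(H,s)$ (Proposition~\ref{prop:odeSol}) and Lipschitz continuity of $s\mapsto\rho(\cdot,s)$ in the bounded-Lipschitz norm (the $C_\rho$ condition built into $\ca P^2$). Your sketch would need to make this dependence explicit; as written, a reader applying a textbook Euler error bound would be stuck.

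Second, two smaller points. You place the deterministic state $\widetilde T_\rho(H,t)$ inside the Hoeffding term, whereas the paper centers the concentration around the random state $\bar T_\Theta(H,t)$ and argues via independence of $\{\beta_{t,j}\}_j$ from the earlier-layer parameters that determine $\bar T_\Theta(H,t)$. Both are valid; your variant avoids conditioning at the price of pushing the Lipschitz-in-$T$ term into $I_t$, which costs nothing after Gr\"onwall. Finally, invoking Lemma~\ref{lemma:normAvg} to establish uniform boundedness of the Transformer states is superfluous here: since $\rho$ is concentrated on $\{\norm{\theta}^2+\norm{w}^2\leq r^2\}\times[0,1]$, every drawn $\norm{\beta_{t,j}}$ is $\leq r$ almost surely, so Lemma~\ref{lemma:disTbound} applies deterministically with $A=r$; adding Lemma~\ref{lemma:normAvg} introduces an unnecessary second bad event that would degrade the claimed success probability to $1-2\exp(-\delta)$ unless you rescale $\delta$.
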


\begin{proof}[Proof of Proposition \ref{prop:globalApprox}]
Since $\rho\in\ca P^{2,r}$ has a bounded support for any $\rho$, there exists some $\rho^*\in\ca P^{2,r}$ such that $R(\rho^*)=\inf_{\rho\in \ca P^{2,r}} R(\rho)$. According to Lemma \ref{lemma:oracleApprox}, we can find a specific $\Theta$ such that
$$\fnorm{\widehat{T}_\Theta(H,t)-T_{\rho^*}(H,t)}\leq C \Big(L^{-1}+\sqrt{\frac{\log(L+1)}{M}}\Big),$$
where $C$ depends on $N,D,r$, and the parameters of the assumptions.
Moreover, from Lemma \ref{lemma:oracleApprox}, we ensure that each entry $\beta_{t,j}$ of $\Theta$ satisfies $\norm{\beta_{t,j}}\leq r$. Verification of Lemmas \ref{lemma:Tbound} and \ref{lemma:disTbound} on $T_\rho$ and $\widehat{T}\Theta$ respectively leads to their uniform boundedness, i.e., $\sup_t T_\rho(H,t)\lesssim 1$ and $\sup_t\widehat{T}_\Theta(H,t)\lesssim 1$. Therefore, we have
$$
\begin{aligned}
|\widehat{R}(\Theta)-R(\rho^*)|\leq& \E_\mu[|\mathrm{Read}[\widehat{T}_\Theta(H,1)-T_\Theta(H,1)]|\cdot||\mathrm{Read}[\widehat{T}_\Theta(H,1)+T_\Theta(H,1)]+2y(H)|]\\
\lesssim& L^{-1}+\sqrt{\frac{\log(L+1)}{M}}.
\end{aligned}
$$
Here, $\lesssim$ hides the dependencies on $N,D,r$ and the parameters of the assumptions. The result then follows.

The proof for the energy functional $Q$ (and $\widehat{Q}$) follows a similar approach. There exists some $\rho^*\in\ca P^{2,r}$ such that $Q(\rho^*)=\inf_{\rho\in \ca P^{2,r}} Q(\rho)$. From Lemmas \ref{lemma:oracleApprox} and \ref{lemma:normAvg}, we can find a specific $\Theta$ such that
$$
\begin{cases}
&\fnorm{\widehat{T}_\Theta(H,t)-T_{\rho^*}(H,t)}\leq C \Big(L^{-1}+\sqrt{\frac{\log(L+1)}{M}}\Big),\\
&|\frac{1}{ML}\sum_t\sum_{j=1}^M\norm{\beta_{t,j}}^2-\int_0^1\int_\beta \norm{\beta}^2\rho(\beta,t) d\beta dt|\leq C\Big( L^{-1}+\sqrt{\frac{\log(L+1)}{M}}\Big).
\end{cases}
$$
by setting $C$ large enough. Verification of Lemmas \ref{lemma:Tbound} and \ref{lemma:disTbound} on $T_\rho$ and $\widehat{T}_\Theta$ respectively leads to their uniform boundedness. Hence, we have
$$|\widehat{Q}(\Theta)-Q(\rho^*)|\leq |\widehat{R}(\Theta)-R(\rho^*)|+\lambda |\frac{1}{ML}\sum_t\sum_{j=1}^M\norm{\beta_{t,j}}^2-\int_0^1\int_\beta \norm{\beta}^2\rho(\beta,t) d\beta dt|\leq C(1+\lambda)\Big( L^{-1}+\sqrt{\frac{\log(L+1)}{M}}\Big).$$
The result thus follows.
\end{proof}

\section{Proofs of main results in Section \ref{sec:globalThm}}
\label{sec:globalThmApp}
For simplicity, we assume that Assumption \ref{ass:opt} holds with $(g,\alpha)=(f,\theta)$. The proof for the case of $(g,\alpha)=(h,w)$ is symmetric, involving a simple substitution of $f$ with $h$ and $\theta$ with $w$.

\subsection{Proofs of Theorem \ref{thm:global} and Corollary \ref{col:thm}}
Our proof of Theorem \ref{thm:global} consists of three parts, each focusing on bounding differences related to the energy functional \(Q\) or \(\widehat{Q}\). 

The first step establishes the continuity of the functional gradient \(\frac{\delta Q}{\delta \rho} \big|_{\rho_\infty}\). This ensures that if the derivative with respect to \(\beta\) for the functional gradient is constant over a region, then the functional gradient remains constant within that region. 

The second step provides the key bound for \(Q(\rho_\infty)\), which is proportional to \(\lambda\). This involves a detailed analysis of \(Q\)'s landscape by bounding its derivatives.

After obtaining the bound for \(Q(\rho_\infty)\), the final steps are to show that the finite-time risk can approach this bound. Achieving a loss as small as \(\epsilon\) requires \(Q(\rho^{(\tau_0)}) \leq \epsilon\) for some sufficiently large \(\tau_0\). We then apply Theorem \ref{thm:GFApprox}, with constant dependency on \(\tau_0\), to show that \(\widehat{Q}(\tau_0)\) becomes sufficiently small. Since \(\widehat{Q}(\rho^{(\tau)})\) is non-increasing, \(\widehat{Q}(\tau)\) remains small for all \(\tau \geq \tau_0\).

\noindent{\textbf{Preparatory Step: Landscape analysis}}

First, the following lemma suggests that as long as the risk \( R(\rho) \) remains positive, a descent direction for \( Q(\rho) \) can be constructed at any depth index, provided that \( \lambda \) is sufficiently small. This implies that by adjusting \( \lambda \), one can influence the gradient flow to effectively reduce \( Q(\rho) \).
\begin{lemma}[Landscape of $Q(\rho)$]
\label{lemma:land}
Suppose that Assumptions \ref{ass:dataBound}-\ref{ass:opt} hold.  For any $\rho\in \ca P^2$ concentrated on $P_r$ with some $r>0$, any $w_0\in\R^{\mathrm{dim}w}$, and any $t^*\in[0,1]$ such that $\int_\beta \rho(\beta,t^*)\geq 1/2$, there exists a $\nu\in \ca P(\R^{\mathrm{dim}\beta})$ such that
\begin{enumerate}
    \item [i.] for any $\beta\in \mathrm{supp}(\nu)$, we have $1/B_r\leq\norm{\beta}\leq B_r$
    \item [ii.] for any $(\theta_1,\theta_2,w)$, we have $\theta_2\in \ca K$ and $w=w_0$.
    \item [iii.] $\int_\beta \frac{\delta Q}{\delta \rho}(\beta,t^*) \Big(\nu(\beta)-\rho(\beta|t^*)\Big) d\beta\leq C_1\lambda -C_2 R(\rho)$
\end{enumerate}
Here, $B_r,C_1,C_2$ are constants that depends on $N,d,r$ and the parameters of the assumptions.
\end{lemma}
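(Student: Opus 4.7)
The plan is to construct $\nu$ as a discrete probability measure approximating a descent direction for the risk, by combining the universal approximation property in Assumption \ref{ass:opt}(ii) with the partial $1$-homogeneity in Assumption \ref{ass:opt}(i) to shape the approximation into the required support. First, because the forward vector field in \eqref{eq:ODE} is Lipschitz in $T$ by Assumption \ref{ass:growth}(iii), the flow $H \mapsto T_\rho(H, t^*)$ is bi-Lipschitz from $\mathrm{supp}(\mu)$ onto its image, with moduli depending only on $r, N, D$ and the assumption parameters (uniformly in $\rho$ concentrated on $P_r$). Thus $p_\rho(H, t^*)$ factors as a continuous function $\tilde p(T)$ of $T = T_\rho(H, t^*)$; by Tietze I extend $\tilde p$ continuously to a compactly supported function on $\{T : \cnorm{T} \leq B'\}$ for sufficiently large $B'$. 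Next, by the universal kernel property, for any accuracy $\varepsilon > 0$ and scaling $k > 0$ there exist $\{(c_i, \theta_{1,i}^*, \theta_{2,i}^*)\}_{i=1}^{N_0}$ with $\theta_{2,i}^* \in \ca K$ and $\sup_{\cnorm{T}\leq B'}\, \norm{-k\tilde p(T) - \sum_i c_i f(T, \theta_{1,i}^*, \theta_{2,i}^*)} < \varepsilon$. Since $c_i f(T, \theta_{1,i}^*, \theta_{2,i}^*) = f(T, c_i \theta_{1,i}^*, \theta_{2,i}^*)$ by $1$-homogeneity, setting
\[
\nu := \frac{1}{N_0}\sum_{i=1}^{N_0} \delta_{(N_0 c_i \theta_{1,i}^*,\ \theta_{2,i}^*,\ w_0)}
\]
realizes $\int f(T, \theta)\,d\nu(\theta) = \sum_i c_i f(T, \theta_{1,i}^*, \theta_{2,i}^*)$ by applying homogeneity once more. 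Additional homogeneous rescaling (preserving each product $c_i\theta_{1,i}^*$) enforces $1/B_r \leq \norm{\beta} \leq B_r$ on $\mathrm{supp}(\nu)$ for a constant $B_r$ absorbing $N_0$, $\max_i |c_i|$, the diameter of $\ca K$, and $\norm{w_0}$; constraint (ii) holds by construction.

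With this $\nu$, I would expand $\int \frac{\delta Q}{\delta \rho}(\beta, t^*)(\nu(\beta) - \rho(\beta|t^*))\,d\beta$ using \eqref{eq:gradient2rhoBeta} into four pieces: the $f$-contribution equals $-\tfrac{k}{2}\E_\mu \fnorm{p_\rho(H, t^*)}^2 + O(\varepsilon\,\E_\mu\fnorm{p_\rho})$; the $h$-contribution at fixed $w = w_0$ is a constant uniformly bounded by Lemma \ref{lemma:gradientComponents2rhoBound}; the regularization on $\nu$ is $O(\lambda B_r^2)$; and the subtracted $\rho(\cdot|t^*)$-integral is bounded via Cauchy--Schwarz by $O(\sqrt{R(\rho)}) + O(\lambda)$, using $\int_\beta \rho(\beta, t^*)\,d\beta \geq 1/2$ to disintegrate. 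The crucial lower bound $\E_\mu \fnorm{p_\rho(H, t^*)}^2 \gtrsim R(\rho)$ follows from the adjoint formula \eqref{eq:pSolution}: writing $p_\rho(H, t^*)^\top = (\mathrm{Read}[T_\rho(H,1)] - y(H))\,E_{\mathrm{read}}^\top M(H, t^*)$ for a time-ordered exponential $M$, both $\norm{M}$ and $\norm{M^{-1}}$ are uniformly bounded by the exponential of the Jacobian bound in Assumption \ref{ass:growth}. Choosing $k$ large, taking $\varepsilon$ small relative to $R(\rho)$, and applying AM--GM to absorb the $O(\sqrt{R(\rho)})$ cross terms yields the required form $C_1 \lambda - C_2 R(\rho)$.

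The main obstacle I foresee is the calibration step: the universal kernel property is only qualitative, so the coefficients $\{c_i\}$ and the parameter norms in $\nu$ depend on $\varepsilon$ and on $\rho$, and the homogeneous rescaling must be tuned so that shrinking $\varepsilon$ does not inflate $B_r$ beyond a constant depending only on $r, N, D$ and the assumption parameters. A secondary technical point is that the construction above handles the case $(g, \alpha) = (f, \theta)$ of Assumption \ref{ass:opt}; the symmetric case $(g, \alpha) = (h, w)$ requires an analogous argument swapping the roles of $f, \theta$ and $h, w$.
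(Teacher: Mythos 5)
Your plan shares the skeleton of the paper's argument (bi-Lipschitz flow map, Tietze extension of $\tilde p$, universal kernel $+$ partial $1$-homogeneity to build a finite-atom $\nu$ in the required annulus, and the Gr\"onwall lower bound $\E_\mu\fnorm{p_\rho(H,t^*)}^2\gtrsim R(\rho)$), but the choice of approximation target is wrong, and this is a gap that cannot be patched by AM--GM or by rescaling.

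The paper approximates $\bar F(T) = -p_\rho(T^{-1}_{t^*}(T),t^*) + \int_\beta g(T,\beta)\rho(\beta|t^*)\,d\beta - \tfrac{1}{2}h(T,w_0)$, not a scaled copy of $-\tilde p$. The two correction terms are essential: they are cooked up precisely so that
$\int_\beta g(T,\beta)\,\nu(\beta)\,d\beta - \int_\beta g(T,\beta)\,\rho(\beta|t^*)\,d\beta = -p_\rho(T^{-1}_{t^*}(T),t^*) + O(\epsilon)$
in sup-norm. Plugging this into the main trace term of $\int_\beta \frac{\delta Q}{\delta\rho}(\beta,t^*)(\nu-\rho(\cdot|t^*))\,d\beta$ produces exactly $-\E_\mu\fnorm{p_\rho}^2 + O(\epsilon\,\E_\mu\fnorm{p_\rho})$, i.e.\ the error is multiplied by $\epsilon$ and hence can be made a small \emph{fraction} of $\E_\mu\fnorm{p_\rho}^2$ by a single choice of $\epsilon$ independent of $\lambda$ and $\rho$. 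In your version the target is only $-k\tilde p$, so after expanding $\int g\,d\nu = \tfrac12\sum_i c_i f(T,\theta^*_i) + \tfrac12 h(T,w_0) \approx -\tfrac{k}{2}p_\rho + \tfrac12 h(T,w_0)$ the quantities $\tfrac12\E_\mu\mathrm{Tr}[h(T,w_0)^\top p_\rho]$ and $-\E_\mu\mathrm{Tr}[(\int_\beta g(T,\beta)\rho(\beta|t^*)d\beta)^\top p_\rho]$ appear as full-size cross terms of order $\sqrt{R(\rho)}$ \emph{not} multiplied by $\varepsilon$. AM--GM turns $C\sqrt{R(\rho)}$ into $\delta R(\rho) + C^2/(4\delta)$, leaving a residual constant $C^2/(4\delta)$ that depends on neither $\lambda$ nor $R(\rho)$; such a constant cannot live inside a bound of the form $C_1\lambda - C_2R(\rho)$ where $C_1,C_2$ are fixed in terms of $N,D,r$ (take $\lambda\to0$ and $R(\rho)\to0$).

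The scaling $k$ also does not help and actively hurts. By the $1$-homogeneity argument you use, the coefficients $\{c_i\}$ in the expansion of $-k\tilde p$ scale linearly in $k$, so the atoms $N_0 c_i\theta^*_{1,i}$ in $\mathrm{supp}(\nu)$ grow like $k$, and therefore $B_r\propto k$ and the regularization contribution $\tfrac{\lambda}{2}\int\norm{\beta}^2\,\nu(\beta)\,d\beta \propto \lambda k^2$. Sending $k\to\infty$ to crush the residual constant above then blows up the $\lambda$-coefficient without bound, contradicting the requirement that $C_1$ depend only on $N,D,r$ and the assumption parameters. The paper avoids this entirely: there is no scaling factor, $J_2 := \E_\mu\fnorm{p_\rho}^2$ already supplies the $-cR(\rho)$ via Gr\"onwall, and the only remaining error $J_1$ is $\epsilon$-small after the corrected target is used. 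To fix your proposal, replace the approximation target by $\bar F$ (including both correction terms) and drop the scaling $k$; then the decomposition collapses to $J_1 - J_2 + \tfrac{\lambda}{2}(B_r^2+r)$ with $J_1$ genuinely $\epsilon$-small, which is the structure the lemma needs.
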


Given the $t^*$ and $w_0$ specified in the theorem, Lemma \ref{lemma:land} indicates that there exists some $$\nu\in \ca P\Big(\Big(B_{\mathrm{dim}\theta_1}(0,B_{R_\infty})/B_{\mathrm{dim}\theta_1}(0,1/B_{R_\infty})\Big)\times \ca K\times \{w_0\}\Big)$$ such that $\int_\beta \frac{\delta Q}{\delta \rho}|_{\rho_\infty}(\beta,t^*) \Big(\nu(\beta)-{\rho_\infty}(\beta|t^*)\Big) d\beta\leq C_1\lambda -C_2 R(\rho)$, where $B_{R_\infty},C_1,C_2$ are constants dependent on $N,d,R_\infty$ and the parameters of the assumptions.

In addition, for any $\rho\in \ca P^2$, we define the following two functional derivatives:
$$
\begin{aligned}
&\frac{\delta Q_f}{\delta \rho}(\theta,w,t)=\E_{\mu} \Big[ \mathrm{Tr}\Big( \Big[f(T_{\rho}(H,t),\theta)\Big]^\top p_{\rho}(H,t)\Big)\Big]+\lambda \norm{\theta}^2_2\\
&\frac{\delta Q_h}{\delta \rho}(\theta,w,t)=\E_{\mu} \Big[ \mathrm{Tr}\Big( \Big[h(T_{\rho}(H,t),w)\Big]^\top p_{\rho}(H,t)\Big)\Big]+\lambda \norm{w}^2_2.
\end{aligned}
$$
It is obvious that $\frac{\delta Q}{\delta \rho}\equiv(\frac{\delta Q_f}{\delta \rho}+\frac{\delta Q_h}{\delta \rho})/2$.

\begin{proof}[Proof of Theorem \ref{thm:global}] 

\noindent{\textbf{Step I: Show that $\frac{\delta Q}{\delta \rho}|_{\rho_\infty}(\beta,t)$ is continuous with respect to $(\beta,t)$}}

In Step I of the proof of Lemma \ref{lemma:land}, we establish that $\rho_\infty\in\ca P^2$, with a bounded support $P_{R_\infty}$, implies $p_{\rho_\infty}(H,t)$ is $C_p$-Lipschitz continuous, where $C_p$ is a constant dependent solely on $N,D,R_\infty$, and the parameters in our assumptions.

Next, we would like to show that $\frac{\delta Q}{\delta \rho}_{\rho_\infty}(\beta,t)$ is continuous with respect to $(\beta,t)\in \R^{\mathrm{div}\beta}\times[0,1]$. Let's focus on the region $(\beta,t)\in P_{r}$ for any $r>R_\infty$, so that $\rho_\infty$ is also concentrated on $P_r$.
It's noteworthy that for any bounded support $(\beta,t)\in P_r$, Lemma \ref{lemma:Tbound} and Assumption \ref{ass:growth} (i) ensure the universal boundedness of $g(T_\rho(H,t),\beta)$, and Lemma \ref{lemma:gradientComponents2rhoBound} ensures the universal boundedness of $p_\rho(H,t)$ for any $H\in\mathrm{supp}(\mu)$, with the constants depending solely on $N,D,r$, and the parameters of the assumptions.

Combining the Lipschitz continuity of $p_{\rho_\infty}(H,t)$ and $T_{\rho_\infty}(H,t)$ with respect to $(H,t)$, as shown in Proposition \ref{sec:odeSol}, along with the Lipschitz continuity of $g(T,\beta)$ with respect to $(T,\beta)$ when $\fnorm{T}$ is universally bounded (as guaranteed by Assumption \ref{ass:growth} (ii) and (iii)), and their universal boundedness, we derive that $\mathrm{Tr}\Big( \Big[g(T_{\rho_\infty}(H,t),\beta)\Big]^\top  p_{\rho_\infty}(H,t)\Big)$ is $C_G$-Lipschitz continuous for $\norm{\cdot}_2$ with respect to $(\beta,t)\in P_r$ for some constant $C_G$ that depends only on $N,D,r$ and the parameters of the assumptions. Since the Lipschitz constant $C_G$ is independent of the choice of $H$,  we see that $\mathrm{Tr}\Big( \Big[g(T_{\rho_\infty}(H,t),\beta)\Big]^\top p_{\rho_\infty}(H,t)\Big)$ is uniformly continuous across all $H\in\mathrm{supp}(\mu)$ with respect to $(\beta,t)\in P_r$. Consequently, we have that
$$\frac{\delta Q}{\delta \rho}(\beta,t)|_{\rho_\infty}=\E_{\mu} \Big[ \mathrm{Tr}\Big( \Big[g(T_{\rho_\infty}(H,t),\beta)\Big]^\top p_{\rho_\infty}(H,t)\Big)\Big]+\frac{\lambda}{2} \norm{\beta}^2_2$$ is continuous with respect to $(\beta,t)\in P_r$. Since the choice of $r$ is arbitrary, we conclude that $\frac{\delta Q}{\delta \rho}|_{\rho_\infty}(\beta,t)$ is continuous  with respect to $(\beta,t)\in \R^{\mathrm{div}\beta}\times[0,1]$.

\noindent{\textbf{Step II: Show that $Q(\rho_\infty)\lesssim \lambda$ with further landscape analysis}}

In the first part of the proof, we will adopt a similar approach to Theorem 3.9 of \cite{lu2020meanfield} to demonstrate that the stationary point of the Wasserstein gradient flow, denoted $\rho_\infty$, satisfies $Q(\rho_\infty)\lesssim \lambda$. It's worth noting that in \cite{lu2020meanfield}, the authors assume $\lambda=0$ and conclude $R(\rho_\infty)=Q(\rho_\infty)=0$, but this claim relies on assuming the global existence of the Wasserstein gradient flow rather than proving it directly.

Based on the pivotal findings from \cite{nitanda2017stochastic} regarding the stationary points in the Wasserstein space, we infer that the stationary point $\rho_\infty$ of the Wasserstein gradient flow \eqref{eq:rhoPDE}, i.e.
$$\frac{d\rho(\beta,t)}{d\tau}=\mathrm{div}_{\beta} \Big(\rho\nabla_{\beta}\frac{\delta Q}{\delta \rho}\Big),$$
must satisfy $\nabla_\beta \frac{\delta Q}{\delta \rho}|_{\rho_\infty}=0$ almost everywhere over $\mathrm{supp}(\rho_\infty)$. This further indicates that $\nabla_\beta \frac{\delta Q}{\delta \rho}|_{\rho_\infty}(\theta_1,\theta_2,w,t^*)=0$ almost everywhere over $\mathrm{supp}(\rho_\infty(\cdot,t^*))$. 
The fact $\rho(\cdot,t^*)$ is a connected set, coupled with the continuity of the Frech\'{e}t differential $\frac{\delta Q}{\delta \rho}|_{\rho_\infty}$ with respect to $\beta$, implies that, $\frac{\delta Q}{\delta \rho}|_{\rho_\infty}(\theta_1,\theta_2,w,t^*)=C$ for some constant $C$ over $(\beta,t^*)\in\mathrm{supp}(\rho(\cdot,t^*))$.

Given the separation assumption on the support of $\rho_\infty(\cdot,t^*)$, we ensure that for any $(\theta_1,\theta_2)\in \Big(B_{\mathrm{dim}\theta_1}(0,B_{R_\infty})/B_{\mathrm{dim}\theta_1}(0,1/B_{R_\infty})\Big)\times \ca K$, there exists $c\in \R,\ 1/R_\infty B_{R_\infty}\leq |c|\leq R_\infty B_{R_\infty}$ such that $(c\theta_1,\theta_2,w_0,t^*)\in \mathrm{supp}(\rho_\infty)$. Combined with Assumption \ref{ass:opt} (i), which implies the $1$-homogeneity of $f(T,\theta_1,\theta_2)$ with respect to $\theta_1$, we have
\begin{equation}
\label{eq:thmGlobalEq1}
\begin{aligned}
&\frac{\delta Q_f}{\delta \rho}\bigg|_{\rho_\infty}(c\theta_1,\theta_2,w_0,t^*)=c\frac{\delta Q_f}{\delta \rho}\bigg|_{\rho_\infty}(\theta_1,\theta_2,w_0,t^*)+(|c|-1)\lambda\norm{\theta_1}^2.\\
&\frac{\delta Q_h}{\delta \rho}\bigg|_{\rho_\infty}(c\theta_1,\theta_2,w_0,t^*)=\frac{\delta Q_h}{\delta \rho}\bigg|_{\rho_\infty}(\theta_1,\theta_2,w_0,t^*).
\end{aligned}
\end{equation}
Hence, given that $\nabla_\beta \frac{\delta Q}{\delta \rho}|_{\rho_\infty}(\cdot,t^*)=0$ almost everywhere over $\mathrm{supp}(\rho_\infty(\cdot,t^*))$, it also holds that
$$
\begin{aligned}
\nabla_{(\theta_1,\theta_2,w)}\frac{\delta Q}{\delta \rho}\bigg|_{\rho_\infty}(\theta_1,\theta_2,w_0,t^*)=&\Big(\nabla_{(\theta_1,\theta_2,w)}\frac{\delta Q_f}{\delta \rho}\bigg|_{\rho_\infty}(\theta_1,\theta_2,w_0,t^*)+\nabla_{(\theta_1,\theta_2,w)}\frac{\delta Q_h}{\delta \rho}\bigg|_{\rho_\infty}(\theta_1,\theta_2,w_0,t^*)\Big)/2\\
=&\Big(-\frac{|c|-1}{c}\lambda\theta_1,0_{\mathrm{dim}\theta_2},0_{w}\Big),
\end{aligned}
$$
which implies
\begin{equation}
\label{eq:thmGlobalEq2}
\Norm{\nabla_{(\theta_1,\theta_2,w)}\frac{\delta Q}{\delta \rho}\bigg|_{\rho_\infty}(\theta_1,\theta_2,w_0,t^*)}\leq (R_\infty^2B_{R_\infty}+R_\infty)\lambda.
\end{equation}
Given the condition that $\norm{(c\theta_1,\theta_2,w_0,t^*)-(\theta_1,\theta_2,w_0,t^*)}\leq R_\infty^2 B_{R_\infty}$, and recalling that $\frac{\delta Q}{\delta \rho}|_{\rho_\infty}(\theta_1,\theta_2,w,t^*)\equiv C$ across $(\beta,t^*)\in\mathrm{supp}(\rho(\cdot,t^*))$, \eqref{eq:thmGlobalEq2} further indicates that
\begin{equation}
\label{eq:thmGlobalEq3}
\Big|\frac{\delta Q}{\delta \rho}\bigg|_{\rho_\infty}(\theta_1,\theta_2,w_0,t^*)-C\Big|\leq (R_\infty^4B_{R_\infty}^2+R_\infty^3B_{R_\infty})\lambda.
\end{equation}
for any $(\theta_1,\theta_2)\in \Big(B_{\mathrm{dim}\theta_1}(0,B_{R_\infty})/B_{\mathrm{dim}\theta_1}(0,1/B_{R_\infty})\Big)\times \ca K$.
Hence, by Lemma \ref{lemma:land} we have
\begin{equation}
\label{eq:thmGlobalEq4}
\begin{aligned}
C_1\lambda-C_2R(\rho_\infty)\geq&\int_\beta \frac{\delta Q}{\delta \rho}\bigg|_{\rho_\infty}(\beta,t^*) \Big(\nu(\beta)-{\rho_\infty}(\beta|t^*)\Big) d\beta\\
=&\int_\beta \Big(\frac{\delta Q}{\delta \rho}\bigg|_{\rho_\infty}(\beta,t^*)-C\Big) \Big(\nu(\beta)-{\rho_\infty}(\beta|t^*)\Big) d\beta\\
\geq&-\int_\beta (R_\infty^4B_{R_\infty}^2+R_\infty^3B_{R_\infty})\lambda \Big(\nu(\beta)+{\rho_\infty}(\beta|t^*)\Big) d\beta\\
\geq&2(R_\infty^4B_{R_\infty}^2+R_\infty^3B_{R_\infty})\lambda.
\end{aligned}
\end{equation}
Therefore, we have $R(\rho_\infty)\leq \frac{C_1+2(R_\infty^4B_{R_\infty}^2+R_\infty^3B_{R_\infty})}{C_2}\lambda$, and
\begin{equation}
\label{eq:thmGlobalS1Eq1}
Q(\rho_\infty)\leq R(\rho_\infty)+\int_0^1\int_\beta \norm{\beta}^2 \rho(\beta,t)d\beta dt \leq \Big(\frac{C_1+2(R_\infty^4B_{R_\infty}^2+R_\infty^3B_{R_\infty})}{C_2}+R_\infty^2\Big)\lambda,
\end{equation}
which completes the first part of our proof.

\noindent{\textbf{Step III: Bound the difference between $Q(\rho^{(\tau)})$ and $Q(\rho_\infty)$ when $\tau$ is large}}

Proposition \ref{prop:wellWGF} establishes that the second moment for $\rho^{(\tau)}$ is uniformly bounded across all $\tau\geq 0$:
$$\int_0^1\int_\beta \norm{\beta}^2 \rho^{(\tau)}(\beta,t)d\beta dt\leq A_0^2,$$
where $A_0$ is defined as in Proposition \ref{prop:wellWGF}. Therefore, the weak convergence of probability measures $(\rho^{(\tau)})_{\tau\geq 0}$ is equivalent to the convergence in the Wasserstein-$2$ distance, i.e. 
\begin{equation}
\label{eq:thmGlobalS2Eq1}
\lim_{\tau\rightarrow\infty}W_2(\rho^{(\tau)},\rho_\infty)=0.  
\end{equation}
When $\tau$ is sufficiently large, $\rho^{(\tau)}$ concentrates on $P_{R_\infty}$. Therefore, according to Lemma \ref{lemma:Tdiff2rho}, there exists a constant $C_0$ depending solely on $N$, $D$, $R_\infty$, and the parameters of the assumptions that
\begin{equation}
\label{eq:thmGlobalS2Eq2}
\fnorm{T_{\rho_\infty}(H,t)-T_{\rho^{(\tau)}}(H,t)}\leq C_0 W_2(\rho^{(\tau)},\rho_\infty)
\end{equation}
for any $H$ and $t\in[0,1]$ when $\tau$ is sufficiently large. Note that Lemma \ref{lemma:Tbound} shows that $$\max\{\cnorm{T_{\rho^{(\tau)}}(H,t)},\cnorm{T_{\rho_\infty}(H,t)}\}\leq B\exp(K(1+R_\infty+R_\infty^2))=:B_T$$ for any $H$ and $t\in[0,1]$. Thus, from \eqref{eq:thmGlobalS2Eq2}, we have
\begin{equation}
\label{eq:thmGlobalS2Eq3}
\begin{aligned}
|Q(\rho_\infty)-Q(\rho^{(\tau)})|\leq& \frac{1}{2}\E_{\mu}\Big[|\mathrm{Read}[|T_{\rho^{(\tau)}}(H,1)]|+|T_{\rho_\infty}(H,1)]|+2|y(H)|\Big]\fnorm{T_{\rho_\infty}(H,1)-T_{\rho^{(\tau)}}(H,1)} \\
+&\frac{\lambda}{2}\int_0^1 \int_\beta \norm{\beta}^2 (\rho_\infty-\rho^{(\tau)})(\beta,t)d\beta dt\\
\leq & (B_T+B)C_0 W_2(\rho^{(\tau)},\rho_\infty) + \frac{\lambda}{2} R_\infty W_1(\rho^{(\tau)},\rho_\infty)\\
\leq & ((B_T+B)C_0 +\frac{\lambda}{2} R_\infty) W_2(\rho^{(\tau)},\rho_\infty),
\end{aligned}
\end{equation}
where the second inequality incorporates the Kantorovich-Rubinstein Theorem (see Theorem 5.10 of \cite{villani2008kantorovich}, for example) and the $2R_\infty$-Lipschitz continuity of $\norm{\beta}^2$ over the region $(\beta,t)\in P_{R_\infty}$.
Combining equations \eqref{eq:thmGlobalS2Eq1} and \eqref{eq:thmGlobalS2Eq3}, we deduce that for any $\epsilon>0$, there exists some $\tau_0>0$ such that  $|Q(\rho_\infty)-Q(\rho^{(\tau_0)})|\leq \epsilon$.

\noindent{\textbf{Step IV: Complete the proof by bounding the difference between $\widehat{Q}(\Theta^{(\tau)})$ and $Q(\rho^{(\tau)})$ when $\tau$ is large}}

The final step can be seen as a direct corollary of the approximation result in Theorem \ref{thm:GFApprox}.
According to Theorem \ref{thm:GFApprox}, there exists a constant $C_1$ dependent on $N$, $D$, $\tau_0$, $\lambda$, and the parameters specified in the assumptions, such that
$$|\widehat{Q}(\Theta^{(\tau_0)})-Q(\rho^{(\tau_0))})|\leq C_1\Big( L^{-1}+\sqrt{\frac{\delta+\log(L+1)}{M}}\Big)$$
with probability at least $1-3\exp(-\delta)$ with respect to the parameter initialization $\Theta^{(0)}$ for any $\delta>0$. Combining the outcomes from the preceding steps, we obtain
\begin{equation}
\label{eq:thmGlobalS3Eq1}
\widehat{Q}(\Theta^{(\tau_0)})\leq \epsilon + C_1\Big( L^{-1}+\sqrt{\frac{\delta+\log(L+1)}{M}}\Big) + C_2 \lambda.
\end{equation}
Note that
$$
\frac{d}{d\tau}\widehat{Q}(\Theta^{(\tau)})=\sum_{t}\sum_{j=1}^M\langle \frac{d\widehat{Q}(\Theta)}{d \beta_{t,j}}\bigg|_{\Theta=\Theta^{(\tau)}},-ML\frac{d\widehat{Q}(\Theta)}{d \beta_{t,j}}\bigg|_{\Theta=\Theta^{(\tau)}}\rangle=-\frac{1}{ML}\sum_{t}\sum_{j=1}^M\norm{\widehat{G}_\beta(\beta^{(\tau)}_{t,j},\Theta^{(\tau)},t)}^2\leq 0,
$$
so the sequence $(\widehat{Q}(\Theta^{(\tau)}))_{\tau\geq 0}$ is non-decreasing. Hence, for any $\tau\geq \tau_0$,
$$\widehat{R}(\Theta^{(\tau)})\leq \widehat{Q}(\Theta^{(\tau)})\leq\widehat{Q}(\Theta^{(\tau_0)})\leq \epsilon + C_1\Big( L^{-1}+\sqrt{\frac{\delta+\log(L+1)}{M}}\Big) + C_2 \lambda,$$
which completes the proof, recalling that $C_1$ depends only on $N,D,\tau_0,\lambda$ and the parameters of the assumptions, and $C_2$ depends only on $N,D,R_\infty$, and the parameters of the assumptions.

\end{proof}

\begin{proof}[Proof of Corollary \ref{col:thm}]
Given the choice of $\lambda$ By Theorem \ref{thm:global}, there exists some $\tau_0>0$ such that
$$
\begin{aligned}
\sup_{\tau\geq \tau_0}\widehat{R}(\Theta^{(\tau)})\leq &\epsilon/2 + C_1\Big(L^{-1}+\sqrt{\frac{\delta+\log(L+1)}{M}}\Big) + C_2C_\lambda \lambda\\
\leq& (1/4+C_2C_\lambda)\epsilon+ C_1\Big(L^{-1}+\sqrt{\frac{\delta+\log(L+1)}{M}}\Big).
\end{aligned}
$$
The result holds by setting $L$ and $M/\log L$ sufficiently large to ensure that
$$C_1\Big(L^{-1}+\sqrt{\frac{\delta+\log(L+1)}{M}}\Big)\leq C_1\Big(L^{-1}+\sqrt{\frac{2(1+\delta)\log(L+1)}{M}}\Big)\leq \epsilon/4.$$
\end{proof}

\section{Proofs of auxiliary results}
\label{sec:aux}

\subsection{Proof of Lemma \ref{lemma:gradDiff}}
\begin{proof}
Lemma \ref{lemma:Tbound} confirms that $\cnorm{T_\rho(H,t)}$ and $\cnorm{T_\nu(H,t)}$ are bounded uniformly by $B_T = B\exp(K(1+r+r^2))$. Considering the definition \eqref{eq:G2beta}, it suffices to demonstrate that
$$
\Norm{\E_{\mu}\Big[\nabla_\beta\mathrm{Tr}\Big( g(T_\rho(H,t),\beta)^\top  p_\rho(H,t)\Big)-\nabla_\beta\mathrm{Tr}\Big( g(T_\nu(H,t),\bbeta)^\top  p_\nu(H,t)\Big)\Big]}\leq J_1+J_2,
$$
where
$$
\begin{aligned}
&J_1:=\Norm{\E_{\mu}\Big[\nabla_\beta\mathrm{Tr}\Big( (g(T_\nu(H,t),\bbeta)-g(T_\rho(H,t),\beta))^\top  p_\nu(H,t)\Big)\Big]}\lesssim \norm{\beta-\bbeta},\\
&J_2:=\Norm{\E_{\mu}\Big[\nabla_\beta\mathrm{Tr}\Big( g(T_\rho(H,t),\beta)^\top  (p_\nu(H,t)-p_\rho(H,t))\Big)\Big]}\lesssim \exp(C_G W_1(\rho,\nu))-1.
\end{aligned}
$$
Here, the symbol $\lesssim$ hides dependencies on $N$, $D$, $r$, and the parameters of the assumptions.
To bound $J_1$, consider that
\begin{equation}
\label{eq:lemmaGradDiffJ11}
\begin{aligned}
J_1\leq& \sup_{i\in[N+1]}\E_\mu\Big[ \Norm{g(T_\nu(H,t),\bbeta)_{:,i}-g(T_\rho(H,t),\beta)_{:,i}}\sum_{i=1}^{N+1}\Norm{p_\nu(H,t)_{:,i}}\Big]\\
\leq& \sqrt{N+1}\sup_{i\in[N+1]}\E_\mu\Big[ \Norm{g(T_\nu(H,t),\bbeta)_{:,i}-g(T_\rho(H,t),\beta)_{:,i}}\fnorm{p_\nu(H,t)}\Big]\\
\lesssim&\sup_{i\in[N+1]}\E_\mu\Big[ \Norm{g(T_\nu(H,t),\bbeta)_{:,i}-g(T_\rho(H,t),\beta)_{:,i}}\Big]\\
\leq & \phi_{PT}(r,B_T)\cnorm{T_\rho(H,t)-T_\rho(H,t)}+\phi_PP(r,B_T)\norm{\beta-\bbeta}\\
\lesssim & W_1(\rho,\nu) + \norm{\beta-\bbeta}.
\end{aligned}
\end{equation}
The third inequality in Equation \eqref{eq:lemmaGradDiffJ11} is derived from Lemma \ref{lemma:gradientComponents2rhoBound}, while the fourth inequality relies on Assumption \ref{ass:growth2} (i) and (iii). Lastly, bounding $\cnorm{T_\rho(H,t)-T_\rho(H,t)}$ by $W_1(\rho,\nu)$ is achieved with Lemma \ref{lemma:Tdiff2rho}.

On the other hand, to bound $J_2$, we have
\begin{equation}
\label{eq:lemmaGradDiffJ21}
\begin{aligned}
J_2\leq& \sqrt{N+1}\sup_{i\in[N+1]}\E_\mu\Big[ \Norm{g(T_\rho(H,t),\beta)_{:,i}}\fnorm{p_\nu(H,t)-p_\rho(H,t)}\Big]\\
\leq&\sqrt{N+1}\cnorm{(T_\rho(H,t),\beta)}\fnorm{p_\nu(H,t)-p_\rho(H,t)}\\
\lesssim & \fnorm{p_\nu(H,t)-p_\rho(H,t)}.
\end{aligned}
\end{equation}
In \eqref{eq:lemmaGradDiffJ21}, the third inequality relies on Assumption \ref{ass:growth} (i).  Consequently, to establish $J_2 \lesssim \exp(C_G W_1(\rho,\nu))-1$, it is adequate to demonstrate that $\fnorm{p_\nu(H,t)-p_\rho(H,t)} \leq I_1+I_2 \lesssim W_1(\rho,\nu)$, where
$$
\begin{aligned}
I_1= &|\mathrm{Read}[T_\rho(H,1)-T_\nu(H,1)]|\Norm{\exp\Big(\int_t^1 \int_\beta \nabla_{\vec{T}} \vec{g(T_\nu(H,s),\beta)}\rho(\beta,s) d\beta ds\Big)},\\
I_2= &|\mathrm{Read}[T_\rho(H,1)]-y(H)|\\
&\Norm{\exp\Big(\int_t^1 \int_\beta \nabla_{\vec{T}} \vec{g(T_\rho(H,s),\beta)}\rho(\beta,s) d\beta ds\Big)-\exp\Big(\int_t^1 \int_\beta \nabla_{\vec{T}} \vec{g(T_\nu(H,s),\beta)}\rho(\beta,s) d\beta ds\Big)}.
\end{aligned}
$$
From Lemma \ref{lemma:Tdiff2rho}, it is trivial that $|\mathrm{Read}[T_\rho(H,1)-T_\nu(H,1)]|\leq \fnorm{T_\rho(H,1)-T_\nu(H,1)}\lesssim W_1(\rho,\nu)$. Thus, $I_1 \lesssim W_1(\rho,\nu)$ given the boundedness of $\norm{\nabla_{\vec{T}} \vec{g(T_\nu(H,t),\beta)}}$ as provided in Assumption \ref{ass:growth} (iii). To bound $I_2$, we have
\begin{equation}
\label{eq:lemmaGradDiffJ22}
\begin{aligned}
I_2\lesssim& \Norm{\exp\Big(\int_t^1 \int_\beta \nabla_{\vec{T}} \vec{g(T_\rho(H,s),\beta)}\rho(\beta,s) d\beta ds\Big)-\exp\Big(\int_t^1 \int_\beta \nabla_{\vec{T}} \vec{g(T_\nu(H,s),\beta)}\rho(\beta,s) d\beta ds\Big)}\\
\lesssim& \Norm{\exp\Big(\int_t^1 \int_\beta \nabla_{\vec{T}} \vec{g(T_\rho(H,s),\beta)}\rho(\beta,s) d\beta ds-\int_t^1 \int_\beta \nabla_{\vec{T}} \vec{g(T_\nu(H,s),\beta)}\rho(\beta,s) d\beta ds\Big)-I_{\mathrm{dim}\vec{T}}}\\
\lesssim& \exp\Big(\int_t^1 \int_\beta \Norm{\nabla_{\vec{T}} \vec{g(T_\rho(H,s),\beta)}-\nabla_{\vec{T}} \vec{g(T_\nu(H,s),\beta)}} \rho(\beta,s) d\beta ds\Big)-1\\
\lesssim& \exp\Big(\phi_{TT}(N,D,B_T,r)\fnorm{g(T_\rho(H,t),\beta)-g(T_\nu(H,t),\beta)} \Big)-1\\
\lesssim& \exp\Big(C_r\phi_{TT}(N,D,B_T,r)\phi_T(N,D,\sqrt{N+1}B_T)(1+r+r^2)W_1(\rho,\nu) \Big)-1
\end{aligned}
\end{equation}
for some constant $C_r$ dependent on the parameters listed in the result setting.
Here, the first inequality in \eqref{eq:lemmaGradDiffJ22} stems from  $|\mathrm{Read}[T_\rho(H,1)]-y(H)|\leq B_T+B$,  the second inequality is ensured by the boundedness of $\norm{\nabla_{\vec{T}} \vec{g(T_\nu(H,t),\beta)}}$ as stated in Assumption \ref{ass:growth} (iii), and the fourth inequality is provided by Assumption \ref{ass:growth2} (iv). The last inequality in \eqref{eq:lemmaGradDiffJ22} arises from Assumption \ref{ass:growth} (iii) and Lemma \ref{lemma:Tdiff2rho}. By combining Equation \eqref{eq:lemmaGradDiffJ22} with the bounds of $J_1$ and $I_1$, we deduce that $J_1+J_2 \lesssim \exp(C_G W_1(\rho,\nu))-1 + \norm{\beta-\bbeta}$ for some constant $C_G$ dependent on the parameters listed in the result, thereby completing the proof.
\end{proof}

\subsection{Proof of Lemma \ref{lemma:disGradientDiffBound}}
\begin{proof}
Lemma \ref{lemma:disTbound} demonstrates that $\cnorm{\widehat{T}_\Theta(H,t)}$ and $\cnorm{\widehat{T}_\bTheta(H,t)}$ are bounded by $B_T = B\exp(K(1+A+A^2))$ for any $H$ and $t\in [0,1]$.
We begin by bounding
$$
\begin{aligned}
\norm{\widehat{G}(\beta,\Theta,t)-\widehat{G}(\beta,\bTheta,t)}=&\Big\{\frac{1}{2}\E_\mu\Big[\nabla_\theta \mathrm{Tr}\Big(f(\widehat{T}_\Theta(H,t),\theta)-f(\widehat{T}_\bTheta(H,t),\theta) \Big)^\top  \widehat{p}_\Theta(H,t+\Delta t/2) \Big]^\top \\
+& \frac{1}{2}\E_\mu\Big[\nabla_\theta \mathrm{Tr}f(\widehat{T}_\bTheta(H,t),\theta) ^\top  \Big(\widehat{p}_\Theta(H,t+\Delta t/2)-\widehat{p}_\bTheta(H,t+\Delta t/2)\Big) \Big]^\top  ,\\
&\frac{1}{2}\E_\mu\Big[\nabla_w \mathrm{Tr}\Big(h(\widehat{T}_\Theta(H,t+\Delta t/2),w)-h(\widehat{T}_\bTheta(H,t+\Delta t/2),w) \Big)^\top  \widehat{p}_\Theta(H,t) \Big]^\top \\
+& \frac{1}{2}\E_\mu\Big[\nabla_w \mathrm{Tr}h(\widehat{T}_\bTheta(H,t+\Delta t/2),w) ^\top  \Big(\widehat{p}_\Theta(H,t)-\widehat{p}_\bTheta(H,t)\Big) \Big]^\top \Big\}^\top .
\end{aligned}
$$
To demonstrate that $\norm{\widehat{G}(\beta,\Theta,t)-\widehat{G}(\beta,\bTheta,t)}\leq C_{G}\frac{1}{ML}d(\Theta,\bTheta)$, it suffices to show
\begin{equation}
\label{eq:lemmaDisGradDiffEq1}
J_1:=\Norm{\E_\mu\Big[\nabla_\theta \mathrm{Tr}\Big(f(\widehat{T}_\Theta(H,t),\theta)-f(\widehat{T}_\bTheta(H,t),\theta) \Big)^\top  \widehat{p}_\Theta(H,t+\Delta t/2) \Big]}\leq C_{G}\frac{1}{ML}d(\Theta,\bTheta),  
\end{equation}
and
\begin{equation}
\label{eq:lemmaDisGradDiffEq2}
J_2:=\Norm{\E_\mu\Big[\nabla_\theta \mathrm{Tr}f(\widehat{T}_\bTheta(H,t),\theta) ^\top  \Big(\widehat{p}_\Theta(H,t+\Delta t/2)-\widehat{p}_\bTheta(H,t+\Delta t/2)\Big)\Big]}\leq C_{G}\frac{1}{ML}d(\Theta,\bTheta),  
\end{equation}
as the other part for $h$ and $w$ follows a similar proof approach.

To bound $J_1$, by Assumption \ref{ass:growth2} (i) we have
\begin{equation}
\label{eq:lemmaDisGradDiffEq3}
\begin{aligned}
J_1\leq& \sum_{i=1}^{N+1}\E_\mu\Big[ \Norm{\nabla_\theta f(\widehat{T}_\Theta(H,t),\theta)_{:,i}-f(\widehat{T}_\bTheta(H,t),\theta)_{:,i}}\Norm{\widehat{p}_\Theta(H,t+\Delta t/2)_{:,i}} \Big]\\
\leq& \E_\mu\Big[ \sup_{i\in[N+1]}\Norm{\nabla_\theta f(\widehat{T}_\Theta(H,t),\theta)_{:,i}-f(\widehat{T}_\bTheta(H,t),\theta)_{:,i}}\sum_{i=1}^{N+1}\Norm{\widehat{p}_\Theta(H,t+\Delta t/2)_{:,i}} \Big]\\
\leq& \phi_T(r,B_T)\cnorm{\widehat{T}_\Theta(H,t)-\widehat{T}_\bTheta(H,t)}\sqrt{N+1}\fnorm{\widehat{p}_\Theta(H,t+\Delta t/2)}\\
\lesssim & \phi_T(r,B_T)\sqrt{N+1}\fnorm{\widehat{p}_\Theta(H,t+\Delta t/2)}\frac{1}{ML}d(\Theta,\bTheta)\\
\lesssim & \frac{1}{ML}d(\Theta,\bTheta),
\end{aligned}
\end{equation}
where the fourth inequality utilizes Lemma \ref{lemma:disTdiff} and the last inequality uses Lemma \ref{lemma:disGradientCompBound}.

To bound $J_2$, by Assumption \ref{ass:growth} (ii), we have
\begin{equation}
\label{eq:lemmaDisGradDiffEq4}
\begin{aligned}
J_2\leq& \sqrt{N+1}\E_\mu\Big[\sup_{i\in[N+1]}\Norm{f(\widehat{T}_\bTheta(H,t),\theta)_{:,i}} \fnorm{\widehat{p}_\Theta(H,t+\Delta t/2)-\widehat{p}_\bTheta(H,t+\Delta t/2)}  \Big]\\
=&\sqrt{N+1}\phi(B_T)(1+r)\E_\mu\Big[\fnorm{\widehat{p}_\Theta(H,t+\Delta t/2)-\widehat{p}_\bTheta(H,t+\Delta t/2)}\Big].
\end{aligned}
\end{equation}
Hence, it suffices to show that $\fnorm{\widehat{p}_\Theta(H,t+\Delta t/2)-\widehat{p}_\bTheta(H,t+\Delta t/2)}\lesssim \frac{1}{ML}d(\Theta,\bTheta)$ to establish $J_2\leq\frac{1}{ML}d(\Theta,\bTheta)$. Recalling the formula $\widehat{p}_\Theta$ in \eqref{eq:phatSolutionF}, we have $\fnorm{\widehat{p}_\Theta(H,t+\Delta t/2)-\widehat{p}_\bTheta(H,t+\Delta t/2)}\leq I_1 + I_2$, where
$$
\begin{aligned} 
I_1=&(\mathrm{Read}[\widehat{T}_\Theta(H,1)-\widehat{T}_\bTheta(H,1)]\\
&\Big\{\prod_{\substack{(s-t)/\Delta t + 2\in[(1-t)/\Delta t]\\j\in[M]}}\Big(I_{\mathrm{dim}\vec{T}}+(\Delta t/2)M^{-1}\sum_{j=1}^M\nabla_{\vec{T}}\vec{f(\widehat{T}_\Theta(H,s),\theta_{s,j})} \Big)\\
&\prod_{\substack{(s-t)/\Delta t + 1\in[(1-t)/\Delta t]\\j\in[M]}}\Big(I_{\mathrm{dim}\vec{T}}+(\Delta t/2)M^{-1}\sum_{j=1}^M\nabla_{\vec{T}}\vec{h(\widehat{T}_\Theta(H,s+\Delta t/2),w_{s,j})} \Big)\Big\}_{DN+d+1,:}\\
\leq&\frac{1}{ML}d(\Theta,\bTheta)\\
&\Big\lVert\prod_{\substack{(s-t)/\Delta t + 2\in[(1-t)/\Delta t]\\j\in[M]}}\Big(I_{\mathrm{dim}\vec{T}}+(\Delta t/2)M^{-1}\sum_{j=1}^M\nabla_{\vec{T}}\vec{f(\widehat{T}_\Theta(H,s),\theta_{s,j})} \Big)\\
&\prod_{\substack{(s-t)/\Delta t + 1\in[(1-t)/\Delta t]\\j\in[M]}}\Big(I_{\mathrm{dim}\vec{T}}+(\Delta t/2)M^{-1}\sum_{j=1}^M\nabla_{\vec{T}}\vec{h(\widehat{T}_\Theta(H,s+\Delta t/2),w_{s,j})} \Big)\Big\rVert\\
\leq & \frac{1}{ML}d(\Theta,\bTheta)\\
&\exp\Big((\Delta t/2)\sum_{(s-t)/\Delta t + 1\in[(1-t)/\Delta t]}M^{-1}\sum_{j=1}^M \Norm{\nabla_{\vec{T}}\vec{f(\widehat{T}_\Theta(H,s),\theta_{s,j})}}\\
+&(\Delta t/2)\sum_{(s-t)/\Delta t + 1\in[(1-t)/\Delta t]}M^{-1}\sum_{j=1}^M \Norm{\nabla_{\vec{T}}\vec{h(\widehat{T}_\Theta(H,t),w_{s,j})}}\Big)\\
\leq& \frac{1}{ML}d(\Theta,\bTheta)\exp\Big(\phi_{T}(N,D,\sqrt{N+1}KB_T)(1+r+r^2)\Big)\\
\lesssim&\frac{1}{ML}d(\Theta,\bTheta),
\end{aligned}
$$
and 
$$
\begin{aligned} 
I_2=&|\mathrm{Read}[\widehat{T}_\bTheta(H,1)] + y(H)|\\
&\Big\{\prod_{\substack{(s-t)/\Delta t + 2\in[(1-t)/\Delta t]\\j\in[M]}}\Big(I_{\mathrm{dim}\vec{T}}+(\Delta t/2)M^{-1}\sum_{j=1}^M\nabla_{\vec{T}}\vec{f(\widehat{T}_\Theta(H,s),\theta_{s,j})} \Big)\\
&\prod_{\substack{(s-t)/\Delta t + 1\in[(1-t)/\Delta t]\\j\in[M]}}\Big(I_{\mathrm{dim}\vec{T}}+(\Delta t/2)M^{-1}\sum_{j=1}^M\nabla_{\vec{T}}\vec{h(\widehat{T}_\Theta(H,s+\Delta t/2),w_{s,j})} \Big)-\\
&\prod_{\substack{(s-t)/\Delta t + 2\in[(1-t)/\Delta t]\\j\in[M]}}\Big(I_{\mathrm{dim}\vec{T}}+(\Delta t/2)M^{-1}\sum_{j=1}^M\nabla_{\vec{T}}\vec{f(\widehat{T}_\bTheta(H,t),\btheta_{s,j})} \Big)\\
&\prod_{\substack{(s-t)/\Delta t + 1\in[(1-t)/\Delta t]\\j\in[M]}}\Big(I_{\mathrm{dim}\vec{T}}+(\Delta t/2)M^{-1}\sum_{j=1}^M\nabla_{\vec{T}}\vec{h(\widehat{T}_\bTheta(H,t+\Delta/2),\bw_{s,j})} \Big)\Big\}_{DN+d+1,:}\\
\leq & (B+B_T) \\
&\cdot \exp\Big((\Delta t/2)\sum_{(s-t)/\Delta t + 1\in[(1-t)/\Delta t]}M^{-1}\sum_{j=1}^M \max\{\Norm{\nabla_{\vec{T}}\vec{f(\widehat{T}_\Theta(H,s),\theta_{s,j})}},\Norm{\nabla_{\vec{T}}\vec{f(\widehat{T}_\bTheta(H,t),\btheta_{s,j})}\}}\\
+&(\Delta t/2)\sum_{(s-t)/\Delta t + 1\in[(1-t)/\Delta t]}M^{-1}\sum_{j=1}^M \max\{\Norm{\nabla_{\vec{T}}\vec{h(\widehat{T}_\Theta(H,t),w_{s,j})}},\Norm{\nabla_{\vec{T}}\vec{h(\widehat{T}_\bTheta(H,t),\bw_{s,j})}\}}\Big)\\
&\cdot (\Delta t/2)\sum_{\substack{(s-t)/\Delta t + 1\in[(1-t)/\Delta t]\\j\in[M]}}\Big(\Norm{M^{-1}\sum_{j=1}^M\nabla_{\vec{T}}\vec{f(\widehat{T}_\Theta(H,s),\theta_{s,j})}-M^{-1}\sum_{j=1}^M\nabla_{\vec{T}}\vec{f(\widehat{T}_\bTheta(H,t),\btheta_{s,j})}}\\
&+\Norm{M^{-1}\sum_{j=1}^M\nabla_{\vec{T}}\vec{h(\widehat{T}_\Theta(H,t),w_{s,j})}-M^{-1}\sum_{j=1}^M\nabla_{\vec{T}}\vec{h(\widehat{T}_\bTheta(H,t),\bw_{s,j})}}\Big)\\
\leq& (B+B_T)\exp\Big(\phi_{T}(N,D,\sqrt{N+1}KB_T)(1+r+r^2)\Big)\phi_{TP}(N,D,\sqrt{N+1}B_T,r)\frac{1}{ML}d(\Theta,\bTheta)\\
\lesssim & \frac{1}{ML}d(\Theta,\bTheta).
\end{aligned}
$$
where the first inequality applies Lemma \ref{lemma:matrixProd}, and the second inequality relies on Assumption \ref{ass:growth} (iii) and Assumption \ref{ass:growth2} (ii). Therefore, we conclude that $I_2\lesssim \frac{1}{ML}d(\Theta,\bTheta)$. By combining the bounds of $I_1$ and $I_2$, we observe that Equation \eqref{eq:lemmaDisGradDiffEq2} holds, thereby establishing $\norm{\widehat{G}(\beta,\Theta,t)-\widehat{G}(\beta,\bTheta,t)}\leq C_{G}\frac{1}{ML}d(\Theta,\bTheta)$ for some $C_G$ dependent on $N$, $d$, $r$, and the parameters of the assumptions.

It remains to prove that $\norm{\widehat{G}(\beta,\bTheta,t)-\widehat{G}(\bbeta,\bTheta,t)}\leq C_{G} (1+\lambda)\norm{\beta-\bbeta}$. Note that
$$
\begin{aligned}
\norm{\widehat{G}(\beta,\bTheta,t)-\widehat{G}(\bbeta,\bTheta,t)}\leq& \lambda\norm{\beta-\bbeta}\\
+&\Big\{\frac{1}{2}\E_\mu\Big[\nabla_\theta \mathrm{Tr}\Big(f(\widehat{T}_\bTheta(H,t),\theta)-f(\widehat{T}_\bTheta(H,t),\btheta) \Big)^\top  \widehat{p}_\Theta(H,t+\Delta t/2) \Big]^\top ,\\
&\frac{1}{2}\E_\mu\Big[\nabla_w \mathrm{Tr}\Big(h(\widehat{T}_\bTheta(H,t+\Delta t/2),w)-h(\widehat{T}_\bTheta(H,t+\Delta t/2),\bw) \Big)^\top  \widehat{p}_\Theta(H,t) \Big]^\top \Big\}^\top .
\end{aligned}
$$
Therefore, we only need to show that
$$\Norm{\E_\mu\Big[\nabla_\theta \mathrm{Tr}\Big(f(\widehat{T}_\bTheta(H,t),\theta)-f(\widehat{T}_\bTheta(H,t),\btheta) \Big)^\top  \widehat{p}_\Theta(H,t+\Delta t/2) \Big]}\leq C_G \norm{\theta-\btheta},$$
and
$$\Norm{\E_\mu\Big[\nabla_w \mathrm{Tr}\Big(h(\widehat{T}_\bTheta(H,t),w)-h(\widehat{T}_\bTheta(H,t),\bw) \Big)^\top  \widehat{p}_\Theta(H,t) \Big]}\leq C_G \norm{w-\bw} ,$$
to obtain $\norm{\widehat{G}(\beta,\bTheta,t)-\widehat{G}(\bbeta,\bTheta,t)}\leq C_G(1+\lambda)\norm{\beta-\bbeta}$. Here, we only establish the inequality above for $f$ and $\theta$, as the proof of the other inequality follows a similar pattern. Note that by Assumption \ref{ass:growth2} (iii), we have
$$
\begin{aligned}
&\Norm{\E_\mu\Big[\nabla_\theta \mathrm{Tr}\Big(f(\widehat{T}_\bTheta(H,t),\theta)-f(\widehat{T}_\bTheta(H,t),\btheta) \Big)^\top  \widehat{p}_\Theta(H,t+\Delta t/2) \Big]}\\
\leq& \sup_{i\in[N+1]}\E_\mu\Big[\Norm{\nabla_\theta \Big(f(\widehat{T}_\bTheta(H,t),\theta)-f(\widehat{T}_\bTheta(H,t),\btheta) \Big)_{:,i}}  \Big]\sqrt{N+1}\fnorm{\widehat{p}_\Theta(H,t+\Delta t/2)}\\
\leq& \phi_{PP}(r,B_T)\norm{\theta-\btheta}\sqrt{N+1}\fnorm{\widehat{p}_\Theta(H,t+\Delta t/2)}\\
\lesssim&\norm{\theta-\btheta}.
\end{aligned}
$$
where the last inequality applies Lemma \ref{lemma:disGradientCompBound}. Therefore, we conclude that $\norm{\widehat{G}(\beta,\bTheta,t)-\widehat{G}(\bbeta,\bTheta,t)}\leq C_{G} (1+\lambda)\norm{\beta-\bbeta}$, completing the proof.
\end{proof}

\subsection{Proof of Lemma \ref{lemma:oracleGradApprox}}
\begin{proof}
Lemmas \ref{lemma:Tbound} and \ref{lemma:disTbound} establish that $\max{\cnorm{T_\rho(H,t)},\cnorm{\widehat{T}_\Theta(H,t)}}\leq B_T:=B\exp(K(1+r+r^2))$. According to Lemma \ref{lemma:oracleApprox}, there exists an event $E$ with $\pr(E)\geq 1-\exp(-\delta)$ such that under $E$, we have
\begin{equation}
\label{eq:oracleGradbach}
\fnorm{\widehat{T}_\Theta(H,t)-T_\rho(H,t)}\lesssim L^{-1}+\sqrt{\frac{\delta+\log(L+1)}{M}}.
\end{equation}
for any $H$ and $t=0,\Delta t,\dots, (L-1)\Delta t, 1$. Following the same proof procedure as in Lemma \ref{lemma:oracleApprox}, with $\rho$ replaced by $\widehat{\rho}$, and bounding only $J_1$ and $J_3$ in the proof (as there is no need to utilize Hoeffding's inequality to bridge the difference due to a finite width $M$), we could obtain the bound
\begin{equation}
\label{eq:oracleGradbach2}
\fnorm{\widehat{T}_\Theta(H,t)-T_{\widehat{\rho}}(H,t)}\lesssim L^{-1}.   
\end{equation}
We present the proof only for the case involving $\rho$. The bounding of $\fnorm{\widehat{G}(\beta,\Theta,t)-G(\beta,\widehat{\rho},t)}$ can be derived analogously by substituting $\rho$ with $\widehat{\rho}$ using Equation \eqref{eq:oracleGradbach2}, and skipping the process of bounding $\norm{D_1-D_2}$ where $D_1$ and $D_2$ will be defined later. The bounding of $\Norm{G(\beta,\widehat{\rho},t)-G(\beta,\rho,t)}$ can be straightforwardly achieved by combining the results obtained from the other two cases.

By the definitions of the gradients in Equations \eqref{eq:G2beta} and \eqref{eq:Ghat2beta}, we observe that $\Norm{\widehat{G}(\beta,\Theta,t)-G(\beta,\rho,t)}\lesssim \Norm{\widehat{G}_f(\theta,\Theta,t)-G_f(\theta,\rho,t)}+\Norm{\widehat{G}_h(w,\Theta,t)-G_h(w,\rho,t)}$. We will focus on showing that $\Norm{2(\widehat{G}_f(\theta,\Theta,t)-G_f(\theta,\rho,t))}\lesssim L^{-1}+\sqrt{\frac{\delta+\log(L+1)}{M}}$, as the other part of the proof follows a similar approach.

Let's define the following quantities $A_1,A_2,B_1,B_2,C_1,C_2$:
$$
\begin{aligned}
A_1:=&\nabla_\theta\vec{f(\widehat{T}_\Theta(H,t),\theta)},\quad A_2:=\nabla_\theta\vec{f(T_\rho(H,t),\theta)}\\
B_1:=& \mathrm{Read}[\widehat{T}_\Theta(H,1)-y(H)],\quad B_2:= \mathrm{Read}[T_\rho(H,1)-y(H)]\\
C_1:=& \Big\{\prod_{\substack{(s-t)/\Delta t + 2\in[(1-t)/\Delta t]\\j\in[M]}}\Big(I_{\mathrm{dim}\vec{T}}+(\Delta t/2)M^{-1}\sum_{j=1}^M\nabla_{\vec{T}}\vec{f(\widehat{T}_\Theta(H,s),\theta_{s,j})} \Big)\\
&\prod_{\substack{(s-t)/\Delta t + 1\in[(1-t)/\Delta t]\\j\in[M]}}\Big(I_{\mathrm{dim}\vec{T}}+(\Delta t/2)M^{-1}\sum_{j=1}^M\nabla_{\vec{T}}\vec{h(\widehat{T}_\Theta(H,s+\Delta t/2),w_{s,j})} \Big)\Big\}_{DN+d+1,:}\\
C_2:=&\exp\Big(\int_t^1 \int_\beta \nabla_{\vec{T}} \vec{g(T_\rho(H,t),\beta)}\rho(\beta,t) d\beta dt\Big)_{DN+d+1,:}
\end{aligned}
$$
The universal boundedness of $\norm{A_1}$ and $\norm{A_2}$ is implied by Assumption \ref{ass:growth} (ii), while the universal boundedness of $|B_1|$ and $|B_2|$ is implied by Assumption \ref{ass:dataBound}. Additionally, $\norm{C_1}$ and $\norm{C_2}$ can be bounded via Assumption \ref{ass:growth} (iii), and one can refer to the proofs of Lemmas \ref{lemma:gradientComponents2rhoBound} and \ref{lemma:disGradientCompBound} for detailed explanations.

From \eqref{eq:pSolution} and \eqref{eq:phatSolutionF}, we could rewrite $J:=\Norm{2(\widehat{G}_f(\theta,\Theta,t)-G_f(\theta,\rho,t))}$ as
$$
\begin{aligned}
J=\norm{A_1B_1C_1-A_2B_2C_2}\leq& \norm{(A_1-A_2)B_2C_2}+\norm{A_1(B_1-B_2)C_2} +\norm{A_1B_1(C_1-C_2)}\\
\leq &\norm{A_1-A_2}\norm{B_2}\norm{C_2}+\norm{A_1}\norm{B_1-B_2}\norm{C_2} +\norm{A_1}\norm{B_1}\norm{C_1-C_2}\\
\lesssim& \norm{A_1-A_2}+\norm{B_1-B_2}+\norm{C_1-C_2}.
\end{aligned}
$$
We claim that to obtain the result, it suffices to show that
\begin{enumerate}
    \item[i.] $\norm{A_1-A_2}\lesssim L^{-1}+\sqrt{\frac{\delta+\log(L+1)}{M}}$ under event $E$.
    \item[ii.] $\norm{B_1-B_2}\lesssim L^{-1}+\sqrt{\frac{\delta+\log(L+1)}{M}}$ under event $E$.
    \item[iii.] There exists some event $E_2$ with $\pr(E_2)\geq 1-\exp(-\delta)$ such that $\norm{C_1-C_2}\lesssim L^{-1}+\sqrt{\frac{\delta+\log(L+1)}{M}}$ under $E\cap E_2$.
\end{enumerate}
This is because if we can establish the above statements, then under the event $E\cap E_2$ with $\pr(E\cap E_2)\geq 1-2\exp(-\delta)$, we obtain $J=\Norm{2(\widehat{G}_f(\theta,\Theta,t)-G_f(\theta,\rho,t))}\lesssim L^{-1}+\sqrt{\frac{\delta+\log(L+1)}{M}}$. Given the similarity in proof for $\Norm{2(\widehat{G}_h(w,\Theta,t)-G_h(w,\rho,t))}$, we deduce that with probability at least $1-4\exp(-\delta)$ with respect to the parameter initialization $\Theta^{(0)}$, we have $\Norm{\widehat{G}(\beta,\Theta,t)-G(\beta,\rho,t)}\lesssim L^{-1}+\sqrt{\frac{\delta+\log(L+1)}{M}}.$ The remainder of the proof focuses on bounding the quantities in statements (i)-(iii).

\noindent{\textbf{Proof for statement (i):}} By Assumption \ref{ass:growth2} (iv), we have $\norm{A_1-A_2}\leq\phi_{TT}(N,D,\sqrt{N+1}B_T,r)\fnorm{\widehat{T}_\Theta(H,t)-T_\rho(H,t)}\lesssim L^{-1}+\sqrt{\frac{\delta+\log(L+1)}{M}}$ under event $E$.

\noindent{\textbf{Proof for statement (ii):}} Under event $E$, it is obvious to see $\norm{B_1-B_2}\leq \fnorm{\widehat{T}_\Theta(H,t)-T_\rho(H,t)}\lesssim L^{-1}+\sqrt{\frac{\delta+\log(L+1)}{M}}$.

\noindent{\textbf{Proof for statement (iii):}} We further define the following quantities
$$
\begin{aligned}
D_1:=&\prod_{\substack{(s-t)/\Delta t + 2\in[(1-t)/\Delta t]\\j\in[M]}}\Big(I_{\mathrm{dim}\vec{T}}+(\Delta t/2)M^{-1}\sum_{j=1}^M\nabla_{\vec{T}}\vec{f(\widehat{T}_\Theta(H,s),\theta_{s,j})} \Big)\\
&\prod_{\substack{(s-t)/\Delta t + 1\in[(1-t)/\Delta t]\\j\in[M]}}\Big(I_{\mathrm{dim}\vec{T}}+(\Delta t/2)M^{-1}\sum_{j=1}^M\nabla_{\vec{T}}\vec{h(\widehat{T}_\Theta(H,s+\Delta t/2),w_{s,j})} \Big)\\
D_2:=&\prod_{\substack{(s-t)/\Delta t + 2\in[(1-t)/\Delta t]}}\Big(I_{\mathrm{dim}\vec{T}}+(\Delta t/2)\int_\beta \nabla_{\vec{T}}\vec{f(\widehat{T}_\Theta(H,s),\theta)}\rho(\beta|s)d\beta \Big)\\
&\prod_{\substack{(s-t)/\Delta t + 1\in[(1-t)/\Delta t]}}\Big(I_{\mathrm{dim}\vec{T}}+(\Delta t/2)\int_\beta\nabla_{\vec{T}}\vec{h(\widehat{T}_\Theta(H,s+\Delta t/2),w)}\rho(\beta|s)d\beta \Big)\\
D_3:=&\exp\Big(\sum_{\substack{(s-t)/\Delta t + 2\in[(1-t)/\Delta t]}} (\Delta t/2)\int_\beta \nabla_{\vec{T}} \vec{f(\widehat{T}_\Theta(H,s),\theta)}\rho(\beta,s) d\beta \\
&+\sum_{\substack{(s-t)/\Delta t + 1\in[(1-t)/\Delta t]}} (\Delta t/2)\int_\beta \nabla_{\vec{T}} \vec{h(\widehat{T}_\Theta(H,s+\Delta t/2),w)}\rho(\beta,s) d\beta \Big)\\
D_4:=&\exp\Big(\int_t^1 \int_\beta \nabla_{\vec{T}} \vec{g(T_\rho(H,t),\beta)}\rho(\beta,t) d\beta dt \Big)
\end{aligned}
$$
Note that $\norm{C_1-C_2}\leq\norm{D_1-D_2}+\norm{D_2-D_3}+\norm{D_3-D_4}$.
Assumption \ref{ass:growth} (iii) indicates that for any $s=0,\Delta t,\dots, (L-1)\Delta$ and $j=1,\dots,M$, we have  $$\max\{\norm{\nabla_{\vec{T}}\vec{f(\widehat{T}_\Theta(H,s),\theta_{s,j})}}, \norm{\nabla_{\vec{T}}\vec{h(\widehat{T}_\Theta(H,s+\Delta t/2),w_{s,j})}}\}\leq B_J$$ for some constant $B_J$ dependent on the parameters listed in the result. This implies that each column of $\nabla_{\vec{T}}\vec{f(\widehat{T}_\Theta(H,s),\theta_{s,j})}$ or $\nabla_{\vec{T}}\vec{h(\widehat{T}_\Theta(H,s+\Delta t/2),w_{s,j})}$ has $l_2$ norm upper bounded by $B_J$ as well. 
Applying Hoeffding's inequality to each column of $\nabla_{\vec{T}}\vec{f(\widehat{T}_\Theta(H,s),\theta_{s,j})}$ and $\nabla_{\vec{T}}\vec{h(\widehat{T}_\Theta(H,s+\Delta t/2),w_{s,j})}$,  and subsequently calculating the union bound across all columns yields:
\begin{equation}
\label{eq:lemmaOracleGradEq1}
\begin{aligned}
&\pr\Big(\Norm{M^{-1}\sum_{j=1}^M\nabla_{\vec{T}}\vec{f(\widehat{T}_\Theta(H,s),\theta_{s,j})}-\int_\beta \nabla_{\vec{T}}\vec{f(\widehat{T}_\Theta(H,s),\theta_{s,j})}\rho(\beta|s)d\beta}\geq \sqrt{(N+1)D} z\Big)\\
\leq& 2(N+1)D\exp(-\frac{z^2}{2B_J^2}M)    
\end{aligned}   
\end{equation}
and
\begin{equation}
\label{eq:lemmaOracleGradEq2}
\begin{aligned}
&\pr\Big(\Norm{M^{-1}\sum_{j=1}^M\nabla_{\vec{T}}\vec{h(\widehat{T}_\Theta(H,t),w_{s,j})}-\int_\beta \nabla_{\vec{T}}\vec{h(\widehat{T}_\Theta(H,t),w_{s,j})}\rho(\beta|s)d\beta}\geq \sqrt{(N+1)D} z\Big)\\
\leq& 2(N+1)D\exp(-\frac{z^2}{2B_J^2}M)    
\end{aligned}  
\end{equation}
for any $z>0$. For \eqref{eq:lemmaOracleGradEq1} and \eqref{eq:lemmaOracleGradEq2}, we further the consider the union bound across all $s=0,\Delta t,\dots, (L-1)\Delta$, and let $z=B_J\sqrt{2M(\delta + \log(2(N+1)DL))}$, which implies that with probability at least $1-\exp(-\delta)$ with respect to the parameter initialization $\Theta^{(0)}$, we have
$$\norm{M^{-1}\sum_{j=1}^M\nabla_{\vec{T}}\vec{f(\widehat{T}_\Theta(H,s),\theta_{s,j})}-\int_\beta \nabla_{\vec{T}}\vec{f(\widehat{T}_\Theta(H,s),\theta_{s,j})}\rho(\beta|s)d\beta}$$
and
$$\norm{M^{-1}\sum_{j=1}^M\nabla_{\vec{T}}\vec{h(\widehat{T}_\Theta(H,t),w_{s,j})}-\int_\beta \nabla_{\vec{T}}\vec{h(\widehat{T}_\Theta(H,t),w_{s,j})}\rho(\beta|s)d\beta}$$
bounded by $B_J\sqrt{2M(N+1)D(\delta + \log(2(N+1)DL))}\leq C_J\sqrt{\frac{\delta+\log(L+1)}{M}}$ for any $s=0,\Delta t,\dots,(L-1)\Delta$. Here, $C_J$ is some constant that only depends on $N,D,r$ and the parameters of the assumptions. Denote this probability event by $E_2$, and we have $\pr(E_2)\geq 1-\exp(-\delta)$. Under $E_2$, by Lemma \ref{lemma:matrixProd}, we have
\begin{equation}
\label{eq:lemmaOracleGradD12}
\begin{aligned}
&\norm{D_1-D_2}\\\leq &\frac{1}{2L}\Big(\sum_{\substack{(s-t)/\Delta t + 2\in[(1-t)/\Delta t]}} \Norm{M^{-1}\sum_{j=1}^M\nabla_{\vec{T}}\vec{f(\widehat{T}_\Theta(H,s),\theta_{s,j})}-\int_\beta \nabla_{\vec{T}} \vec{f(\widehat{T}_\Theta(H,s),\theta)}\rho(\beta,s) d\beta }\\
+&\sum_{\substack{(s-t)/\Delta t + 1\in[(1-t)/\Delta t]}} \Norm{M^{-1}\sum_{j=1}^M\nabla_{\vec{T}}\vec{h(\widehat{T}_\Theta(H,s),w_{s,j})}-\int_\beta \nabla_{\vec{T}} \vec{h(\widehat{T}_\Theta(H,s+\Delta t/2),w)}\rho(\beta,s) d\beta }\Big)\\
\leq & C_J\sqrt{\frac{\delta+\log(L+1)}{M}}.
\end{aligned}
\end{equation}
For any $s=0,\Delta t,\dots,(L-1)\Delta$, we define $$A_{s,j}=(\Delta t/2)\int_\beta \nabla_{\vec{T}}\vec{f(\widehat{T}_\Theta(H,s),\theta)}\rho(\beta|s)d\beta$$ and 
$$B_{s,j}=(\Delta t/2)\int_\beta \nabla_{\vec{T}}\vec{h(\widehat{T}_\Theta(H,s+\Delta t/2),w)}\rho(\beta|s)d\beta.$$
Since Assumption \ref{ass:growth} (iii) indicates that $\max\{\norm{A_{s,j}},\norm{B_{s,j}}\}\lesssim \Delta t=L^{-1}$, we have
$$\Norm{\exp(A_{s,j})-I-A_{s,j}}\lesssim \norm{A_{s,j}}^2,\quad \Norm{\exp(B_{s,j})-I-B_{s,j}}\lesssim \norm{B_{s,j}}^2.$$
Applying Lemma \ref{lemma:matrixProd} once more, we have
\begin{equation}
\label{eq:lemmaOracleGradD23}
\norm{D_2-D_3}\leq \sum_{\substack{(s-t)/\Delta t + 2\in[(1-t)/\Delta t]}} \Norm{A_{s,j}}^2+\sum_{\substack{(s-t)/\Delta t + 1\in[(1-t)/\Delta t]}} \Norm{B_{s,j}}^2\lesssim L^{-1}.
\end{equation}
Since Assumption \ref{ass:growth} (iii) ensures the boundedness of $\norm{D_4}$, we have
\begin{equation}
\label{eq:lemmaOracleGradD340}
\begin{aligned}
\norm{D_3-D_4}\lesssim& \Big\lVert\exp\Big(\sum_{\substack{(s-t)/\Delta t + 2\in[(1-t)/\Delta t]}} (\Delta t/2)\int_\beta \nabla_{\vec{T}} \vec{f(\widehat{T}_\Theta(H,s),\theta)}\rho(\beta,s) d\beta \\
&+\sum_{\substack{(s-t)/\Delta t + 1\in[(1-t)/\Delta t]}} (\Delta t/2)\int_\beta \nabla_{\vec{T}} \vec{h(\widehat{T}_\Theta(H,s+\Delta t/2),w)}\rho(\beta,s) d\beta \\
&-\int_t^1 \int_\beta \nabla_{\vec{T}} \vec{g(T_\rho(H,t),\beta)}\rho(\beta,t) d\beta dt\Big) -1\Big\rVert.
\end{aligned}
\end{equation}
Therefore, to show that $\norm{D_3-D_4}\lesssim L^{-1}+\sqrt{\frac{\delta+\log(L+1)}{M}},$ it suffices to show that
$$\begin{aligned}
J_{34}:=&\Big\lVert\sum_{\substack{(s-t)/\Delta t + 2\in[(1-t)/\Delta t]}} (\Delta t/2)\int_\beta \nabla_{\vec{T}} \vec{f(\widehat{T}_\Theta(H,s),\theta)}\rho(\beta,s) d\beta \\
&+\sum_{\substack{(s-t)/\Delta t + 1\in[(1-t)/\Delta t]}} (\Delta t/2)\int_\beta \nabla_{\vec{T}} \vec{h(\widehat{T}_\Theta(H,s+\Delta t/2),w)}\rho(\beta,s) d\beta \\
&-\int_t^1 \int_\beta \nabla_{\vec{T}} \vec{g(T_\rho(H,t),\beta)}\rho(\beta,t) d\beta dt\Big\rVert\lesssim L^{-1}+\sqrt{\frac{\delta+\log(L+1)}{M}}.
\end{aligned}$$
By Assumption \ref{ass:growth2} (iv) and Lemma \ref{lemma:oracleApprox}, we have
$$\Norm{\nabla_{\vec{T}} \vec{f(\widehat{T}_\Theta(H,s),\theta)}-\nabla_{\vec{T}} \vec{f(T_\rho(H,s),\theta)}}\lesssim \fnorm{\widehat{T}_\Theta(H,s)-T_\rho(H,s)}\lesssim L^{-1}+\sqrt{\frac{\delta+\log(L+1)}{M}},$$
and
$$
\begin{aligned}
&\Norm{\nabla_{\vec{T}} \vec{h(\widehat{T}_\Theta(H,s+\Delta t/2),w)}-\nabla_{\vec{T}} \vec{h(T_\rho(H,s),w)}}\lesssim\fnorm{\widehat{T}_\Theta(H,s+\Delta t/2)-T_\rho(H,s)}\\
\lesssim&\fnorm{\widehat{T}_\Theta(H,s+\Delta t/2)-\widehat{T}_\Theta(H,s)}+\fnorm{\widehat{T}_\Theta(H,s)-T_\rho(H,s)}\\
\lesssim&L^{-1}+\sqrt{\frac{\delta+\log(L+1)}{M}}+L^{-1}\fnorm{M^{-1}\sum_{j=1}^M h(\widehat{T}_\Theta(H,s),w_{s,j})}\\
\lesssim& L^{-1}+\sqrt{\frac{\delta+\log(L+1)}{M}},
\end{aligned}
$$
where the last inequality employs Assumption \ref{ass:growth} (i). Therefore, we conclude that
\begin{equation}
\label{eq:lemmaOracleGradD34}
\begin{aligned}
\norm{D_3-D_4}\lesssim J_{34}\lesssim& L^{-1}+\sqrt{\frac{\delta+\log(L+1)}{M}}+\norm{(\Delta t/2)\int_\beta \nabla_{\vec{T}} \vec{f(T_\rho(H,t),\theta)}\rho(\beta,t) d\beta}\\
&\Big\lVert\sum_{\substack{(s-t)/\Delta t + 1\in[(1-t)/\Delta t]}} (\Delta t/2)\int_\beta \nabla_{\vec{T}} \vec{f(T_\rho(H,s),\theta)}\rho(\beta,s) d\beta \\
&+\sum_{\substack{(s-t)/\Delta t + 1\in[(1-t)/\Delta t]}} (\Delta t/2)\int_\beta \nabla_{\vec{T}} \vec{h(T_\rho(H,s)(H,s),w)}\rho(\beta,s) d\beta \\
&-\int_t^1 \int_\beta \nabla_{\vec{T}} \vec{g(T_\rho(H,t),\beta)}\rho(\beta,t) d\beta dt\Big\rVert\\
\lesssim& L^{-1}+\sqrt{\frac{\delta+\log(L+1)}{M}}+\sup_{|s_1-s_2|\leq \Delta t}\Norm{\nabla_{\vec{T}} \vec{g(T_\rho(H,s_1),\beta)}-\nabla_{\vec{T}} \vec{g(T_\rho(H,s_2),\beta)}}\\
\lesssim& L^{-1}+\sqrt{\frac{\delta+\log(L+1)}{M}}+\sup_{|s_1-s_2|\leq \Delta t}\fnorm{T_\rho(H,s_1)-T_\rho(H,s_2)}\\
\lesssim& L^{-1}+\sqrt{\frac{\delta+\log(L+1)}{M}}
\end{aligned}
\end{equation}
where the third inequality uses Assumption \ref{ass:growth2} (iv), and the last inequality relies on the Lipschitz continuity as demonstrated in Proposition \ref{prop:odeSol}. Combining \eqref{eq:lemmaOracleGradD12}, \eqref{eq:lemmaOracleGradD23}, and \eqref{eq:lemmaOracleGradD34} yields $\norm{C_1-C_2}\leq \norm{D_1-D_2}+\norm{D_2-D_3}+\norm{D_3-D_4}\lesssim L^{-1}+\sqrt{\frac{\delta+\log(L+1)}{M}}$.
\end{proof}

\subsection{Proof of Lemma \ref{lemma:gradient2rhoBound}}
\begin{proof}
Define $F_\rho(T_{\bar\rho},t)=\int_\beta \vec{g(T_{\bar\rho}(H,t),\beta)}\rho(\beta,t)d\beta$ for any $\rho,\bar{\rho}\in\ca P^2$.
From Taylor's expansion, we have
$$
\begin{aligned}
\vec{\dot{T}_{\rho_\eta}(H,t)-\dot{T}_{\rho}(H,t)}&=F_\rho(T_{\rho_\eta},t)-F_\rho(T_{\rho},t)+F_{\rho_\eta}(T_{\rho_\eta},t)-F_\rho(T_{\rho_\eta},t)\\
&=\Big(\int_\beta \nabla_{\vec{T}} \vec{g(T_\rho(H,t),\beta)}\rho(\beta,t) d\beta \Big)\Big(\vec{T_{\rho_\eta}(H,t)-T_{\rho}(H,t)}\Big)\\
&+\eta \int_\beta \vec{g(T_{\rho_\eta}(H,t),\beta)}(\nu-\rho)(\beta,t)d\beta+o(\eta)\\
&=\Big(\int_\beta \nabla_{\vec{T}} \vec{g(T_\rho(H,t),\beta)}\rho(\beta,t) d\beta \Big)\Big(\vec{T_{\rho_\eta}(H,t)-T_{\rho}(H,t)}\Big)\\
&+\eta \int_\beta \vec{g(T_{\rho}(H,t),\beta)}(\nu-\rho)(\beta,t)d\beta\\
&+\eta\Big(\int_\beta \nabla_{\vec{T}} \vec{g(T_\rho(H,t),\beta)}(\nu-\rho)(\beta,t) d\beta \Big)\Big(\vec{T_{\rho_\eta}(H,t)-T_{\rho}(H,t)}\Big)+o(\eta)\\
&=\Big(\int_\beta \nabla_{\vec{T}} \vec{g(T_\rho(H,t),\beta)}\rho(\beta,t) d\beta \Big)\Big(\vec{T_{\rho_\eta}(H,t)-T_{\rho}(H,t)}\Big)\\
&+\eta \int_\beta \vec{g(T_{\rho}(H,t),\beta)}(\nu-\rho)(\beta,t)d\beta+o(\eta),
\end{aligned}
$$
of which the last equality holds as Lemma \ref{lemma:Tdiff2rho} shows that $\fnorm{T_{\rho_\eta}(H,t)-T_{\rho}(H,t)}=O(W_2(\rho_\eta,\rho))=O(\eta)$, where we hide the constant dependence on $B,K,N,r$. Therefore, we have
\begin{equation}
\label{eq:lemmaGradient2rhoBoundODE}
\begin{aligned}
&\frac{d}{dt}\Big\{\exp\Big(-\int_0^\top \int_\beta \nabla_{\vec{T}} \vec{g(T_\rho(H,s),\beta)}\rho(\beta,s) d\beta)\Big)\Big(\vec{T_{\rho_\eta}(H,t)-T_{\rho}(H,t)}\Big) \Big\}\\
=&\exp\Big(-\int_0^\top \int_\beta \nabla_{\vec{T}} \vec{g(T_\rho(H,s),\beta)}\rho(\beta,s) d\beta)\Big)\\
&\Big\{\vec{\dot{T}_{\rho_\eta}(H,t)-\dot{T}_{\rho}(H,t)}-\Big(\int_\beta \nabla_{\vec{T}} \vec{g(T_\rho(H,t),\beta)}\rho(\beta,t) d\beta \Big)\Big(\vec{T_{\rho_\eta}(H,t)-T_{\rho}(H,t)}\Big)\Big\}\\
=&\exp\Big(-\int_0^\top \int_\beta \nabla_{\vec{T}} \vec{g(T_\rho(H,s),\beta)}\rho(\beta,s) d\beta)\Big)\Big\{\eta \int_\beta \vec{g(T_{\rho}(H,t),\beta)}(\nu-\rho)(\beta,t)d\beta+o(\eta)\Big\},
\end{aligned}
\end{equation}
which leads to \eqref{eq:gradient2rhoBound}.
\end{proof}

\subsection{Proof of Lemma \ref{lemma:pRho2rho}}
\begin{proof}
Fix $(\beta,t)\in P_r$.
Lemma \ref{lemma:Tdiff2rho} implies that
$$\fnorm{p_\rho(H,1)-p_\nu(H,1)}=|\mathrm{Read}(T_\rho(H,1)-\mathrm{Read}(T_\nu(H,1)|\leq C_r W_1(\rho,\nu)\leq C_r(r+1)\norm{\rho-\nu}_1$$ for some constant $C_r$ dependent on the parameters listed in the result. Our goal is to regulate the difference between $\dot{p}_\rho(H,t)$ and $\dot{p}_\nu(H,t)$ to control the the propagation of $\norm{p_\rho(H,\cdot)-p_\nu(H,\cdot)}$. 
Note that by \eqref{eq:pSolution} and Assumption \ref{ass:growth},
\begin{equation}
\label{eq:lemmaPRho2rhoeq1}
\begin{aligned}
\frac{d}{dt}\fnorm{p_\rho(H,t)-p_\nu(H,t)}\leq& \fnorm{\dot{p}_\rho(H,t)-\dot{p}_\nu(H,t)}\\
=&\Big\lVert \vec{p_\rho(H,t)}^\top \int_\beta \nabla_{\vec{T}}\vec{g(T_\rho(H,t),\beta)}\rho(\beta,t)d\beta\\
&-\vec{p_\nu(H,t)}^\top \int_\beta \nabla_{\vec{T}}\vec{g(T_\nu(H,t),\beta)}\nu(\beta,t)d\beta\Big\rVert\\
\leq& \fnorm{p_\rho(H,t)-p_\nu(H,t)}\int_\beta \Norm{\nabla_{\vec{T}}\vec{g(T_\rho(H,t),\beta)}}\rho(\beta,t)d\beta\\
+&\fnorm{p_\nu(H,t)}\int_\beta \Norm{\nabla_{\vec{T}}\vec{g(T_\nu(H,t),\beta)}}(\rho-\nu)(\beta,t)d\beta\\
\leq& L_{r,1} \Big(\fnorm{p_\rho(H,t)-p_\nu(H,t)}\int_\beta \rho(\beta,t)d\beta+\fnorm{p_\nu(H,t)}\norm{\rho(\cdot,t)-\nu(\cdot,t)}_1\Big)\\
\leq& L_{r,1} \Big(\fnorm{p_\rho(H,t)-p_\nu(H,t)}\int_\beta \rho(\beta,t)d\beta+L_{r,2}\norm{\rho(\cdot,t)-\nu(\cdot,t)}_1\Big)\\
\end{aligned}   
\end{equation}
where $$L_{r,1}=\phi_{T}(N,D,\sqrt{N+1}B\exp(K(1+r+r^2)))(1+r+r^2)$$ and $$L_{r,2}=(B+B\exp(K(1+r+r^2)))\exp\Big(\phi_{T}(N,D,\sqrt{N+1}KB\exp(K(1+r+r^2))(1+r+r^2)\Big).$$ The third inequality of \eqref{eq:lemmaPRho2rhoeq1} uses Lemma \ref{lemma:Tbound} to obtain $$\fnorm{T_\rho(H,t)}\leq \sqrt{N+1}\cnorm{T_\rho(H,t)}\leq\sqrt{N+1}B\exp(K(1+r+r^2))$$ and $$\fnorm{T_\nu(H,t)}\leq \sqrt{N+1}\cnorm{T_\nu(H,t)}\leq\sqrt{N+1}B\exp(K(1+r+r^2))$$ with Assumption \ref{ass:growth}(iii) to bound the norm of the Jacobian matrix with $L_{r,1}$. The last inequality of \eqref{eq:lemmaPRho2rhoeq1} employs Lemma \ref{lemma:gradientComponents2rhoBound} to bound $\fnorm{p_\nu(H,t)}$ with $L_{r,2}$.
Applying the Gr\"{o}nwall's inequality to \eqref{eq:lemmaPRho2rhoeq1}, we obtain
\begin{equation}
\label{eq:lemmaPRho2rhoeq2}
\begin{aligned}
\fnorm{p_\rho(H,t)-p_\nu(H,t)}\leq& C_r(r+1)\exp(L_{r,1})\norm{\rho-\nu}_1+\int_0^1 L_{r,1}L_{r,2}\exp(L_{r,1}\int_{t}^1\int_\beta \rho(\beta,t)d\beta ds)\norm{\rho(\cdot,t)-\nu(\cdot,t)}_1 dt\\
\leq& C_r(r+1)\exp(L_{r,1})\norm{\rho-\nu}_1+L_{r,1}L_{r,2}\exp(L_{r,1})\int_0^1\norm{\rho(\cdot,t)-\nu(\cdot,t)}_1 dt\\
=& (C_r+L_{r,1}L_{r,2})\exp(L_{r,1}))\norm{\rho-\nu}_1
\end{aligned}
\end{equation}
Since
$$
\begin{aligned}
\int_0^1\norm{\rho(\cdot,t)-\nu(\cdot,t)}_1 dt=\int_{0}^1\int_\beta|\rho(\beta,t)-\nu(\beta,t)|d\beta dt=\norm{\rho-\nu}_1.
\end{aligned}
$$
Thus, we complete the proof of the first result.

By Lemma \ref{lemma:gradientComponents2rhoBound}, under Assumption \ref{ass:dataBound} we have
$$\fnorm{p_\rho(H,t)}\leq L_{r,3}:=(B+B\exp(K(1+r+r^2)))\exp\Big(\phi_{T}(N,D,\sqrt{N+1}KB\exp(K(1+r+r^2))(1+r+r^2)\Big).$$
In addition, by Lemma \ref{lemma:Tbound}, under Assumption \ref{ass:growth} (i) we have
$$\fnorm{g(T_\nu(H,t),\beta)}\leq \sqrt{N+1}\cnorm{g(T_\nu(H,t),\beta)}\leq L_{r,4}:=KB\exp(K(1+r+r^2))(1+r+r^2).$$
Therefore, for the gradient function $\frac{\delta Q}{\delta \rho}$, by Lemma \ref{lemma:gradientComponents2rhoBound} we have
\begin{equation}
\label{eq:lemmaPRho2rhoeq3}
\begin{aligned}
\Big| \frac{\delta Q}{\delta \rho}\bigg|_{\rho}(\beta,t)-\frac{\delta Q}{\delta \rho}\bigg|_{\nu}(\beta,t)\Big|=&\E_{\mu}[\mathrm{Tr}(g(T_\rho(H,t),\beta)^\top  p_\rho(H,t))-\mathrm{Tr}(g(T_\nu(H,t),\beta)^\top  p_\nu(H,t))]\\
&\leq \E_{\mu}[\fnorm{g(T_\rho(H,t),\beta)-g(T_\nu(H,t),\beta)}\fnorm{p_\rho(H,t)}]\\
&+\E_{\mu}[\fnorm{g(T_\nu(H,t),\beta)}\fnorm{p_\rho(H,t)-p_\nu(H,t)}]\\
&\leq L_{r,3}\E_{\mu}[\fnorm{g(T_\rho(H,t),\beta)-g(T_\nu(H,t),\beta)}]+L_{r,4}(C_r+L_{r,1}L_{r,2})\norm{\rho-\nu}_1\\
&\leq \sqrt{N+1}L_{r,3}\E_{\mu}[\cnorm{g(T_\rho(H,t),\beta)-g(T_\nu(H,t),\beta)}]+L_{r,4}(C_r+L_{r,1}L_{r,2})\norm{\rho-\nu}_1.
\end{aligned}
\end{equation}
Hence, it suffices to show that for any $H$ such that $\cnorm{H}\leq B$, we have $\cnorm{g(T_\rho(H,t),\beta)-g(T_\nu(H,t),\beta)}\leq L_{r,5}\norm{\rho-\nu}_1$ for some $L_{r,5}>0$ in order to obtain the second result of this lemma. By Assumption \ref{ass:growth} (iii), we see that
\begin{equation}
\label{eq:lemmaPRho2rhoeq4}
\begin{aligned}
\cnorm{g(T_\rho(H,t),\beta)-g(T_\nu(H,t),\beta)}\leq& \phi_{T}(N,D,\max\{\fnorm{T_\rho(H,t)},\fnorm{T_\nu(H,t)}\})(1+r+r^2)\norm{T_\rho(H,t)-T_\rho(H,t)}_2\\
\leq& \phi_{T}(N,D,\sqrt{N+1}KB\exp(K(1+r+r^2)))(1+r+r^2)C_rW_1(\rho,\nu)\\
\leq& \phi_{T}(N,D,\sqrt{N+1}KB\exp(K(1+r+r^2)))(1+r+r^2)C_r(1+r)\norm{\rho-\nu}_1,
\end{aligned}
\end{equation}
where the second inequality again uses Lemma \eqref{lemma:Tdiff2rho}. Combining \eqref{eq:lemmaPRho2rhoeq3} and \ref{eq:lemmaPRho2rhoeq4} completes the proof of the second result.
\end{proof}

\subsection{Proof of Lemma \ref{lemma:oracleApprox}}
\begin{proof}
Denote the empirical distribution of $\bar{T}_\Theta$ and $\widetilde{T}_\rho$ by
$$
\bar{\rho}=\frac{1}{ML}\sum_t\sum_{j=1}^M \delta(\beta_{t,j},t),\quad \widetilde{\rho}=\frac{1}{L}\sum_t \delta_t(t)\rho(\beta|t),
$$
respectively. It's straightforward to verify that $\bar{\rho}$ and $\widetilde{\rho}$ meet the conditions outlined in Lemma \ref{lemma:Tbound}, and $T_{\bar{\rho}} = \bar{T}_\Theta$ and $T_{\widetilde{\rho}} = \widetilde{T}_\rho$. Hence, Lemmas \ref{lemma:Tbound} and \ref{lemma:disTbound} indicate that $\max\{\cnorm{\widehat{T}_\Theta(H,t)},\cnorm{\bar{T}_\Theta(H,t)},\cnorm{\widetilde{T}_\rho(H,t)}\}\leq B\exp(K(1+r+r^2))$ for any $H$ and $t\in[0,1]$. We then define $B_T:=B\exp(K(1+r+r^2))$.

The following decomposition equation holds:
\begin{equation}
\label{eq:lemmaOracleDecom1}
\fnorm{\widehat{T}_\Theta(H,t)-T_\rho(H,t)}\leq\underbrace{\fnorm{\widehat{T}_\Theta(H,t)-\bar{T}_\Theta(H,t)}}_{J_1}+\underbrace{\fnorm{\bar{T}_\Theta(H,t)-\widetilde{T}_\rho(H,t)}}_{J_2}+\underbrace{\fnorm{\widetilde{T}_\rho(H,t)-T_\rho(H,t)}}_{J_3},
\end{equation}
Our proof will bound $J_1,J_2$ and $J_3$, possibly in a probabilistic manner, to obtain the desired result.

\textbf{Bounding $J_1$:}
Note that according to Assumption \ref{ass:growth} (i),
$$
\begin{aligned}
&\cnorm{\widehat{T}_{\Theta}(H,t)+(\Delta t/2)M^{-1}\sum_{j=1}^M f(\widehat{T}_{\Theta}(H,t),\theta_{t,j})}\\
\leq & \cnorm{\widehat{T}_{\Theta}(H,t)}+(\Delta t/2)M^{-1}\sum_{j=1}^M\cnorm{f(\widehat{T}_{\Theta}(H,t),\theta_{t,j})}\\
\leq& \cnorm{\widehat{T}_{\Theta}(H,t)}+(K\Delta t/2)M^{-1}\sum_{j=1}^M\cnorm{\widehat{T}_{\Theta}(H,t)}(1+\norm{\theta_{t,j}}+\norm{\theta_{t,j}}^2)\\
\leq& B_T(1+(K\Delta t/2)(1+r+r^2))\\
\leq& B_T(1+(K/2)(1+r+r^2))
\end{aligned}
$$
Denote $B_T(1+(K/2)(1+r+r^2))$ by $\bar{B}_T$.
Combining the two equations in \eqref{eq:discrete} gives us
\begin{equation}
\label{eq:discreteOne}
\begin{aligned}
&\widehat{T}_{\Theta}(H,t+\Delta t)=\mathrm{MLP}_{w_{t,1},\dots,w_{t,M}}\Big(\mathrm{Attn}_{\theta_{t,1},\dots,\theta_{t,M}}(\widehat{T}_{\Theta}(H,t),\Delta t/2),\Delta t/2\Big)\\
=&\mathrm{Attn}_{\theta_{t,1},\dots,\theta_{t,M}}(\widehat{T}_{\Theta}(H,t),\Delta t/2)+(\Delta t/2)M^{-1}\sum_{j=1}^M h\Big(\mathrm{Attn}_{\theta_{t,1},\dots,\theta_{t,M}}(\widehat{T}_{\Theta}(H,t),\Delta t/2),w_{t,j}\Big)\\
=& \widehat{T}_{\Theta}(H,t)+(\Delta t/2)M^{-1}\sum_{j=1}^M f(\widehat{T}_{\Theta}(H,t),\theta_{t,j})+
(\Delta t/2)M^{-1}\sum_{j=1}^M h\Big(\widehat{T}_{\Theta}(H,t)\\
+&(\Delta t/2)M^{-1}\sum_{j=1}^M f(\widehat{T}_{\Theta}(H,t),\theta_{t,j}),w_{t,j}\Big)
\end{aligned}
\end{equation}
Then, from the formula of \eqref{eq:discreteOne} and Assumption \ref{ass:growth} (iii), we see that for any $t=0,\Delta t,\dots, (L-1)\Delta$,
\begin{equation}
\label{eq:lemmaOracleJ11}
\begin{aligned}
&\fnorm{\widehat{T}_{\Theta}(H,t+\Delta t)-\bar{T}_{\Theta}(H,t+\Delta t)}\\
\leq& \fnorm{\widehat{T}_{\Theta}(H,t)-\bar{T}_{\Theta}(H,t)}
\Big(1+(\Delta t/2)M^{-1}\sum_{j=1}^M \phi_{T}(N,D,\sqrt{N+1}B_T)(1+\norm{\theta_{t,j}}+\norm{\theta_{t,j}}^2)\Big)\\
+&(\Delta t/2)M^{-1}\sum_{j=1}^M \phi_{T}(N,D,\sqrt{N+1}\bar{B}_T)(1+\norm{w_{t,j}}+\norm{w_{t,j}}^2)\\
&\fnorm{\widehat{T}_{\Theta}(H,t)+(\Delta t/2)M^{-1}\sum_{j=1}^M f(\widehat{T}_{\Theta}(H,t),\theta_{t,j})-\bar{T}_{\Theta}(H,t)}\\
\leq & \fnorm{\widehat{T}_{\Theta}(H,t)-\bar{T}_{\Theta}(H,t)}\Big(1+\Delta tM^{-1}\sum_{j=1}^M \phi_{T}(N,D,\sqrt{N+1}\bar{B}_T)(1+\norm{\beta_{t,j}}+\norm{\beta_{t,j}}^2)\Big)\\
+&\sqrt{N+1}B_T(K\Delta t/2)(1+r+r^2)(\Delta t/2)M^{-1}\sum_{j=1}^M \phi_{T}(N,D,\sqrt{N+1}\bar{B}_T)(1+\norm{w_{t,j}}+\norm{w_{t,j}}^2)\\
\leq & \fnorm{\widehat{T}_{\Theta}(H,t)-\bar{T}_{\Theta}(H,t)}\Big(1+\Delta t\phi_{T}(N,D,\sqrt{N+1}\bar{B}_T)(1+r+r^2)\Big)\\
+&\sqrt{N+1}\phi_{T}(N,D,\sqrt{N+1}\bar{B}_T)B_T(K\Delta t^2/4)(1+r+r^2)^2
\end{aligned}
\end{equation}
Therefore, applying \eqref{eq:lemmaOracleJ11}, we deduce that for any $t=0,\Delta t,\dots, (L-1)\Delta t, 1$, we have:
\begin{equation}
\label{eq:lemmaOracleJ12}
\cnorm{\widehat{T}_{\Theta}(H,t)-\bar{T}_{\Theta}(H,t)}\leq  \sqrt{N+1}B_T(K\Delta t^2/4)(1+r+r^2)\frac{\exp(\phi_{T}(N,D,\sqrt{N+1}\bar{B}_T)(1+r+r^2))}{\Delta t} C_{1}L^{-1}
\end{equation}
where $C_{1}:=\sqrt{N+1}B_TK(1+r+r^2)\exp(\phi_{T}(N,D,\sqrt{N+1}\bar{B}_T)(1+r+r^2))/4$.

\textbf{Bounding $J_2$:} 
For any $t=0,\Delta t,\dots,(L-1)\Delta, i\in[N+1], j\in[M]$, we have $\norm{g(\bar{T}_{\Theta}(H,t),\beta_{t,j})_{:,i}}\leq B_T$. Hence, by the Hoeffding's inequality, for any $z>0$ we have
$$\pr(\norm{M^{-1}\sum_{j=1}^M g(\bar{T}_{\Theta}(H,t),\beta_{t,j})_{:,i}-\int_\beta g(\bar{T}_{\Theta}(H,t),\beta_{t,j})_{:,i} \rho(\beta|t)d\beta}\geq z)\leq 2\exp(-\frac{z^2}{2B_T^2}M).$$
By the union bound over $i\in[N+1]$ and $t=0,\Delta t,\dots,(L-1)\Delta$, the above inequality implies
\begin{equation}
\label{eq:lemmaOracleJ21}
\pr(\sup_t \cnorm{M^{-1}\sum_{j=1}^M g(\bar{T}_{\Theta}(H,t),\beta_{t,j})-\int_\beta g(\bar{T}_{\Theta}(H,t),\beta_{t,j}) \rho(\beta|t)d\beta}\geq z)\leq 2(N+1)L\exp(-\frac{z^2}{2B_T^2}M).  
\end{equation}
We let $z=B_T\sqrt{2M^{-1}(\delta+\log((N+1)L))}$. Then, \eqref{eq:lemmaOracleJ21} turns into
\begin{equation}
\label{eq:lemmaOracleJ22}
\pr(\sup_t \cnorm{M^{-1}\sum_{j=1}^M g(\bar{T}_{\Theta}(H,t),\beta_{t,j})-\int_\beta g(\bar{T}_{\Theta}(H,t),\beta_{t,j}) \rho(\beta|t)d\beta}\geq B_T\sqrt{2M^{-1}(\delta+\log((N+1)L))}\leq \exp(-\delta).
\end{equation}
Denote the event such that $$\sup_t \cnorm{M^{-1}\sum_{j=1}^M g(\bar{T}_{\Theta}(H,t),\beta_{t,j})-\int_\beta g(\bar{T}_{\Theta}(H,t),\beta_{t,j}) \rho(\beta|t)d\beta}\leq B_T\sqrt{2M^{-1}(\delta+\log((N+1)L))}$$ by $E_\delta$. \eqref{eq:lemmaOracleJ22} directly indicates $\pr(E_\delta)\geq 1-\exp(-\delta)$.

Suppose that the high probability event $E_\delta$ occurs. Let's denote $B_T\sqrt{2M^{-1}(\delta+\log((N+1)L))}$ by $B_\delta$ for brevity. From Assumption \ref{ass:growth} (iii), it follows that for any $t=0,\Delta t,\dots, (L-1)\Delta$,
\begin{equation}
\label{eq:lemmaOracleJ23}
\begin{aligned}
\fnorm{\bar{T}_\Theta(H,t+\Delta t)-\widetilde{T}_\rho(H,t+\Delta t)}
\leq & \fnorm{\bar{T}_\Theta(H,t)-\widetilde{T}_\rho(H,t)}\\
+&\Delta t\fnorm{M^{-1}\sum_{j=1}^M g(\bar{T}_{\Theta}(H,t),\beta_{t,j})-\int_\beta g(\bar{T}_{\Theta}(H,t),\beta_{t,j}) \rho(\beta|t)d\beta}\\
+&\Delta t\fnorm{\int_\beta (g(\bar{T}_{\Theta}(H,t),\beta_{t,j})-g(\widetilde{T}_{\rho}(H,t),\beta_{t,j})) \rho(\beta|t)d\beta}\\
\leq &\fnorm{\bar{T}_\Theta(H,t)-\widetilde{T}_\rho(H,t)} +\Delta t\sqrt{N+1} B_\delta +\\
+&\Delta t\int_\beta \fnorm{g(\bar{T}_{\Theta}(H,t),\beta_{t,j})-g(\widetilde{T}_{\rho}(H,t),\beta_{t,j})}\rho(\beta|t)d\beta\\
\leq& \fnorm{\bar{T}_\Theta(H,t)-\widetilde{T}_\rho(H,t)}(1+\Delta t\phi_{T}(N,D,\sqrt{N+1}B_T)(1+r+r^2))\\
+&\Delta t\sqrt{N+1} B_\delta.
\end{aligned}
\end{equation}
Repeatedly applying Equation \eqref{eq:lemmaOracleJ23} yields
\begin{equation}
\label{eq:lemmaOracleJ24}
\begin{aligned}
\fnorm{\widehat{T}_{\Theta}(H,t)-\bar{T}_{\Theta}(H,t)}\leq & \frac{\sqrt{N+1}B_\delta}{\phi_{T}(N,D,\sqrt{N+1}B_T)(1+r+r^2)}\exp(\phi_{T}(N,D,\sqrt{N+1}B_T)(1+r+r^2))\\
= & C_{2}\sqrt{M^{-1}(\delta+\log((N+1)L))}.
\end{aligned}
\end{equation}
for some constant $C_2>0$ dependent on the parameters listed in the result.

\textbf{Bounding $J_3$:} 
It's worth noting that the convergence proof with a convergence rate of $O(\Delta t)=O(\frac{1}{L})$ for $\widetilde{T}_\rho(H,t)$, the first-order Euler method for $T_\rho(H,t)$, is non-standard. This departure from convention arises because we do not assume the boundedness of the second-order derivative $\frac{d^2 T_\rho(H,t)}{dt^2}$, instead relying on the continuity of $\rho(\cdot,t)$ with respect to the depth index $t$. In this proof, $O(\cdot)$ hides dependencies on $N$, $D$, $r$, and the parameters of the assumptions.

From the definition of $\widetilde{T}_\rho$ and $T_\rho$, we have
\begin{equation}
\label{eq:lemmaOracleJ31}
\begin{aligned}
\fnorm{\widetilde{T}_\rho(H,t+\Delta t)-T_\rho(H,t+\Delta t)}
\leq & \fnorm{\widetilde{T}_\rho(H,t)-T_\rho(H,t)}\\
&+\underbrace{\fnorm{T_\rho(H,t+\Delta t)-T_\rho(H,t)-\Delta t\int_\beta g(T_\rho(H,t),\beta)\rho(\beta|t)d\beta}}_{I_1}\\
&+\Delta t\underbrace{\fnorm{\int_\beta (g(T_\rho(H,t),\beta)-g(\widetilde{T}_\rho(H,t),\beta))\rho(\beta|t)d\beta}}_{I_2}.
\end{aligned}
\end{equation}
To bound $I_1$, we use \eqref{eq:ODE} to get
\begin{equation}
\label{eq:lemmaOracleJ32}
\begin{aligned}
I_1\leq &\fnorm{\int_{t}^{t+\Delta t}\int_\beta g(T_\rho(H,s),\beta)\rho(\beta|s)d\beta ds-\Delta t\int_\beta g(T_\rho(H,t),\beta)\rho(\beta|t)d\beta}\\
\leq&\Delta t\sup_{s\in[t,t+\Delta t]}\fnorm{\int_\beta g(T_\rho(H,s),\beta)\rho(\beta|s)d\beta-\int_\beta g(T_\rho(H,t),\beta)\rho(\beta|t)d\beta}\\
\leq&\Delta t\sup_{s\in[t,t+\Delta t]}\fnorm{\int_\beta (g(T_\rho(H,s),\beta)-g(T_\rho(H,t),\beta))\rho(\beta|s)d\beta}\\
&+\Delta t\sup_{s\in[t,t+\Delta t]}\fnorm{\int_\beta g(T_\rho(H,t),\beta)(\rho(\beta|s)-\rho(\beta|t))d\beta}\\
\leq&\Delta t\sup_{s\in[t,t+\Delta t]}\int_\beta \fnorm{g(T_\rho(H,s),\beta)-g(T_\rho(H,t),\beta)}\rho(\beta|s)d\beta\\
&+\Delta t\sup_{s\in[t,t+\Delta t]}\fnorm{\int_\beta g(T_\rho(H,t),\beta)(\rho(\beta|s)-\rho(\beta|t))d\beta}
\end{aligned}
\end{equation}
Given that Proposition \ref{prop:odeSol} establishes the Lipschitz continuity of $T_\rho(H,t)$ with respect to $t$ under the condition that $\rho\in\ca P^2$ has a bounded support, we can conclude:
\begin{equation}
\label{eq:lemmaOracleJ33}
\sup_{s\in[t,t+\Delta t]}\fnorm{g(T_\rho(H,s),\beta)-g(T_\rho(H,t),\beta)}\leq C_{3,1} |t-s|\leq C_{3,1} \Delta t
\end{equation}
for some constant $C_{3,1}>0$ dependent on the parameters listed in the result. Furthermore, Lemma \ref{lemma:Tbound} and Proposition \ref{prop:odeSol} demonstrate that $g(T_\rho(H,t),\beta)$ is both bounded and Lipschitz continuous with respect to $\beta$. Thus, we have
\begin{equation}
\label{eq:lemmaOracleJ34}
\begin{aligned}
\sup_{s\in[t,t+\Delta t]}\fnorm{\int_\beta g(T_\rho(H,t),\beta)(\rho(\beta|s)-\rho(\beta|t))d\beta}=&\sup_{s\in[t,t+\Delta t]}\fnorm{\int_\beta g(T_\rho(H,t),\beta)(\rho(\beta,s)-\rho(\beta,t))d\beta}\\
\leq & \sup_{s\in[t,t+\Delta t]}C_{3,2}\blnorm{\rho(\cdot,s)-\rho(\cdot,t)}\\
\leq& C_{3,2}C_\rho \Delta t
\end{aligned}
\end{equation}
for some constant $C_{3,2}>0$ dependent on the parameters listed in the result. Substituting \eqref{eq:lemmaOracleJ33} and \eqref{eq:lemmaOracleJ34} into \eqref{eq:lemmaOracleJ32}, we find that there exists a constant $C_{3,5}$ dependent on the parameters listed in the result such that $I_1\leq C_{3,5}\Delta t^2$.

Additionally, Assumption \ref{ass:growth} (iii) implies that
\begin{equation}
\label{eq:lemmaOracleJ35}
\begin{aligned}
I_2\leq &\int_\beta\fnorm{g(T_\rho(H,t),\beta)-g(\widetilde{T}_\rho(H,t),\beta)}\rho(\beta|t)d\beta\\
\leq & \phi_{T}(N,D,\sqrt{N+1}B_T)(1+r+r^2)\fnorm{T_\rho(H,t),\beta)-\widetilde{T}_\rho(H,t),\beta)}
\end{aligned}
\end{equation}
Therefore, by bounding $I_1+I_2$, we obtain the following inequality
\begin{equation}
\label{eq:lemmaOracleJ36}
\fnorm{T_\rho(H,t+\Delta t),\beta)-\widetilde{T}_\rho(H,t+\Delta t),\beta)}\leq C_{3,5}\Delta t^2 + (1+C_{3,6}\Delta t)\fnorm{T_\rho(H,t),\beta)-\widetilde{T}_\rho(H,t),\beta)},
\end{equation}
which implies, after being used multiple times, that
\begin{equation}
\label{eq:lemmaOracleJ37}
\fnorm{T_\rho(H,t),\beta)-\widetilde{T}_\rho(H,t),\beta)}\leq C_{3,5}\Delta t^2\frac{(1+C_{3,6}\Delta t)^{L+1}-1}{C_{3,6}\Delta t}\leq C_3 L^{-1}
\end{equation}
for any $t=0,\Delta t,\dots,(L-1)\Delta t,1$ and some constant $C_3$ dependent on the parameters listed in the result. Combining \eqref{eq:lemmaOracleJ12}, \eqref{eq:lemmaOracleJ24}, and \eqref{eq:lemmaOracleJ37} yields the desired result.
\end{proof}

\subsection{Proof of Lemma \ref{lemma:land}}
\begin{proof}
Let $\bmu(t)$ be the measure induced by $T_\rho(H,t)$ with $H\sim \mu$, and $\bmu$ be $\bmu(t^*)$. By verifying Lemma \ref{lemma:Tbound}, we establish that $\cnorm{T_\rho(H,t)}\leq B_T:=B\exp(K(1+r+r^2))$ for any $H$ and $t\in[0,1]$.  Consequently, $\mathrm{supp}(\bmu(t))\subset\{T:\cnorm{T}\leq B_T\}$ for any $t\in[0,1]$. The remainder of the proof involves four steps:


\noindent{\textbf{Step I: Show that $p_\rho(H,t)$ is Lipschitz continuous with respect to $(H,t)$}}

In the proof of this proposition, when referring to the Lipschitz continuity of a function, we imply its Lipschitz continuity for $H$ within the support of $\mu$.
Recall that we have shown in Lemma \ref{lemma:gradientComponents2rhoBound} that $p_\rho(H,t)$ is universally bounded and Lipschitz continuous for $\fnorm{\cdot}$ and any $t\in[0,1]$.

From the formula of $p_\rho$ in \eqref{eq:pSolution}, for any $H,H'$, we have
\begin{equation}
\label{eq:propLandEq1}
\begin{aligned}
&\fnorm{p_\rho(H,t)-p_\rho(H',t)}=\norm{\vec{p_\rho(H,t)}-\vec{p_\rho(H',t)}}\\
\leq & \underbrace{|\mathrm{Read}[T_\rho(H,1)-T_\rho(H',1)]-(y(H)-y(H'))|}_{I_1}\underbrace{\Norm{\exp\Big(\int_t^1 \int_\beta \nabla_{\vec{T}} \vec{g(T_\rho(H,s),\beta)}\rho(\beta,s) d\beta ds\Big)}}_{I_2}\\
+& \underbrace{|\mathrm{Read}[T_\rho(H',1)]-y(H')|}_{I_3}\\
\cdot &\underbrace{\Norm{\exp\Big(\int_t^1 \int_\beta \nabla_{\vec{T}} \vec{g(T_\rho(H,s),\beta)}\rho(\beta,s) d\beta ds\Big)-\exp\Big(\int_t^1 \int_\beta \nabla_{\vec{T}} \vec{g(T_\rho(H',s),\beta)}\rho(\beta,s) d\beta ds\Big)}}_{I_4}.
\end{aligned}
\end{equation}
From Proposition \ref{prop:odeSol}, we observe that $p_\rho(H,t)$ is Lipschitz continuous. Since $y(\cdot)$ is $K_y$-Lipschitz continuous, as given in Assumption \ref{ass:opt} (iii), we obtain that $I_1 \lesssim \fnorm{H-H'}$. Moreover, from Assumption \ref{ass:dataBound} and Lemma \ref{lemma:Tbound}, we see that $I_3$ is universally bounded. Therefore, to show that $p_\rho(H,t)$ is Lipschitz continuous for $\fnorm{\cdot}$, it suffices to demonstrate that $I_2 \lesssim 1$ and $I_4 \lesssim \fnorm{H-H'}$.

From Assumption \ref{ass:growth} (iii), we have
$$
I_2\leq \exp\Big(\int_t^1 \int_\beta \Norm{\nabla_{\vec{T}} \vec{g(T_\rho(H,s),\beta)}}\rho(\beta,s) d\beta ds\Big)\leq \exp\Big(\phi_T(N,D,\sqrt{N+1}B_T)(1+r+r^2)\Big).
$$
Thus, dividing $I_4$ by the uniformly bounded part $I_2$, we obtain
\begin{equation}
\label{eq:propLandEq2}
\begin{aligned}
I_4\leq & \Norm{\exp\Big(\int_t^1 \int_\beta \nabla_{\vec{T}} \Big(\vec{g(T_\rho(H',s),\beta)}- \nabla_{\vec{T}} \vec{g(T_\rho(H,s),\beta)}\Big)\rho(\beta,s) d\beta ds\Big)-I_{\mathrm{div}\vec{T}}}\\
\leq & \exp\Big(\int_t^1 \int_\beta \Norm{\nabla_{\vec{T}} \Big(\vec{g(T_\rho(H',s),\beta)}- \nabla_{\vec{T}} \vec{g(T_\rho(H,s),\beta)}\Big)}\rho(\beta,s) d\beta ds\Big)-1\\
\leq & \exp\Big(\phi_{TT}(N,D,\sqrt{N+1}B_T,r) \sup_{t\in[0,1]}\fnorm{T_\rho(H,t)-T_\rho(H',t)} \Big) -1,
\end{aligned}
\end{equation}
where the final inequality holds by Assumption \ref{ass:growth2} (iv).

By the Lipschitz continuity of $T_\rho(H,t)$ for $\fnorm{\cdot}$ as stated in Proposition \ref{prop:odeSol}, we have $\sup_{t\in[0,1]}\fnorm{T_\rho(H,t)-T_\rho(H',t)}\lesssim \fnorm{H-H'}$. Hence, utilizing the universal boundedness of the last equation in \eqref{eq:propLandEq2}, we derive $I_4\lesssim \fnorm{H-H'}$.

Considering all assertions regarding  $I_1,I_2,I_3$ and $I_4$, we conclude that $p_\rho(H,t)$ is $C_p$-Lipschitz continuous with respect to $H$ for some constant $C_p$ dependent on the parameters listed in the result that is sufficiently large. The Lipschitz continuity of $p_\rho(H,t)$ with respect to $t$ could be easily derived from the boundedness of the Jacobian matrix, as asserted in Assumption \ref{ass:growth} (iii). Moreover, since $p_\rho(H,t)$ is universally bounded shown in Lemma \ref{lemma:gradientComponents2rhoBound}, we conclude that $p_\rho(H,t)$ is Lipschitz continuous with respect to $(H,t)$ for some universal Lipschitz constant $C_p$ dependent on the parameters listed in the result that is sufficiently large.

\noindent{\textbf{Step II: Prepare bounds related to $p_\rho$ for later use}}

\eqref{eq:pSolution} implies that $p_\rho$ solves the adjoint equation
\begin{equation}
\label{eq:propLandAdjoint}
\mathrm{vec}[\dot{p}_\rho(H,t)]^\top =-\mathrm{vec}[p_\rho(H,t)]^\top \int_\beta \nabla_{\vec{T}} \vec{g(T_\rho(H,t),\beta)}\rho(\beta,t) d\beta
\end{equation}
with
$$\Norm{\int_\beta \nabla_{\vec{T}} \vec{g(T_\rho(H,t),\beta)}\rho(\beta,t) d\beta}\leq (1+C_\rho/2)\phi_T(N,D,\sqrt{N+1}B_T)(1+r+r^2)=:C_0$$
implied by Assumption \ref{ass:growth} (iii). Therefore, the Gr\"{o}nwall's inequality directly indicates that
$$
\begin{aligned}
&\E_{\mu}\fnorm{p_\rho(H,t)}^2\geq \exp(-2C_0 t)\E_{\mu}\fnorm{p_\rho(H,1)}^2\geq \exp(-2C_0)\E_{\mu}[|\mathrm{Read}[T_\rho(H,1)]-y(H)|^2]\geq 2\exp(-2C_0)R(\rho),\\
&\E_{\mu}\fnorm{p_\rho(H,t)}^2\leq \exp(2C_0 t)\E_{\mu}\fnorm{p_\rho(H,1)}^2\leq \exp(2C_0)\E_{\mu}[|\mathrm{Read}[T_\rho(H,1)]-y(H)|^2]\leq 2\exp(2C_0)R(\rho),\\
\end{aligned}
$$
for any $t\in[0,1]$.

\noindent{\textbf{Step III: Construct the descent direction $\nu$}}

By the well-posedness of the ODE solution to \eqref{eq:ODE} as shown in Proposition \ref{prop:odeSol}, the solution map $T_\rho(H,\cdot)$ is invertible. Hence, for any  $\rho\in \ca P^2$, there exists a continuous inverse map $T^{-1}_{t}$ such that $T^{-1}_{t}(T_\rho(H,t))=H$ for any $H$ and $t\in[0,1]$. Let's define the following function $\bar{F}(T)$ to approximate
$$
\bar{F}(T)=-p_\rho(T^{-1}_{t^*}(T),t^*)+\int_\beta g(T,\beta)\rho(\beta|t^*)d\beta-\frac{h(T,w_0)}{2},
$$
The function $\bar{F}(T)$, arising from the composition of $p_\rho(\cdot,t)$ and $T^{-1}_{t^*}(\cdot)$, exhibits continuity over $T\in\mathrm{supp}(\bmu)$ owing to the continuous nature of $p_\rho(\cdot,t)$ and $T^{-1}_{t^*}(\cdot)$. 
Therefore, Assumption \ref{ass:opt} (ii) could be applied to $\bar{F}$. Since $f(\cdot,\theta)$ is a universal kernel constrained on $\theta\in \R^{\mathrm{dim}\theta_1}\times \ca K$ \cite{micchelli2006universal}, and $\bar{F}(T)$ is continuous with respect to $T$, there exists a sequence $\{(c_k,\theta^k)\}_{k\geq 0}\subset (\R\times \R^{\mathrm{dim}\theta_1}\times \ca K)^{\mathbb{N}}$ such that
\begin{equation}
\label{eq:propLandEq3}
\Norm{\bar{F}(T)-\sum_{k=1}^\infty c_kf(T,\theta^k)}_{\max}\leq \epsilon/3
\end{equation}
given some $\epsilon>0$ and any $T$ such that $\cnorm{T}\leq B_T$. Notably, since $f(T,\theta_1^k,\theta_2^k)=-f(T,-\theta_1^k,\theta_2^k)$, we could assume without loss of generality that $c^k\geq 0$ for any $k$. Furthermore, there exists a constant $k_\epsilon$ such that
\begin{equation}
\label{eq:propLandEq4}
\Norm{\bar{F}(T)-\sum_{k=1}^{k_\epsilon} c_kf(T,{\theta}^k)}_{\max}\leq 2\epsilon/3.
\end{equation}
We define $C(\epsilon):=\sum_{k=1}^{k_\epsilon} c_k$, and $\bar{\nu}(\beta)\in \ca P(\R^{\mathrm{dim}\beta})$ as the probability distribution such that, given $\bar{\beta}\sim\bar{\nu}$ and for any $k\geq 0$, $\bar{\beta}$ has probability $c^k/C(\epsilon)$ of being $(2C(\epsilon)\theta_1^k,\theta_2,0)$. Then, \eqref{eq:propLandEq4} transforms into
\begin{equation}
\label{eq:propLandEq5}
\Norm{F(T)-\int_\beta g(T,\beta)\bar{\nu}(\beta)d\beta}_{\max}\leq 2\epsilon/3,
\end{equation}
where $$F(T)==-p_\rho(T^{-1}_{t^*}(T),t^*)+\int_\beta g(T,\beta)\rho(\beta|t^*)d\beta.$$
From \eqref{eq:propLandEq5}, we claim that there exists some $R(\epsilon)$ such that
\begin{itemize}
    \item $\bar{\nu}(\{\beta:1/R(\epsilon)\leq\norm{\beta}\leq R(\epsilon)\})\geq 1/2$,
    \item $\Norm{\int_{\norm{\beta}>R(\epsilon)} g(T,\beta)\bar{\nu}(\beta)d\beta}_{\max}\leq \epsilon/3.$
\end{itemize}
We are now in the position to define the descent direction $\nu$. By defining $\nu\in \ca P(\R^{\mathrm{dim}\beta})$ as the measure obtained by truncating any part outside $1/R(\epsilon)\leq\norm{\beta}\leq R(\epsilon)$ from $\bar{\nu}$, and scaling the measure function by $1/\bar{\nu}(\{\beta:1/R(\epsilon)\leq\norm{\beta}\leq R(\epsilon)\})\leq 2$, we can establish that
\begin{equation}
\label{eq:propLandEq52}
\Norm{F(T)-\int_\beta g(T,\beta)\nu(\beta)d\beta}_{\max}\leq \epsilon.
\end{equation}
for any $T$ such that $\cnorm{T}\leq B_T$. A straightforward deduction from \eqref{eq:propLandEq52} is that for any $T$ such that $\cnorm{T}\leq B_T$, we have $\fnorm{F(T)-\int_\beta g(T,\beta)\nu(\beta)d\beta}\leq \epsilon\sqrt{(N+1)D}$. It is clear that $\nu$ has a bounded support as $\{\beta:1/R(\epsilon)\leq\norm{\beta}\leq R(\epsilon)\}$. We will determine the value of $\epsilon$ later, ensuring it based only on $N$, $D$, $r$, and the parameters of the assumptions.

\noindent{\textbf{Step IV: Upper bound $\int_\beta \frac{\delta Q}{\delta \rho}(\beta,t^*) \Big(\nu(\beta)-\rho(\beta|t^*)\Big) d\beta$ to complete the proof}}

Utilizing the gradient definition in \eqref{eq:gradient2rho} and $\int_\beta g(T,\beta)\rho(\beta|t^*)d\beta=F(T)+p_\rho(T^{-1}_{t^*}(T),t^*)$, we obtain
$$
\begin{aligned}
&\int_\beta \frac{\delta Q}{\delta \rho}(\beta,t^*) \Big(\nu(\beta)-\rho(\beta|t^*)\Big) d\beta\\
=&\E_{\mu} \int_\beta \mathrm{Tr}\Big[ g(T_\rho(H,t^*),\beta)^\top p_\rho(H,t^*)\Big] \Big(\nu(\beta)-\rho(\beta|t^*)\Big)d\beta +\frac{\lambda}{2}\int_\beta \norm{\beta}^2 \Big(\nu(\beta)-\rho(\beta|t^*)\Big)d\beta \\
\leq & \E_{T\sim\bmu(t)} \mathrm{Tr}\Big[ g(T,\beta)^\top \Big(\widetilde{v}(\beta)-\rho(\beta|t^*)\Big)p_\rho(T^{-1}_{t^*}(T),t^*)\Big]  +\frac{\lambda}{2}(R(\epsilon)+r)\\
= & \underbrace{\E_{T\sim\bmu(t)} \mathrm{Tr}\Big[ \Big( F(T)-\int_\beta g(T,\beta)\nu(\beta)d\beta\Big)^\top p_\rho(T^{-1}_{t^*}(T),t^*)\Big]  }_{J_1}\\
-& \underbrace{\E_{T\sim\bmu(t)} \mathrm{Tr}\Big[p_\rho(T^{-1}_{t^*}(T),t^*)^\top  p_\rho(T^{-1}_{t^*}(T),t^*)\Big]  }_{J_2}
+\frac{\lambda}{2}(R(\epsilon)+r)
\end{aligned}
$$
For $\rho\in \ca P^2$ concentrated on $P_r$, we observe
$$R(\rho)\leq \frac{1}{2}\E_\mu [\Big(|\mathrm{Read}[T_\rho(H,1)]|+|y(H)|\Big)^2]\leq \frac{1}{2}(B_T+B)^2.$$ Hence, to bound $J_1$, we have
\begin{equation}
\label{eq:propLandEq6}
\begin{aligned}
J_1\leq &\E_{T\sim\bmu(t)}  \fnorm{F(T)-\int_\beta g(T,\beta)\nu(\beta)d\beta}\cdot \fnorm{p_\rho(T^{-1}_{t^*}(T),t^*)}  dt\\
\leq&\sqrt{(N+1)D}\epsilon\E_{T\sim\bmu(t)}  \fnorm{p_\rho(T^{-1}_{t^*}(T),t^*)}  dt\\
\leq & (N+1)D\epsilon\Big(\E_{T\sim\bmu(t)}  \fnorm{p_\rho(T^{-1}_{t^*}(T),t^*)}^2\Big)^{1/2}  dt\\
= & (N+1)D\epsilon\Big(\E_{\mu}  \fnorm{p_\rho(H,t)}^2\Big)^{1/2}  dt\\
\leq &\sqrt{2}(N+1)D\exp(C_0)R(\rho)^{1/2}\epsilon\\
\leq &(N+1)(B_T+B)(N+1)D\exp(C_0)R(\rho)\epsilon\\
=&C_3 R(\rho)\epsilon,
\end{aligned}
\end{equation}
where $C_3:=(N+1)(B_T+B)(N+1)D\exp(C_0)$.

To bound $J_2$, we have
\begin{equation}
\label{eq:propLandEq7}
\begin{aligned}
J_2=&\E_{T\sim\bmu(t)} \fnorm{p_\rho(T_t^{-1}(T),t)}^2  dt\\
=&  \E_{\mu} \fnorm{p_\rho(H,t)}^2  dt\\
\geq&\E_{\mu} \exp(-2C_0)\fnorm{p_\rho(H,1)}^2  dt\\
\geq&\frac{1}{2}\exp(-2C_0) R(\rho).
\end{aligned}
\end{equation}
Combining \eqref{eq:propLandEq6} and \eqref{eq:propLandEq7}, by choosing $\epsilon=\frac{1}{4}\exp(-2C_0)/C_3$, which only depends on $N$, $D$, $r$, and the parameters of the assumptions, we have
$$\int_\beta \frac{\delta Q}{\delta \rho}(\beta,t^*) \Big(\nu(\beta)-\rho(\beta|t^*)\Big) d\beta\leq -\frac{1}{4}\exp(-2C_0)R(\rho)+\frac{\lambda}{2}\Big(R(\frac{1}{4}\exp(-2C_0)/C_3)+r\Big).$$
Setting $B_r=R(\frac{1}{4}\exp(-2C_0)/C_3),C_1=\Big(R(\frac{1}{4}\exp(-2C_0)/C_3)+r\Big)/2$, and $C_2=\frac{1}{4}\exp(-2C_0)$ completes the proof.
\end{proof}

\subsection{Proof of Lemma \ref{lemma:Tbound}}
\begin{proof}
By applying the Cauchy-Schwarz inequality, we trivially obtain  $\int_0^1\int_\beta ||\beta||_2\rho(\beta,t)d\beta dt\leq A$. Thus, leveraging \eqref{eq:ODE} and Assumption \ref{ass:growth} (i), we can infer
\begin{equation}
\begin{aligned}
\frac{d}{dt}\cnorm{T_\rho(H,t)}\leq \cnorm{\dot{T}_\rho(H,t)} &= \cnorm{\int_\beta g(T_\rho(H,t),\beta) \rho(\beta,t)d\beta}\\
&\leq \int_\beta \cnorm{g(T_\rho(H,t),\beta)} \rho(\beta,t)d\beta\\
&\leq \int_\beta K(1+||\beta||_2+||\beta||_2^2) \rho(\beta,t) \cnorm{T_\rho(H,t)}d\beta.
\end{aligned}
\end{equation}
Therefore, by the Gr\"{o}nwall's inequality,  we have
$$
\begin{aligned}
\cnorm{T_\rho(H,t)}&\leq \cnorm{T_\rho(H,0)}\exp(\int_0^1\int_\beta K(1+||\beta||_2+||\beta||_2^2) \rho(\beta,t) d\beta dt)\\
&\leq \cnorm{H}\exp(K(1+A+A^2)).
\end{aligned}
$$
\end{proof}

\subsection{Proof of Lemma \ref{lemma:Tdiff2rho}}
\begin{proof}
As per Lemma \ref{lemma:Tbound}, the boundedness of $\cnorm{T_\nu(H,t)}$ and $\cnorm{T_\rho(H,t)}$ is established by a constant $C := \cnorm{H}\exp(K(1+A+A^2))>0$ for all $t\in[0,1]$. Consequently, from \eqref{eq:ODE}, this implies
\begin{equation}
\label{eq:lemmaTDeq0}
\begin{aligned}
\cnorm{T_\nu(H,t_1)-T_\nu(H,t_2)}\leq&\int_{t_1}^{t_2}   \cnorm{\dot{T}_\nu(H,t)}\\
\leq& \int_{t_1}^{t_2}\int_\beta \cnorm{g(T_\nu(H,t),\beta}\nu(\beta,t)d\beta dt\\
\leq& (1+C_\rho/2)KC(1+r+r^2)(t_2-t_1).    
\end{aligned}
\end{equation}
for any $t_1,t_2\in[0,1]$.
Therefore, $T_\nu(H,t)$ is $(1+C_\rho/2)KC(1+r+r^2)$-Lipschitz with respect to $t$ for $\cnorm{\cdot}$, and thus $\sqrt{N+1}(1+C_\rho/2)KC(1+r+r^2)$-Lipschitz with respect to $t$ for $\fnorm{\cdot}$. Note that by \eqref{eq:ODE},
\begin{equation}
\label{eq:lemmaTDJ1J2}
\begin{aligned}
\Delta(H,t):=&\fnorm{T_\rho(H,t)- T_\nu(H,t)}\\
=&\fnorm{\int_0^\top  \dot T_\rho(H,s)- \dot T_\nu(H,s)ds}\\
=&\fnorm{\int_0^\top  \int_\beta g(T_\rho(H,s),\beta) \rho(\beta,t)d\beta ds- \int_0^\top \int_\beta g(T_\nu(H,s),\beta) \nu(\beta,s)d\beta ds}\\
\leq &\underbrace{\int_0^\top  \int_\beta \fnorm{g(T_\rho(H,s),\beta)-g(T_\nu(H,s),\beta)} \rho(\beta,s)d\beta ds}_{J_1}\\
+&\underbrace{\fnorm{\int_0^\top \int_\beta g(T_\nu(H,s),\beta) \Big(\rho-\nu\Big)(\beta,s) d\beta ds}}_{J_2}
\end{aligned}
\end{equation}
We then bound $J_1$ and $J_2$ using the following two lemmas separately. Firstly, since $\fnorm{T_\rho}\leq \sqrt{N+1}C$ and $\fnorm{T_\nu}\leq\sqrt{N+1}C$, we have by Assumption \ref{ass:growth} that
\begin{equation}
\label{eq:lemmaTDJ1eq1}
\begin{aligned}
\fnorm{g(T_\rho(H,s),\beta)-g(T_\nu(H,s),\beta)}&\leq\Big(\sup_{\fnorm{T}\leq \sqrt{N+1}C}\norm{\nabla_{\vec{T}} \vec{g(T,\beta)}}_2\Big)\fnorm{T_\rho(H,s)-T_\nu(H,s)}\\
&\leq\phi_{T}(N,D,\sqrt{N+1}C)(1+r+r^2)\Delta(H,s)
\end{aligned}
\end{equation}
Therefore, by \eqref{eq:lemmaTDJ1eq1} we have
\begin{equation}
\label{eq:lemmaTDJ1eq2}
J_1\leq \phi_{T}(N,D,\sqrt{N+1}C)(1+C_\rho/2)(1+r+r^2)\int_0^\top  \Delta(H,s)ds.
\end{equation}
Secondly, we aim to bound the integral $J_2$ given Assumption \ref{ass:growth} and $\cnorm{T_\nu(H,t)}\leq C$ on $t\in[0,1]$. Again by Assumption \ref{ass:growth} we have
\begin{equation}
\label{eq:lemmaTDJ2eq1}
\fnorm{\nabla_\beta \vec{g(T_\nu(H,s),\beta)}}\leq \sqrt{\sum_{i=1}^{N+1}\cnorm{\nabla_\beta g(T_\nu(H,s),\beta)_{:,i}}^2}\leq \sqrt{N+1}\phi_{P}(C)(1+r)
\end{equation}
\begin{equation}
\label{eq:lemmaTDJ2eq2}
\fnorm{\nabla_T \vec{g(T_\nu(H,s),\beta)}}\leq \phi_{T}(N,D,\sqrt{N+1}C)(1+r+r^2)
\end{equation}
Since $T_\nu(H,s)$ is $\sqrt{N+1}(1+C_\rho/2)KC(1+r+r^2)$-Lipschitz with respect to $s$ for $\fnorm{\cdot}$ (as shown in \eqref{eq:lemmaTDeq0}), by \eqref{eq:lemmaTDJ2eq2}, we obtain that $g(T_\nu(H,s),\beta)$ is $\sqrt{N+1}(1+C_\rho/2)KC\phi_{T}(N,D,\sqrt{N+1}C)(1+r+r^2)^2$-Lipschitz with respect to $s$. Thus, $g(T_\nu(H,s),\beta)$ is $C'$-Lipschitz with respect to $(s,\beta)$, where $C'=\sqrt{N+1}(1+C_\rho/2)KC\phi_{T}(N,D,\sqrt{N+1}C)(1+r+r^2)^2+\sqrt{N+1}\phi_{P}(C)(1+r)$. This indicates, by the Kantorovich-Rubinstein Theorem (see Theorem 5.10 of \cite{villani2008kantorovich}, for example), that
\begin{equation}
\label{eq:lemmaTDJ2eq3}
J_2=\fnorm{\int_0^1\int_D g(T_\nu(H,s),\beta) \Big(\rho-\nu\Big)(\beta,s) d\beta ds}\leq C' W_1(\rho,\nu).
\end{equation}
Define $C^*:=\max\{C',\phi_{T}(N,D,\sqrt{N+1}C)(1+r+r^2)\}$. By combining \eqref{eq:lemmaTDJ1eq2} and \eqref{eq:lemmaTDJ2eq3}, we have
\begin{equation}
\label{eq:lemmaTDgronwall}
\Delta(H,t)\leq C^*\int_0^\top  \Delta(H,s)ds +C^*W_1(\rho,\nu).
\end{equation}
Applying the Gr\"{o}nwall's inequality then shows
\begin{equation}
\label{eq:lemmaTDgronwall2}
\Delta(H,t)\leq C^*\exp(C^*t)W_1(\rho,\nu).
\end{equation}
Specifically, we have $\fnorm{T_\rho(H,1)-T_\nu(H,1)}\leq C^*\exp(C^*)W_1(\rho,\nu)$.
\end{proof}

\subsection{Proof of Lemma \ref{lemma:gradientComponents2rhoBound}}

\begin{proof}
Let's consider a fixed $(\beta,t)$ pair within $P_r$.
Given Lemma \ref{lemma:Tbound}, we establish $\cnorm{T_\rho(H,t)}\leq B\exp(K(1+A+A^2))$. Consequently, under Assumption \ref{ass:dataBound}, we deduce
\begin{equation}
\label{eq:lemmaGradient2RhoBoundF}
\begin{aligned}
\fnorm{g(T_\rho(H,t),\beta)}&\leq \sqrt{N+1}\cnorm{g(T_\rho(H,t),\beta)}\\
&\leq \sqrt{N+1}K\cnorm{T_\rho(H,t)}(1+r+r^2)\\
&\leq \sqrt{N+1}KB\exp(K(1+A+A^2))(1+r+r^2).  
\end{aligned}
\end{equation}
On the other hand, from \eqref{eq:pSolution}, we have
\begin{equation}
\label{eq:lemmaGradient2RhoBoundP}
\begin{aligned}
\fnorm{p_\rho(H,t)}&\leq |\mathrm{Read}(T_\rho(H,1))-y(H)|\cdot \Norm{\exp\Big(\int_t^1 \int_\beta \nabla_{\vec{T}} \vec{g(T_\rho(H,s),\beta)\rho(\beta,s) d\beta ds}\Big)}_2\\
&\leq (B+B\exp(K(1+A+A^2)))\exp\Big(\int_t^1 \int_\beta \Norm{\nabla_{\vec{T}} \vec{g(T_\rho(H,s),\beta)}}\rho(\beta,s) d\beta ds\Big)\\
&\leq (B+B\exp(K(1+A+A^2)))\exp\Big(\int_t^1 \int_\beta \Norm{\nabla_{\vec{T}} \vec{g(T_\rho(H,s),\beta)}}\rho(\beta,s) d\beta ds\Big)\\
&\leq (B+B\exp(K(1+A+A^2)))\exp\Big(\int_0^1 \int_\beta \phi_{T}(N,D,\fnorm{T_\rho(H,s)})(1+\norm{\beta}+\norm{\beta}^2)\rho(\beta,s) d\beta ds\Big)\\
&\leq (B+B\exp(K(1+A+A^2)))\exp\Big(\phi_{T}(N,D,\sqrt{N+1}KB\exp(K(1+A+A^2))(1+A+A^2)\Big)
\end{aligned}
\end{equation}
Combining \eqref{eq:lemmaGradient2RhoBoundF} and \eqref{eq:lemmaGradient2RhoBoundP}, we could obtain
$$
\begin{aligned}
\Norm{\frac{\delta Q}{\delta \rho}(\beta,t)}\leq &\E_{\mu} \fnorm{g(T_\rho(H,t),\beta)}\fnorm{p_\rho(H,t)} +\frac{\lambda}{2}r^2\\
\leq &\sqrt{N+1}KB\exp(K(1+A+A^2))(1+r+r^2)(B+B\exp(K(1+A+A^2)))\\
&\exp\Big(\phi_{T}(N,D,\sqrt{N+1}KB\exp(K(1+A+A^2))(1+A+A^2)\Big) +\frac{\lambda}{2}r^2.
\end{aligned}
$$
\end{proof}

\subsection{Proof of Lemma \ref{lemma:disTbound}}

\begin{proof}
By employing the Cauchy-Schwarz inequality, we readily observe that $\frac{1}{ML}\sum_t\sum_{j=1}^M \norm{\beta}\leq A$. Consequently, leveraging \eqref{eq:discrete} and Assumption \ref{ass:growth}, we ascertain that for any $t=0, \Delta t,\dots,(L-1)\Delta t$, we obtain:
\begin{equation}
\label{eq:lemmaDisTboundeq1}
\begin{aligned}
&\cnorm{\widehat{T}_\Theta(H,t+\Delta t/2)}\leq \cnorm{\widehat{T}_\Theta(H,t)}\Big\{1+\frac{1}{2ML}\sum_{j=1}^M K(1+\norm{\theta_{t,j}}+\norm{\theta_{t,j}}^2)\Big\}\\
&\cnorm{\widehat{T}_\Theta(H,t+\Delta t)}\leq \cnorm{\widehat{T}_\Theta(H,t+\Delta t/2)}\Big\{1+\frac{1}{2ML}\sum_{j=1}^M K(1+\norm{w_{t,j}}+\norm{w_{t,j}}^2)\Big\}.
\end{aligned}
\end{equation}
Therefore, by applying \eqref{eq:lemmaDisTboundeq1} multiple times, we obtain that for any $t=0,\Delta t/2,\dots, (L-1/2)\Delta t,1$,
$$
\begin{aligned}
\cnorm{\widehat{T}_\Theta(H,t)}\leq& \cnorm{\widehat{T}_\Theta(H,0)}\prod_{t}\Big\{1+\frac{1}{2ML}\sum_{j=1}^M K(1+\norm{\theta_{t,j}}+\norm{\theta_{t,j}}^2)\Big\}\Big\{1+\frac{1}{2ML}\sum_{j=1}^M K(1+\norm{w_{t,j}}+\norm{w_{t,j}}^2)\Big\}\\
\leq & \cnorm{H}\exp\Big(\frac{K}{ML}\sum_t\sum_{j=1}^M 1+\norm{\beta_{t,j}}+\norm{\beta_{t,j}}^2 \Big)\\
\leq &\cnorm{H}\exp\Big(K(1+A+A^2)\Big).
\end{aligned}
$$
\end{proof}

\subsection{Proof of Lemma \ref{lemma:disTdiff}}
\begin{proof}
Lemma \ref{lemma:disTbound} shows that $\cnorm{\widetilde{T}_\Theta(H,t)}$ and $\cnorm{\widetilde{T}_\bTheta(H,t)}$ are bounded by $B_T:=B\exp(K(1+A+A^2))$ for any $H$ and $t\in [0,1]$.
From \eqref{eq:discrete}, for any $t=0,\dots,(L-1)\Delta t$, from Assumption \ref{ass:growth} (ii) and (iii) we have
\begin{equation}
\label{eq:lemmaDisTdiffEq1}
\begin{aligned}
&\fnorm{\widehat{T}_\Theta(H,t+\Delta t/2)-\widehat{T}_\bTheta(H,t+\Delta t/2)}\\
\leq &\fnorm{\widehat{T}_\Theta(H,t)-\widehat{T}_\bTheta(H,t)} + \frac{\Delta t/2}{M}\sum_{j=1}^M \fnorm{f(\widehat{T}_\Theta(H,t),\theta_{t,j})-f(\widehat{T}_\bTheta(H,t),\widetilde{\theta}_{t,j})}\\
\leq&\fnorm{\widehat{T}_\Theta(H,t)-\widehat{T}_\bTheta(H,t)} +(\Delta t/2) \sqrt{N+1}\phi_{P}(B_T)(1+r)M^{-1}\sum_{j=1}^M \norm{\theta_{t,j}-\btheta_{t,j}}\\
+&(\Delta t/2)\phi_{T}(N,D,\sqrt{N+1}B_T)(1+r+r^2)\fnorm{\widehat{T}_\Theta(H,t)-\widehat{T}_\bTheta(H,t)}\\
\leq& \fnorm{\widehat{T}_\Theta(H,t)-\widehat{T}_\bTheta(H,t)}(1+C_{1}(\Delta t/2))+(\Delta t/2)C_2M^{-1}\sum_{j=1}^M \norm{\theta_{t,j}-\btheta_{t,j}}.
\end{aligned}
\end{equation}
for some constant $C_1$ and $C_2$ depending only $N,D,r$ and assumptions. Similarly, we have
\begin{equation}
\label{eq:lemmaDisTdiffEq2}
\begin{aligned}
\fnorm{\widehat{T}_\Theta(H,t+\Delta t)-\widehat{T}_\bTheta(H,t+\Delta t)}\leq &\fnorm{\widehat{T}_\Theta(H,t+\Delta t/2)-\widehat{T}_\bTheta(H,t+\Delta t/2)}(1+C_{1}(\Delta t/2))\\
+&(\Delta t/2)C_2M^{-1}\sum_{j=1}^M \norm{w_{t,j}-\bw_{t,j}}.
\end{aligned}
\end{equation}
Combining \eqref{eq:lemmaDisTdiffEq1} and \eqref{eq:lemmaDisTdiffEq2}, we derive
\begin{equation}
\label{eq:lemmaDisTdiffEq3}
\fnorm{\widehat{T}_\Theta(H,t+\Delta t)-\widehat{T}_\bTheta(H,t+\Delta t)}\leq \fnorm{\widehat{T}_\Theta(H,t)-\widehat{T}_\bTheta(H,t)}(1+C_{3}\Delta t)+\Delta tC_3M^{-1}\sum_{j=1}^M \norm{\beta_{t,j}-\bbeta_{t,j}}.
\end{equation}
where $C_3$ is a constant depending solely on $N$, $D$, $r$, and the parameters of the assumptions. Iterating \eqref{eq:lemmaDisTdiffEq3} multiple times yields
$$\fnorm{\widehat{T}_\Theta(H,t)-\widehat{T}_\bTheta(H,t)}\leq \exp(C_3)\frac{1}{ML}d(\Theta,\bTheta)$$
for any $t=0,\Delta t,\dots,1$.
\end{proof}

\subsection{Proof of Lemma \ref{lemma:disGradientCompBound}}
\begin{proof}
By verifying that $\Theta$ satisfies the conditions outlined in Lemma \ref{lemma:disTbound}, we establish $\cnorm{\widehat{T}_\Theta(H,t)}\leq B\exp(K(1+A+A^2))$. The first two results stem from Assumption \ref{ass:growth} (i) and (ii), with recognition that $\fnorm{T}\leq \sqrt{N+1}\cnorm{T}$ for any $T\in\R^{D\times (N+1)}$.
As for the third result, consider $t=0,\Delta t,\dots,(L-1)\Delta t,1$, where
\begin{equation}
\begin{aligned}
&\max\{\fnorm{\widehat{p}_\Theta(H,t)},\fnorm{\widehat{p}_\Theta(H,t+\Delta t/2)}\}\\
=&\max\{\norm{\vec{\widehat{p}_\Theta(H,t)}},\norm{\vec{\widehat{p}_\Theta(H,t+\Delta t/2)}}\}\\
\leq & |\mathrm{Read}[\widehat{T}_\Theta(H,1)]-y(H)|\\
&\Big\{\prod_{\substack{(s-t)/\Delta t + 1\in[(1-t)/\Delta t]\\j\in[M]}}\Norm{I_{\mathrm{dim}\theta}+(\Delta t/2)\nabla_{\vec{T}}\vec{f(\widehat{T}_\Theta(H,s),\theta_{s,j})}}\\
&\prod_{\substack{(s-t)/\Delta t + 1\in[(1-t)/\Delta t]\\j\in[M]}}\Norm{I_{\mathrm{dim}w}+(\Delta t/2)\nabla_{\vec{T}}\vec{h(\widehat{T}_\Theta(H,s+\Delta t/2),w_{s,j})}}\Big\}\\
\leq & |\mathrm{Read}[\widehat{T}_\Theta(H,1)]-y(H)|\\
&\prod_{\substack{(s-t)/\Delta t + 1\in[(1-t)/\Delta t]\\j\in[M]}}\Big(1+(\Delta t/2)\Norm{\nabla_{\vec{T}}\vec{f(\widehat{T}_\Theta(H,s),\theta_{s,j})}}\Big)\\
&\Big(1+(\Delta t/2)\Norm{\nabla_{\vec{T}}\vec{h(\widehat{T}_\Theta(H,s+\Delta t/2),w_{s,j})}}\Big)\\
\leq & (B+B_T)\\
&\exp\Big((\Delta t/2)\sum_{(s-t)/\Delta t + 1\in[(1-t)/\Delta t]}M^{-1}\sum_{j=1}^M \Norm{\nabla_{\vec{T}}\vec{f(\widehat{T}_\Theta(H,s),\theta_{s,j})}}\\
+&(\Delta t/2)\sum_{(s-t)/\Delta t + 1\in[(1-t)/\Delta t]}M^{-1}\sum_{j=1}^M \Norm{\nabla_{\vec{T}}\vec{h(\widehat{T}_\Theta(H,t),w_{s,j})}}\Big)\\
\leq& (B+B_T)\exp\Big(\phi_{T}(N,D,\sqrt{N+1}B_T)\frac{1}{ML}\sum_{t}\sum_{j=1}^M (1+\norm{\beta}+\norm{\beta}^2)\Big)\\
\leq& (B+B_T)\exp\Big(\phi_{T}(N,D,\sqrt{N+1}KB_T)(1+A+A^2)\Big), 
\end{aligned}    
\end{equation}
where the first inequality arises from the fact that the matrix $2$-norm is greater equal than the norm of any of its columns, and the fourth inequality follows from Assumption \ref{ass:growth} (iii).
\end{proof}

\subsection{Proof of Lemma \ref{lemma:normAvg}}
\begin{proof}
Note that $\norm{\beta_{t,j}}\leq r$ for any $t=0,\Delta t,\dots, (L-1)\Delta t$ and $j\in[M]$ with its expectation denoted as $\int_\beta\norm{\beta_{t,j}}\rho(\beta,t)d\beta$. Applying Hoeffding's inequality yields, for any $z > 0$ and $t = 0, \Delta t, \dots, (L-1)\Delta t$
$$\pr(|M^{-1}\sum_{j=1}^M\norm{\beta_{t,j}}^2-\int_\beta\norm{\beta}^2\rho(\beta,t)d\beta|\geq z)\leq 2\exp(-\frac{z^2}{2r^2}M).$$
By applying the union bound over $t = 0, \Delta t, \dots, (L-1)\Delta t$, the inequality above implies
\begin{equation}
\label{eq:lemmaNormAvgEq1}
\begin{aligned}
\pr(|\frac{1}{ML}\sum_{t}\sum_{j=1}^M\norm{\beta_{t,j}}^2-\frac{1}{L}\sum_t\int_\beta\norm{\beta}^2\rho(\beta,t)d\beta|\geq z)\leq &\pr(\sup_t|M^{-1}\sum_{j=1}^M\norm{\beta_{t,j}}^2-\int_\beta\norm{\beta}^2\rho(\beta,t)d\beta|\geq z)\\
\leq& 2L\exp(-\frac{z^2}{2r^2}M).
\end{aligned}   
\end{equation}
In addition, we have
\begin{equation}
\label{eq:lemmaNormAvgEq2}
\begin{aligned}
|\frac{1}{L}\sum_t\int_\beta\norm{\beta}^2\rho(\beta,t)d\beta-\int_0^1\int_\beta\norm{\beta}^2\rho(\beta,t)d\beta dt|\leq &\sup_{|t-s|\leq \Delta t}|\int_\beta\norm{\beta}^2\rho(\beta,t)d\beta-\int_\beta\norm{\beta}^2\rho(\beta,s)d\beta|\\
\leq& r^2L^{-1}\sup_{|t-s|\leq \Delta t}\blnorm{\rho(\cdot,t)-\rho(\cdot,s)}\\
\leq& r^2C_\rho L^{-1}.
\end{aligned}
\end{equation}
Combining \eqref{eq:lemmaNormAvgEq1} and \eqref{eq:lemmaNormAvgEq2}, and setting $z = r\sqrt{2M^{-1}(\delta + \log(2L))}$ completes the proof.
\end{proof}

\subsection{Proof of Lemma \ref{lemma:matrixProd}}
\begin{proof}
The proof will be trivial by noting the equality
$$
\begin{aligned}
\prod_{l=1}^L A_l-\prod_{l=1}^L B_l=\sum_{l=1}^L \Big(\prod_{s=1}^{l-1}B_s (A_l-B_l)\prod_{s=l+1}^{L}A_s \Big).
\end{aligned}
$$
Hence, we have
$$\norm{\prod_{l=1}^L A_l-\prod_{l=1}^L B_l}\leq \sum_{l=1}^L \norm{\prod_{s=1}^{l-1}B_s }\cdot\norm{A_l-B_l}\cdot\norm{\prod_{s=l+1}^{L}A_s }\leq C \sum_{l=1}^L\norm{A_l-B_l}.$$
\end{proof}

\section{Assumption verification for a concrete example}
\label{sec:verify}
In this section, we consider
\begin{equation}
\label{eq:f2verify}
f(Z,\theta)=VZ \mathrm{softmax} (Z^\top  W Z )
\end{equation}
with the collection of parameters $\theta = \mathrm{vec}[V,W],$ where $\mathrm{softmax}$ denotes the column-wise softmax function. Moreover, consider
\begin{equation}
\label{eq:h2verify}
h(Z,w)=W_2\mathrm{HuberizedReLU}(W_1H)
\end{equation}
with the collection of parameters $w= \mathrm{vec}[W_1,W_2].$ Here, $\mathrm{HuberizedReLU}$ denotes the entry-wise HuberizedReLU activation function defined as
$$
\mathrm{HuberizedReLU}(x) = \left\{ \begin{aligned} &0, &&\mathrm{if} \ z\leq 0;\\ &z^2/2, &&\mathrm{if} \ z\in [0,1];\\ &z - 1 / 2, &&\mathrm{if} \ z\geq 1. \end{aligned} \right. 
$$
Then, we can consider a Transformer model defined by equations \eqref{eq:attnLayer}, \eqref{eq:mlpLayer}, and \eqref{eq:discrete} in the paper, where the functions $f$ and $h$ are specified above. We suppose that this Transformer model is applied to a learning task with data that satisfies Assumption \ref{ass:dataBound}. We have the following proposition.

\begin{proposition}
Consider the Transformer model defined by equations \eqref{eq:attnLayer}, \eqref{eq:mlpLayer}, and \eqref{eq:discrete}, with $f(Z,\theta)$ and $h(Z,w)$ defined in \eqref{eq:f2verify} and \eqref{eq:h2verify} respectively. Then Assumptions \ref{ass:growth}-\ref{ass:opt} all hold.
\end{proposition}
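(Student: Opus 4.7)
The plan is to verify Assumptions \ref{ass:growth}, \ref{ass:growth2}, and \ref{ass:opt} in turn for the specified $f$ and $h$; Assumption \ref{ass:dataBound} is a hypothesis on the data and requires no verification. The overall strategy exploits (a) the boundedness of softmax outputs together with the global smoothness of all softmax derivatives, (b) the fact that HuberizedReLU is $C^1$ with a $1$-Lipschitz derivative bounded in $[0,1]$ (this ``Huberization'' is precisely what buys us a Lipschitz gradient beyond what standard ReLU provides), and (c) the two-layer structure of $h$ for the universal approximation step.

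For Assumption \ref{ass:growth} applied to $f(Z,\theta)=VZ\,\mathrm{softmax}(Z^\top W Z)$: each column of the softmax output lies in the probability simplex, so the $j$-th column of $f$ is a convex combination of columns of $VZ$, giving $\norm{f(Z,\theta)_{:,j}}\leq \norm{V}\cnorm{Z}\leq \norm{\theta}\cnorm{Z}$ and hence (i). For (ii)--(iii), differentiation with respect to $V$ produces blocks of the form $Z s_j^\top$, while differentiation with respect to $W$ or to $Z$ involves the softmax Jacobian $s_i(\delta_{ij}-s_j)$ composed with $\partial(Z^\top W Z)/\partial(\cdot)$. The softmax Jacobian is uniformly bounded, so (ii) follows with $\phi_{P}$ polynomial in $\cnorm{Z}$ and (iii) with $\phi_{T}$ polynomial in $\fnorm{Z}$; the dependence on $\norm{\theta}$ is at most quadratic because the input gradient picks up one factor of $V$ together with at most one factor of $W$. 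For $h(Z,w)=W_2\,\mathrm{HuberizedReLU}(W_1 Z)$ the pointwise bounds $|\sigma(z)|\leq|z|$ and $|\sigma'(z)|\leq 1$ give the column-norm and gradient bounds directly by the chain rule, with the required quadratic dependence on $\norm{w}$.

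For Assumption \ref{ass:growth2}, the softmax is $C^\infty$ with second derivatives uniformly bounded once $Z^\top W Z$ stays bounded, which is automatic as soon as $\fnorm{Z}$ and $\norm{W}$ are bounded; composing with the polynomial factors $VZ$ and $Z^\top W Z$ and applying the mean-value theorem entrywise yields (i)--(iv) with $\phi_{PT},\phi_{TP},\phi_{PP},\phi_{TT}$ polynomial in their arguments. For $h$, since $\sigma'$ is globally $1$-Lipschitz and bounded in $[0,1]$, the same chain-rule argument delivers the analogous bounds; the choice of HuberizedReLU over ReLU is crucial here, as ReLU's derivative is discontinuous and fails (iv) in any uniform sense, whereas HuberizedReLU smooths the kink at $z=0$.

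For Assumption \ref{ass:opt}, I take $(g,\alpha)=(h,w)$ with partition $\alpha_1=\mathrm{vec}(W_2)$ and $\alpha_2=\mathrm{vec}(W_1)$. Partial $1$-homogeneity is immediate: $h(Z,cW_2,W_1)=cW_2\sigma(W_1Z)=c\,h(Z,W_2,W_1)$. For the universal kernel property, choose $\ca K$ to be a closed ball in $\R^{\mathrm{dim}(\alpha_2)}$ whose interior contains the origin. Because HuberizedReLU is a continuous non-polynomial activation, the Leshno--Pinkus universal approximation theorem applied token-wise shows that finite sums $\sum_k W_2^{(k)}\sigma(W_1^{(k)}Z)$ with $W_1^{(k)}\in\ca K$ are dense in $\ca C(\{T:\cnorm{T}\leq B\},\R^{D\times(N+1)})$ for every $B>0$; the restriction to a fixed compact $\ca K$ follows from a standard polynomial-density argument that absorbs any global rescaling into the (unrestricted) outer weights $W_2^{(k)}$. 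The main technical obstacle is the bookkeeping in Assumption \ref{ass:growth2}(iv) for $f$: differentiating the softmax Jacobian with respect to $Z$ forces one to handle a Hessian of $Z^\top W Z$ in $Z$, producing cubic cross-terms in $Z$ quadratic in $\theta$, and one must carefully track that all such terms are bounded by a suitable $\phi_{TT}$ on the compact regions fixed by $K_T$, $\norm{\theta}$, and $L_T$. Apart from this calculation, the verification is routine polynomial bookkeeping.
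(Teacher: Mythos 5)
Your verification of Assumptions \ref{ass:growth} and \ref{ass:growth2} follows essentially the same path as the paper: exploit the simplex property of softmax columns for the growth bound on $f$, bound the softmax Jacobian for the gradient bounds, use $|\sigma(z)|\leq|z|$ and $|\sigma'(z)|\leq 1$ for $h$, and appeal to global smoothness of softmax plus the global $1$-Lipschitz gradient of HuberizedReLU for the local Lipschitz-gradient conditions. Your remark that HuberizedReLU (as opposed to plain ReLU) is exactly what delivers Assumption~\ref{ass:growth2}(iv) in a uniform sense is a nice observation that the paper does not state explicitly. Your choice $(g,\alpha)=(h,w)$ with $\alpha_1=W_2$, $\alpha_2=W_1$ and the $1$-homogeneity check also match the paper.

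There is, however, a genuine gap in your verification of the universal kernel property. You invoke Leshno--Pinkus for the non-polynomial activation HuberizedReLU to get density with unrestricted inner weights, and then assert that restricting $W_1$ to a compact $\ca K$ ``follows from a standard polynomial-density argument that absorbs any global rescaling into the outer weights.'' Absorbing a rescaling of $W_1$ into $W_2$ works only when the activation is positively homogeneous, i.e.\ $\sigma(c z)=c^p\sigma(z)$, since then $W_2\sigma(cW_1Z)=c^pW_2\sigma(W_1Z)$. HuberizedReLU is \emph{not} homogeneous (it is quadratic near the origin and affine for $z\geq 1$), so a global rescaling of $W_1$ cannot be pushed into $W_2$ without changing the function class; your argument as written does not establish density over the restricted class. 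The paper closes this gap with a two-step observation: (a) on the domain $\cnorm{T}\leq B$, choosing $c$ small enough that $c\norm{W_1}\leq B^{-1}$ forces the argument of the activation into $[0,1]$, where $\mathrm{HuberizedReLU}(z)=z^2/2=\mathrm{ReLU}^2(z)/2$; (b) $\mathrm{ReLU}^2$ \emph{is} positively $2$-homogeneous and non-polynomial, so the span of $W_2\,\mathrm{ReLU}^2(W_1T)$ over unrestricted $W_1$ is dense, and by $2$-homogeneity it equals the span with $W_1$ restricted to the unit ball $\ca K$. The identity $c^{-2}W_2\,\mathrm{HuberizedReLU}(cW_1T)=W_2\,\mathrm{ReLU}^2(W_1T)/2$ then shows the compactly constrained HuberizedReLU class contains this dense family. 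Without this reduction to $\mathrm{ReLU}^2$ on the ``small-argument'' region, the compactness of $\ca K$ is not reconciled with the universal approximation theorem, so that step in your proposal needs to be repaired.
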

\begin{proof}
We omit the detailed derivations for the function $h(Z,w)$, which corresponds to the MLP part, in our verification of Assumptions \ref{ass:growth} and \ref{ass:growth2}, as $h(Z,w)$ satisfies Assumptions \ref{ass:growth} and \ref{ass:growth2} is relatively more intuitive, especially given the proofs for $f(Z,\theta)$.

Denote $Z=(z_1,\dots,z_{N+1})\in\mathbb{R}^{D\times N+1}.$ Then the function $f$ can be rewritten as

$$f(Z,\theta)=VZ\mathrm{softmax}(Z^\top WZ)=(f(Z,\theta)_{:,i})_{1\leq i\leq N+1},$$

where
$f(Z,\theta)_{:,i}=\sum_{j=1}^{N+1}P_{ij}Vz_j$ and $P_{i,:}=\mathrm{softmax}(Z^\top Wz_i).$ Next, we calculate the derivatives of $f(Z,\theta)_{:,i}$ with respect to $Z$ and $\theta$ as follows:

\textbf{For $Z$:} the Jacobian $J\in\mathbb{R}^{(N+1)D\times(N+1)D}$ is $J=(J_{ij})_{1\leq i,j\leq N + 1},$ where $J_{ij}=\frac{\partial f_{:,i}}{\partial z_j}(Z,\theta)\in\mathbb{R}^{D\times D}.$ After calculation, we obtain
$$J_{ij}=VZQ_i\big[E_{ji}Z^\top W+Z^\top W^\top \delta_{ij}\big]+P_{ij}V,$$
where $Q_i:=\mathrm{diag}(P_{i:})-P^\top _{i:}P_{i:},$ $E_{ij}$ is the matrix with zeros everywhere except one the $(i,j)$-th entry, and $\delta_{ij}$ is the Kronecker delta ($1$ if $i=j$, $0$ otherwise).

\textbf{For $\theta$:} Define $A_i=Z^\top Wz_i.$ After calculation, we have
\begin{equation}
\label{eq:verify0}
\frac{\partial P_{ij}}{\partial A_{kl}}=P_{ij}(\delta_{ik}-P_{il}),\quad \frac{\partial A_{ij}}{\partial W_{kl}}=Z_{ki}Z_{jl}  
\end{equation}
Thus, by the chain rule, we have
\begin{equation}
\label{eq:verify1}
\nabla_{W_{kl}}f(Z,\theta)_{:,i} = \sum_{j=1}^{N+1}Z_{ki}Z_{jl}P_{ij}(\delta_{ik}-P_{il})Vz_j.
\end{equation}
Moreover, we have
\begin{equation}
\label{eq:verify2}
\nabla_{\mathrm{vec}[V]} f(Z,\theta)_{:,i}=\sum_{j=1}^{N+1}P_{ij}(z_j^\top,\dots,z_j^\top)^\top ,
\end{equation}
where \((z_j^\top, \dots, z_j^\top)^\top\) contains \(D\) copies of \(z_j\).
We then verify the assumptions one by one. 

For Assumption \ref{ass:growth} (i), we have
$$ 
\begin{aligned}
\norm{f(T,\theta) }_{2-\mathrm{col}} = \norm{VT \mathrm{softmax} (T^\top  W T ) }_{2-\mathrm{col}} \leq& \norm{V}_2 \cdot \norm{T }_2 \cdot \norm{\mathrm{softmax} (T^\top  W T ) }_{2-\mathrm{col}}\\
\leq& \norm{\theta }_2 \cdot \norm{T }_{2-\mathrm{col}} \cdot \norm{\mathrm{softmax} (T^\top  W T ) }_{1-\mathrm{col}}\\
\leq& \norm{\theta }_2 \cdot \norm{T }_{2-\mathrm{col}}, 
\end{aligned}
$$
where the second-to-the-last inequality follows by the fact that $\ell_2$-norm can be upper bounded by the $\ell_1$-norm, and the last inequality follows by the fact that each column of the softmax output has an $\ell_1$-norm equaling one. Therefore, the first condition in Assumption \ref{ass:growth} with $K = 1$ is verified for the function $f$ in \eqref{eq:f2verify}.

For $h$ in \eqref{eq:h2verify}, we have
$$
\begin{aligned}
 \norm{h(T,w) }_{2-\mathrm{col}} = \norm{W_2\mathrm{HuberizedReLU}(W_1T)}_{2-\mathrm{col}} \leq& \norm{W_2 }_2 \cdot \norm{\mathrm{HuberizedReLU}(W_1T) }_{2-\mathrm{col}}\\
 \leq& \norm{W_2}_2 \cdot \norm{W_1T }_{2-\mathrm{col}}\\
 \leq& 2 \cdot \norm{w}_2^2 \cdot \norm{T }_{2-\mathrm{col}},   
\end{aligned}
$$
where the second inequality follows by the property of HuberizedReLU that $|\mathrm{HuberizedReLU}(x)| \leq |x|.$ This demonstrates that Assumption \ref{ass:growth} (i) with $K=1$ holds for $h$ in \eqref{eq:h2verify} as well.

For Assumption \ref{ass:growth} (ii), \eqref{eq:verify2} leads to
$$\norm{\nabla_{\mathrm{vec}[V]} f(T,\theta)_{:,i} }_{2} \leq \sum_{j=1}^{N+1}  \norm{P_{ij}T_{:,j}^\top}_{2} \leq \sum_{j=1}^{N+1} P_{ij} \norm{T_{:,j}}_{2} \leq \norm{T}_{2-\mathrm{col}}. $$
Moreover, \eqref{eq:verify1} leads to
$$
\begin{aligned}
  \Norm{\nabla_{\mathrm{vec}[W]} f(T,\theta)_{:,i} }_{2} \leq&\sum_{1\leq k,l\leq D} \sum_{j=1}^{N+1}\norm{Z_{ki}Z_{jl}P_{ij}(\delta_{ik}-P_{il})VT_{:,j}}_2\\
  \leq&2\sum_{1\leq k,l\leq D} \sum_{j=1}^{N+1}P_{ij}\norm{T_{ki}T_{jl}VT_{:,j}}_2\\
  \leq&2\max_{1\leq j\leq N+1}\sum_{1\leq k,l\leq D} \norm{T_{ki}T_{jl}VT_{:,j}}_2\\
  \leq&2\max_{1\leq j\leq N+1}\sum_{1\leq k,l\leq D} |T_{ki}T_{jl}|\norm{V}_2\norm{T_{:,j}}_2\\
  \leq&2\norm{T}_{2-\mathrm{col}}^2\norm{\theta}_2
\end{aligned}
$$
Combining the two equations above gives Assumption \ref{ass:growth} (ii) with $\phi_P( \norm{T}_{2-\mathrm{col}} ) = \norm{T}_{2-\mathrm{col}} + 2\norm{T}_{2-\mathrm{col}}^2$.

For Assumption \ref{ass:growth} (iii), we have
$$\norm{J }_{2}\leq \sqrt{\sum_{1\leq i,j\leq N+1}\norm{J_{ij}}^2_2}\leq (N +1) \max_{1\leq i,j\leq N+1} \norm{J_{ij}}_2.$$
For any $1\leq i,j\leq N + 1,$ we have
$$
\begin{aligned}
\norm{J_{ij}}_2\leq& P_{ij}\norm{V}_2 + \norm{T}_2 \norm{Q_i}_2 \norm{E_{ji}T^\top W+T^\top W^\top \delta_{ij}}_2\norm{V}_2\\
\leq& \norm{V}_2\Big(1 + 2 \norm{T}_2^2 \norm{W}_2\Big) \\
\leq& \norm{\theta}_2 + 2 \norm{T}_F^2\norm{\theta}^2_2\\ 
\leq& (1 + 2 \norm{T}_F^2) (1+\norm{\theta}_2+\norm{\theta}^2_2).
\end{aligned}
$$
Hence, we have
$$\norm{J }_{2}\leq 2N \norm{T}_F^2\cdot (N+1)(1 + 2 \norm{T}_F^2) (1+\norm{\theta}_2+\norm{\theta}^2_2).$$
The above equation demonstrates that for $f$, Assumption \ref{ass:growth} (iii) holds with $\phi_T ( N, D, \norm{T }_F ) = (N+1)(1 + 2 \norm{T}_F^2)$.
We have verified Assumption \ref{ass:growth} for the attention layer encoder $f$. The verification for $h$ is similar and easier.

Next, we verify Assumption \ref{ass:growth2}. Given that we are currently considering the example where the encoder employs a smooth univariate activation function, we can prove stronger results by removing the expectation $\mathbb{E}_{\mu}$.

(i) and (iii): Given the calculation of derivatives in \eqref{eq:verify1} and \eqref{eq:verify2} we have presented above, we first show that $$P_{ij}=\mathrm{softmax}(Z^\top W z_i)$$ is locally Lipschitz continuous with respect to $Z$ and $\theta$. By \eqref{eq:verify0} and the chain rule, we can derive that
$$\nabla_{W_{kl}}P_{ij} = \sum_{j=1}^{N+1}Z_{ki}Z_{jl}P_{ij}(\delta_{ik}-P_{il}),\quad \frac{\partial P_{ij}}{\partial z_j}=Q_i\big[E_{ji}z_j^\top W+z_j^\top W^\top \delta_{ij}\big].$$
and the local Lipschitz continuity is then obvious given the boundedness of $\nabla_{W_{kl}}P_{ij}$ and $\frac{\partial P_{ij}}{\partial z_j}$ with respect to necessary parameters.

As we prove that $P_{ij}$ is locally Lipschitz continuous, given that then each component in \eqref{eq:verify1} and \eqref{eq:verify2} is locally Lipschitz continuity with respect to both $Z$ and $\theta$, and is obviously bounded by an increasing function of $N,D,\norm{\theta}, K_T,L_T$. Then the local Lipschitz continuity is straightforward as they are all sufficiently smooth.

(ii) and (iv): Because the norm of the difference of two Jacobian matrices $\norm{J^1-J^2}_2$ is bounded by $\sqrt{\sum_{1\leq i,j\leq N+1} \norm{J^1_{ij}-J^2_{ij}}_2^2},$ it suffices to show that $J_{ij}$ is locally Lipschitz continuous with respect to both $\theta$ and $Z.$ Again each component of $J_{ij}$ that depends on $Z$ or $\theta,$ i.e. $Z,Q_i,W,P_{ij},$ is bounded by an increasing function of $N,D, K_P,L_T,K_T,\norm{\theta},$ and is locally Lipschitz continuous given sufficient smoothness. Hence, (ii) and (iv) also hold.


For Assumption \ref{ass:opt}, we consider the pair $(g,\alpha) = (h,w),$ and the partition $\alpha = (\alpha_1,\alpha_2)$ with $\alpha_1 = W_2,$ $\alpha_2 = W_1$. We also let a compact set $\ca{K} = \{ W_1: \norm{W_1}\leq 1 \}$. Then Assumption \ref{ass:opt} (i) on the partial $1$-homogeneity property straightforwardly holds: 
$$h(T,W_1,c\cdot W_2) = c\cdot W_2\mathrm{HuberizedReLU}(W_1H)= c\cdot h(T,W_1,W_2).$$

Regarding Assumption \ref{ass:opt} (ii) on the universal kernel property, we first note that according to the choice $(g,\alpha) = (h,w)$, this assumption is purely an assumption on the MLP part of the Transformer. Here we give the detailed proof as follows. 

First of all, according to the classic universal approximation theory (see the wiki page of “universal approximation theorem” and \cite{funahashi1989OnTA,cybenko1989ApproximationBS,pinkus1999ApproximationTO} for more details), we know that two-layer fully-connected networks with non-polynomial activation functions and without any constraints on its parameters are universal approximates.

Therefore, we know that the function class 
$ \mathrm{span} \{ W_2\mathrm{ReLU}^2(W_1T): W_1 \in \mathbb{R}^{\mathrm{dim}(W_1)}, W_2 \in \mathbb{R}^{\mathrm{dim}(W_2)} \} $ 
is dense in $\ca{C}(\norm{T}_{2-\mathrm{col}}\leq B,\mathbb{R}^{D\times(N+1)})$. Moreover, by the definition of HuberizedReLU, for any $B>0$ and any $\widehat{W}_1$, $\widehat{W}_2$, there exist small constant $c$ such that $c\cdot \widehat{W}_1 \in \ca{K}$, $c\cdot \norm{\widehat{W}}_1\leq B^{-1}$, and
$$
\begin{aligned}
c^{-2} \cdot \widehat{W}_2\mathrm{HyberizedReLU}(c\cdot \widehat{W}_1T) =& c^{-2} \cdot \widehat{W}_2\mathrm{ReLU}^2(c\cdot \widehat{W}_1T)\\
=& c^2\cdot c^{-2} \cdot \widehat{W}_2\mathrm{ReLU}^2(\widehat{W}_1T)\\ =& \widehat{W}_2\mathrm{ReLU}^2(\widehat{W}_1T),    
\end{aligned}
$$
where the second equation follows by the positive $2$-homogeneity of $\mathrm{ReLU}^2$ activation. This implies that 
$$ \{ W_2\mathrm{ReLU}^2(W_1T): W_1 \in \mathbb{R}^{\mathrm{dim}(W_1)}, W_2 \in \mathbb{R}^{\mathrm{dim}(W_2)} \} \subseteq \{W_2\mathrm{HyberizedReLU}(W_1T): W_2 \in \mathbb{R}^{\mathrm{dim}(W_2)}\times \ca{K}\}.$$
Therefore, we conclude that $ \mathrm{span} \{W_2\mathrm{HyberizedReLU}(W_1T): W_2 \in \mathbb{R}^{\mathrm{dim}(W_2)}\times \ca{K}\}$ is dense in $\ca{C}(\norm{T}_{2-\mathrm{col}}\leq B,\mathbb{R}^{D\times(N+1)})$. This finishes the validation of Assumption \ref{ass:opt}.
\end{proof}

\section{Experiments}\label{sec:experiments}
As discussed in Sections~\ref{sec:approxThm} and \ref{sec:globalThm}, our mean-field approximation results and global convergence results are asymptotic guarantees requiring exponentially large number of heads $M$ and number of layers $L$. Such results are due to the nature of mean-field type analysis. In practice, we frequently observe that global convergence can be achieved by Transformer models of reasonable sizes. In this section, we run simple experiments on training Vision Transformers (ViT) \citep{dosovitskiy2020image} on the CIFAR-10 datasets to demonstrate global convergence in practical applications. 

We train Vision Transformers with different numbers of heads and layers. In all our experiments, we split each CIFAR-10 image into four patches and then pass the patches into Vision Transformer models. We keep the dimension of each attention head to be $128$. The output of each self-attention layer is passed through a single-hidden-layer feedforward component with 128 hidden neurons and GeLU activation. Both the self-attention and feedforward components include skip connections. We implement dropout in the self-attention layers as well as the feedforward layers with a dropout probability of 0.1. The model is attached to a linear classifier.

In all experiments, we train the ViT models using Adam for 200 epochs with a mini-batch size $512$. We set the initial learning rate to be $1e-4$, and implement a cosine annealing learning rate schedule. We do not use any data augmentation or explicit regularization techniques, so that global convergence for large enough models implies close-to-zero training loss and close to $100\%$ training accuracy.

In the first set of experiments, we fix the depth of the ViT to 6 layers (i.e., there are six self-attention layers, each followed by a single-hidden-layer feedforward component). We train such Vision Transformers with the numbers of heads per layer ranging from $4$ to $40$, and record the training loss and training accuracy throughout training. The results are given in Figure~\ref{fig:head}. Based on the results, it is clear that for ViT models with more than $20$ heads can achieve close-to-zero training loss and close to $100\%$ training accuracy, demonstrating global convergence on the CIFAR-10 training data.

In the second set of experiments, we fix number of heads in the ViT model per layer to 8. We train such Vision Transformers with depths ranging from $2$ to $20$, and record the training loss and training accuracy throughout training. The results are given in Figure~\ref{fig:depth}. Again, the results indicate that ViT models with more than $16$ layers can achieve close-to-zero training loss and close to $100\%$ training accuracy on the CIFAR-10 dataset, implying global convergence.

We note that all these experiments are conducted on a standard GPU card. We can observe clear global convergence when the Vision Transformer is sufficiently wide or deep, but still within reasonable scales. This indicates that, although our theoretical guarantees require extremely large numbers of heads and layers due to the limitations of the mean-field technical tools, global convergence can be achieved by Transformers of reasonable sizes in practice.

\begin{figure}[ht!]
	\begin{center}
		\begin{tabular}{cc}
			\subfigure[Training loss]{\includegraphics[width=0.48\linewidth,angle=0, bb=0 0 400 300]{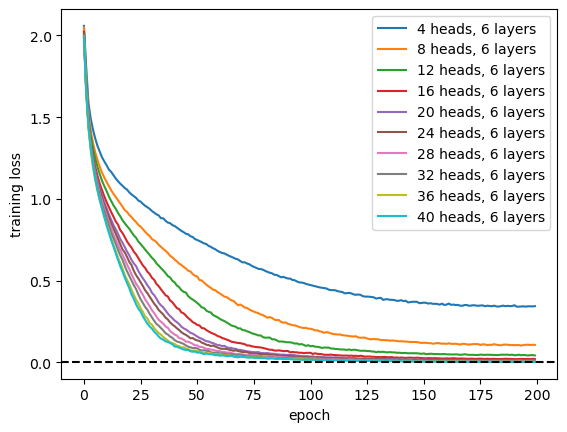}\label{subfig:11}}
			& 
			\subfigure[Training accuracy]{\includegraphics[width=0.48\linewidth,angle=0, bb=0 0 400 300]{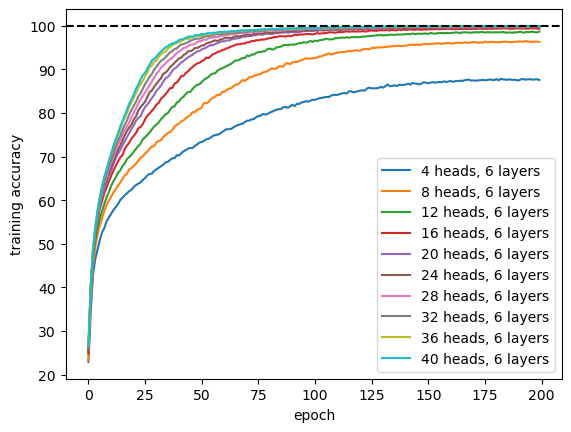}\label{subfig:12}}
		    \end{tabular}
	\end{center}
	\vskip -12pt
	\caption{Training loss and training accuracy of Vision Transformers with different numbers of heads. (a) gives the curves of training loss, while (b) gives the curves of training accuracy.}
	\label{fig:head}
\end{figure}

\begin{figure}[ht!]
	\begin{center}
		\begin{tabular}{cc}
			\subfigure[Training loss]{\includegraphics[width=0.48\linewidth,angle=0, bb=0 0 400 300]{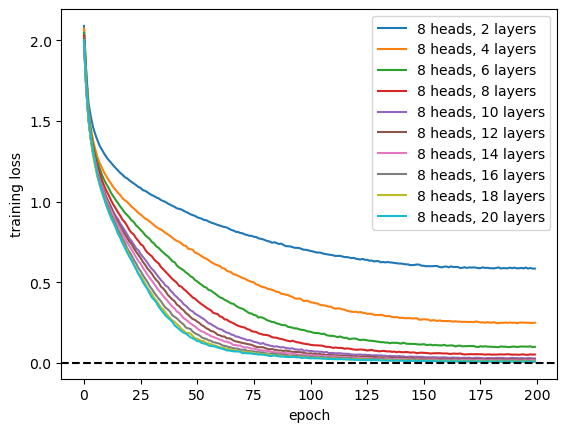}\label{subfig:21}}
			& 
			\subfigure[Training accuracy]{\includegraphics[width=0.48\linewidth,angle=0, bb=0 0 400 300]{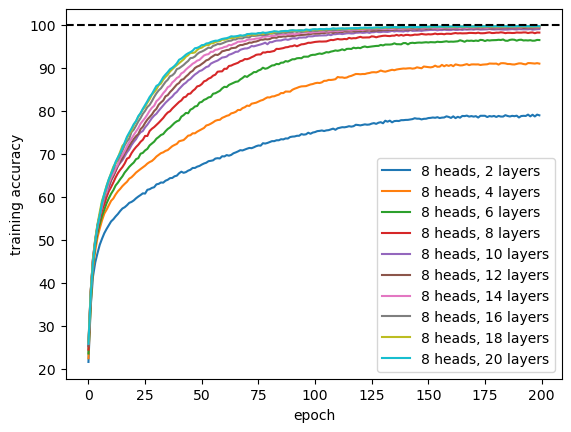}\label{subfig:22}}
		    \end{tabular}
	\end{center}
	\vskip -12pt
	\caption{Training loss and training accuracy of Vision Transformers with different depths. (a) gives the curves of training loss, while (b) gives the curves of training accuracy.}
	\label{fig:depth}
\end{figure}


\end{document}